\documentclass[10pt,journal,commsoc]{IEEEtran}

\IEEEoverridecommandlockouts

\ifCLASSOPTIONcompsoc
  \usepackage[nocompress]{cite}
\else
  \usepackage{cite}
\fi

\usepackage{xcolor}
\usepackage{amssymb}
\usepackage{amsmath}
\allowdisplaybreaks[4]
\usepackage{multicol}
\usepackage{amsfonts}
\usepackage{amsthm}
\usepackage{array}
\usepackage{bbm}
\usepackage[caption=false,font=normalsize,labelfont=sf,textfont=sf]{subfig}
\usepackage{textcomp}
\usepackage{url}
\usepackage{verbatim}
\usepackage{graphicx}
\usepackage{multirow}
\usepackage{textcase}
\usepackage[lined,linesnumbered,ruled,longend]{algorithm2e}
\usepackage{cite}
\usepackage{makecell}
\usepackage{float}
\usepackage{stfloats}
\usepackage{url}
\usepackage{verbatim}
\usepackage{xr}
\newtheorem{thm}{\textbf{Theorem}}
\newtheorem{assumption}{\textbf{Assumption}}

\newtheorem{lem}{\textbf{Lemma}}

\newtheorem{defn}{\textbf{Definition}}
\newtheorem{remark}{\textbf{Remark}}
\newtheorem{fact}{\textbf{Fact}}
\newtheorem{corollary}{\textbf{Corollary}}
\newcommand*\tcircle[1]{%
  \raisebox{0pt}{%
    \textcircled{\fontsize{7pt}{0}\fontfamily{phv}\selectfont #1}%
  }%
}

\makeatletter

\newcommand{\Rmnum}[1]{\expandafter\@slowromancap\romannumeral #1@}
\makeatother



\begin{document}

\title{DPDL: Towards Differential Privacy Preservation in Decentralized Stochastic Learning on Non-IID Data}

\author{Yunsheng~Yuan,
        Xue~Xiao,
        Lina~Wang,
        Feng~Li
        
\IEEEcompsocitemizethanks{\IEEEcompsocthanksitem Y. Yuan, L. Wang and F. Li are with School of Computer Science and Technology, Shandong University, Qingdao 266237, China.
E-mail: yuanys1028@gmail.com, linawang425@mail.sdu.edu.cn, fli@sdu.edu.cn  \\
X. Xiao is with Inspur Cloud Information Technology
Co, Ltd. E-mail: xiaoxue@inspur.com
}
}



\maketitle

\begin{abstract}
  In the paradigm of decentralized learning, a group of agents collaborate to train a global model using distributed datasets without a central server. Although the power of collaboration has been verified by many state-of-the-art studies, it entails extensive gradient information exchanging among the agents and thus induces high risk of privacy leakage for the individual agents. Moreover, in real-world applications, the training data are usually non-identically and independently distributed across the agents, inducing more challenges to enable privacy-preserved decentralized learning. To address these issues, we propose a privacy-preserved decentralized learning algorithm with non-IID data, DPDL, which leverages the notion of \textit{Differential Privacy} (DP) in cross-gradient aggregation through a similarity-based calibration technique. Specifically, in each round, each agent perturbs the cross-gradients (i.e., the derivatives of its neighbors' local model in its private local data) by Gaussian noise mechanism before sharing them with its neighbors; it then adopt cosine similarity to calibrate the received perturbed cross-gradients such that the aggregation of the calibrated cross-gradients can be utilized to effectively update local model in a momentum-like manner. Our rigorous theoretical analysis not only reveals the minimum noise level required to achieve a specific level of privacy preservation, but also illustrates that our algorithm still achieves a linear speedup in training with non-IID data. We finally conduct extensive experiments on real-world dataset to validate the effectiveness of our algorithm in defending privacy attacks and in training accurate models.

  %
\end{abstract}

\begin{IEEEkeywords}
Decentralized learning, differential privacy, data heterogeneity
\end{IEEEkeywords}

\section{Introduction} \label{sec:intro}
  In response to the significant growth of data produced on an array of smart devices (e.g., smartphones, tablets, and drones), distributed machine learning, aiming to train a global model based on the data dispersed across different devices, has been garnering increasing attention~\cite{VerbraekenWKKVR-CSUR20,LiWWHWLLH-TKDE23,BeltranPSBBPPC-COMST23,ChuL-SIGMETRICS24}. For example, in \textit{Federated Learning} (FL), a group of devices (which we will refer to as \textit{agents} henceforth) are coordinated by a central server~\cite{McmahanMRHA-AISTATS17}. Although the agents never share their local data with the central server, they must consistently communicate their local updates (e.g., the gradient information calculated according to their local dataset) with the server, resulting in the limitations such as having a single point of failure, trust dependencies, and bottlenecks at the server. To address these limitations, decentralized learning~\cite{LianZZHZL-NIPS17,NedicOR-IEEE18,LianZZL-ICML18,BeltranPSBBPPC-COMST23} has been proposed, such that each agent communicates its updates only with its neighbors without the coordination of a central server.

  One challenging issue for decentralized learning is the heterogeneous data distribution across the agents. For example, due to the diversities of user behaviors and sensor devices, the data across different agents are distributed non-independently and identically (non-IID), while the non-IID data may result in a degradation in convergence and accuracy~\cite{LinKSJ-ICML21,LiDCH-ICDE22}. Many efforts have been made to address the challenge of data heterogeneity in decentralized learning~\cite{LinKSJ-ICML21,LiLV-IOTJ21,ShiSWSYWT-ICML23,AketiHR-NeurIPS24}. Recently, \cite{ZhangFLYLZ-MobiHoc22} conducts gradient corrections recursively in multi-step local updates to resolve the problem of data heterogeneity. In \cite{EsfandiariTJBHHS-ICML21}, each agent aggregates the cross-gradients (i.e., derivatives of its model with respect to its neighbors’ datasets) and its own self-gradient to update its local model.

  Another concern for decentralized learning is privacy preservation. In the paradigm of decentralized learning, agents collaborate with each other for model training by exchanging information. For example, in \cite{LianZZHZL-NIPS17,YuJY-ICML19,EsfandiariTJBHHS-ICML21,AketiHR-NeurIPS24}, each agent communicates its local gradient information with its peers. However, many studies have demonstrated that an adversary can eavesdrop on the communications and infer the private local data of the agents through the intercepted gradient information~\cite{ZhuLH-NIPS19,ZhaoMB-ARXIV20,WangHL-INFOCOM24}. To ensure the data privacy of participants in decentralized learning, one of the most popular choices is \textit{Differential Privacy} (DP)~\cite{Dwork-ICALP06,DworkR-FTTCS14,PonomarevaVXMKZ-KDD23}. Although DP has been extensively used in centralized distributed learning (e.g., FL)~\cite{ShokriS-CCS15,LiWWHWLLH-TKDE23,LinWLSHJ-TDSC23,ChenDBZJ-TKDE24}, there are only a few proposals investigating privacy-preserved decentralized learning algorithms. For instance, \cite{XuZW-TPAMI21} incorporates DP in the decentralized parallel SGD framework~\cite{LianZZHZL-NIPS17}. \cite{CyffersEBM-NIPS22} presents a differentially private decentralized learning algorithm that alternates between local gradient descent and gossip averaging. Unfortunately, these state-of-the-art proposals assume that the training data are identically and independently distributed across different agents, whereas such an assumption may not always hold in practice, as mentioned previously. Applying DP to decentralized learning with non-IID data is highly non-trivial. DP makes the decentralized learning algorithm robust against privacy attacks by injecting random noise into the private information (e.g., local gradient information) shared by the agents. However, the noise mechanism can significantly impair the utility of the shared information~\cite{SeemanS-JRC24}. Moreover, due to the heterogeneity of the data, the consistency of the local information shared by individual agents cannot be guaranteed. Therefore, how to harness the perturbed information for efficient model training in the face of non-IID data remains an open problem.

  In this paper, we propose DPDL, a differential privacy-preserved decentralized stochastic learning algorithm with non-IID data. We innovate in applying DP in an elaborately calibrated cross-gradient aggregation to ensure both privacy preservation and training efficiency in the face of non-IID data. In particular, in each round, each agent adopts a Gaussian noise mechanism to perturb the cross-gradients before sharing the perturbed gradients with its neighbors. The agent then calibrates the cross-gradients received from its neighbors based on the similarity between the cross-gradients and its own self-gradient. Finally, each agent updates its local model in a momentum-like manner based on the aggregation of the calibrated cross-gradients and self-gradient . Our solid theoretical analysis not only reveals the lower bound on the noise level to satisfy the demand of privacy preservation, but also demonstrates our algorithm achieves a linear speedup even with ``perturbed'' collaboration of the agents. We finally conduct extensive experiments to verify the efficacy of our algorithm in defending privacy attacks (e.g., gradient inversion attacks) and training accurate models with non-IID data. Our main contributions are as follows:
  \begin{itemize}
    \item We propose a novel privacy-preserved algorithm with non-IID data, DPDL, by leveraging the concept of DP in the cross-gradient aggregation through a similarity-based calibration.
    \item We perform solid theoretical analysis to quantify the level of privacy preservation we can achieve through our algorithm as the convergence rate of our algorithm.
    \item We conduct extensive experiments on real datasets to verify the efficacy of our DPDL algorithm in terms of both privacy preservation and model training. 
  \end{itemize}
 
  The remaining of this paper is organized as follows. Related literature is surveyed in Sec.~\ref{sec:work}. We then introduce our system model and some preliminaries in Sec.~\ref{sec:pre}. We give the details of our DPDL algorithm in Sec.~\ref{sec:dpdl} and analyze the algorithm theoretically in Sec.~\ref{sec:theory}. We finally conduct extensive experiments to verify the efficacy of our algorithm in Sec.~\ref{sec:exp} and conclude this paper in Sec.~\ref{sec:conclusion}.

\section{Related Literature}\label{sec:work}
  Recent years have witnessed the advancement of the paradigm of decentralized learning. In \cite{LianZZHZL-NIPS17}, \textit{Stochastic Gradient Descent} (SGD) is combined with gossip-averaging~\cite{BoydGPS-TIT06} to develop \textit{Decentralized Parallel Stochastic Gradient Descent} (DPSGD). \cite{ScamanBBLM-ICML17} considers strongly convex and smooth decentralized optimization in a network and proposes the \textit{Multi-Step Dual accelerated} (MSDA) algorithm with provable linear convergence rate. \cite{ScamanBBLM-NIPS18} proposes the first optimal algorithm, called \textit{Multi-Step Primal-Dual} (MSPD) algorithm, for the more challenging case of non-smooth convex optimization. Inspired by the fact that momentum methods are widely adopted to train machine learning models as they  converge faster and generalize better, a momentum extension of decentralized parallel SGD, so-called \textit{Decentralized Momentum Stochastic Gradient Descent} (DMSGD), is proposed in \cite{YuJY-ICML19}. 
  
  The above studies usually assume that training data are independently and identically distributed across different agents. Unfortunately, this assumption may not always hold in real-world applications; therefore, significant efforts have been made to address the challenge of data heterogeneity in decentralized learning. \textit{Quasi-Global} (QG) momentum is proposed in~\cite{LinKSJ-ICML21}. It is based on locally approximating the global optimization direction such that the smoothness and robustness of the decentralized learning process can be ensured with heterogeneous data. In \cite{LiLV-IOTJ21}, a decentralized learning algorithm through mutual knowledge transfer is proposed. In this algorithm, only a small subset of agents train their local models and transmit the local models to another set of agents. In \cite{ShiSWSYWT-ICML23}, each agent trains its local model through Sharpness Aware Minimization (SAM), which searches for models with uniformly low loss values. Moreover, various tracking approaches, e.g., gradient tracking~\cite{PuN-MP21} and momentum tracking~\cite{TakezawaBNSY-TMLR23}, have been proposed to resolve the data heterogeneity in decentralized learning. Different from the traditional tracking-based methods that track average gradients, \cite{AketiHR-NeurIPS24} proposes to let agents store a copy their neighbors' model parameters and then track the model updates instead of gradients. NET-FLEET\cite{ZhangFLYLZ-MobiHoc22} incorporates a recursive gradient correction technique to approximate the global stochastic gradient efficiently, combined with multi-round local update methods to accelerate the speed of convergence. Recently, \cite{EsfandiariTJBHHS-ICML21} proposes \textit{Cross gradient aggregation} (CGA) algorithm for decentralized learning on non-IID data. In CGA, each agent aggregates cross-gradients (i.e., derivatives of its model with respect to its neighbors’ datasets) and its local self-gradient based on \textit{quadratic programming} (QP). In \cite{AketiKR-TMLR23}, each agent takes into account two types of cross-gradients, i.e., the derivatives computed using its local data on its neighbors’ model parameters and the ones of its local model on its neighbors’ dataset.
  
  As mentioned in Sec.~\ref{sec:intro}, another challenging issue for distributed learning is privacy preservation. Differential privacy~\cite{Dwork-ICALP06,DworkR-FTTCS14} is a popular approach for privacy-preserving machine learning. Although it has been extensively used in centralized distributed learning (e.g., FL)~\cite{ShokriS-CCS15,LiWWHWLLH-TKDE23,LinWLSHJ-TDSC23,ChenDBZJ-TKDE24}, there are only a few proposals investigating privacy-preserved decentralized learning algorithms. \cite{HuangG-arXiv20} proposes a differentially private decentralized \textit{Alternating Direction Method of Multipliers} (ADMM) algorithm. In \cite{XuZW-TPAMI21}, a differentially private version of DPSGD is proposed. Specifically, each agent adds Gaussian noise to its local stochastic gradient before sharing with its peers. \cite{CyffersEBM-NIPS22} proposes Muffliato, a privacy amplification mechanism composed of local Gaussian noise injection at the node level followed by gossiping for averaging the private values. Unfortunately, none of these studies takes into account the heterogeneous data distribution. Recently, PDSL \cite{WangYWL-ICDCS25} proposed a decentralized differential privacy algorithm based on shapley values, but the calculation of shapley values is complex and requires a specifically generated validation set.

\section{System Model and Preliminary} \label{sec:pre}

  \subsection{Decentralized Stochastic Learning} \label{ssec:decentlearning}
    Consider a decentralized learning system consisting of a set of $N$ agents $\mathcal{N}=\{1,2,\cdots,N\}$. Let $\mathcal{D}_i$ represent the data distribution of agent $i \in \mathcal{N}$, and $D_i$ denote the number of data samples in $\mathcal{D}_i$. The main objective of decentralized learning is to let the agents collaboratively train a global model using their local datasets. This objective can be mathematically formulated as follows.

    \begin{equation} \label{eq:decentobj}
      \min_{{x} \in \mathbb{R}^d} \mathcal{F}({x}) = \frac{1}{N} \sum_{i=1}^{N} \underbrace{\mathbb{E}_{\zeta_i \sim \mathcal{D}_i} \left[F_i({x}; \zeta_i)\right]}_{\triangleq f_i({x})} 
    \end{equation}
    where ${x} \in \mathbb{R}^d$ represents the model parameters, $f_i({x}) \triangleq \mathbb{E}_{\zeta_i \sim \mathcal{D}_i} [F_i({x}; \zeta_i)]$ denotes the loss function for agent $i$, and $\zeta_i$ represents the randomly sampled data from $\mathcal{D}_i$. Without loss of generality, we assume $F_i (\cdot; \cdot) = F$ for $\forall i\in\mathcal{N}$.

    Suppose the agents communicate with each other over an undirected weighted graph $\mathcal{G} = (\mathcal{N},\mathbf{W})$, where $\mathbf{W} \in [0,1]^{N \times N}$ is the adjacency matrix of graph $\mathcal{G}$. Without loss of generality, we assume $\mathbf{W}$ is a symmetric doubly stochastic matrix, such that $\sum_{i = 1}^N w_{ij} = \sum_{j = 1}^N w_{ij} = 1$ and $w_{ij} = w_{ji}$ for $\forall i,j \in \mathcal{N}$, where $w_{ij}$ denotes the element in the $i$-th row and $j$-th column of the matrix $\mathbf{W}$~\cite{LianZZHZL-NIPS17, EsfandiariTJBHHS-ICML21, YuJY-ICML19, ZhangCHWY-ICML22}. We use $w_{ij}$ to encode how much agent $i$ can affect node $j$, while $w_{ij}=0$ implies agents $i$ and $j$ are disconnected. Let $\mathcal{N}_i \subseteq \mathcal{N}$ denote the set of neighbors of agent $i \in \mathcal{N}$ and $\mathcal{N}_i = \{ j \in \mathcal{N} \mid w_{i,j} > 0 \}$. Note that agent $i$ can communicate with itself by default such that $w_{ii} > 0$; hence, $\mathcal{N}_i$ includes agent $i$ itself, i.e., $i\in\mathcal{N}_i$. Assume $N_i = |\mathcal{N}_i|$ denotes the cardinality of $\mathcal{N}_i$.

  \subsection{Cross-gradient and Self-gradient} \label{ssec:crossgrad}
    As mentioned in Sec.~\ref{sec:intro}, in our DPDL algorithm, each agent updates it local model based on \textit{cross-gradients} received from its neighbors and \textit{self-gradient} calculated locally. We define the two key concepts as follows.
    \begin{defn} \label{def:sg}
      Consider agent $i \in \mathcal{N}$ with local dataset $\mathcal{D}_i$ and a random data sample \(\zeta_i \in \mathcal{D}_i\). Let $x_i$ denotes the local model parameters of agent $i$. The self-gradient $g^{ii}$ of agent $i$ with respect to $\zeta_i$ is defined as:
      \begin{equation} \label{eq:selfgrad}
        g^{ii} \triangleq \nabla_{x} F_i \left( x_i; \mathcal{\zeta}_i \right)
      \end{equation}
    \end{defn}
    \begin{defn}\label{def:cg}
      For any pair of agents $i$ and $j \in \mathcal{N}$ and a random data sample $\zeta_j\in\mathcal{D}_j$, the cross-gradient $g_t^{ij}$ of agent $i$ with respect to $\zeta_j$ is defined as:
      \begin{equation}\label{eq:cg}
        g^{ij} \triangleq \nabla_{x} F_j(x_i; \zeta_j)
      \end{equation}
      where $x_i$ denotes the local model parameters of agent $i$.
    \end{defn} 

    \subsection{Differential Privacy} \label{ssec:dp}
      Throughout the decentralized learning process, the agents collaborate with each other to train a global model through information exchange (e.g., exchanging the private gradient information in our case as shown above). This entails a significant demand on privacy preservation, as an adversary may eavesdrop on their communications to obtain the private information. In this paper, we leverage the notion of \textit{Differential Privacy} (DP)~\cite{Dwork-ICALP06,DworkR-FTTCS14} for the purpose of privacy preservation. Specifically, a randomized mechanism is differentially private, if the probability of generating a specific output does not depend significantly on whether or not a particular data point is involved in the input. The formal definition is given as in \textbf{Definition}~\ref{def:dp}.
      \begin{defn} \label{def:dp}
        A randomized mechanism $M: \mathbb{D} \rightarrow \mathbb{R}$ with domain $\mathbb{D}$ and range $\mathbb{R}$ is said to satisfy $(\epsilon, \delta)$-DP, if for any two adjacent inputs $\mathcal{D}, \mathcal{D}^{\prime} \in \mathbb{D}$ differing in a single item and any subset of outputs $\mathcal{S} \subseteq \mathbb{R}$, we have
        \begin{equation} \label{eq:dp}
          \Pr\left(\mathcal{M}(\mathcal{D}) \in \mathcal{S}\right) \leq e^{\epsilon} \Pr\left(\mathcal{M}(\mathcal{D}^{\prime}) \in \mathcal{S}\right) + \delta,
        \end{equation}
      \end{defn} 
      \noindent $\epsilon$ is the so-called \textit{privacy budget} specifying the privacy loss of a DP mechanism. A higher privacy budget indicates a greater potential for privacy loss.
      
      One approach to approximating a deterministic function, e.g., $f:\mathbb{D} \rightarrow \mathbb{R}^d$, with a DP mechanism is via additive noise calibrated to the $f$'s sensitivity. As shown in \textbf{Definition}~\ref{def:l2-sensitivity}, the $L_2$-sensitivity~\cite{Dwork-ICALP06,DworkR-FTTCS14,KasiviswanathanLNRS-FOCS08} of $f$,  i.e., $\Delta_2 f$, is defined as the maximum of the Euclidean norm $\left\Vert f(\mathcal{D}) - f(\mathcal{D}^{\prime}) \right\Vert$ where $\mathcal{D}$ and $\mathcal{D}^{\prime}$ are adjacent input; hence, it indicates the maximum influence that a single individual (i.e., the item in which $\mathcal{D}$ and $\mathcal{D}^{\prime}$ differ) can have on the output of $f$.
      \begin{defn} \label{def:l2-sensitivity}
        For $f : \mathbb{D} \rightarrow \mathbb{R}^d $, the $L_2$-sensitivity of $f$ is defined as
        \begin{equation} \label{eq:l2-sensitivity}
          \Delta_2 f = \max_{\mathcal{D}, \mathcal{D}^{\prime} \in \mathsf{D}} \left\Vert f(\mathcal{D}) - f(\mathcal{D}^{\prime}) \right\Vert
        \end{equation}
        where $\mathcal{D}$ and $\mathcal{D}^{\prime}$ differ by at most one element and $\Vert \cdot \Vert$ denotes Euclidean norm.
      \end{defn} 
      For example, Let $\mathbf{I}$ denotes the identity matrix, the Gaussian mechanism is defined by $M(\mathcal{D}) = f(\mathcal{D}) + \mathcal{N}(0, \sigma^2 \mathbf{I})$
      %
      %
      where $\mathcal{N}(0, \sigma^2 \mathbf{I})$ is a multivariate Gaussian distribution with zero mean and covariance $\sigma^2 \mathbf{I}$. $M$ is $(\epsilon, \delta)$-DP if $\sigma \geq \epsilon^{-1} \Delta_2 f \sqrt{2\ln(1.25/\delta)}$~\cite{Dwork-ICALP06,DworkR-FTTCS14,KasiviswanathanLNRS-FOCS08}.
      Thus, a smaller privacy budget $\epsilon$ requires Gaussian noise with a larger variance $\sigma^2$, and higher variance is also necessary when the sensitivity $\Delta_2 f$ is greater. 
      In addition, we introduce the definition of moment \cite{AbadiCGMMTZ-CCS16} in \textbf{Definition}~\ref{def:lambdamoments} which describes the $\lambda$-th moment of the privacy loss random variable $\mathcal{L}$ for randomized mechanism $M$. Intuitively, it indicates how much privacy is lost when mechanism $M$ is used.
      %
      
      \begin{defn} \label{def:lambdamoments}
        The ${\lambda}{th}$ moments of randomized mechanism $M: \mathbb{D} \rightarrow \mathbb{R}$ with input $\mathcal{D}$ is
        \begin{align} \label{eq:lambdamoments}
          \alpha_M(\lambda) &= \max_{\mathcal{D}, \mathcal{D}^{\prime}}\log \mathbb{E}_{\psi \sim M(\mathcal{D})} e^{\lambda \cdot \mathcal{L}(M, \mathcal{D}, \mathcal{D}^{\prime})}  \\
          &= \max_{\mathcal{D}, \mathcal{D}^{\prime}} \log \mathbb{E}_{\psi \sim M(\mathcal{D})} \left[ \frac{\Pr[M(\mathcal{D})= \psi]}{\Pr[M(\mathcal{D}^{\prime})= \psi]} \right]^{\lambda}
        \end{align}
        where $\mathcal{L}(M, \mathcal{D},\mathcal{D}^{\prime}) = \ln \left(\frac{\Pr[M(\mathcal{D})= \psi]}{\Pr[M(\mathcal{D}^{\prime})= \psi]} \right)$ represents the privacy loss between $\mathcal{D}$ and $\mathcal{D}^{\prime}$.
      \end{defn} 
      

\section{Our Privacy-Preserved Decentralized Stochastic Learning Algorithm}~\label{sec:dpdl}
  
  Our DPDL algorithm mainly consists of three phase: i) cross-gradient sharing with differential privacy, ii) gradient calibration based on similarity, and iii) local update with momentum. In the first phase, each agent calculates and shares cross-gradient information with its neighbors. For the purpose of preserving differential privacy, the gradient information is perturbed by Gaussian noise. In the second phase, considering the Gaussian noise and the heterogeneous data distribution, we calibrate the gradient information based on cosine similarity. The local model of each agent is finally updated in a momentum-like manner based on the calibrated gradient information in the third phase. The pseudo-code of our algorithm is shown in \textbf{Algorithm}~\ref{alg:dpdl}.

  \subsection{Cross-Gradient Sharing with Differential Privacy} \label{ssec:cgradexchange}
    In this section, we carry out both the computation and exchange of cross-gradients and the computation of self-gradients. Due to data heterogeneity, agents client models exhibit substantial divergence in parameter updates across training iterations. The cross-gradient mechanism serves to quantify the efficacy of neighboring agents' datasets when applied to the local model architecture, while simultaneously establishing directional guidance for subsequent optimization trajectories. 

    In each round $t=1,2,3,\ldots,T$, each agent $i$ first transmits its current model $x^i_{t-1}$ to its neighbors and samples batch data $\mathcal{B}^i_t$ from its local dataset (see Lines~\ref{ln:txcurmodel}$\sim$\ref{ln:randsample}). When receiving $x^j_{t-1}$ from its neighbor $j \in \mathcal{N}_i \setminus \{i\}$, agent $i$ computes cross-gradient of each $\zeta^i_{t,b} \in \mathcal{B}^i_t$ on model $x^j_{t-1}$ to pass information of $i$ to neighbor $j$:
    
    \begin{equation} \label{eq:compcrossgrad}
      g^{ji}_{t,b} = g^{ji}_{t} \left( \zeta^i_{t,b}  \right) =\nabla F_i \left( x^j_{t-1}; \zeta^i_{t,b} \right), \forall \zeta^i_{t,b} \in \mathcal{B}^i_t
    \end{equation}
    Given that cross-gradients are designed to be transmitted to specific neighbors for further computations, they become vulnerable to malicious interception during communication, particularly susceptible to gradient inversion attacks~\cite{GeipingBDM-NIPS20, FredriksonJR-CCS15}, which pose underlying privacy risks. To mitigate this, we apply DP-mechanism on cross-gradients (see Lines~\ref{ln:clipeachcrossgrad}$\sim$\ref{ln:pertcrossgrad}). Specifically, we first clip each individual cross-gradient generated in (\ref{eq:compcrossgrad}) to limit their $l_2$-sensitivity
    \begin{equation} \label{eq:clipcrossgrad}
      \hat{g}^{ji}_{t,b} = \hat{g}^{ji}_{t} \left( \zeta^i_{t,b} \right) = g^{ji}_{t,b} \cdot \min \left\{ 1, C\left\| g^{ji}_{t,b} \right\|^{-1} \right\}, \forall \zeta^i_{t,b} \in \mathcal{B}^i_t
    \end{equation}
    where $C$ denotes the clipping threshold. Then we perturb the averaged clipped gradient over each batch data $\mathcal{B}^i_t$ by adding independent Gaussian noise $z^{ji}_t \sim \mathcal{N}(0, \sigma^2 C^2 \mathbf{I})$ to preserve DP in the training process. 
    \begin{equation} \label{eq:noisecrossgrad}
        \ddot{g}_{t}^{ji} = \frac{1}{B} \left( \sum_b \hat{g}^{ji}_{t,b} + z^{ji}_t \right)
    \end{equation}
    where $\mathbf{I}$ denotes the identity matrix. After that, each agent $i$ sends perturbed cross-gradient $\ddot{g}_{t}^{ji}$ to its neighbor $j$ while receiving perturbed cross-gradient $\ddot{g}_{t}^{ij}$ computed by neighbor $j$. This process allows each agent to obtain gradients containing information from neighboring agents, thereby facilitating the design of update mechanisms for data heterogeneity scenario.
    %
    
    Then, following privacy-preserved methodology of cross-gradients, each agent $i$ computes its intrinsic self-gradient of each data sample $\zeta^i_{t,b} \in \mathcal{B}^i_t$ on its local model $x^i_{t-1}$:
    %
    %
    \begin{equation} \label{eq:compselfgrad}
        g^{ii}_{t,b} = g^{ii}_{t} \left( \zeta^i_{t,b}  \right) =\nabla F_i \left( x^i_{t-1}; \zeta^i_{t,b} \right), \forall \zeta^i_{t,b} \in \mathcal{B}^i_t
    \end{equation}
    Since the self-gradients of agent $i$ directly influence the update direction of model $x^i_{t-1}$ which will subsequently be exchanged among agents (as will be shown later in Sec.~\ref{ssec:gradcal}), and given the presence of self-loops, it is necessary to apply DP protection to self-gradients to prevent potential privacy leakage. Similar with (\ref{eq:clipcrossgrad})$\sim$(\ref{eq:noisecrossgrad}), we clip self-gradients of agent $i$ by each sample and perform batch perturbation by adding independent Gaussian noise $z^{ii}_{t} \sim \mathcal{N}(0, \sigma^2 \mathbf{I})$
    \begin{equation} \label{eq:clipselfgrad}
        \hat{g}^{ii}_{t,b} = \hat{g}^{ii}_{t} \left( \zeta^i_{t,b} \right) = g^{ii}_{t,b} \cdot \min \left\{ 1, C\left\| g^{ii}_{t,b} \right\|^{-1} \right\}, \forall \zeta^i_{t,b} \in \mathcal{B}^i_t
    \end{equation}
    %
    \begin{equation} \label{eq:noiseselfgrad}
      \ddot{g}_{t}^{ii} = \frac{1}{B} \left( \sum_b \hat{g}^{ii}_{t,b} + z^{ii}_t \right)
    \end{equation}
    as shown in Lines~\ref{ln:clipeachselfgrad}$\sim$\ref{ln:pertselfgrad} . It is noteworthy that Line~\ref{ln:clipselfgrad} necessitates an independent calculation of batch-averaged clipped self-gradient, which will be employed to guide the orientation of model update in forthcoming optimization step
    \begin{equation} \label{eq:clipbatchedselfgrad}
      \hat{g}^{ii}_{t} = \frac{1}{B} \sum_b \hat{g}^{ii}_{t,b}
    \end{equation}

    \begin{algorithm*}[!t]
      \caption{DPDL (at each agent $i \in \mathcal{N}$)}
      \label{alg:dpdl}
        \KwIn{Local dataset $\mathcal{D}_{i}$, doubly stochastic weight matrix $\mathbf{W}$, initial model parameter $x^i_0$ and momentum $v^i_0$, batch size $B$, learning rate $\eta$, momentum coefficient $\beta$, the maximum number of communication rounds $T$, clipping threshold $C$, variance of Gaussian noise $\sigma$.}
        \KwOut{Model $x_{T}^i$ at agent $i$ in the end of round $T$.}
        \ForEach{$t=1,2,3,\cdots,{T}$}{
          $\rhd$~\textit{Gradient sharing with differential privacy}: \\
          Send current model $x^i_{t-1}$ to each neighbor $j \in \mathcal{N}_i \setminus \{i\}$; \label{ln:txcurmodel}\\
          Randomly sample $B$ data as batch from local dataset $\mathcal{D}_i$, denote as $\mathcal{B}^{i}_t$; \label{ln:randsample}\\
          \ForEach{$j \in \mathcal{N}_i \setminus \{i\}$}{ \label{ln:sharecg-start}
            Receive $x^j_{t-1}$ from agent $j$; \label{ln:rxcurmodel}\\
            Compute cross-gradient $g^{ji}_{t,b} = g^{ji}_{t} \left( \zeta^{i}_{t,b}  \right) =\nabla F_i \left( x^j_{t-1}; \zeta^{i}_{t,b} \right)$ for any  $\zeta^{i}_{t,b} \in \mathcal{B}^{i}_t$; \label{ln:compeachcrossgrad}\\
            Clip the cross-gradient $\hat{g}^{ji}_{t,b} = \hat{g}^{ji}_{t} \left( \zeta^{i}_{t,b} \right) = g^{ji}_{t,b} \cdot \min \left\{ 1, C\left\| g^{ji}_{t,b} \right\|^{-1} \right\}$ for any $\zeta^{i}_{t,b} \in \mathcal{B}^{i}_t$; \label{ln:clipeachcrossgrad}\\
            Perturb the cross-gradient $\ddot{g}_{t}^{ji} = \frac{1}{B} \left( \sum_b \hat{g}^{ji}_{t,b} + z^{ji}_t \right)$ where $z^{ji}_t \sim \mathcal{N}(0, \sigma^2C^2\mathbf{I})$; \label{ln:pertcrossgrad}\\
            Send $\ddot{g}^{ji}_{t}$ to each neighbor $j \in \mathcal{N}_i \setminus \{i\}$; \label{ln:txpertgrad}\\
          } \label{ln:sharecg-end}
          Compute self-gradient $g^{ii}_{t,b} = g^{ii}_{t} \left( \zeta^{i}_{t,b}  \right) =\nabla F_i \left( x^i_{t-1}; \zeta^{i}_{t,b} \right)$ for any $\zeta^{i}_{t,b} \in \mathcal{B}^{i}_t$; \label{ln:compeachselfgrad}\\
          Clip the self-gradient $\hat{g}^{ii}_{t,b} = \hat{g}^{ii}_{t} \left( \zeta^{ii}_{t,b} \right) = g^{ii}_{t,b} \cdot \min \left\{ 1, C\left\| g^{ii}_{t,b} \right\|^{-1} \right\}$ for any $\zeta^{i}_{t,b} \in \mathcal{B}^{i}_t$; \label{ln:clipeachselfgrad}\\
          Aggregate the above clipped self-gradients $\hat{g}^{ii}_{t} = \frac{1}{B} \sum_b \hat{g}^{ii}_{t,b} $; \label{ln:clipselfgrad}\\
          Perturb the self-gradient $\ddot{g}_{t}^{ii} = \frac{1}{B} \left( \sum_b \hat{g}^{ii}_{t,b} + z^{ii}_t \right)$ where $z^{ii}_t \sim \mathcal{N}(0, \sigma^2C^2\mathbf{I})$; \label{ln:pertselfgrad}\\
          $\rhd$~\textit{Gradient calibration based on similarity}: \\
          Receive $\ddot{g}_{t}^{ij}$ from each neighbor $j \in \mathcal{N}_i \setminus \{i\}$; \label{ln:rxpertgrad}\\
          Compute cosine similarity $S \left( \ddot{g}^{ij}_{t}, \hat{g}^{ii}_{t} \right) = \frac{\left\langle\ddot{g}^{ij}_{t}, \hat{g}^{ii}_{t} \right\rangle}{\big\| \ddot{g}^{ij}_{t} \big\| \cdot \big\| \hat{g}^{ii}_{t} \big\|}$ for each $j \in \mathcal{N}_i$; \label{ln:compsimilarity}\\
          Compute calibration parameter $C^{ij}_t = \frac{1}{1+e^{S \left( \ddot{g}^{ij}_{t}, \hat{g}^{ii}_{t} \right)}}$ for each $j \in \mathcal{N}_i$; \label{ln:compcalparameter}\\
          Calibrate gradient ${\tilde{g}}^{ij}_{t} = \frac{1}{\sqrt{w_{ij}}N} \ddot{g}^{ij}_{t} + \alpha w_{ij} C^{ij}_t \hat{g}^{ii}_{t}$ for each $j \in \mathcal{N}_i$; \label{ln:compcalibration}\\
          Aggregate ${\tilde{g}}^i_t = \sum_{j \in \mathcal{N}_i} \tilde{g}^{ij}_{t}$; \label{ln:agggrad}\\
          $\rhd$~\textit{Local update with momentum:} \\
          Update momentum parameter $\tilde{v}^i_{t} = \beta v^i_{t-1} + {\tilde{g}}_t^i$; \label{ln:updatemompara}\\
          Update model parameter $\tilde{x}^i_{t} = x_{t-1}^i - \eta\tilde{v}^i_{t}$; \label{ln:updatemodelpara}\\
          Send $\tilde{v}^i_{t}$ and $\tilde{x}^i_{t}$ to each neighbor $j \in \mathcal{N}_i \setminus \{i\}$; \label{ln:txmompara}\\
          Receive $\tilde{v}^j_{t}$ and $\tilde{x}^j_{t}$ from each neighbor $j \in \mathcal{N}_i \setminus \{i\}$; \label{ln:rxmompara}\\
          Aggregate momentum parameters $v^i_{t} = \sum_{j \in \mathcal{N}_i} w_{ij} \tilde{v}^j_{t} $; \label{ln:aggremompara}\\
          Aggregate local models $x^i_{t} = \sum_{j \in \mathcal{N}_i} w_{ij} \tilde{x}^j_{t}$; \label{ln:aggremodel}
      }
    \end{algorithm*}

  \subsection{Gradient Calibration based on Similarity} \label{ssec:gradcal}
    To tackle the challenge of non-IID data, the cross-gradient information is aggregated through QP~\cite{EsfandiariTJBHHS-ICML21} or weighted averaging~\cite{AketiKR-TMLR23}. Unfortunately, as mentioned above, the cross-gradient information is perturbed by Gaussian noise, while the traditional gradient aggregation methods cannot effectively address the random perturbation, as will be shown later in Sec.~\ref{sec:exp}. Therefore, we propose to calibrate the cross-gradient information based on similarity before aggregating the gradients.
     
    %
    According to (\ref{eq:noisecrossgrad}), for each agent, both the cross-gradients received from its neighbors and the self-gradient are deviated from each other, since they are calculated based on heterogeneous datasets and are perturbed by Gaussian noise for privacy preservation. 
    To address this, our algorithm employs \textit{cosine similarity} to calibrate the gradients and mitigate such deviations, as it is a widely used metric for measuring the similarity between two non-zero vectors.
    Specifically, for any $j \in \mathcal{N}_i$, we compute the cosine similarity between $\ddot{g}_{t}^{ij}$ and $\hat{g}_{t}^{ii}$ as follows
    \begin{equation} \label{eq:similarity}
      S \left( \ddot{g}^{ij}_{t}, \hat{g}^{ii}_{t} \right) = \frac{\left\langle\ddot{g}^{ij}_{t}, \hat{g}^{ii}_{t} \right\rangle}{\big\| \ddot{g}^{ij}_{t} \big\| \cdot \big\| \hat{g}^{ii}_{t} \big\|}
    \end{equation}
    as shown in Lines~\ref{ln:rxpertgrad}$\sim$\ref{ln:compsimilarity}. We then adopt a sigmoid function to calculate a calibration parameter
    \begin{equation} \label{eq:calparameter}
      \mathcal{C}^{ij}_t = \frac{1}{1+e^{S\left( \ddot{g}_{t}^{ij}, \hat{g}_{t}^{ii} \right)}}
    \end{equation}
    which can make the similarity in (\ref{eq:similarity}) inversely proportional to the calibration parameter and avoid the appearance of negative coefficients. Based on which, we calibrate $\ddot{g}^{ij}_t$ by
    \begin{equation} \label{eq:calibrate}
      \tilde{g}^{ij}_{t} = \frac{1}{\sqrt{w_{ij}}N} \ddot{g}^{ij}_{t} + \alpha w_{ij} \mathcal{C}^{ij}_t \hat{g}^{ii}_{t}
    \end{equation}
    where $\alpha$ is a hyper-parameter to reduce the oscillation of the algorithm in sparse topology. According to (\ref{eq:similarity}) and (\ref{eq:calparameter}), for each neighbor $j \in \mathcal{N}_i \setminus \{i\}$ of agent $i$, we use $\hat{g}^{ii}_t$ to "rectify" the deviation of the perturbed gradient $\ddot{g}^{ij}_t$. Specifically, if $\ddot{g}^{ij}_t$ is closely aligned with $\hat{g}^{ii}_t$ (which implies they have a high cosine similarity), we apply a small correction, since the deviation is minor. Conversely, when the cosine similarity is low, indicating a significant deviation, a larger adjustment is made during the calibration process. Finally, as demonstrated in Line~\ref{ln:agggrad}, we aggregate all the calibrated cross-gradients and the self-gradient by
    
    %
    \begin{equation} \label{eq:aggragategrad}
      \tilde{g}^i_t = \sum_{j \in \mathcal{N}_i} \tilde{g}^{ij}_{t}
    \end{equation}

  \subsection{Local Update with Momentum} \label{ssec:upmomentum}
    To accelerate the convergence of our algorithm, we adopt momentum-like mechanism~\cite{Qian-NN19} to update the local models. In particular, as shown in Lines~\ref{ln:updatemompara}$\sim$\ref{ln:updatemodelpara}, we update the momentum parameter and the model parameter by
    \begin{equation}\label{eq:updatemompara}
      \tilde{v}^i_{t} = \beta v^i_{t-1} + \tilde{g}^i_t
    \end{equation}
    and
    \begin{equation}\label{eq:updatemodelpara}
      \tilde{x}^i_t = x^i_{t-1} - \eta\tilde{v}^i_t
    \end{equation}
    respectively. We then let agent $i$ broadcast $\tilde{v}^i_{t}$ and $\tilde{x}^i_{t}$ to its neighbors (see Line~\ref{ln:txmompara}). After receiving $\tilde{v}^j_{t}$ and $\tilde{x}^j_{t}$ from each neighbor $j \in \mathcal{N}_i \setminus \{i\}$, agent $i$ finally performs the following update
    \begin{equation}\label{eq:aggremompara}
      v^i_{t} = \sum_{j \in \mathcal{N}_i} w_{ij} \tilde{v}^j_{t}
    \end{equation}
    and
    \begin{equation}\label{eq:aggremodel}
      x^i_{t} = \sum_{j \in \mathcal{N}_i} w_{ij} \tilde{x}^j_{t}
    \end{equation}
    as shown in Lines~\ref{ln:aggremompara}$\sim$\ref{ln:aggremodel}, respectively.

\section{Theoretical Analysis} \label{sec:theory}
  In this section, we provide a theoretical analysis of the DPDL algorithm from two perspectives. Specifically, in Sec.~\ref{ssec:privacy}, we show how Gaussian noise is leveraged to achieve \((\epsilon, \delta)\)-DP. In Sec.~\ref{ssec:convergence}, we establish the convergence properties of our algorithm under DP and non-IID data. 

  \subsection{Privacy Preservation} \label{ssec:privacy}
  %
    %
    As shown in Sec.~\ref{sec:dpdl}, each agent clips the gradient of every individual data sample in the batch and then perturbs the averaged gradient over the batch with Gaussian noise determined by $\sigma$ and $C$. Taking a larger value for $\sigma$ implies that we introduce greater noise to perturb the gradient information, and thus results in stronger privacy preservation. In \textbf{Theorem}~\ref{thm:dp}, we reveal the lower bound of $\sigma$ to characterize the Gaussian noise based on which $(\epsilon, \delta)$-DP of our algorithm is ensured.
    %

    \begin{thm} \label{thm:dp}
      Let $w_{min} = \min_{i \in \mathcal{N}, j\in\mathcal{N}_i} w_{ij}$, $w_{max} = \max_{i \in \mathcal{N}, j\in\mathcal{N}_i} w_{ij}$ and $q = \frac{B}{N \min_{i \in \mathcal{N}} D_i}$. For any $0<\delta<1$ and $\epsilon \leq c_1 q^2 T$, \textbf{Algorithm}~\ref{alg:dpdl} is $(\epsilon, \delta)$-DP throughout $T$ rounds, when
      \begin{equation} \label{eq:global_lwsigma}
        \sigma \geq \frac{\left( \frac{2}{\sqrt{w_{min}}} + \frac{15\alpha N}{8}+ \frac{\alpha w_{max} (h+1)^2N}{2h} \right) c_2q \sqrt{T \ln(1/\delta)}}{\sqrt{\sum_{j \in \mathcal{N}_i} \frac{1}{w_{max}}} \cdot \epsilon}
      \end{equation}
      where $h = \max_{i \in \mathcal{N}, t<T, \zeta^{i}_{t,b} \in \mathcal{B}^{i}_t} \frac{\left\| \frac{1}{B}\sum_b \hat{g}^{ii}_{t,b} \right\|}{\left\| \frac{1}{B}\sum_b \hat{g}^{ii}_{t,b} + z^{ii}_t \right\|}+1$ and $c_1, c_2 >0$ are constants.
    \end{thm}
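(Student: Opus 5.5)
We will prove the statement with the moments accountant of Abadi et al.\ (Definition~\ref{def:lambdamoments}), applied to the quantity each agent actually releases per round. Fix an agent $i$ and adjacent local datasets $\mathcal{D}_i,\mathcal{D}_i'$ differing in one sample. The plan has three steps: bound the per-round $L_2$-sensitivity of the released quantities; apply the subsampled Gaussian moment bound; compose over $T$ rounds and convert to $(\epsilon,\delta)$-DP.

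\textbf{Step 1: per-round sensitivity.} Agent $i$'s data enters its neighbors through two channels: the perturbed cross-gradients $\ddot g^{ji}_t$, and its own calibrated aggregate $\tilde g^i_t$, which is then passed on via $\tilde v^i_t,\tilde x^i_t$.

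For the cross-gradient channel, clipping in (\ref{eq:clipcrossgrad}) means changing one sample moves $\sum_b \hat g^{ji}_{t,b}$ by at most $2C$. After the weight $\frac{1}{\sqrt{w_{ij}}N}$ in (\ref{eq:calibrate}), this contributes at most $\frac{2C}{\sqrt{w_{min}}}$ up to the $1/N$ normalisation. This gives the first term of (\ref{eq:global_lwsigma}).

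For the self-gradient channel, we bound the change of $\alpha\sum_{j} w_{ij}\mathcal{C}^{ij}_t\hat g^{ii}_t$. We split it as
\[
\alpha\sum_j w_{ij}\bigl(\mathcal{C}^{ij}_t-\mathcal{C}'^{ij}_t\bigr)\hat g^{ii}_t\;+\;\alpha\sum_j w_{ij}\,\mathcal{C}'^{ij}_t\bigl(\hat g^{ii}_t-\hat g'^{ii}_t\bigr).
\]
\begin{itemize}
\item For the second part, use $\mathcal{C}\in(0,1)$ together with the sigmoid range on $S\in[-1,1]$, so that $\mathcal{C}\le \frac{1}{1+e^{-1}}$. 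Combined with the $2C$ sensitivity of the sum of clipped gradients, this yields a constant-times-$\alpha N$ contribution. We expect the constant $\frac{15}{8}$ to come from bounding the sigmoid via its Taylor expansion or its Lipschitz constant $1/4$.
\item For the first part, use that the sigmoid is $\frac14$-Lipschitz, so $|\mathcal{C}^{ij}_t-\mathcal{C}'^{ij}_t|\le \frac14|S-S'|$. Then bound the change in cosine similarity using the ratio between the clean and the noisy self-gradient norms, which is exactly where $h$ enters. Cosine perturbation bounds of the form $\|u/\|u\|-u'/\|u'\|\|\le 2\|u-u'\|/\|u\|$, applied with ratios bounded by $h$, should give a factor like $\frac{(h+1)^2}{h}$ times $w_{max}$. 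This is the third term.
\end{itemize}
Summing the three pieces gives a total sensitivity $\Delta_t \le C\bigl(\frac{2}{\sqrt{w_{min}}}+\frac{15\alpha N}{8}+\frac{\alpha w_{max}(h+1)^2N}{2h}\bigr)$, up to the $1/B$ scaling shared with the noise.

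\textbf{Step 2: effective noise and subsampling.} The released aggregate combines independent Gaussian noises from $N_i$ neighbors, each scaled by $\frac{1}{\sqrt{w_{ij}}N}$. The total noise standard deviation is therefore at least $\sigma C\sqrt{\sum_{j\in\mathcal{N}_i} 1/w_{max}}/N$, which explains the denominator factor $\sqrt{\sum_j 1/w_{max}}$. The effective noise multiplier is then $\sigma_{\rm eff}=\sigma C\sqrt{\sum_j 1/w_{max}}/\Delta_t$. With sampling ratio $q$ (batch $B$ out of at least $N\min_i D_i$ samples network-wide), we invoke Lemma~3 of Abadi et al.: for $\sigma_{\rm eff}\ge1$ and $\lambda\le\sigma_{\rm eff}^2\ln(1/(q\sigma_{\rm eff}))$, the per-round moment satisfies $\alpha(\lambda)\le q^2\lambda(\lambda+1)/((1-q)\sigma_{\rm eff}^2)+O(q^3\lambda^3/\sigma_{\rm eff}^3)$. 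Post-processing (the momentum and gossip steps) does not increase the moments.

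\textbf{Step 3: composition and conversion.} Moments compose additively over $T$ rounds, giving $\alpha_{\rm tot}(\lambda)\le Tq^2\lambda^2/\sigma_{\rm eff}^2$ up to constants. The tail bound gives $\delta=\min_\lambda\exp(\alpha_{\rm tot}(\lambda)-\lambda\epsilon)$. Choosing $\lambda\asymp\epsilon\sigma_{\rm eff}^2/(Tq^2)$ yields $(\epsilon,\delta)$-DP provided $\sigma_{\rm eff}\ge c_2 q\sqrt{T\ln(1/\delta)}/\epsilon$. The constraint $\epsilon\le c_1q^2T$ ensures this $\lambda$ lies in the admissible range of the lemma, exactly as in Theorem~1 of Abadi et al. Substituting $\sigma_{\rm eff}$ and rearranging for $\sigma$ gives (\ref{eq:global_lwsigma}).

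The main obstacle is Step 1, specifically the similarity-dependent calibration term. $\mathcal{C}^{ij}_t$ depends nonlinearly on both the private clean gradient $\hat g^{ii}_t$ and the noisy $\ddot g^{ij}_t$. Bounding its sensitivity requires controlling the normalisations, and only the data-dependent ratio $h$ makes this possible. Getting the precise constants $\frac{15}{8}$ and $\frac{(h+1)^2}{2h}$ will take careful bookkeeping, but any constant-factor version suffices for the structure of the claim.
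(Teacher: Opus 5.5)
Your overall architecture is the paper's. You bound the $L_2$-sensitivity of the aggregate $\tilde g^i_t$ and treat $\frac{1}{B}\sum_{j\in\mathcal{N}_i}\frac{1}{\sqrt{w_{ij}}N}z^{ij}_t$ as its noise, with standard deviation at least $\frac{\sigma C}{BN}\sqrt{\sum_{j\in\mathcal{N}_i}1/w_{max}}$ (your direction is the right one; the appendix writes $w_{min}$ there, which is not). You then apply the subsampled-Gaussian moment bound (rescaling to a unit-sensitivity multiplier $\sigma_{\mathrm{eff}}$ is equivalent to \textbf{Lemma}~\ref{lem:sampled_gaussian_mgf}), pass to $\tilde v^i_t,\tilde x^i_t$ by post-processing, compose over $T$ rounds, and convert with the Markov tail bound. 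The problems are in Step~1.

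\textbf{The first term.} The $\frac{2}{\sqrt{w_{min}}}$ term does not come from the outgoing cross-gradients $\ddot g^{ji}_t$. In $\tilde g^i_t$ --- the vector whose noise you use in Step~2 --- the summands $\ddot g^{ij}_t$ with $j\neq i$ are computed by agent $j$ on agent $j$'s batch. They do not change when one of agent $i$'s samples changes. The only data-dependent cross term is the $j=i$ summand $\frac{1}{\sqrt{w_{ii}}N}\ddot g^{ii}_t$, giving $\frac{2C}{\sqrt{w_{ii}}BN}$ as in the bound after (\ref{eq:seni_pre}). The $\ddot g^{ji}_t$ are separate releases: they are sent over the wire and enter the neighbours' aggregates $\tilde g^j_t$ with other noise. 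Adding their sensitivity to that of $\tilde g^i_t$ and dividing by one noise level is therefore not a valid accounting. The paper does not account for this channel, and the stated bound could not cover it in general. For example, under a common initialisation the $N_i-1$ copies of $\ddot g^{ji}_1$ carry the same clipped sum and can be averaged, leaving an effective multiplier $\sigma/\sqrt{N_i-1}$. By contrast, for uniform weights on a complete graph the coefficient of $c_2q\sqrt{T\ln(1/\delta)}/\epsilon$ in (\ref{eq:global_lwsigma}) is $O(1)$ in $N$.

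Two further weaknesses are inherited from the paper rather than specific to you. The $z^{ij}_t$, $j\neq i$, are also contained in the transmitted $\ddot g^{ij}_t$, so counting them as fresh noise in $\tilde g^i_t$ is optimistic. And your ``network-wide'' reading of $q$ does not hold, since a given sample of agent $i$ enters $\mathcal{B}^i_t$ with probability $B/D_i$.

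\textbf{The calibration term.} This term needs the split the paper makes between $j\neq i$ and $j=i$, which your plan does not make.
For $j\neq i$, the other argument of $S(\ddot g^{ij}_t,\hat g^{ii}_t)$ is $\ddot g^{ij}_t$, which is fixed. So the similarity moves by at most $2\|\hat g^{ii}_t-(\hat g^{ii}_t)'\|/\|\hat g^{ii}_t\|$, the factor $\|\hat g^{ii}_t\|$ cancels, and no $h$ appears. This is \textbf{Lemma}~\ref{lem:sigmoidderivation}. Its $\frac{15}{16}$ comes from $\|\nabla G\|\le g+g(1-g)|\sin\theta|\le g(2-g)$ with $g\le\frac34$, not from a Taylor expansion. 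Summing over $j\neq i$ gives $\frac{15\alpha(1-w_{ii})C}{8B}$.
Only for $j=i$ do both arguments move, since $\ddot g^{ii}_t=\hat g^{ii}_t+z^{ii}_t/B$. That single summand, with weight $w_{ii}\le w_{max}$, is where $\|\hat g^{ii}_t\|/\|\ddot g^{ii}_t\|$ and hence $h$ enter (\textbf{Lemma}~\ref{lem:sigmoidderivation2}).
If $h$ enters the similarity change for every $j$, as you describe, the third term carries $\sum_j w_{ij}=1$ instead of $w_{max}$. That is off by a factor of order $1/w_{max}$ when $h$ is large and cannot be absorbed into $c_2$.

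With the split made, your product-rule route is a legitimate alternative to the paper's mean-value/Jacobian argument. The $\frac14$-Lipschitz sigmoid, the unit-vector bound and $\mathcal{C}^{ij}_t\le\frac{1}{1+e^{-1}}$ give Lipschitz constants $\frac12+\frac{1}{1+e^{-1}}$ for $j\neq i$ and $\frac{h}{2}+\frac{1}{1+e^{-1}}$ for $j=i$. Each is at most twice the paper's $\frac{15}{16}$ and $\frac{(h+1)^2}{4h}$, so the loss goes into $c_2$. Your route also needs the norm ratio only at one endpoint. The mean-value argument tacitly needs it, and non-vanishing of $x$ and $x+r$, along the whole segment.
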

    The lower bound of $\sigma$ is primarily determined by privacy budget $\epsilon$, implying that a smaller $\epsilon$ necessitates a larger $\sigma$. Additionally, it is also positively correlated with to the following parameters: the training rounds $T$, the agents number $N$, the global sampling rate $q$ and the hyper-parameters $\alpha$. The corresponding proof of \textbf{Theorem}~\ref{thm:dp} is provided in  Appendix~\ref{sec:proofdp}.

  \subsection{Convergence} \label{ssec:convergence}
    Before presenting our main results about the convergence of our algorithm, we introduce \textbf{Assumptions}~\ref{assum:smooth}$\sim$\ref{assum:dsm} which have been commonly used in the theoretical analysis of stochastic learning algorithms~\cite{EsfandiariTJBHHS-ICML21, YuJY-ICML19}. Note that when these assumptions do not hold, our algorithm still works but the convergence may not be ensured. \textbf{Assumption}~\ref{assum:smooth} implies the Lipschitz smoothness of $f_i(\cdot)$ for $\forall i\in\mathcal{N}$. In \textbf{Assumption}~\ref{assum:variance}, gradient variances induced by data heterogeneity are bounded. In \textbf{Assumption}~\ref{assum:dsm}, we suppose the adjacent matrix of the communication graph is doubly stochastic.
    \begin{assumption} \label{assum:smooth}
      \textbf{Lipschitz smoothness}. Each function $f_i(\cdot)$ is $L$-smooth for any $i\in\mathcal{N}$. Specifically, there exist a constant $L \geq 0$ such that
      \begin{equation} \label{eq:smooth}
        f_i(x) \leq f_i(y)+\nabla f^\top_i(y)(x-y)+\frac{L}{2}\|x-y\|^2, ~\forall i\in\mathcal{N}
      \end{equation}
    \end{assumption}
    \begin{assumption} \label{assum:variance}
      \textbf{Bounded gradient variance}. There exist $\xi>0$ and $\kappa>0$ such that
      \begin{equation} \label{var_1}
        \mathbb{E}_{\zeta\sim\mathcal{D}_i} \big[ \left\|   \nabla F_i(x;\zeta) - \nabla f_i(x) \right\|^2 \big] \leq \xi^2, ~\forall i\in\mathcal{N}
      \end{equation}
      and
      \begin{equation} \label{var_2}
        \frac{1}{N} \sum_{i=1}^N \big\| \nabla f_i({x})-\nabla \mathcal{F}({x}) \big\|^2 \leq \kappa^2, ~\forall i\in\mathcal{N}
      \end{equation}
      %
    %
    \end{assumption}
    \begin{assumption} \label{assum:dsm}
      \textbf{Doubly Stochastic Matrix}. The adjacent matrix of the communication graphs $\mathcal{G}$, i.e., $\mathbf{W}$, is doubly stochastic matrix such that
      \begin{equation} \label{eq:dsm}
        \lambda_1(\mathbf{W}) = 1 ~\text{and}~ \max \left\{  \left|\lambda_2(\mathbf{W})\right|, \left|\lambda_N(\mathbf{W}) \right| \right\} \leq \sqrt{\rho}
      \end{equation}
      where $\lambda_i(\mathbf{W})$ denotes the $i$-th largest eigenvalue of $\mathbf{W}$ and $\rho \in [0,1]$ is a constant.
    \end{assumption}

    It is revealed in \textbf{Theorem}~\ref{thm:convergence} that, when \textbf{Assumptions}~\ref{assum:smooth}-\ref{assum:dsm} hold, the average gradient magnitude achieved by our algorithm is mainly bounded by the difference between the initial value of the objective function and the optimal objective value, if we carefully tune the learning rate. In addition, the level of the Gaussian noise represented by $\sigma$ and the clipping threshold $C$, also influence the convergence of our algorithm. Nevertheless, they are minor polynomial factors and are positively correlated with the convergence bound (larger values of $\sigma$ or $C$ result in more relax convergence bound). The detailed proof of \textbf{Theorem}~\ref{thm:convergence} can be found in Appendix~\ref{sec:convergenceproof}. 
    %
    %
    %
    \begin{thm} \label{thm:convergence}
      Let \textbf{Assumptions}~\ref{assum:smooth}-\ref{assum:dsm} hold. Let $\bar{x}_t = \frac{1}{N} \sum^N_{i=1} x^i_t$ and $w_{min} = \min_{i \in \mathcal{N}, j\in\mathcal{N}_i} w_{ij}$. When learning rate $\eta$ satisfies the following condition
      \begin{align}\label{eq:learningrate}
      \begin{cases}
        \eta & > \frac{(1-\beta)^2}{256} \\
        \eta & \leq \frac{(1-\beta)(1-\rho)}{2\sqrt{30}L} \\
        \eta & \leq \frac{(1-\sqrt{\rho}) \sqrt{256^2+15(1-\beta)^2L^2N^2}-256(1-\sqrt{\rho})^2}{30L^2N}
      \end{cases}
      \end{align}
      for any $T \geq 1$, we have
      \begin{align}\label{eq:convergence}
        & \frac{1}{T} \sum^{T}_{t=1} \mathbb{E}\left[ \left\|\nabla \mathcal{F}\left(\bar{x}_{t-1}\right)\right\|^{2}\right] \nonumber \\
        \leq & \frac{1}{C_{1}T}\left( \mathcal{F}\left(\bar{{x}}_{0}\right) -\mathcal{F}^{\star}\right) 
        + \frac{60\eta^2C_{6}} {(1-\beta)^2(1-\sqrt{\rho})^2} \kappa^2  \nonumber \\
        & + \left(\frac{4C_5}{N} + \frac{60\eta^2C_{6}} {(1-\beta)^2(1-\sqrt{\rho})^2} \right) \xi^2 \nonumber \\
        & + \left(C_{2} \frac{\eta^2\beta^2}{(1-\beta)^4} + C_{3} + 2C_5\right)\frac{3C^2\sigma^2d}{w_{min}^2B^2N^2} \nonumber \\
        & + \frac{24\eta^2C_6 C^2\sigma^2d} {w_{min}^2(1-\beta)^2(1-\sqrt{\rho})^2B^2N} C^2\sigma^2d \nonumber \\
        & + \left(C_{2} \frac{\eta^2\beta^2}{(1-\beta)^4} + C_{3} + 2C_5 \right) \frac{48 C^2 + 27w_{min}^2\alpha^2 C^2N}{16w_{min}^2N} \nonumber \\
        & + \frac{\left(48B^2 + 27w_{min}^2\alpha N\right) \eta^2 C_6 C^2} {w_{min}^2(1-\beta)^2(1-\sqrt{\rho})^2B^2N}
      \end{align}
      where
      \begin{equation} \label{eq:conconstants}
      \begin{cases}
        &C_{1} = \frac{\eta}{2(1-\beta)}-\frac{1-\beta}{32} \vspace{.5ex}, \;\; C_{2}=\frac{(1-\beta) L}{2 C_{1} \beta} \vspace{.5ex}, \;\; C_{3}=\frac{L \eta^2}{2 C_1 (1-\beta)^3} \vspace{.5ex}\\
        &C_{4}=\frac{\eta}{2 C_{1} (1-\beta)}, \;\; C_{5}=\frac{8\eta^2}{2C_1(1-\beta)^3}, \;\; C_{6}=\frac{L^{2} \eta}{2 C_{1} (1-\beta)}
      \end{cases}
      \end{equation}
    \end{thm}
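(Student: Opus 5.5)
The plan follows the momentum-SGD analysis of \cite{YuJY-ICML19} and the CGA analysis of \cite{EsfandiariTJBHHS-ICML21}, working with averaged quantities. Averaging Lines~\ref{ln:aggremompara}--\ref{ln:aggremodel} over agents and using that $\mathbf{W}$ is doubly stochastic (\textbf{Assumption}~\ref{assum:dsm}) gives
\[
\bar v_t=\beta\bar v_{t-1}+\bar g_t,\qquad \bar x_t=\bar x_{t-1}-\eta\bar v_t,\qquad \bar g_t=\tfrac1N\textstyle\sum_i\tilde g^i_t .
\]
To absorb the momentum, introduce the auxiliary sequence $\bar y_t=\bar x_t+\frac{\beta}{1-\beta}(\bar x_t-\bar x_{t-1})$, which satisfies $\bar y_t=\bar y_{t-1}-\frac{\eta}{1-\beta}\bar g_t$. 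Applying $L$-smoothness (\textbf{Assumption}~\ref{assum:smooth}) to $\mathcal F(\bar y_t)$ then yields a descent inequality with four pieces: an inner product $-\frac{\eta}{1-\beta}\langle\nabla\mathcal F(\bar y_{t-1}),\mathbb E\bar g_t\rangle$, a quadratic term $\frac{L\eta^2}{2(1-\beta)^2}\mathbb E\|\bar g_t\|^2$, and the two discrepancies $\|\bar y_{t-1}-\bar x_{t-1}\|^2$ and $\|\bar x_{t-1}-x^i_{t-1}\|^2$, which enter through Young's inequality.

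The next step is to decompose $\tilde g^i_t$ using \eqref{eq:calibrate}. It splits into:
\begin{itemize}
\item the scaled cross-gradients $\frac{1}{\sqrt{w_{ij}}N}\ddot g^{ij}_t$, which equal clipped gradients $\nabla F_j(x^i_{t-1};\cdot)$ plus noise $z^{ij}_t/B$;
\item the calibration term $\alpha\sum_j w_{ij}\mathcal C^{ij}_t\hat g^{ii}_t$, whose norm is at most $\alpha C$ because $\mathcal C^{ij}_t\in(0,1)$, $\sum_j w_{ij}=1$, and $\|\hat g^{ii}_t\|\le C$.
\end{itemize}
The noise is zero-mean with variance $\sigma^2C^2d/B^2$ per entry. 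After the $1/(\sqrt{w_{ij}}N)$ scaling and averaging over $N$ agents, it contributes at most $\sigma^2C^2d/(w_{min}^2B^2N^2)$-type terms; this is where the $3C^2\sigma^2 d/(w_{min}^2B^2N^2)$ terms come from. The clipping bias and the calibration term are bounded deterministically by quantities of order $C^2/w_{min}^2$ and $\alpha^2C^2$, which produce the $\frac{48C^2+27w_{min}^2\alpha^2C^2N}{16w_{min}^2N}$ term. The unclipped part is compared to $\nabla\mathcal F(\bar x_{t-1})$ via smoothness plus the two parts of \textbf{Assumption}~\ref{assum:variance}, which gives the $\xi^2/N$ and $\kappa^2$ contributions.

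Next, I bound $\|\bar y_{t-1}-\bar x_{t-1}\|^2=\frac{\beta^2\eta^2}{(1-\beta)^2}\|\bar v_{t-1}\|^2$ by unrolling the momentum: $\|\bar v_{t-1}\|^2\le\frac{1}{1-\beta}\sum_s\beta^{t-1-s}\|\bar g_s\|^2$. Summed over $t$, this gives the $C_2\eta^2\beta^2/(1-\beta)^4$ factor.

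The consensus error $\sum_i\|x^i_t-\bar x_t\|^2$ is the expected main obstacle. In matrix form, $X_t=(X_{t-1}-\eta\tilde V_t)\mathbf W$ and $V_t=\tilde V_t\mathbf W$, so unrolling with $\|\mathbf W^k-\frac1N\mathbf 1\mathbf 1^\top\|\le\rho^{k/2}$ gives geometric sums in $\sqrt\rho$ and $\beta$. The gradient matrix enters through terms involving $\xi^2$, $\kappa^2$, the noise, the clipping constants, and $L^2\|x^i-\bar x\|^2$ itself. This recursion must be closed: the self-referential $L^2$ term has to be moved to the left-hand side. That is exactly where the step-size conditions $\eta\le\frac{(1-\beta)(1-\rho)}{2\sqrt{30}L}$ and the third quadratic-root condition are needed, and the closure produces the $\frac{60\eta^2C_6}{(1-\beta)^2(1-\sqrt\rho)^2}$ coefficients. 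I would first prove a lemma bounding $\frac1T\sum_t\mathbb E\sum_i\|x^i_{t-1}-\bar x_{t-1}\|^2$, carrying explicit constants.

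Finally, I sum the descent inequality over $t=1,\dots,T$, telescope $\mathcal F(\bar y_0)-\mathcal F^\star$ (with $\bar y_0=\bar x_0$), and handle the inner product. Writing $\langle a,b\rangle=\frac12(\|a\|^2+\|b\|^2-\|a-b\|^2)$ and bounding the mismatch terms leaves a net coefficient $C_1=\frac{\eta}{2(1-\beta)}-\frac{1-\beta}{32}$ on $\|\nabla\mathcal F(\bar x_{t-1})\|^2$. This coefficient is positive precisely when $\eta>(1-\beta)^2/16$; the stated lower bound $(1-\beta)^2/256$ must be checked against this, and the Young constants chosen accordingly. Dividing by $C_1T$ and substituting the consensus lemma gives \eqref{eq:convergence}, with the constants $C_2,\dots,C_6$ defined as the resulting ratios over $2C_1$.
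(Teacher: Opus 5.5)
\textbf{Verdict: genuine gap.} Your outer skeleton matches the paper: the averaged recursions, $\bar y_t$ (the paper's $\bar u_t$), smoothness at $\bar y_t$, the unrolled momentum bound on $\|\bar y_{t-1}-\bar x_{t-1}\|^2$, and a separately closed consensus lemma. The gap is in how you extract the descent term.

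You split the cross-gradient part of $\tilde g^i_t$ into an unclipped part plus a ``clipping bias''. You then bound the bias by $C$-type constants and compare the unclipped part with $\nabla\mathcal F(\bar x_{t-1})$. Neither half works:
\begin{itemize}
\item The bias $\hat g^{ij}_{t,b}-g^{ij}_{t,b}$ has norm $(\|g^{ij}_{t,b}\|-C)_+$. Assumptions~\ref{assum:smooth}--\ref{assum:dsm} contain no bounded-gradient assumption, so this is not bounded by any $C$-type constant.
\item The unclipped aggregate has mean $\sum_{j\in\mathcal N_i}\frac{1}{\sqrt{w_{ij}}N}\nabla f_j(x^i_{t-1})$, which is not an average of the $\nabla f_j$. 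On the complete graph it equals $\sqrt N\,\nabla\mathcal F(x^i_{t-1})$. On a ring with $w_{ij}=\frac13$ its agent average is about $\frac{3\sqrt3}{N}\nabla\mathcal F$. The descent coefficient would therefore depend on the topology, not equal $\frac{\eta}{2(1-\beta)}$.
\end{itemize}

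The paper instead adds and subtracts the \emph{unclipped self-gradient} $\frac1B\sum_b g^{ii}_{t,b}$, which is not a component of $\tilde g^i_t$ at all.
\begin{itemize}
\item \emph{Descent (term V).} The inner product of this self-gradient with $\nabla\mathcal F(\bar x_{t-1})$ has mean $\langle\nabla\mathcal F(\bar x_{t-1}),\frac1N\sum_i\nabla f_i(x^i_{t-1})\rangle$. The polarization identity then supplies $\frac12\|\nabla\mathcal F(\bar x_{t-1})\|^2$.
\item \emph{Error (term IV).} The whole remainder $\frac1N\sum_i(\tilde g^i_t-\frac1B\sum_b g^{ii}_{t,b})$ is handled by Young's inequality with the $\eta$-independent weight $\frac{1-\beta}{32}$.
\item \emph{Second moment of the error.} It is bounded crudely in Lemmas~\ref{lem:avg_clbr_grad}--\ref{lem:avg_grad_diff}. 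The ingredients are $\|\hat g\|\le C$, the noise variance, and $\mathbb E\|\frac1N\sum_i\frac1B\sum_b g^{ii}_{t,b}\|^2\le\frac{2\xi^2}{N}+2\mathbb E\|\frac1N\sum_i\nabla f_i(x^i_{t-1})\|^2$. They also use $\mathcal C^{ij}_t\le\frac{1}{1+e^{-1}}<\frac34$; this is what gives the $\frac{27}{16}$, whereas your bound $\mathcal C^{ij}_t<1$ gives $3$.
\item \emph{Result.} This yields $C_1=\frac{\eta}{2(1-\beta)}-\frac{1-\beta}{32}$ and the $\frac{4C_5}{N}\xi^2$ term. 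The $\kappa^2$ terms enter only through the consensus lemma.
\end{itemize}

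This also relocates the third step-size condition. Closing the consensus recursion uses only $\eta\le\frac{(1-\beta)(1-\rho)}{2\sqrt{30}L}$. That closure still leaves $\frac{60\eta^2}{(1-\beta)^2(1-\sqrt\rho)^2}\sum_t\mathbb E\|\frac1N\sum_k\nabla f_k(x^k_{t-1})\|^2$ on the right-hand side, and the error bound above contributes a similar term. Both must be absorbed by the $-C_4\,\mathbb E\|\frac1N\sum_k\nabla f_k(x^k_{t-1})\|^2$ term, which comes from the $\frac12\|b\|^2$ half of the polarization identity. That absorption is condition \eqref{eq:keyassumption}, and the third, $N$-dependent step-size condition encodes it. Your plan never uses this negative term, so those gradient-norm terms have nowhere to go.

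Your remark about $C_1$ is correct. $C_1>0$ if and only if $\eta>(1-\beta)^2/16$, so the stated lower bound $(1-\beta)^2/256$ does not guarantee the positivity of $C_1$ needed to divide by it. The very existence of a lower bound on $\eta$ is an artifact of applying an $\eta$-independent Young weight to an $O(1)$ error term.
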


    From the above theorem, we can observe that the convergence of our algorithm is also influenced by the following parameters: 1) $\xi$ denotes the standard variance of self-gradients; 2) $\kappa$ is the standard variance of gradients among agents; 3) $L$ represents Lipschitz smoothness; 4) $\rho$ quantifies the information mixing rate of the communication graph. These parameters are directly proportional to the convergence bound. For instance, smaller values of $L$ and $\rho$ indicate tighter convergence bounds.
    Nevertheless, they are typically treated as fixed constant with reasonable values according to~\cite{YuJY-ICML19, EsfandiariTJBHHS-ICML21}, thus do not play a dominant role in determining the convergence. We then present \textbf{Corollary}~\ref{cor:convergence} to provide a more concise explanation of the convergence of our algorithm.

    \begin{corollary} \label{cor:convergence}
      Assume all assumptions in \textbf{Theorem}~\ref{thm:convergence} and set $\sigma$ as the lower bound in \textbf{Theorem}~\ref{thm:dp} for any $(\epsilon, \delta)$-DP satisfying the constraints, fixing $\beta$, $\rho$, $L$ and $\alpha$ as acceptable constants, we have
      \begin{align} \label{eq:corconvergence}
        &\frac{1}{T}\sum_{t=1}^{T} \mathbb{E} \left[\left\| \nabla \mathcal{F} \left( \bar{x}_{t-1} \right) \right\|^{2}\right] \nonumber \\
        = & \mathcal{O} \left( \frac{1}{\eta T} + \eta^2 + \frac{\eta C^2\sigma^2d}{w_{min}^2B^2N^2} + \frac{\eta^2 C^2\sigma^2d}{w_{min}^2B^2N} + \frac{\eta C^2}{w_{min}^2N} \right) \nonumber \\
        = & \mathcal{O} \left( \frac{1}{\eta T} + \eta^2 + \frac{\eta C^2Td}{\epsilon^2N^2} + \frac{\eta^2 C^2Td}{\epsilon^2N} + \frac{\eta C^2}{w_{min}^2N} \right)
      \end{align}
      Especially, when $\eta = \mathcal{O}\left(\frac{N\epsilon}{T}\right)$ and $T$ is sufficiently large with:
      \begin{equation}\label{Tbound}
        T \geq \max \left\{ 
        \begin{aligned} 
        &\frac{2\sqrt{30}NL\epsilon}{(1-\beta)(1-\rho)}, \\
        & \frac{\frac{30}{1-\sqrt{\rho}}N^2L^2\epsilon}{\sqrt{256+15(1-\beta)L^2N^2}-16(1-\sqrt{\rho})}
        \end{aligned}
        \right\}
      \end{equation}
      we have
      \begin{align} \label{eq:corconvergencespec}
        \frac{1}{T}\sum_{t=1}^{T} \mathbb{E} \left[\left\| \nabla \mathcal{F} \left( \bar{x}_{t-1} \right) \right\|^{2}\right]
        = \mathcal{O} \left( \frac{1}{N\epsilon} + \frac{C^2d}{T} \right)
      \end{align}
      %
    \end{corollary}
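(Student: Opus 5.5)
The corollary follows from Theorem~\ref{thm:convergence} by substituting the privacy-calibrated noise level and then balancing terms in $\eta$ and $T$. The first step is to simplify the constants in (\ref{eq:conconstants}) for fixed $\beta,\rho,L,\alpha$. The lower bound $\eta>(1-\beta)^2/256$ in (\ref{eq:learningrate}) keeps $C_1=\frac{\eta}{2(1-\beta)}-\frac{1-\beta}{32}$ positive. I would argue that $C_1=\Theta(\eta)$ in the regime of interest. This gives $C_1^{-1}=\mathcal{O}(1/\eta)$, $C_2=\mathcal{O}(1/\eta)$, $C_3=\mathcal{O}(\eta)$, $C_4=\mathcal{O}(1)$, $C_5=\mathcal{O}(\eta)$ and $C_6=\mathcal{O}(1)$.

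Substituting these orders into (\ref{eq:convergence}):
\begin{itemize}
\item The optimality-gap term becomes $\mathcal{O}(1/(\eta T))$.
\item The $\kappa^2$ and $\eta^2C_6\xi^2$ terms become $\mathcal{O}(\eta^2)$.
\item The factor $C_2\eta^2\beta^2/(1-\beta)^4+C_3+2C_5$ is $\mathcal{O}(\eta)$. Multiplied by $\frac{3C^2\sigma^2 d}{w_{min}^2B^2N^2}$, it gives the third term of the first line of (\ref{eq:corconvergence}).
\item The next term gives $\frac{\eta^2C^2\sigma^2 d}{w_{min}^2B^2N}$.
\item The clipping-bias terms give $\frac{\eta C^2}{w_{min}^2 N}$ plus an $\mathcal{O}(\eta^2)$ contribution.
\item The $4C_5\xi^2/N$ term is $\mathcal{O}(\eta/N)$ and is absorbed.
\end{itemize}
This yields the first equality.

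For the second equality I would insert the lower bound (\ref{eq:global_lwsigma}) from Theorem~\ref{thm:dp} with equality. Since $q=B/(N\min_i D_i)$, we get $\sigma^2=\Theta\!\big(\Lambda^2 q^2 T\ln(1/\delta)/\epsilon^2\big)$. Here $\Lambda$ collects the $\alpha N$, $w_{min}$ and $h$ factors, divided by $\sum_{j}1/w_{max}$. Treating $\delta$, $h$, the $D_i$ and the graph weights as constants, the key cancellation is that $q^2$ cancels the $B^2$ in the denominators. The remaining $N$-dependence is absorbed using $\Lambda^2/\sum_j w_{max}^{-1}$, and $w_{min}^2$ is absorbed where $\sigma^2$ appears. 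The result is $\sigma^2/(w_{min}^2B^2)=\mathcal{O}(T/\epsilon^2)$, which gives the second line of (\ref{eq:corconvergence}).

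Finally, set $\eta=\Theta(N\epsilon/T)$. Condition (\ref{Tbound}) is exactly the requirement that this $\eta$ satisfies the upper constraints in (\ref{eq:learningrate}), obtained by solving each inequality for $T$. Plugging in:
\begin{itemize}
\item $1/(\eta T)=\mathcal{O}(1/(N\epsilon))$.
\item $\eta C^2Td/(\epsilon^2N^2)=\mathcal{O}(C^2d/(\epsilon N))$, which merges with the $1/(N\epsilon)$ order after treating $C^2d$ via the stated form.
\item $\eta^2C^2Td/(\epsilon^2N)=\mathcal{O}(NC^2d/T)$, which gives the $C^2d/T$ term.
\item $\eta^2$ and $\eta C^2/(w_{min}^2N)$ are $\mathcal{O}(1/T)$ for fixed $N,\epsilon$.
\end{itemize}
This produces (\ref{eq:corconvergencespec}).

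The main obstacle is the order bookkeeping around $C_1$. With $\eta=\Theta(N\epsilon/T)\to 0$, the constraint $\eta>(1-\beta)^2/256$ conflicts with large $T$, so $C_1=\Theta(\eta)$ is delicate. I would first try to handle this by keeping $\eta$ above that threshold: treat $N\epsilon/T$ as bounded below within the stated range of $T$, using $\epsilon\le c_1q^2T$. I would then absorb the constant offset into the $\mathcal{O}$, being explicit that $N$-, $\epsilon$- and $d$-dependent factors are merged loosely as in the statement.
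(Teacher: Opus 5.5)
Your route is the same as the paper's. Appendix~\ref{sec:cor_proof} likewise reads off $C_2=\mathcal{O}(1/\eta)$, $C_3=C_5=\mathcal{O}(\eta)$, $C_4=C_6=\mathcal{O}(1)$ and substitutes into (\ref{eq:convergence}). Your substitution of $\sigma$ (with $q^2$ cancelling $B^2$) is also fine once $D_i$, $\delta$, $h$ and the weights are treated as constants. However, the paper only asserts an order for $C_1$: it writes $C_1=\frac{\eta}{2(1-\beta)}-\frac{1-\beta}{2\beta}=\mathcal{O}(\sqrt{1/T})$, which matches neither (\ref{eq:conconstants}) nor $\eta=\mathcal{O}(N\epsilon/T)$. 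So it says nothing at exactly the step you flag, and that step is a genuine gap that your fix does not close.

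First, the positivity claim you inherit from (\ref{eq:learningrate}) is false. $C_1=\frac{\eta}{2(1-\beta)}-\frac{1-\beta}{32}>0$ iff $\eta>(1-\beta)^2/16$; at $\eta=(1-\beta)^2/256$ one gets $C_1=-\frac{15(1-\beta)}{512}$. Moreover, $C_1=\Theta(\eta)$ needs $\eta$ bounded away from this threshold; for example, $\eta\ge(1-\beta)^2/8$ gives $C_1\ge\frac{\eta}{4(1-\beta)}$.

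Second, your fix runs in the wrong direction. (\ref{Tbound}) only lower-bounds $T$, and $\epsilon\le c_1q^2T$ says $\epsilon/T\le c_1q^2$. Both are \emph{upper} bounds on $N\epsilon/T$; the only lower bound on $\eta$ is the very constraint at issue. The $\eta$ for which (\ref{eq:c1condition0}) holds are confined to the $T$-independent interval $\left((1-\beta)^2/16,\ \frac{(1-\beta)(1-\rho)}{2\sqrt{30}L}\right]$. Writing $\eta=cN\epsilon/T$, you therefore need $T<16cN\epsilon/(1-\beta)^2$. For larger $T$ we get $C_1\le 0$, the division by $C_1$ that takes (\ref{eq:main_mix}) to (\ref{eq:main_rearange}) is invalid, and (\ref{eq:convergence}) cannot be invoked at all.

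Staying below that cap does not rescue the argument. It contradicts ``$T$ sufficiently large'', and there is a more fundamental problem. Since $C_1<\frac{\eta}{2(1-\beta)}$, we have $C_6>L^2$, so the $\kappa^2$ term of (\ref{eq:convergence}) exceeds $\frac{60L^2\eta^2\kappa^2}{(1-\beta)^2(1-\sqrt{\rho})^2}>\frac{60L^2(1-\beta)^2\kappa^2}{256(1-\sqrt{\rho})^2}$ for every admissible $\eta$. Theorem~\ref{thm:convergence} as stated therefore never yields a bound that vanishes in $T$, and no order bookkeeping can extract (\ref{eq:corconvergencespec}) from it.

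What is missing is a re-derivation of Theorem~\ref{thm:convergence} in which $C_1\propto\eta$ for all small $\eta$, for example by weighting the Young inequality in (\ref{eq:linear_term_2}) by $\eta$. But then $C_5$ becomes $\mathcal{O}(1)$ rather than $\mathcal{O}(\eta)$. The $\eta$-independent terms of Lemma~\ref{lem:avg_grad_diff}, such as $\frac{27\alpha^2C^2}{8}$ from the calibration term $\alpha w_{ij}\mathcal{C}^{ij}_t\hat{g}^{ii}_t$, would then need a genuinely new argument controlling the bias of $\frac{1}{N}\sum_i\tilde{g}^i_t$.

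Separately, even granting every order, $\eta=N\epsilon/T$ turns the noise terms into $\frac{C^2d}{N\epsilon}$ and $\frac{NC^2d}{T}$. So (\ref{eq:corconvergencespec}) holds only with $C^2d$ and $N$ absorbed into the constants.
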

    %
    
    %
    %
    %
    This corollary implies that when given fixed $C$, $N$ and $d$, the convergence rate of our algorithm is mainly dominated by $\epsilon$ and $T$. If we completely ignore Gaussian noises, our algorithm can achieve $O(\frac{1}{T})$ convergence rate. This convergence rate is comparable to the well-known results in decentralized SGD algorithms~\cite{EsfandiariTJBHHS-ICML21, XuZW-TPAMI21}. We present its proof in Appendix~\ref{sec:cor_proof}.

\section{Experiments} \label{sec:exp}
  In this section, we conduct extensive experiments to demonstrate the performance of our DPDL algorithm. In Sec.~\ref{ssec:setting}, we present the settings of our experiments and briefly introduce the state-of-the-art reference algorithms. We report our experiment results about convergence and the ones about test accuracy in Sec.~\ref{ssec:expconvergence} and Sec.~\ref{ssec:expaccuracy}, respectively. In Sec.~\ref{ssec:expattacks}, we present our experiment results about the efficacy of our DPDL algorithm in safeguarding against privacy attacks (e.g., gradient inversion attacks~\cite{GeipingBDM-NIPS20, FredriksonJR-CCS15}).

  \subsection{Experiment Settings} \label{ssec:setting}
    We conduct our experiments on MNIST\cite{LecunBBH-IEEE98} and CIFAR-10\cite{Krizhevsky-09}  datasets. MNIST is a database of handwritten digits, containing $60,000$ training images and $10,000$ test images. CIFAR-10 is an image dataset containing $10$ categories, including airplane, automobile, bird, cat, etc. The CIFAR-10 dataset contains $50,000$ training images and $10,000$ test images. We adopt Dirichlet distribution parameterized by $\alpha_d$ to produce heterogeneous data distributions. A smaller $\alpha_d$ value implies the data distribution is more imbalanced. We let $\alpha_d = 0.25, 1.0$ for both datasets.

    We adopt three different communication graph topologies in our experiments: fully connected, bipartite, and ring graphs, with decreasing levels of connectivity. To evaluate the scalability of our algorithm, we also vary the number of agents (e.g., $N = 10, 20, 30$). Due to space limitations, we hereby present only the results for the bipartite and fully connected graphs. Additional results for the ring topology can be found in Appendix~\ref{sec:supp_ex_con}.

    We train a LeNet model~\cite{SimonyanZ-ARXIV14} on the MNIST dataset. This model comprises two convolutional layers with kernel sizes $1 \times 5 \times 5$ and $6 \times 5 \times 5$, respectively. Each convolutional layer is followed by a $2 \times 2$ max pooling layer and then a fully connected layer. For the CIFAR-10 dataset, we train a CNN model consisting of convolutional layers with kernel sizes $3 \times 3 \times 3$ and $16 \times 3 \times 3$, respectively, each followed by a $2 \times 2$ max pooling layer and a fully connected layer. Moreover, we let the learning rate $\eta=0.005$, clipping threshold $C=2$ and privacy budget $\epsilon=0.25, 0.5, 1.0$ for MNIST, and $\eta=0.03$, $C=3$ and $\epsilon= 2.0, 4.0, 8.0$ for CIFAR-10. Additionally, we let the training batch size $B=216$ and the momentum coefficient $\beta=0.7$ for both datasets. We also set hyper-parameter $\alpha=1.5$ for both the bipartite and fully connected graphs and $\alpha=2$ for the ring graph. Note that all these parameters satisfy the conditions shown in \textbf{Theorem}~\ref{thm:dp} and \textbf{Theorem}~\ref{thm:convergence}.
    

    In our experiments, we compare our DPDL algorithm with the following state-of-the-art ones.
    %
    %
    \begin{itemize}
      \item \textbf{DP-DPSGD}: In \cite{XuZW-TPAMI21}, Gaussian mechanism is applied in decentralized parallel SGD~\cite{LianZZHZL-NIPS17}. Specifically, each agent perturbs its local gradient shared with its neighbors for preserving its local privacy. Unfortunately, since each agent computes its local gradient based solely on its own model and dataset, data heterogeneity is not addressed.

      \item \textbf{MUFFLIATO}: MUFFLIATO adds Gaussian noises to the updated model parameters without clipping, followed by multi-rounds gossip averaging based on the spectral gap~\cite{CyffersEBM-NIPS22}. Nevertheless, MUFFLIATO does not address the data heterogeneity problem either. 
      \item \textbf{DP-NETFLEET}: In \cite{ZhangFLYLZ-MobiHoc22}, NET-FLEET algorithm is proposed. It adopts a recursive gradient correction technique adopted to handle non-IID data, such that each agent runs multiple local updates between two consecutive communication rounds with their neighbors. Since it does not account for privacy preservation, we adapt the algorithm by applying a Gaussian noise mechanism, similar to that in~\cite{XuZW-TPAMI21}, to ensure a fair comparison.
      \item \textbf{DP-CGA}: \cite{ZhuHZ-ICML21} proposes \textit{Cross-Gradient Aggregation} (CGA) algorithm, where each agent aggregates cross-gradients and its local self-gradient based on quadratic programming. In this variant of CGA, we apply Gaussian noise mechanism such that each agent perturbs the cross-gradient information before sharing it with its neighborhood. Specifically, following the perturbation method commonly used in existing studies \cite{WeiLDMYFJQP-TIFS20, ChenYZYC-TII22, YuZCTTLC-TVV21}, each agent first computes the average cross-gradient (or self-gradient) over its local data batch, and then conducts clipping operation and adds Gaussian noise to ensure privacy.   
    \end{itemize}
  \begin{figure*}[htb!]
  \begin{center}
    \parbox{.3\textwidth}{\center\includegraphics[width=.3\textwidth]{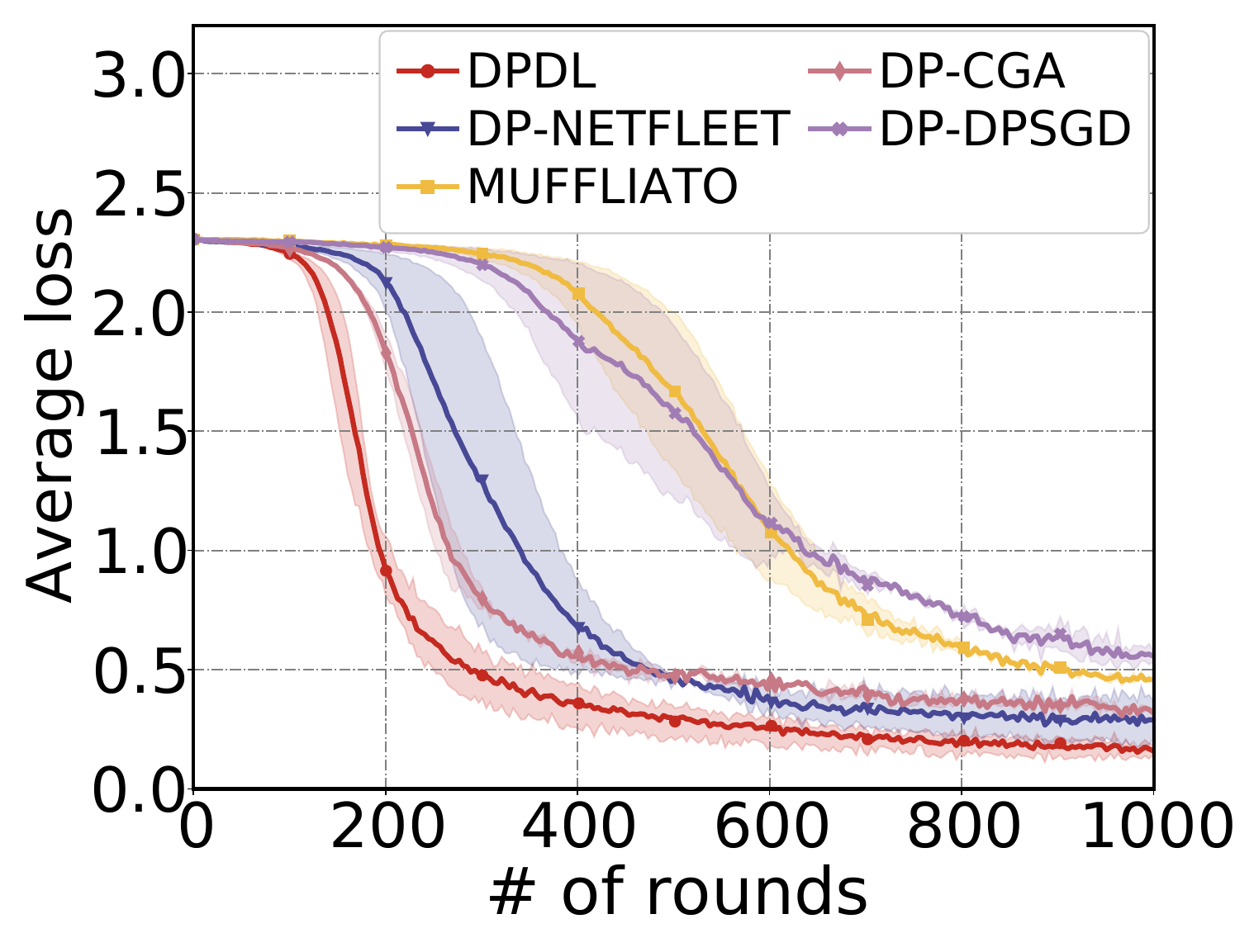}}
    \parbox{.3\textwidth}{\center\includegraphics[width=.3\textwidth]{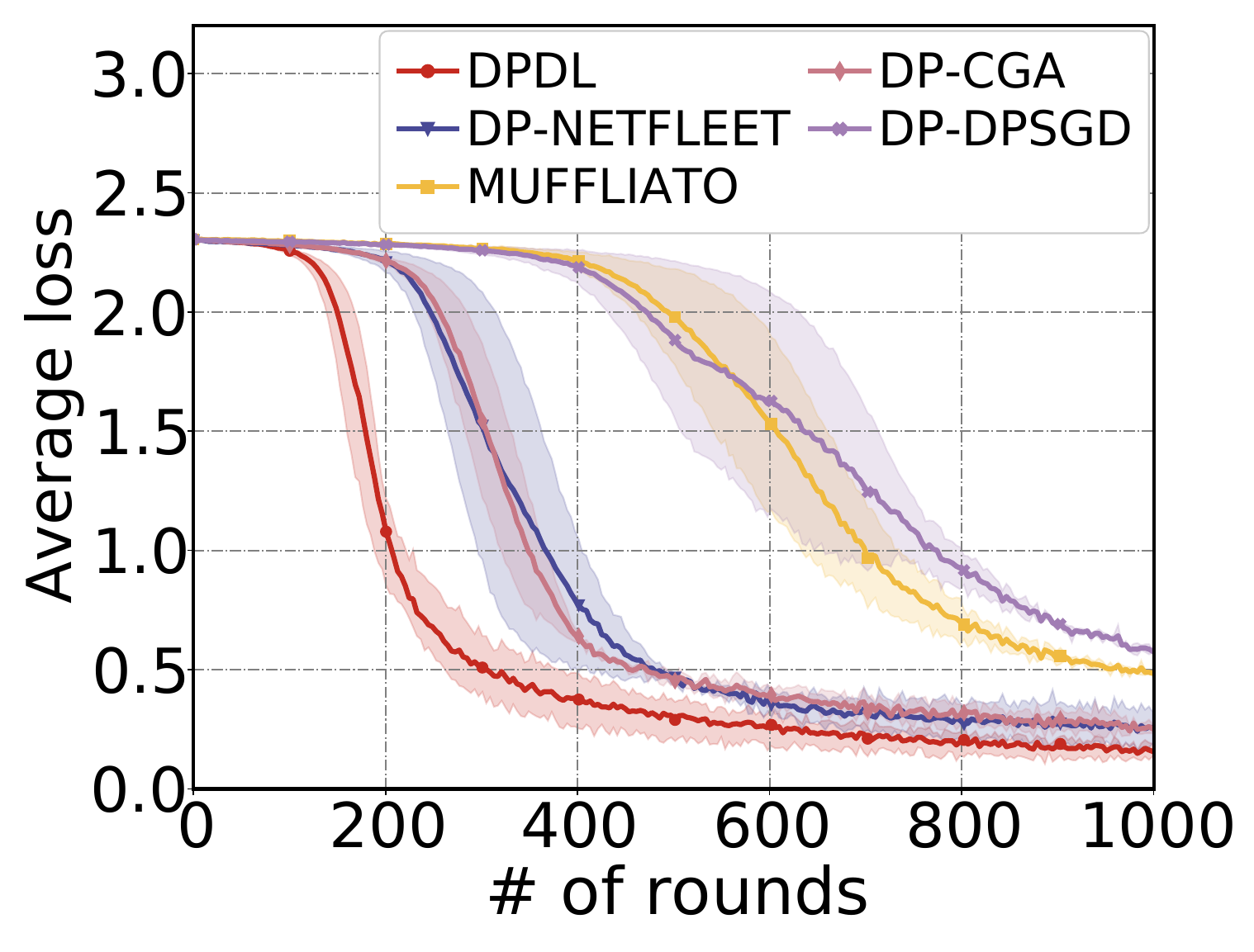}}
    \parbox{.3\textwidth}{\center\includegraphics[width=.3\textwidth]{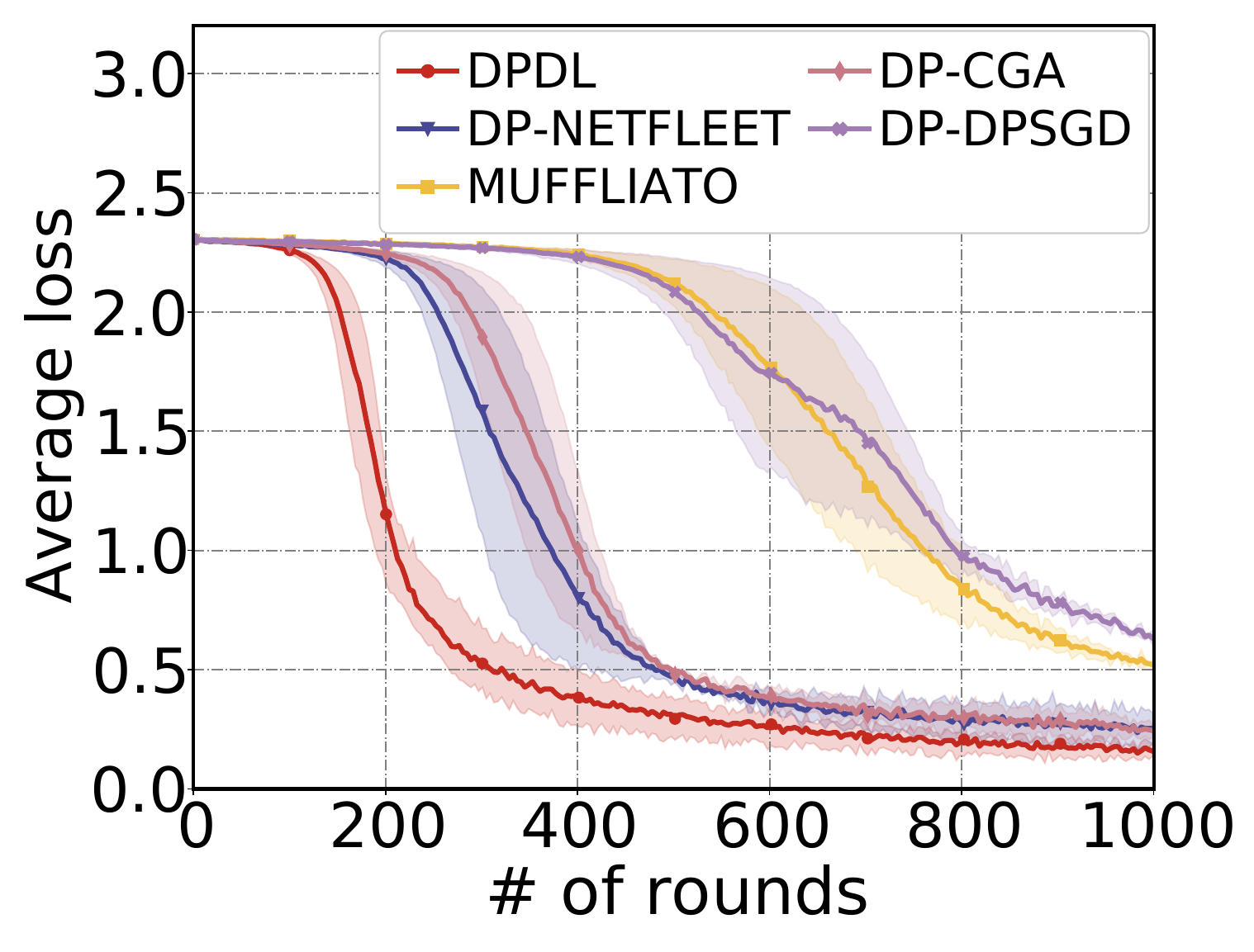}}
    \parbox{.3\textwidth}{\center\scriptsize(a1) $\epsilon=0.25, N=10$}
    \parbox{.3\textwidth}{\center\scriptsize(b1) $\epsilon=0.5, N=10$}
    \parbox{.3\textwidth}{\center\scriptsize(c1) $\epsilon=1.0, N=10$}
    \parbox{.3\textwidth}{\center\includegraphics[width=.3\textwidth]{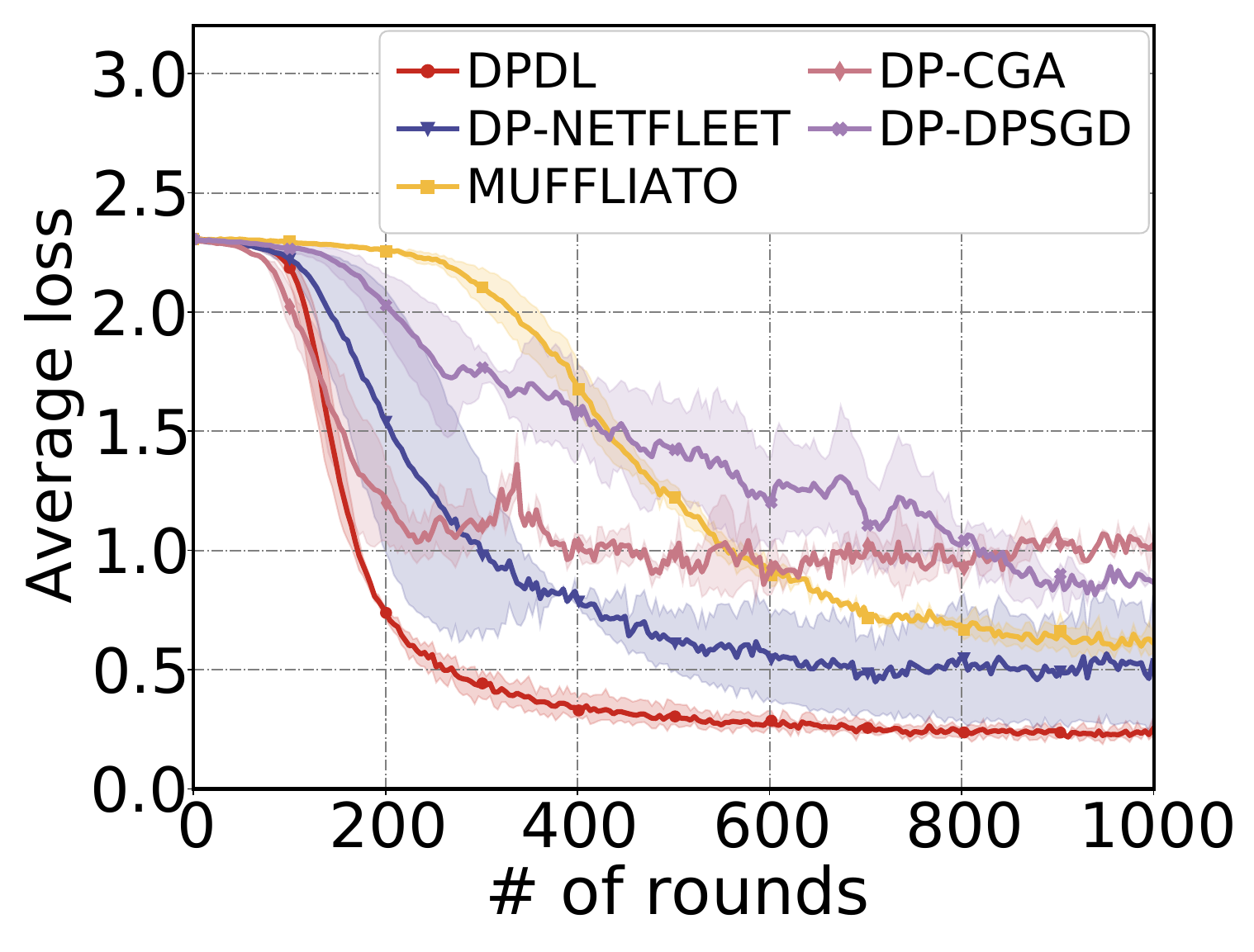}}
    \parbox{.3\textwidth}{\center\includegraphics[width=.3\textwidth]{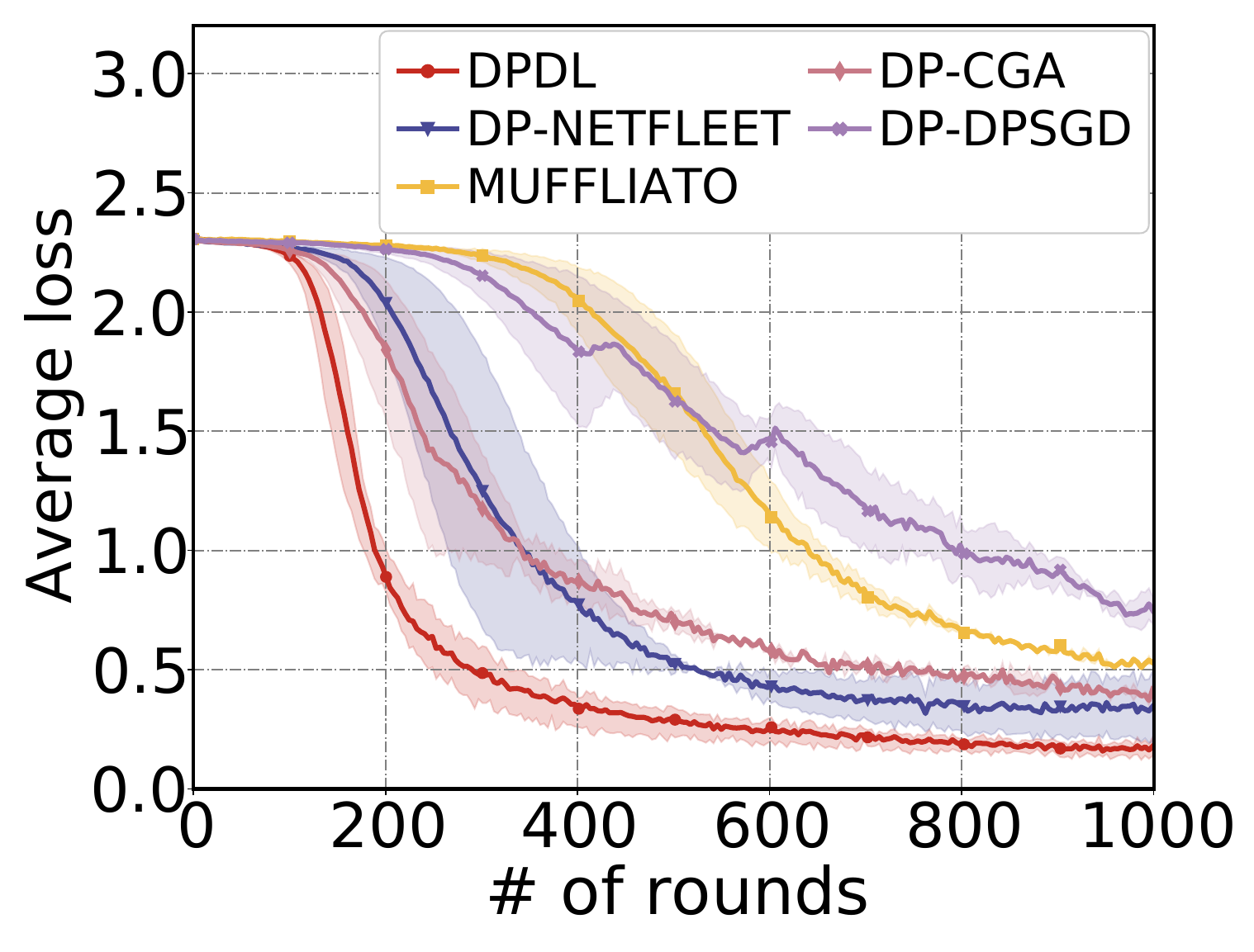}}
    \parbox{.3\textwidth}{\center\includegraphics[width=.3\textwidth]{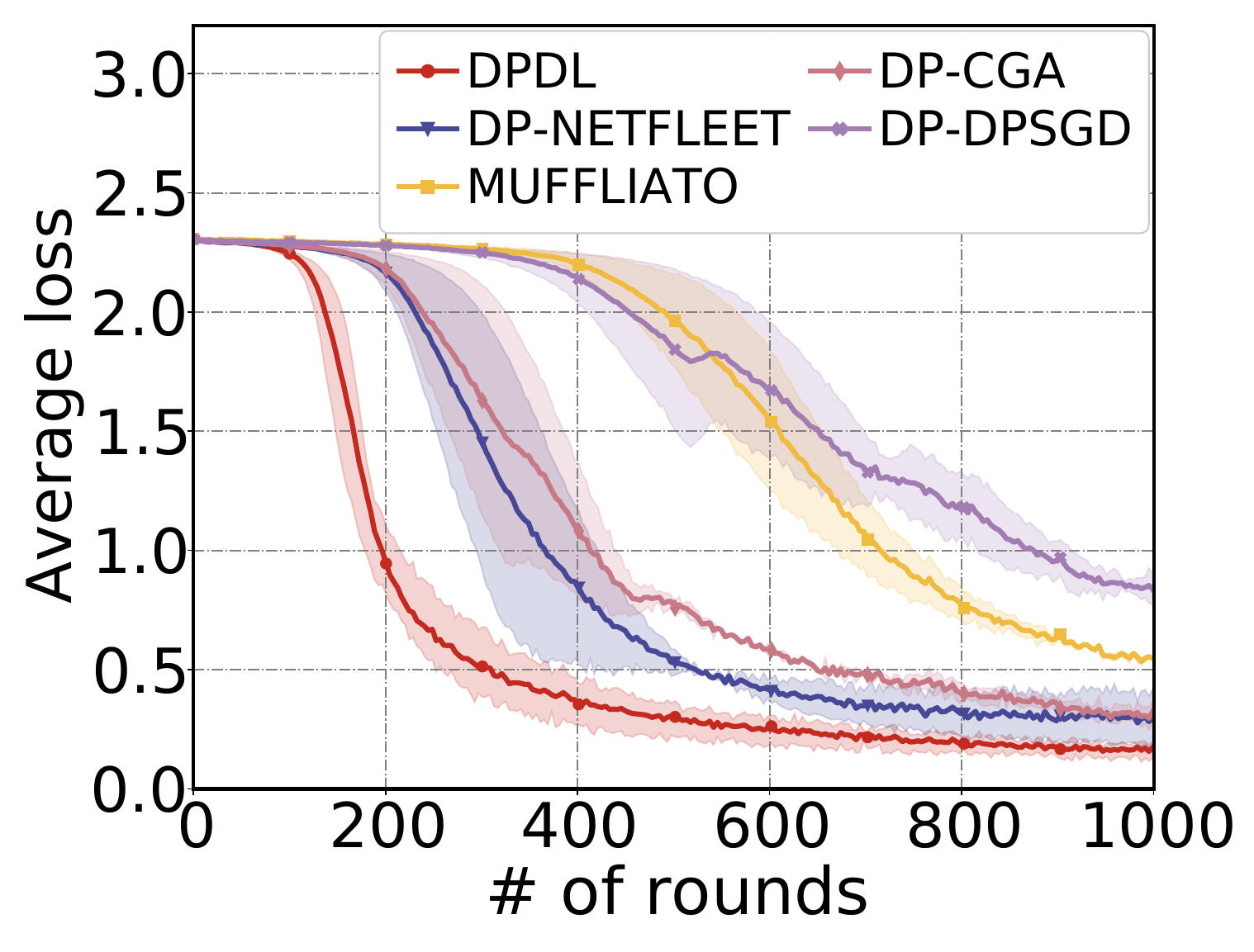}}
    \parbox{.3\textwidth}{\center\scriptsize(a2) $\epsilon=0.25, N=20$}
    \parbox{.3\textwidth}{\center\scriptsize(b2) $\epsilon=0.5, N=20$}
    \parbox{.3\textwidth}{\center\scriptsize(c2) $\epsilon=1.0, N=20$}
    \parbox{.3\textwidth}{\center\includegraphics[width=.3\textwidth]{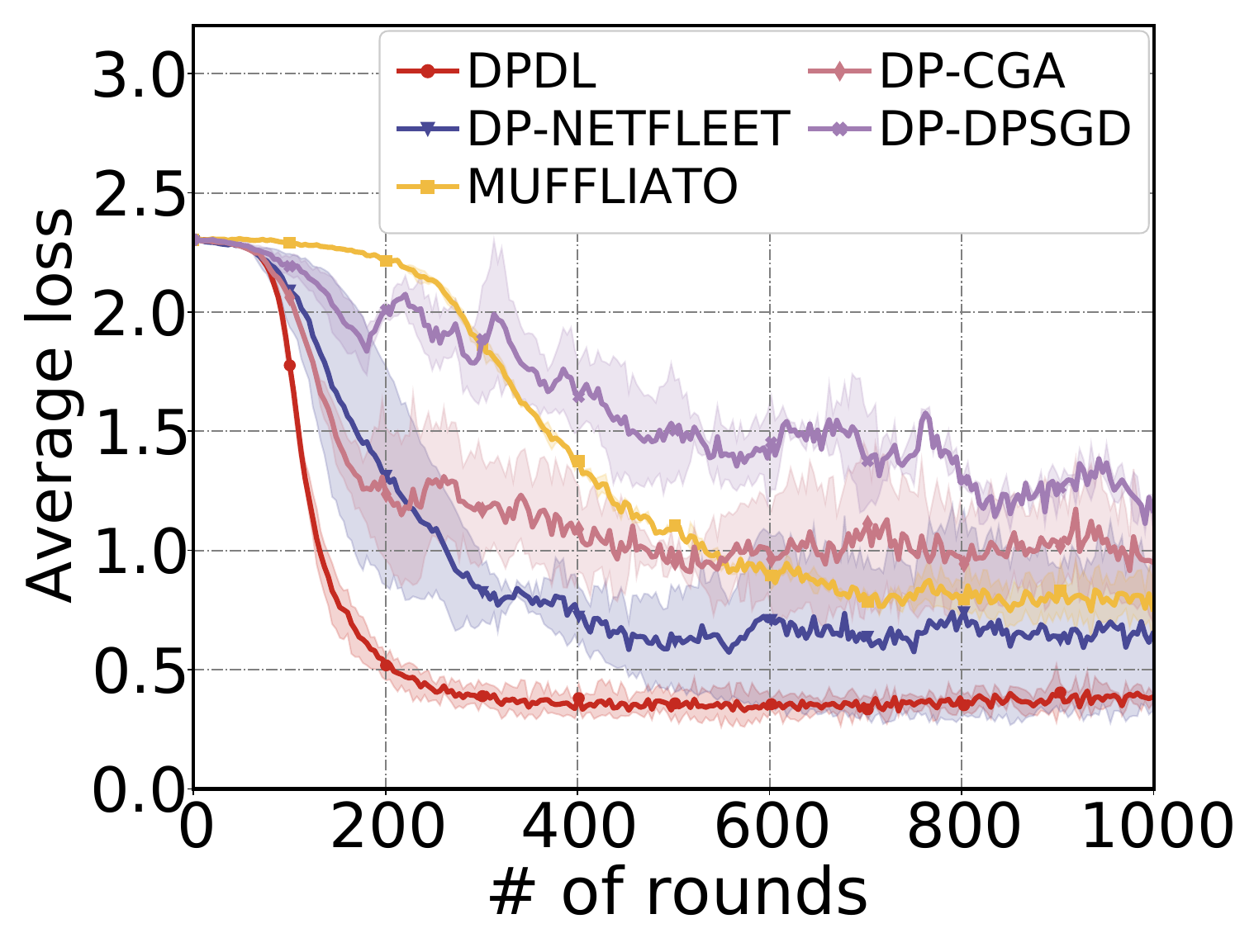}}
    \parbox{.3\textwidth}{\center\includegraphics[width=.3\textwidth]{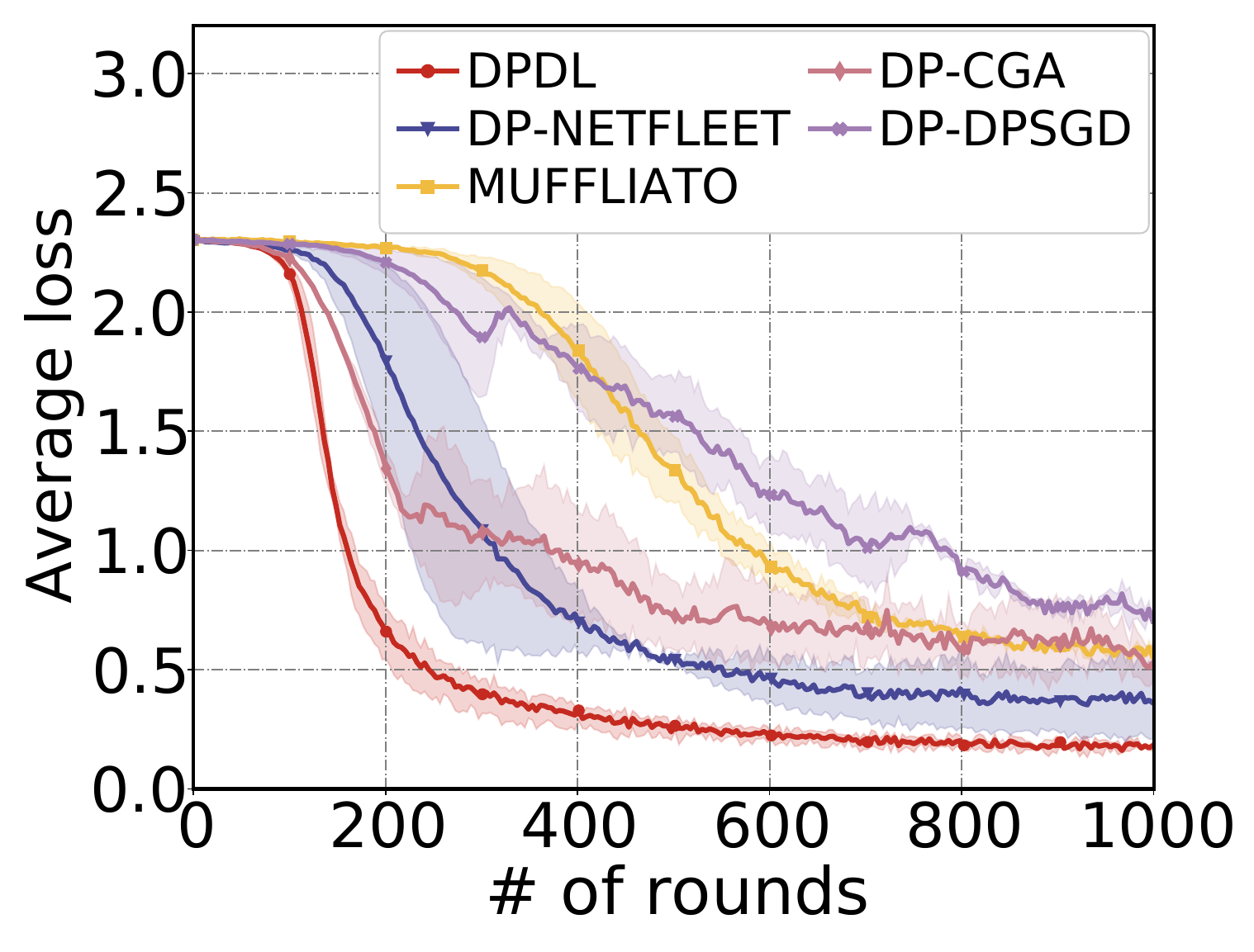}}
    \parbox{.3\textwidth}{\center\includegraphics[width=.3\textwidth]{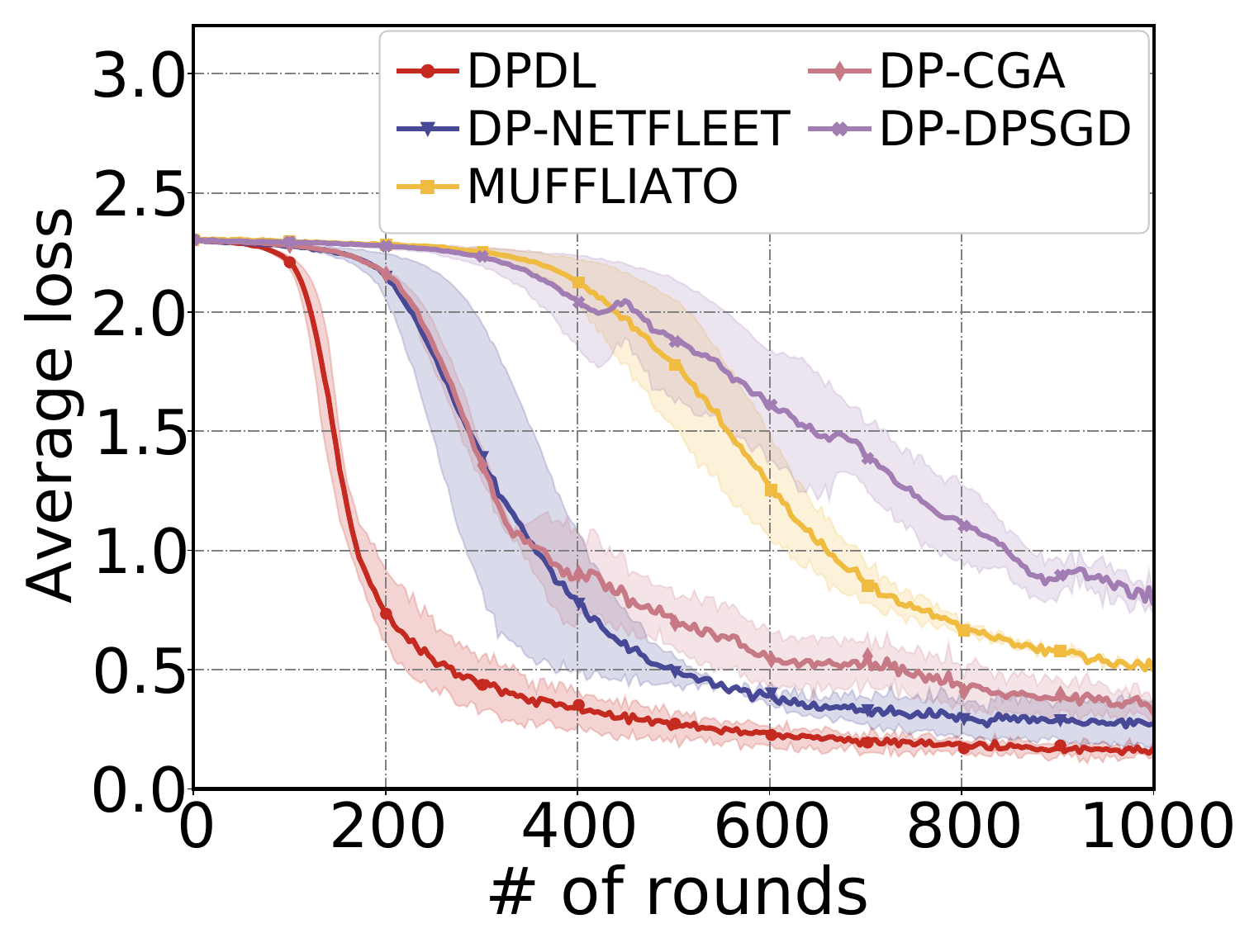}}
    \parbox{.3\textwidth}{\center\scriptsize(a3) $\epsilon=0.25, N=30$}
    \parbox{.3\textwidth}{\center\scriptsize(b3) $\epsilon=0.5, N=30$}
    \parbox{.3\textwidth}{\center\scriptsize(c3) $\epsilon=1.0, N=30$}
    \caption{Experiment results about convergence on MNIST dataset over bipartite graphs.}
    \label{fig:bipar-mnist}
    \end{center}
  \end{figure*}
  \begin{figure*}[htb!]
  \begin{center}
    \parbox{.3\textwidth}{\center\includegraphics[width=.3\textwidth]{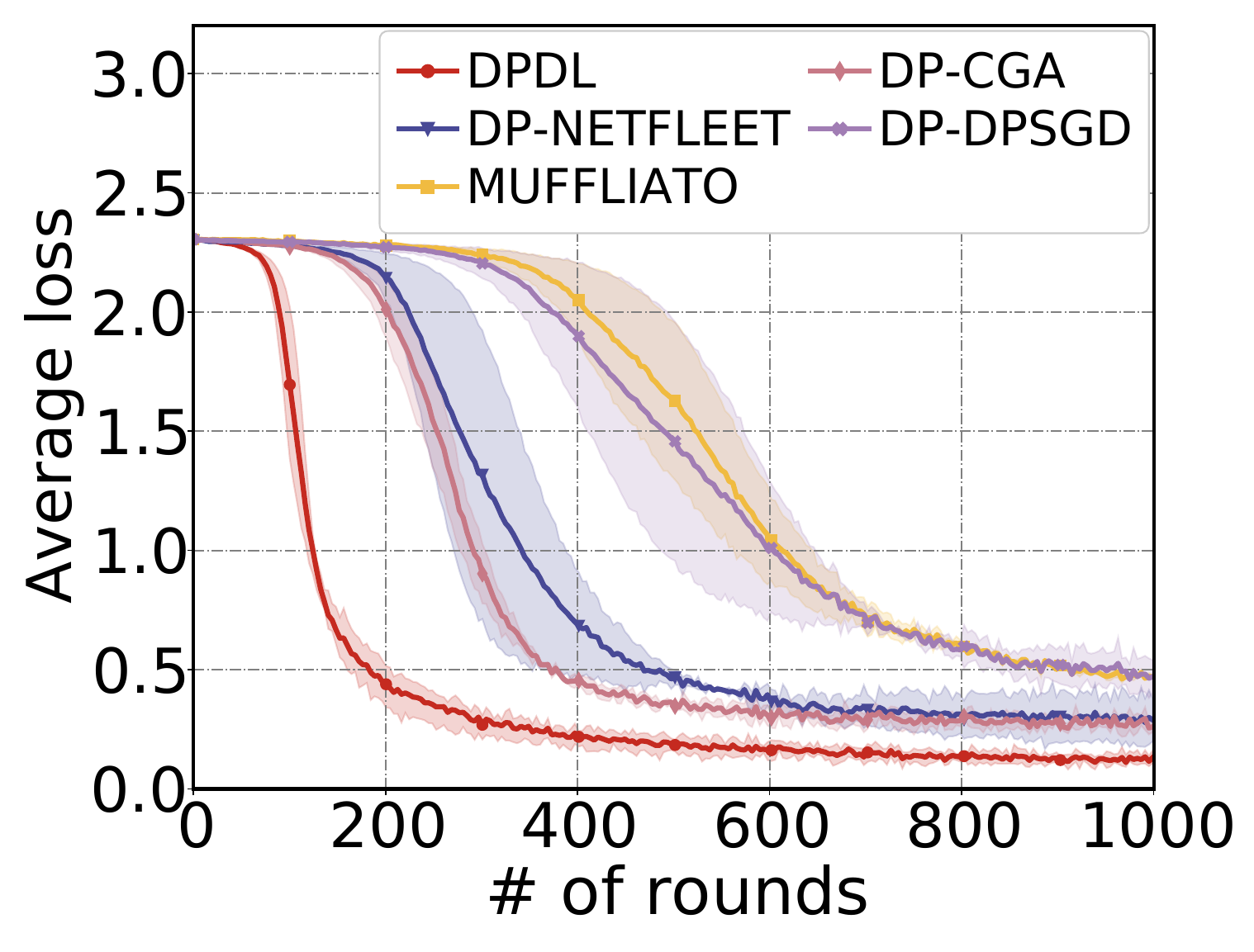}}
    \parbox{.3\textwidth}{\center\includegraphics[width=.3\textwidth]{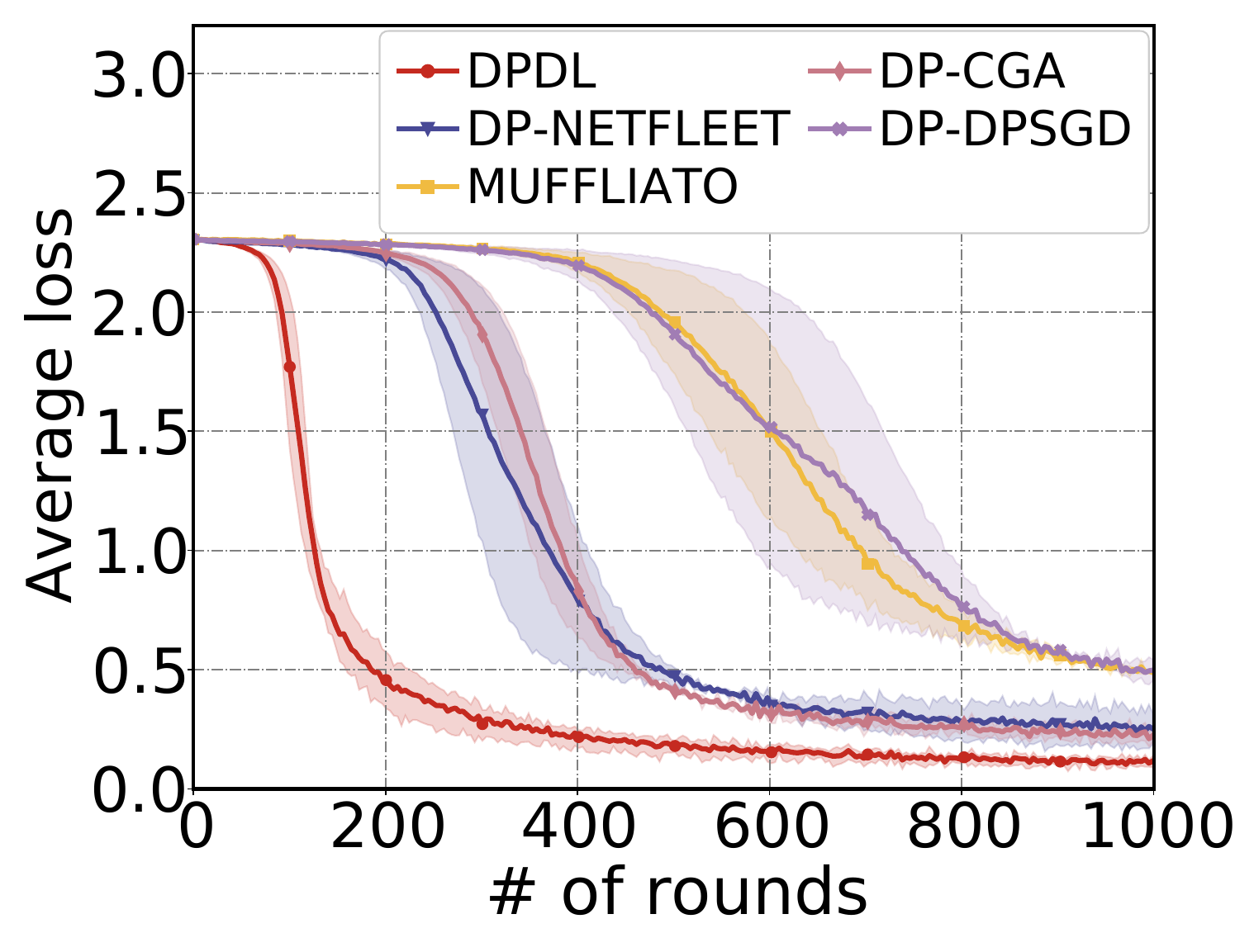}}
    \parbox{.3\textwidth}{\center\includegraphics[width=.3\textwidth]{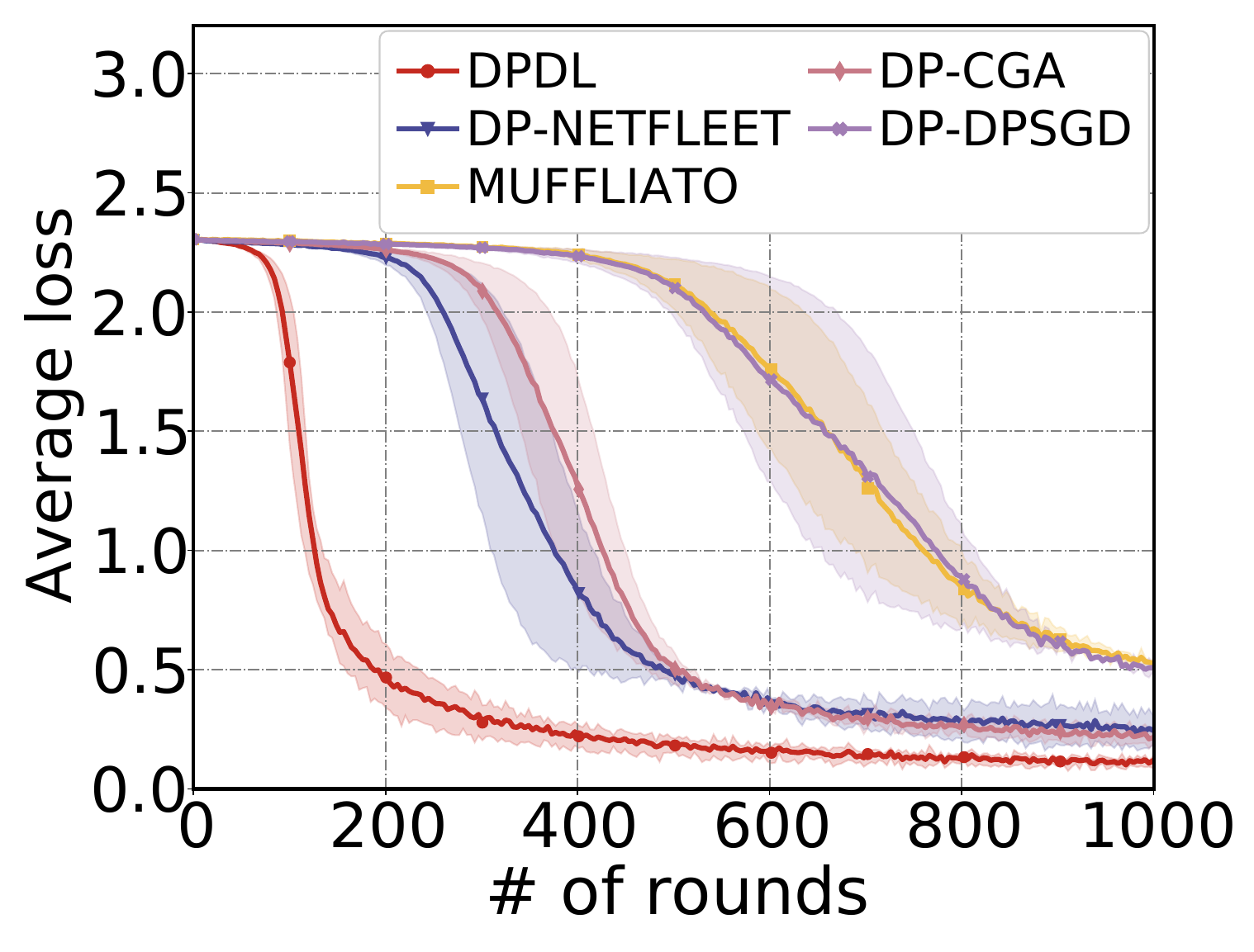}}
    \parbox{.3\textwidth}{\center\scriptsize(a1) $\epsilon=0.25, N=10$}
    \parbox{.3\textwidth}{\center\scriptsize(b1) $\epsilon=0.5, N=10$}
    \parbox{.3\textwidth}{\center\scriptsize(c1) $\epsilon=1.0, N=10$}
    \parbox{.3\textwidth}{\center\includegraphics[width=.3\textwidth]{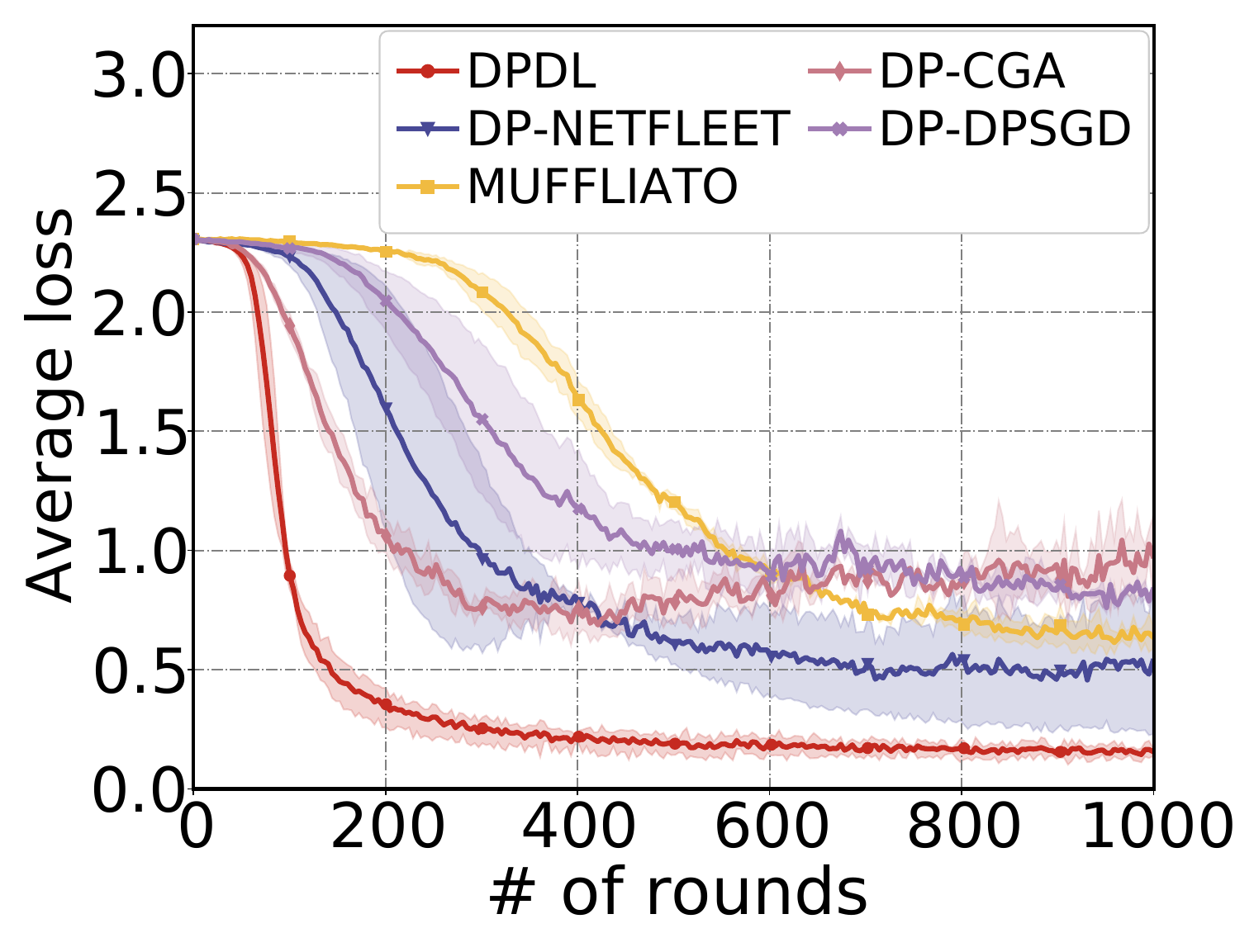}}
    \parbox{.3\textwidth}{\center\includegraphics[width=.3\textwidth]{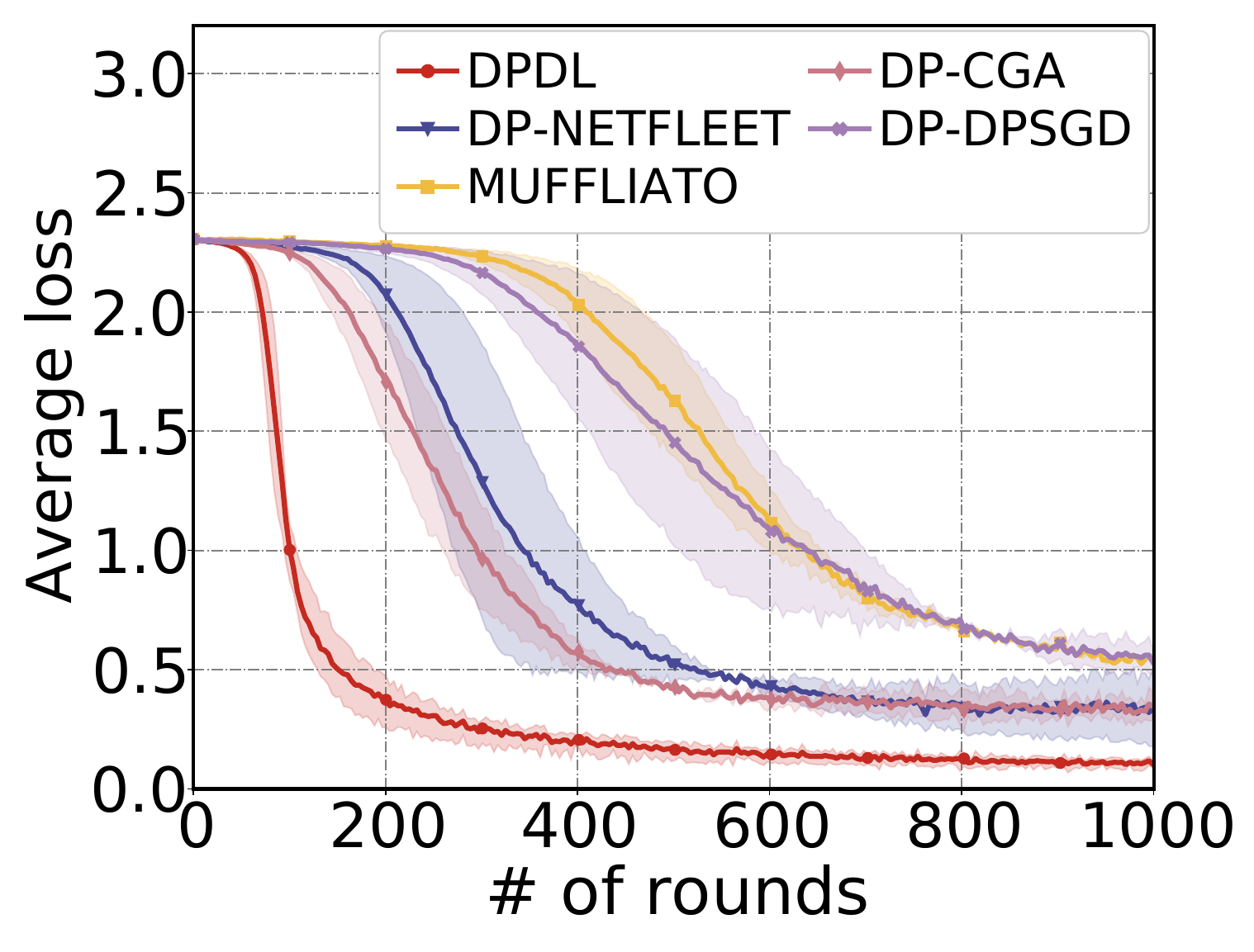}}
    \parbox{.3\textwidth}{\center\includegraphics[width=.3\textwidth]{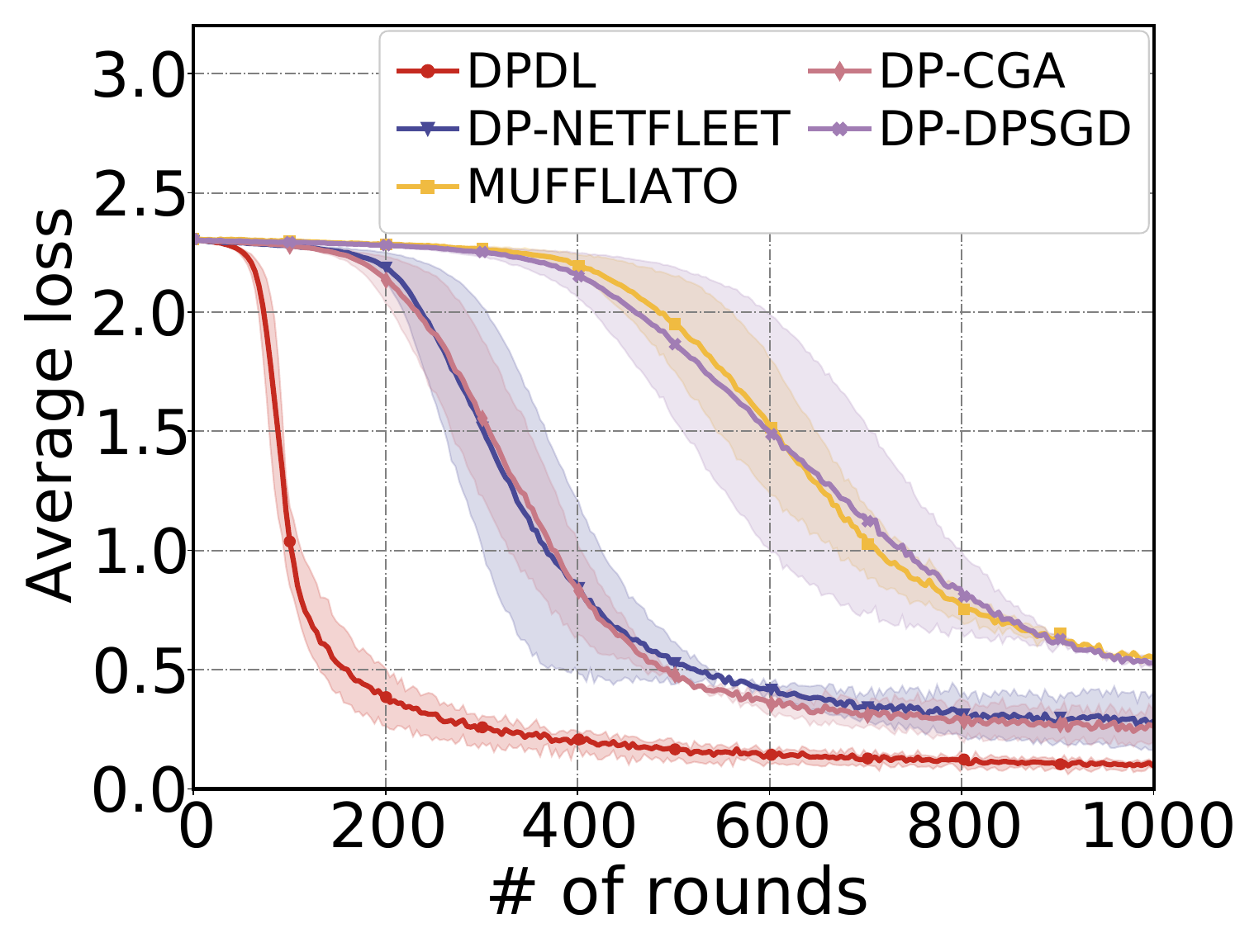}}
    \parbox{.3\textwidth}{\center\scriptsize(a2) $\epsilon=0.25, N=20$}
    \parbox{.3\textwidth}{\center\scriptsize(b2) $\epsilon=0.5, N=20$}
    \parbox{.3\textwidth}{\center\scriptsize(c2) $\epsilon=1.0, N=20$}
    \parbox{.3\textwidth}{\center\includegraphics[width=.3\textwidth]{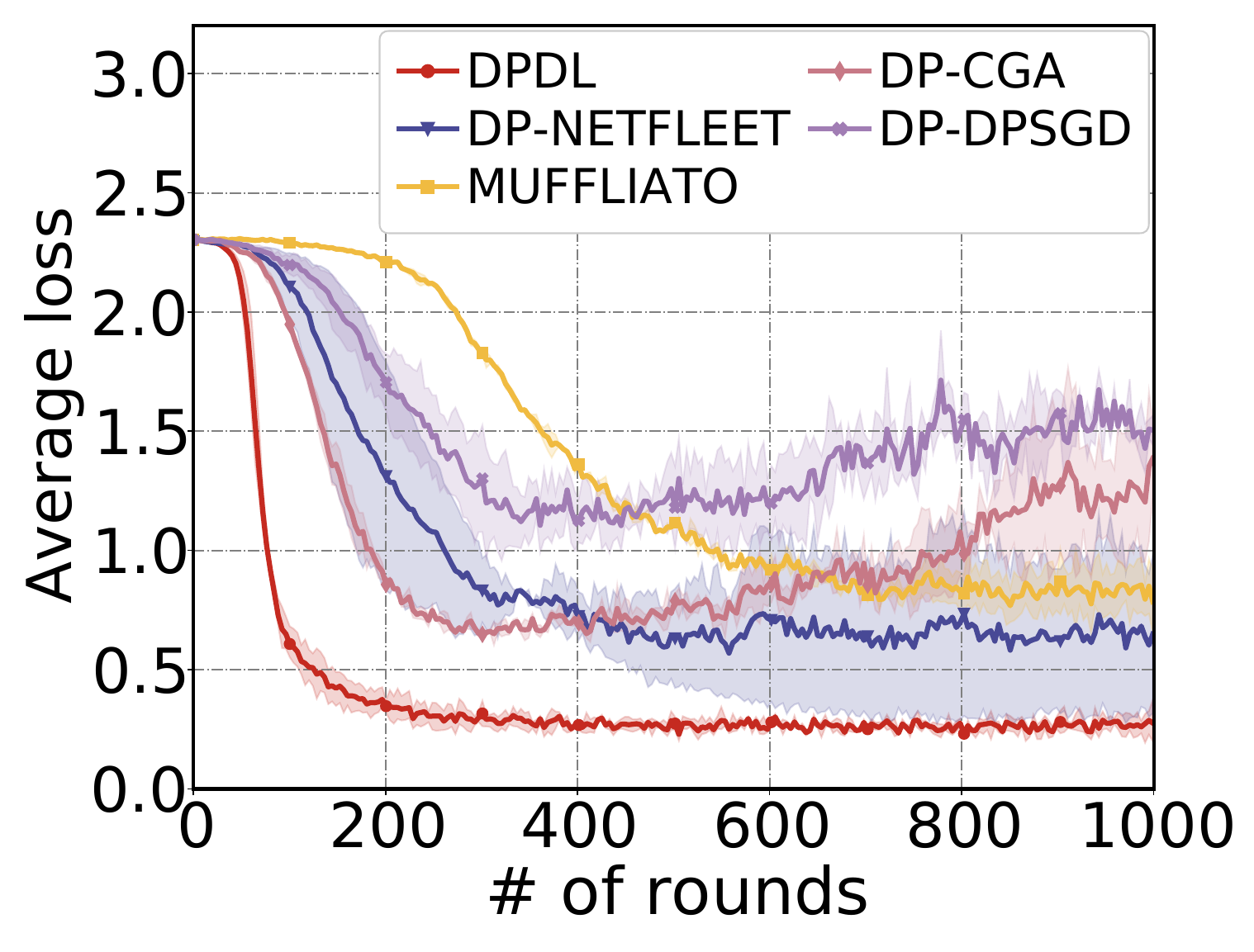}}
    \parbox{.3\textwidth}{\center\includegraphics[width=.3\textwidth]{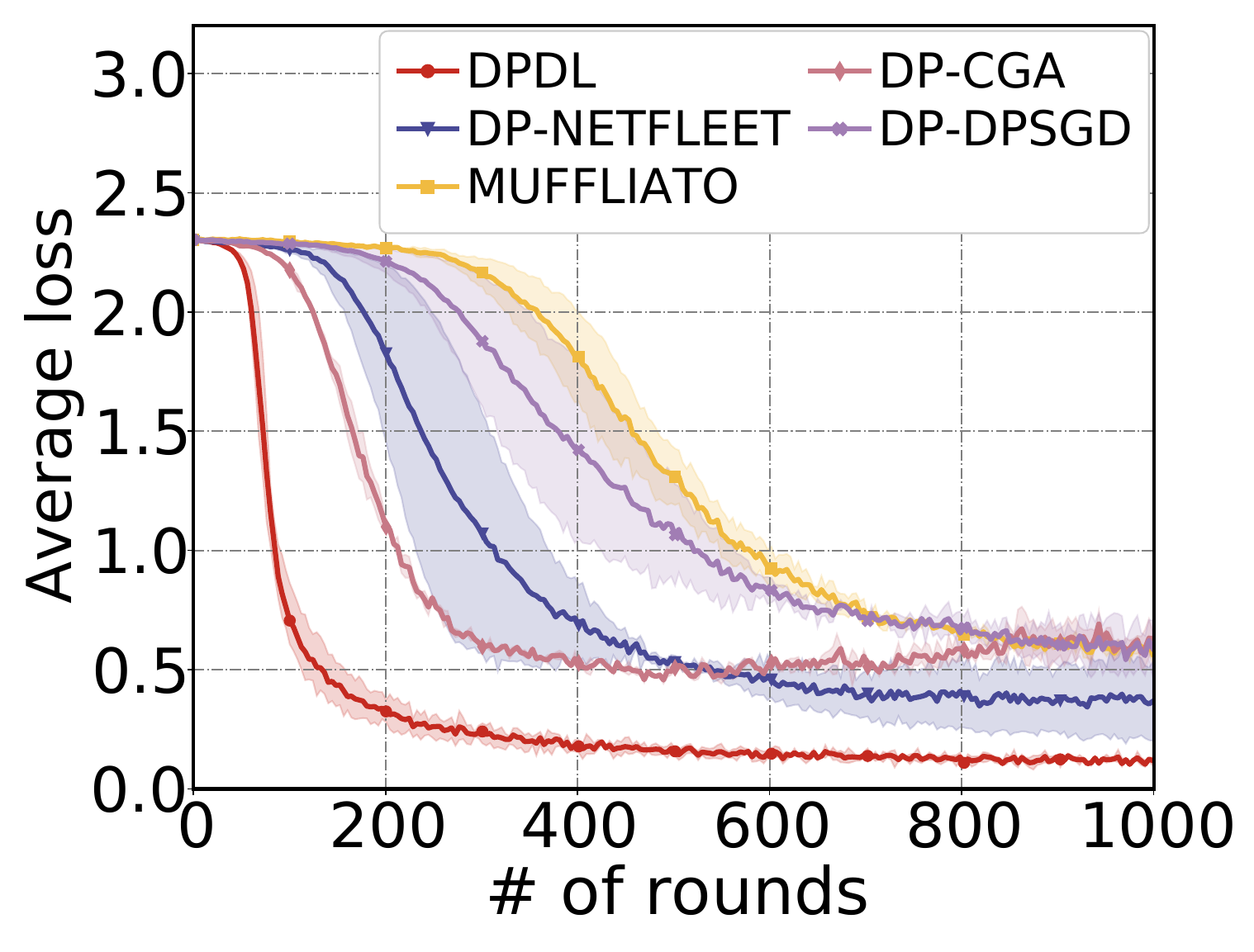}}
    \parbox{.3\textwidth}{\center\includegraphics[width=.3\textwidth]{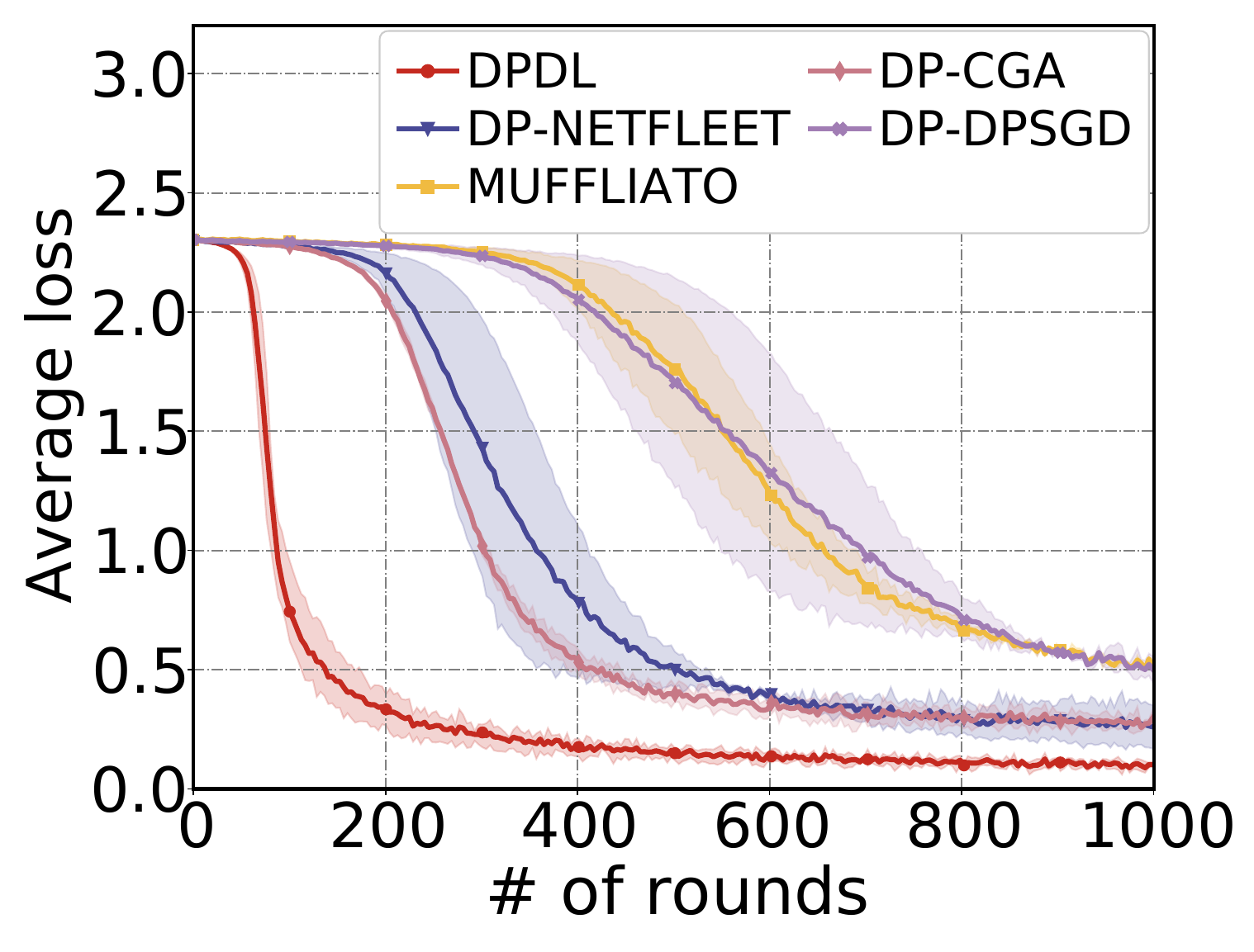}}
    \parbox{.3\textwidth}{\center\scriptsize(a3) $\epsilon=0.25, N=30$}
    \parbox{.3\textwidth}{\center\scriptsize(b3) $\epsilon=0.5, N=30$}
    \parbox{.3\textwidth}{\center\scriptsize(c3) $\epsilon=1.0, N=30$}
  \caption{Experiment results about convergence on MNIST dataset over fully connected graphs.}
  \label{fig:fc-mnist}
  \end{center}
  \end{figure*}
    \begin{figure*}[htb!]
    \begin{center}
      \parbox{.3\textwidth}{\center\includegraphics[width=.3\textwidth]{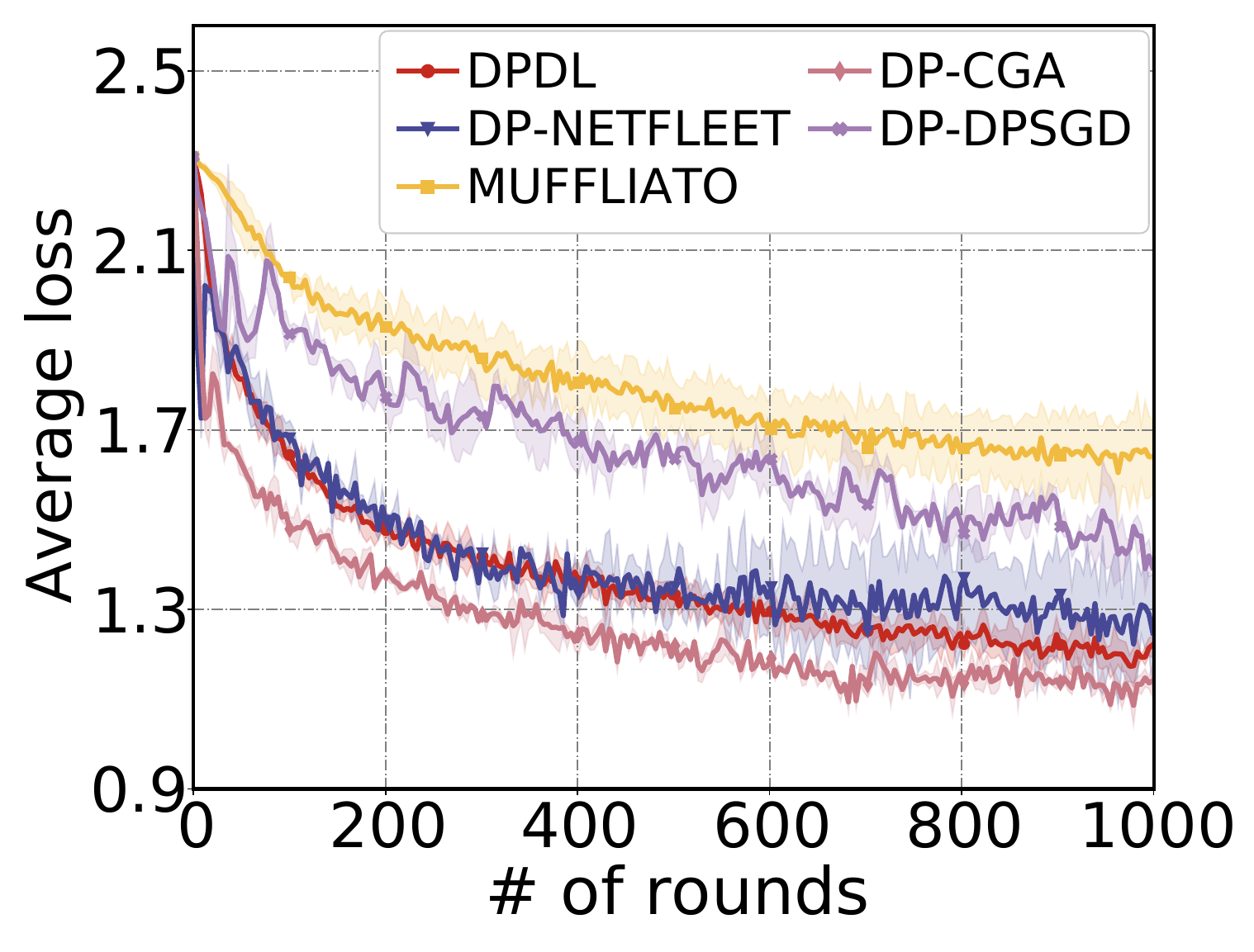}}
      \parbox{.3\textwidth}{\center\includegraphics[width=.3\textwidth]{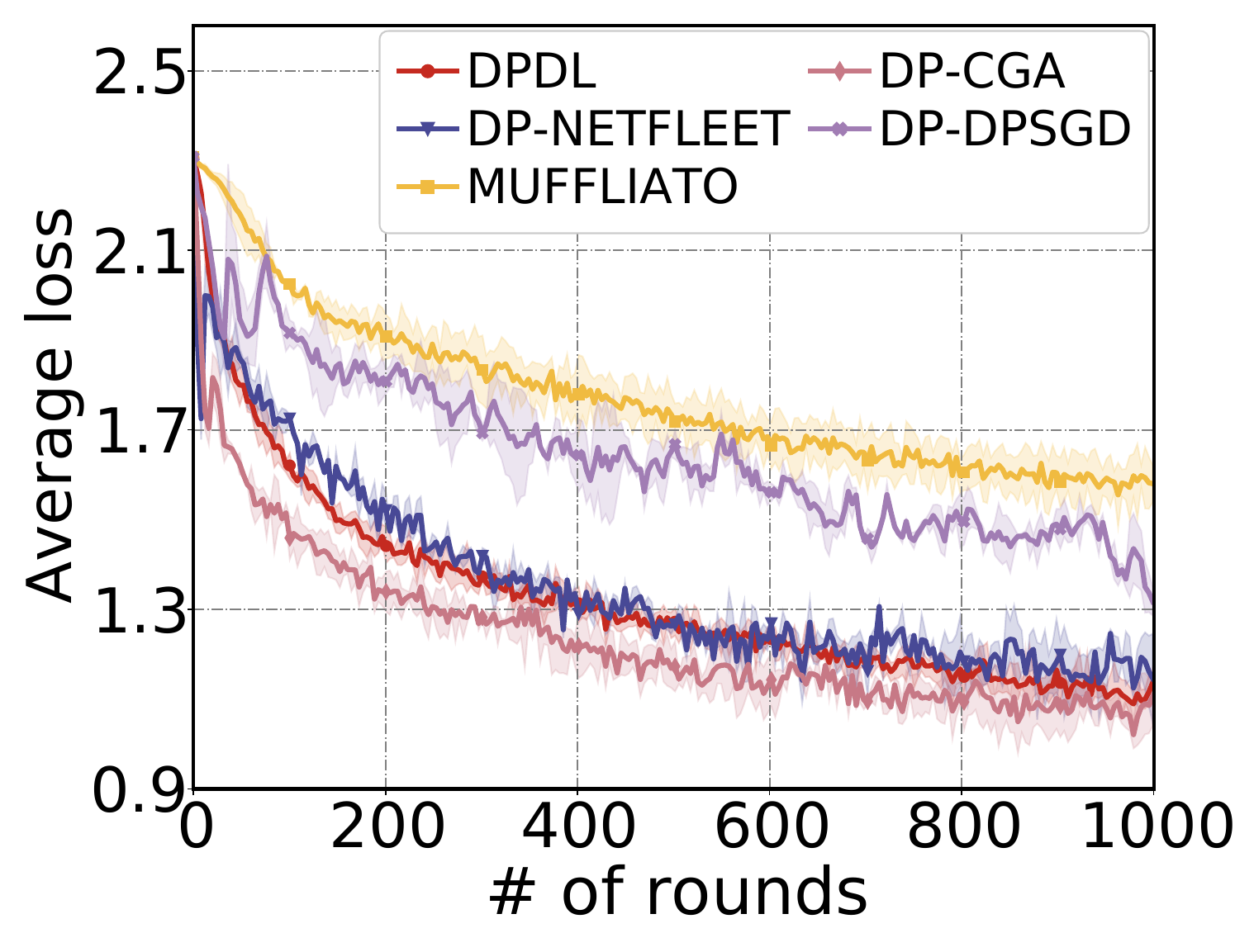}}
      \parbox{.3\textwidth}{\center\includegraphics[width=.3\textwidth]{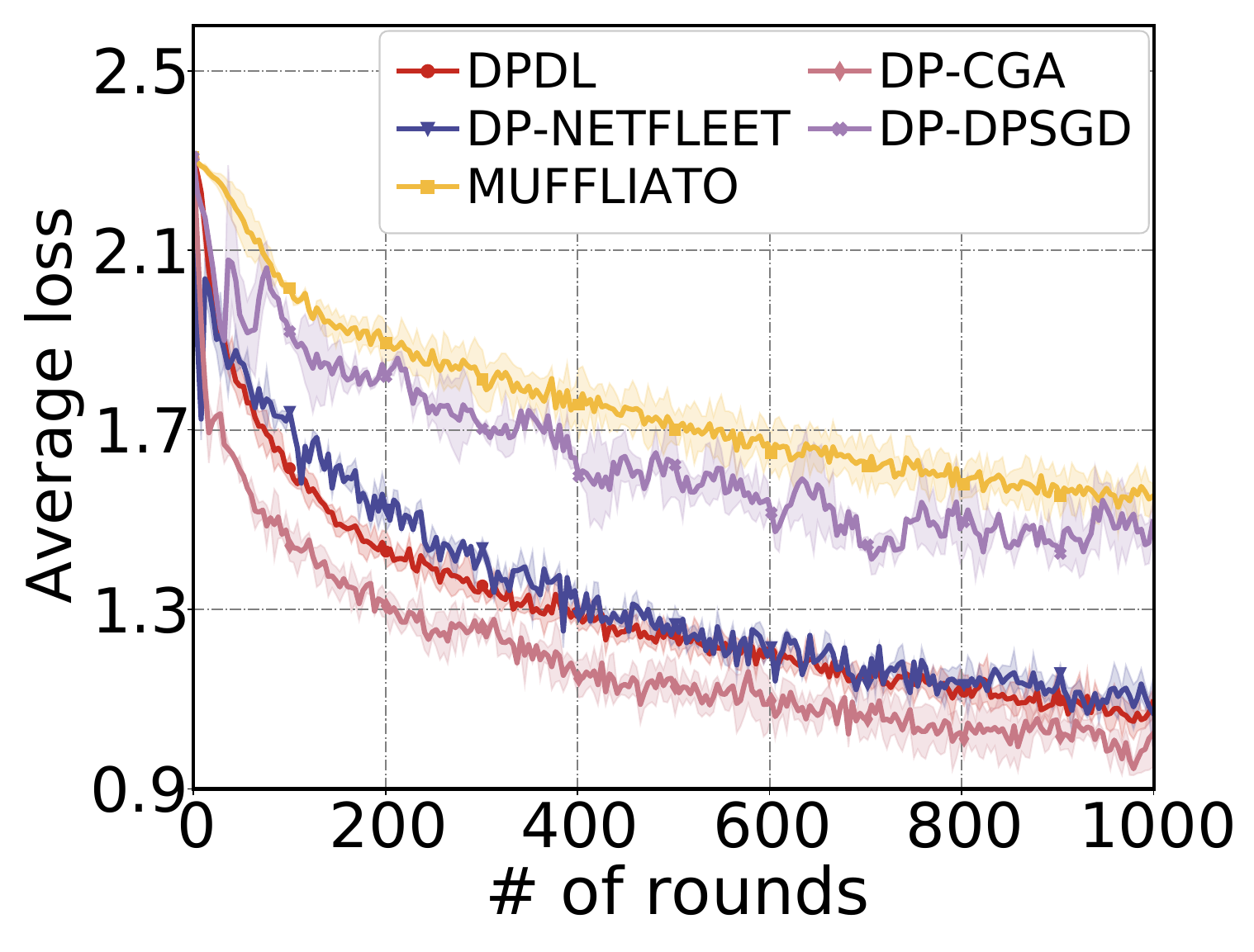}}
      \parbox{.3\textwidth}{\center\scriptsize(a1) $\epsilon=2.0, N=10$}
      \parbox{.3\textwidth}{\center\scriptsize(b1) $\epsilon=4.0, N=10$}
      \parbox{.3\textwidth}{\center\scriptsize(c1) $\epsilon=8.0, N=10$}
      \parbox{.3\textwidth}{\center\includegraphics[width=.3\textwidth]{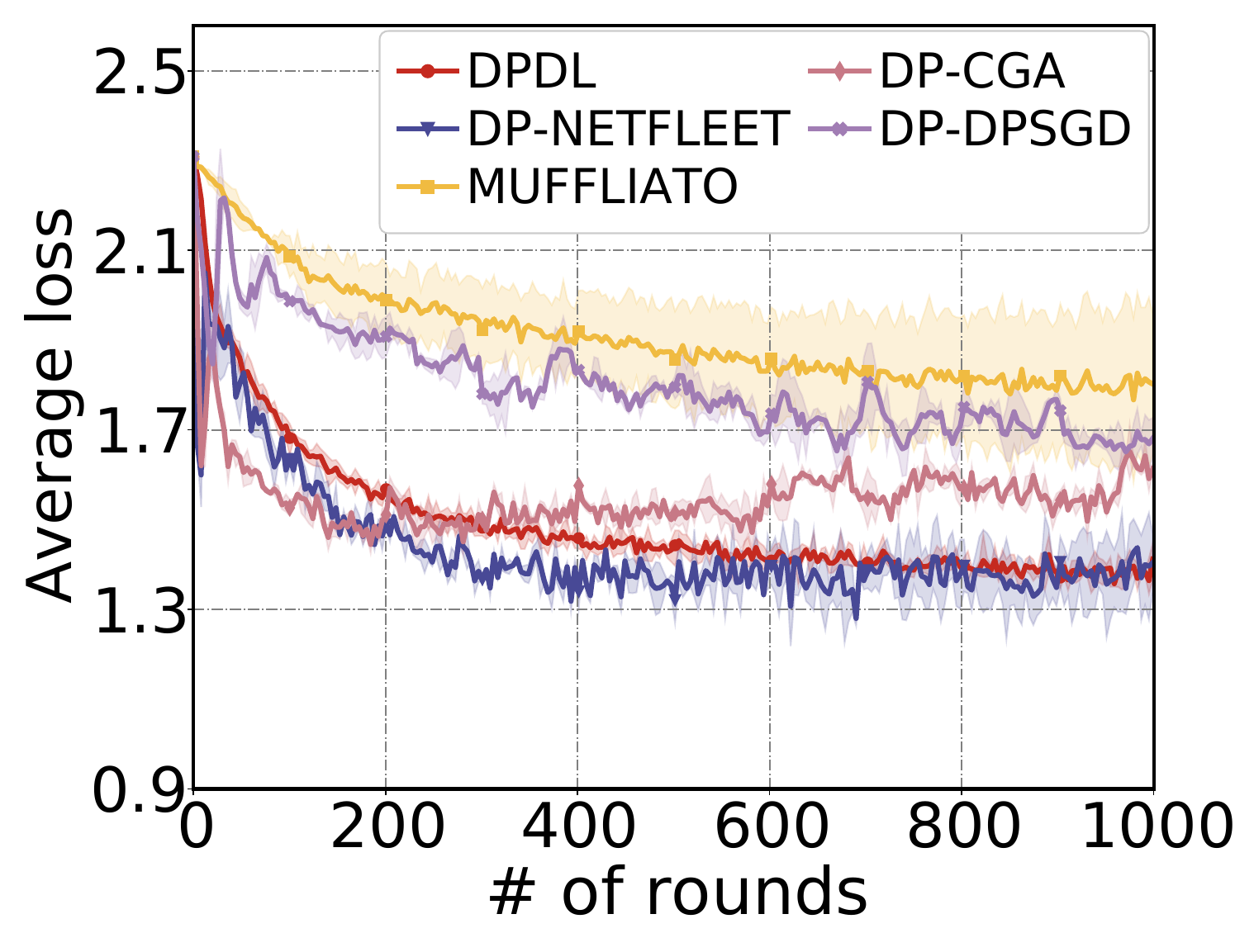}}
      \parbox{.3\textwidth}{\center\includegraphics[width=.3\textwidth]{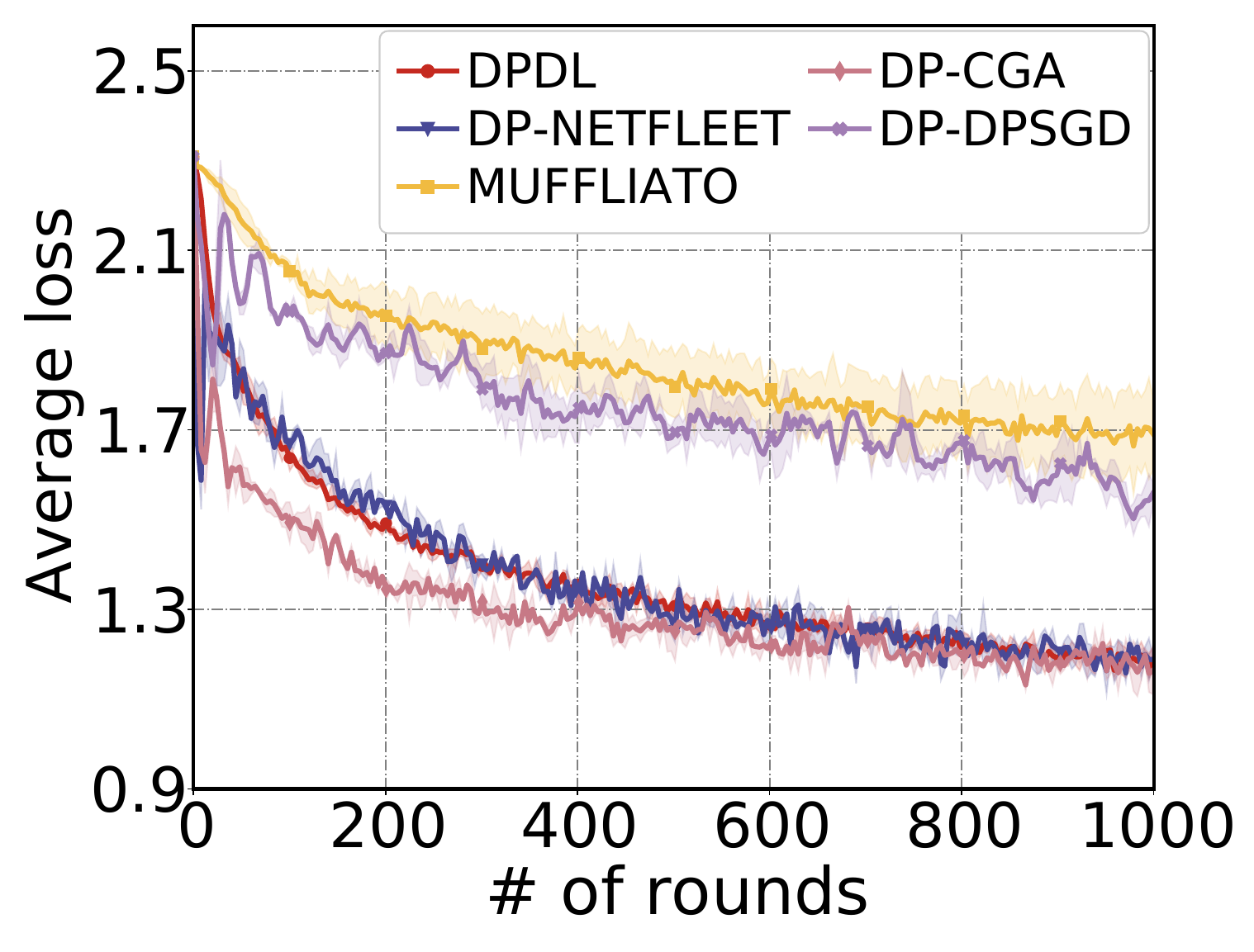}}
      \parbox{.3\textwidth}{\center\includegraphics[width=.3\textwidth]{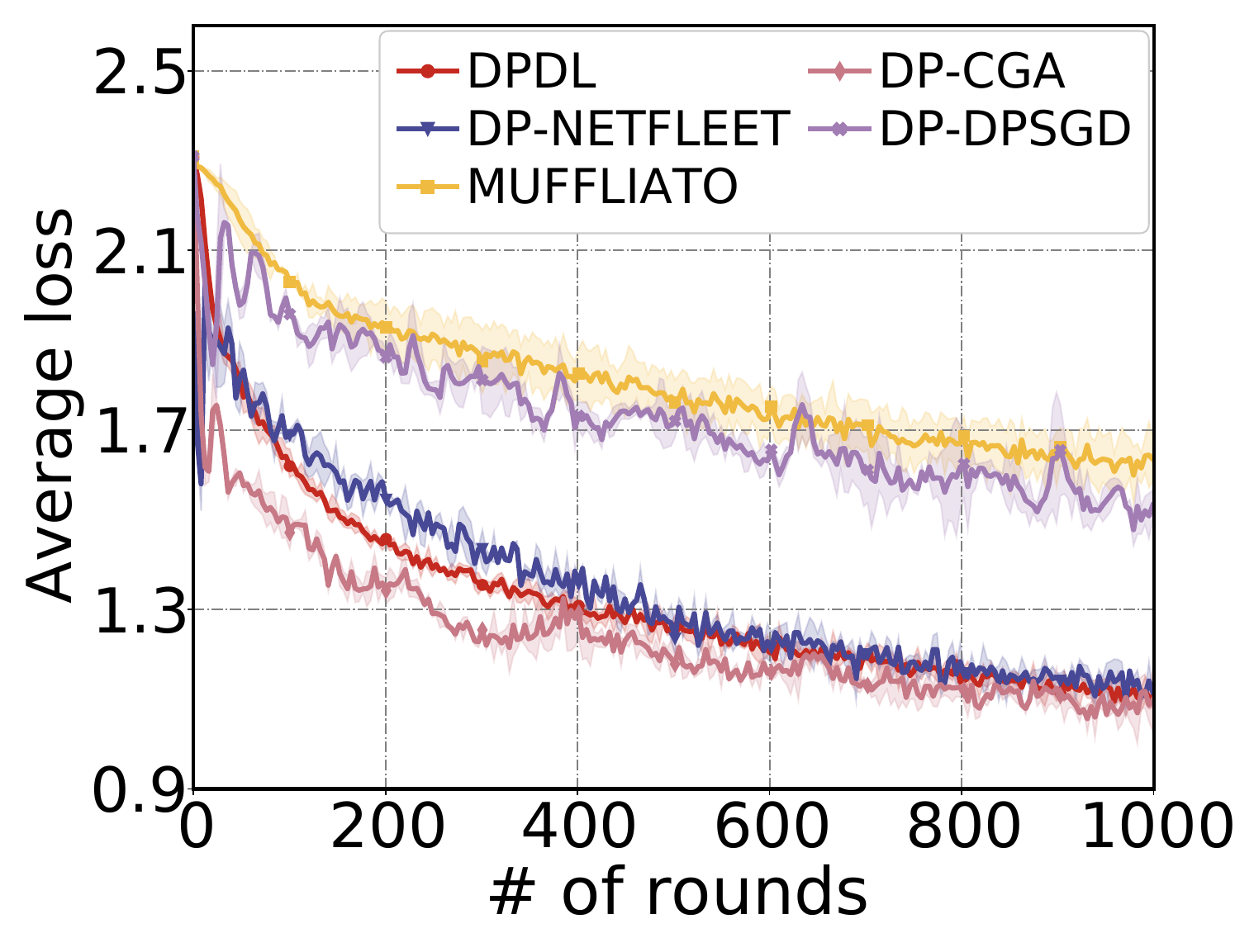}}
      \parbox{.3\textwidth}{\center\scriptsize(a2) $\epsilon=2.0, N=20$}
      \parbox{.3\textwidth}{\center\scriptsize(b2) $\epsilon=4.0, N=20$}
      \parbox{.3\textwidth}{\center\scriptsize(c2) $\epsilon=8.0, N=20$}
      \parbox{.3\textwidth}{\center\includegraphics[width=.3\textwidth]{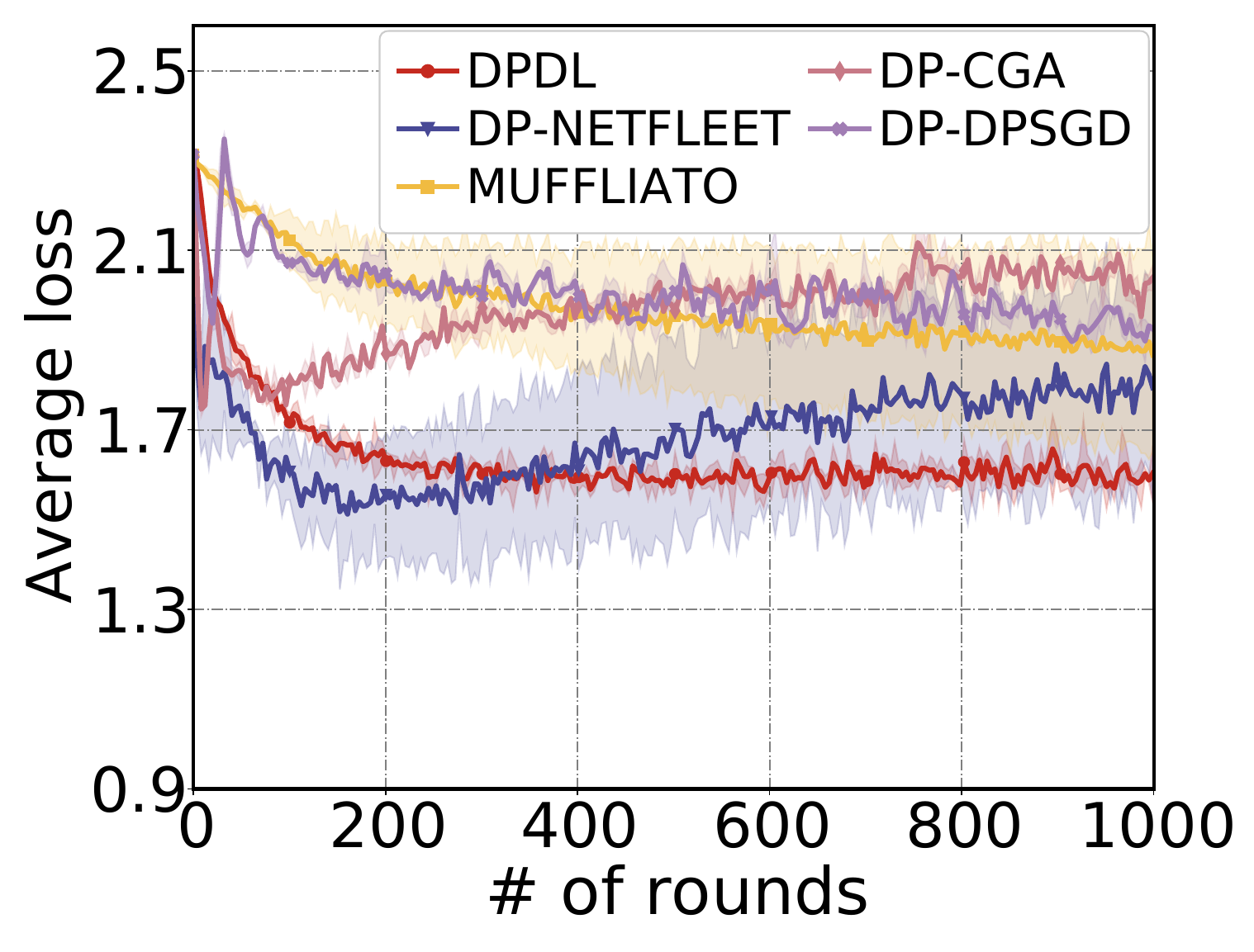}}
      \parbox{.3\textwidth}{\center\includegraphics[width=.3\textwidth]{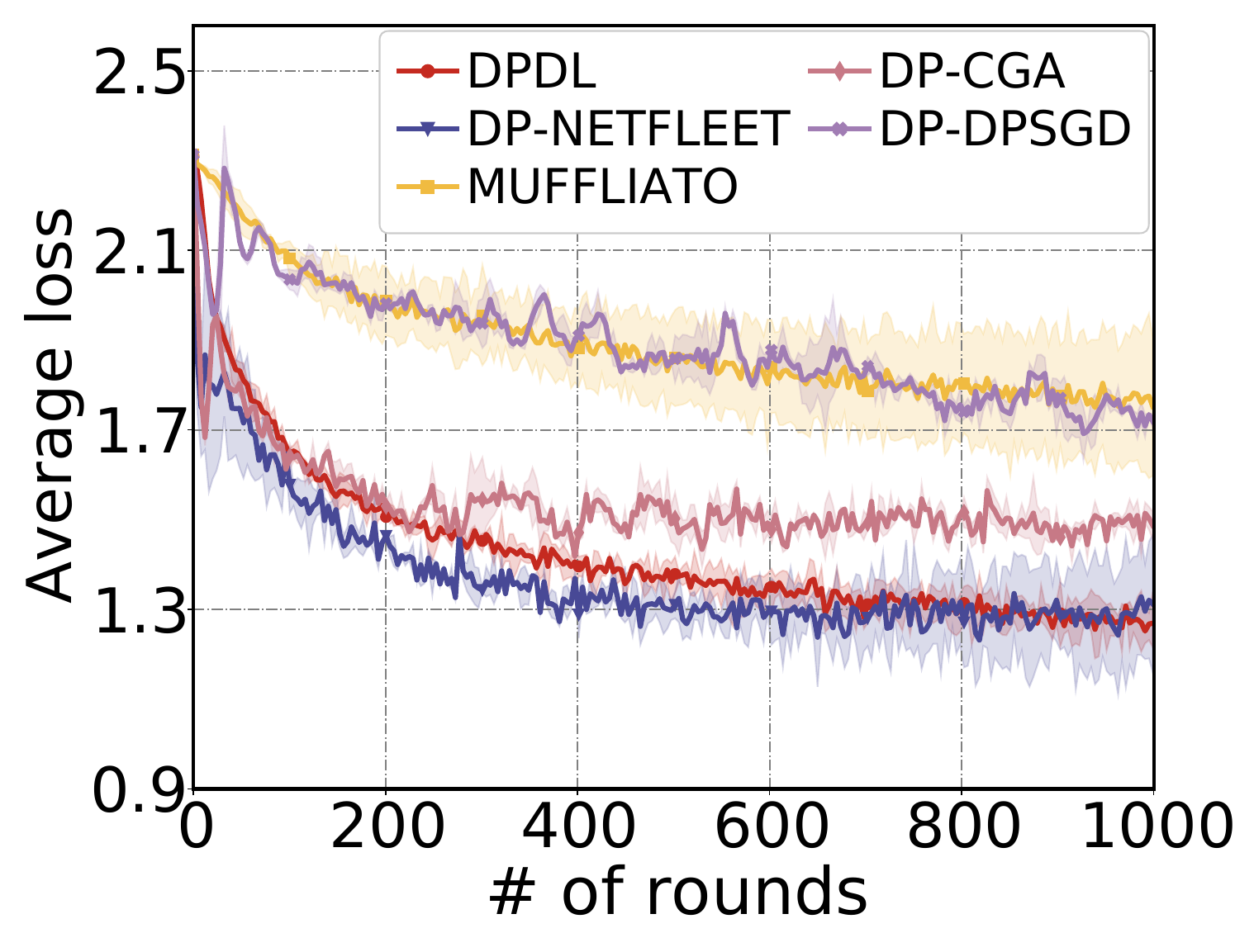}}
      \parbox{.3\textwidth}{\center\includegraphics[width=.3\textwidth]{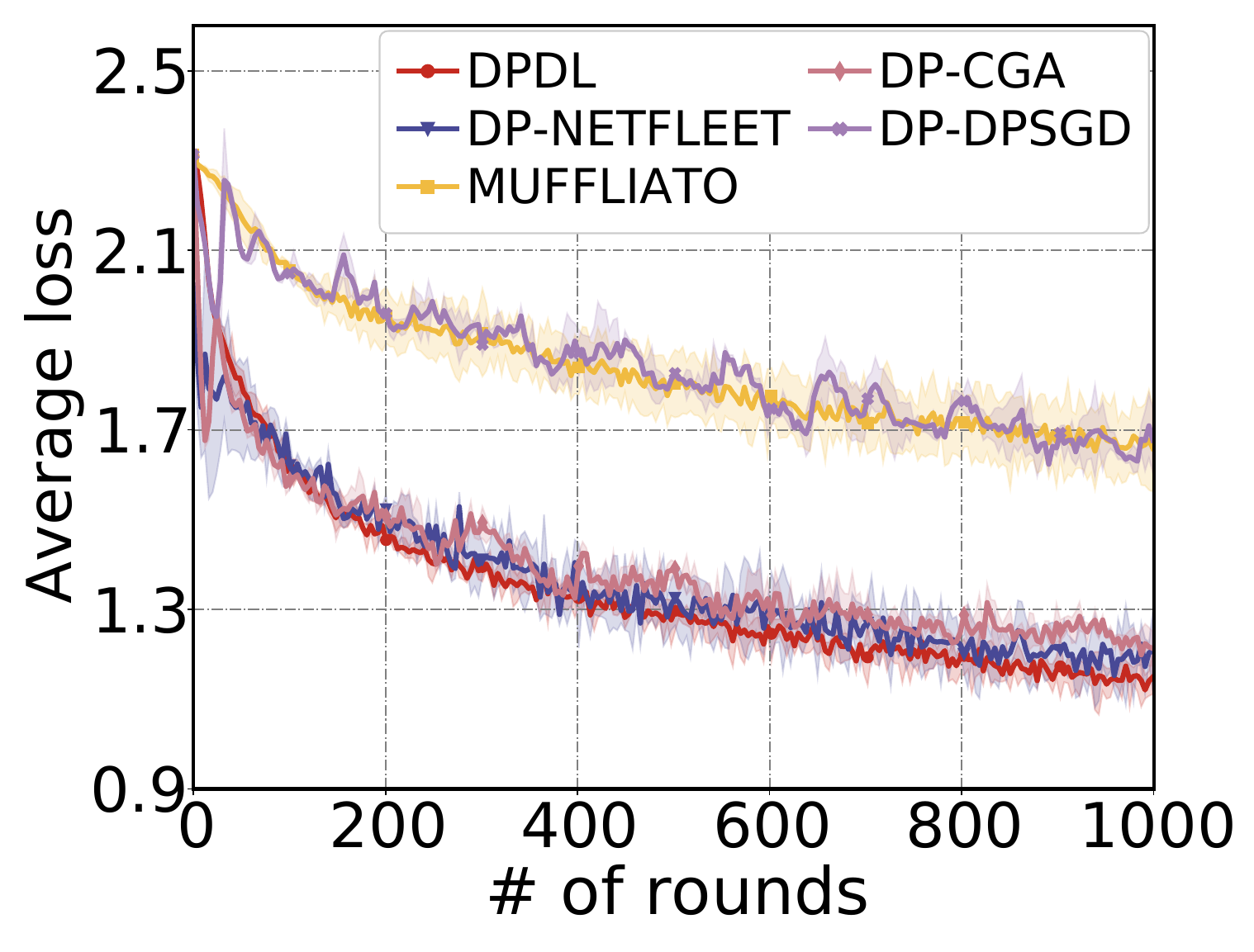}}
      \parbox{.3\textwidth}{\center\scriptsize(a3) $\epsilon=2.0, N=30$}
      \parbox{.3\textwidth}{\center\scriptsize(b3) $\epsilon=4.0, N=30$}
      \parbox{.3\textwidth}{\center\scriptsize(c3) $\epsilon=8.0, N=30$}
    \caption{Experiment results about convergence on CIFAR-10 dataset over bipartite graphs.}
    \label{fig:bipar-CIFAR10}
    \end{center}
    \end{figure*}
    \begin{figure*}[htb!]
    \begin{center}
      \parbox{.3\textwidth}{\center\includegraphics[width=.3\textwidth]{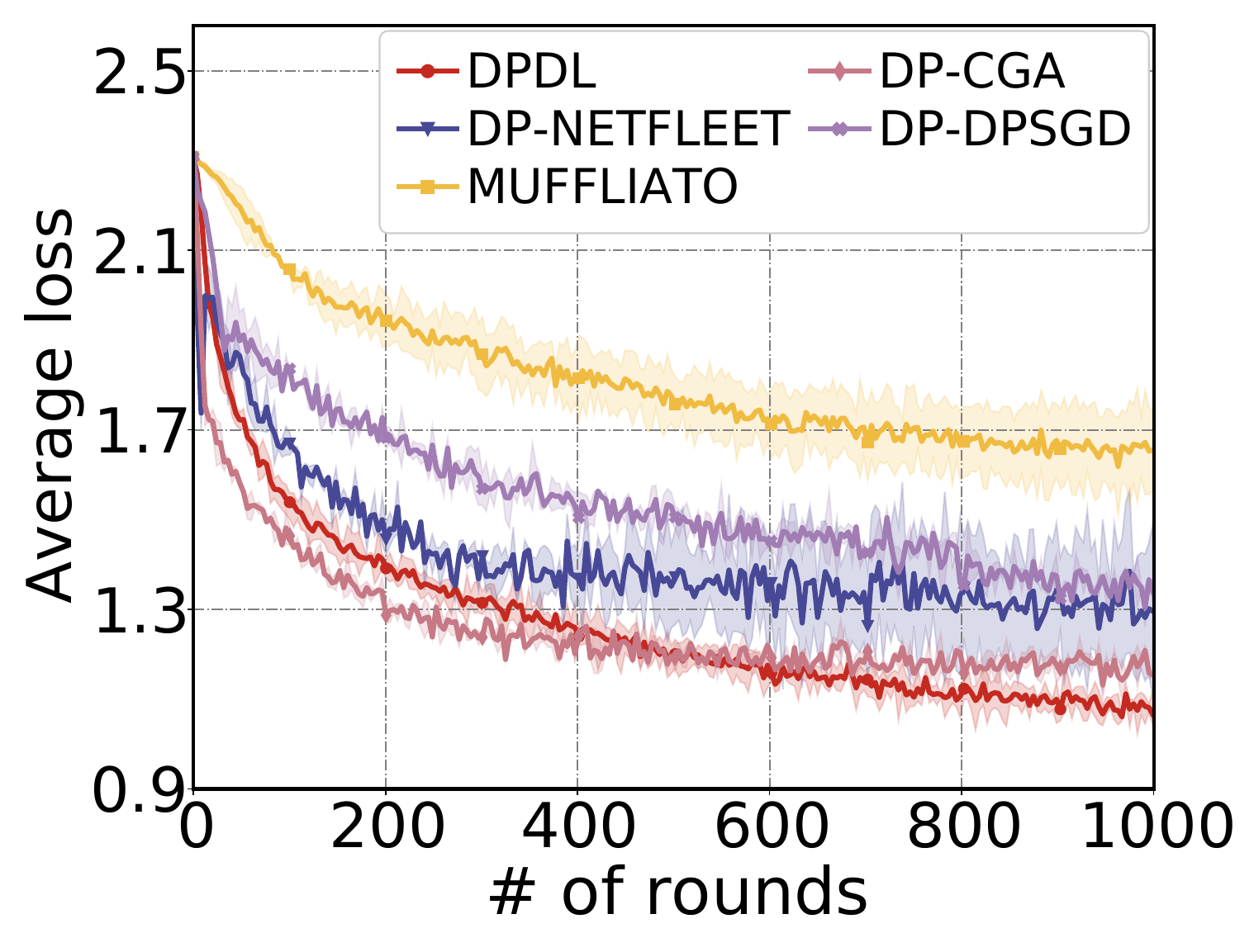}}
      \parbox{.3\textwidth}{\center\includegraphics[width=.3\textwidth]{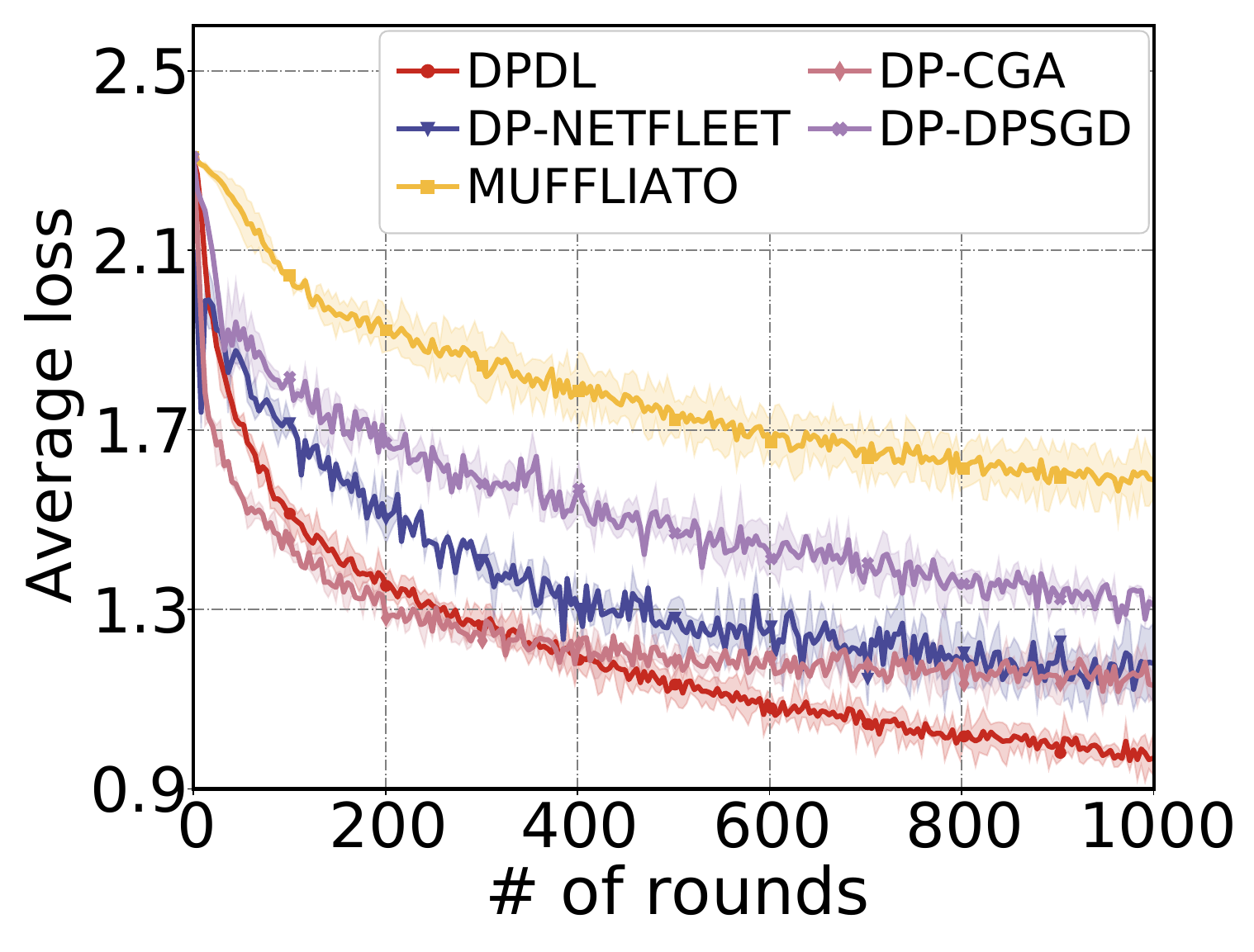}}
      \parbox{.3\textwidth}{\center\includegraphics[width=.3\textwidth]{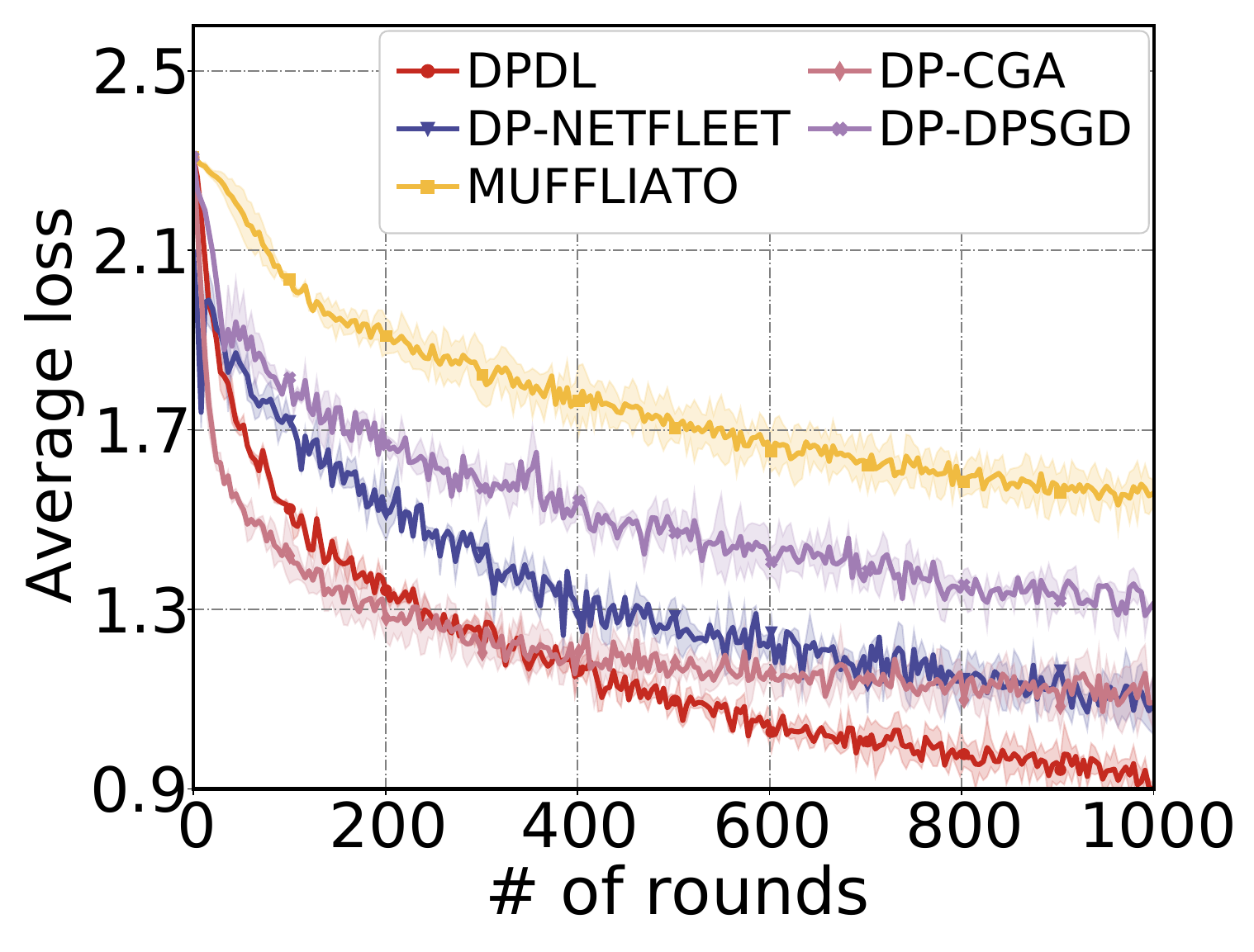}}
      \parbox{.3\textwidth}{\center\scriptsize(a1) $\epsilon=2.0, N=10$}
      \parbox{.3\textwidth}{\center\scriptsize(b1) $\epsilon=4.0, N=10$}
      \parbox{.3\textwidth}{\center\scriptsize(c1) $\epsilon=8.0, N=10$}
      \parbox{.3\textwidth}{\center\includegraphics[width=.3\textwidth]{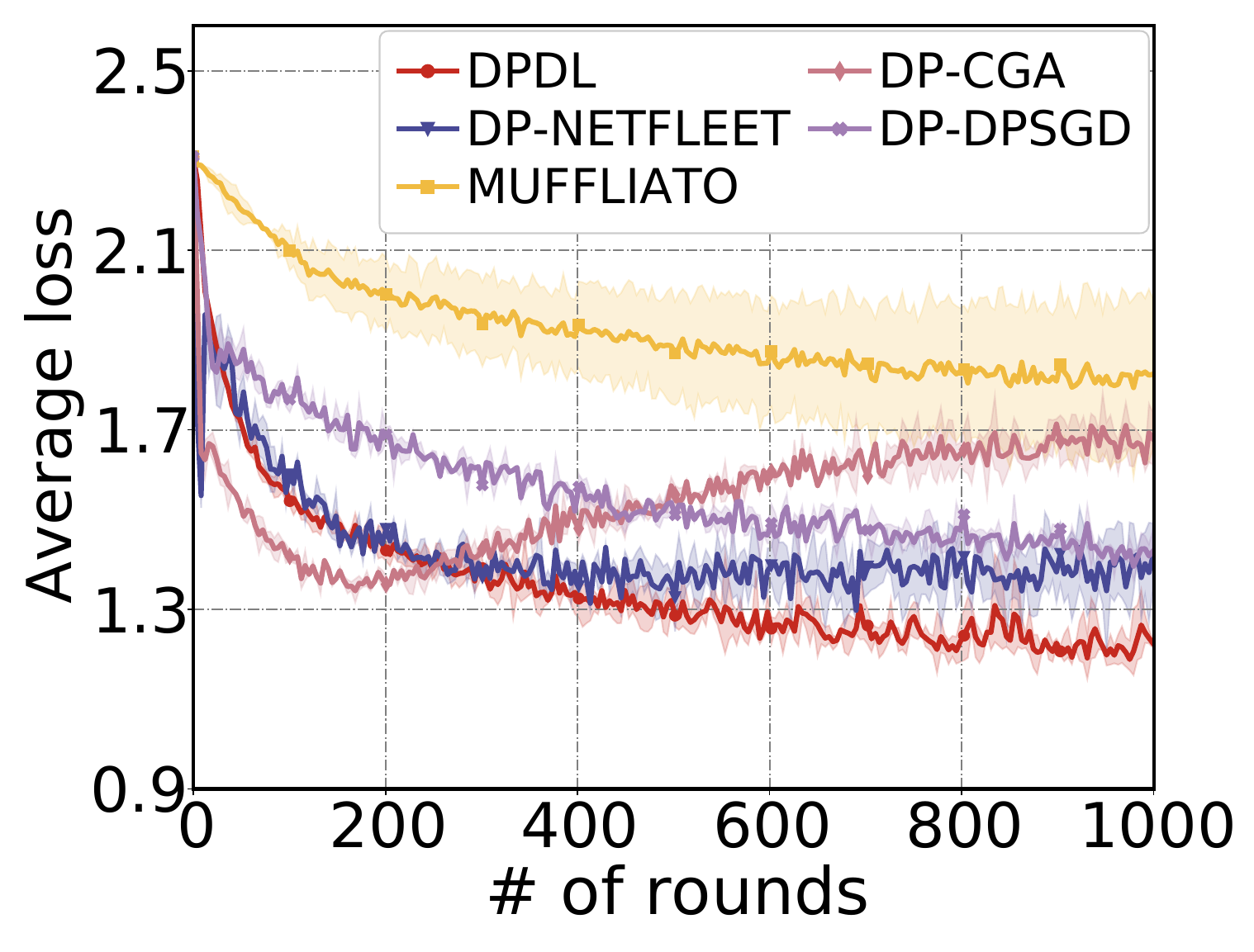}}
      \parbox{.3\textwidth}{\center\includegraphics[width=.3\textwidth]{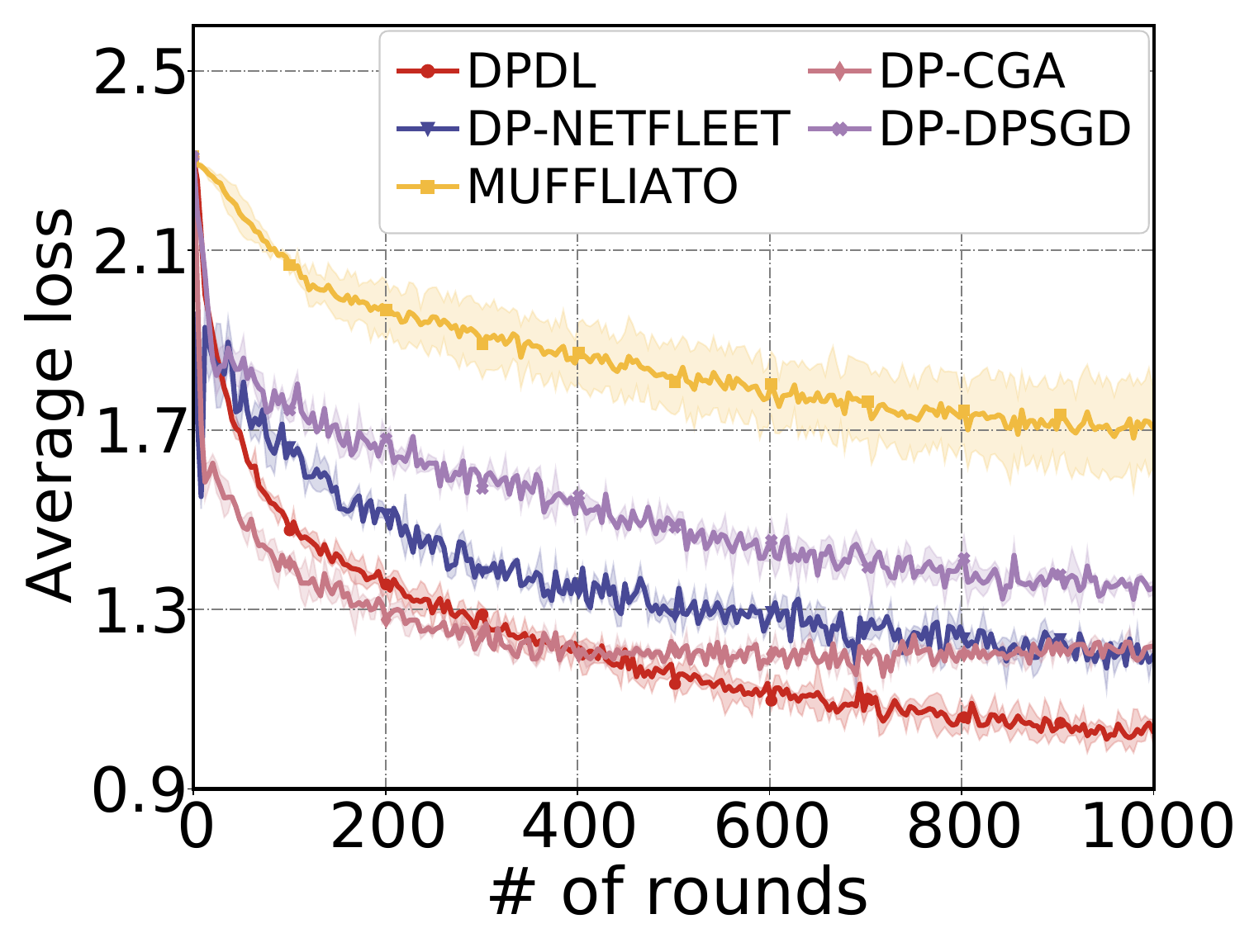}}
      \parbox{.3\textwidth}{\center\includegraphics[width=.3\textwidth]{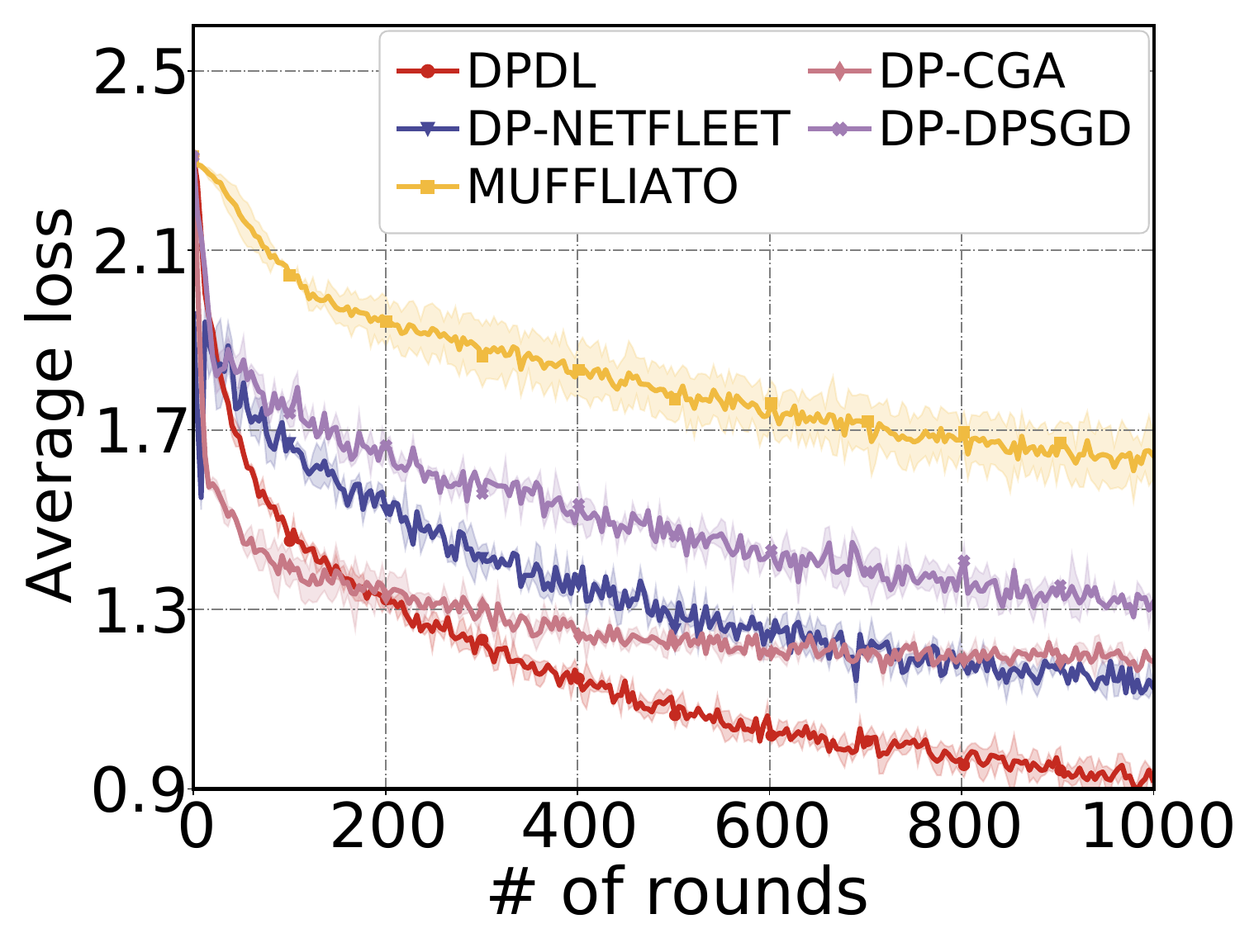}}
      \parbox{.3\textwidth}{\center\scriptsize(a2) $\epsilon=2.0, N=20$}
      \parbox{.3\textwidth}{\center\scriptsize(b2) $\epsilon=4.0, N=20$}
      \parbox{.3\textwidth}{\center\scriptsize(c2) $\epsilon=8.0, N=20$}
      \parbox{.3\textwidth}{\center\includegraphics[width=.3\textwidth]{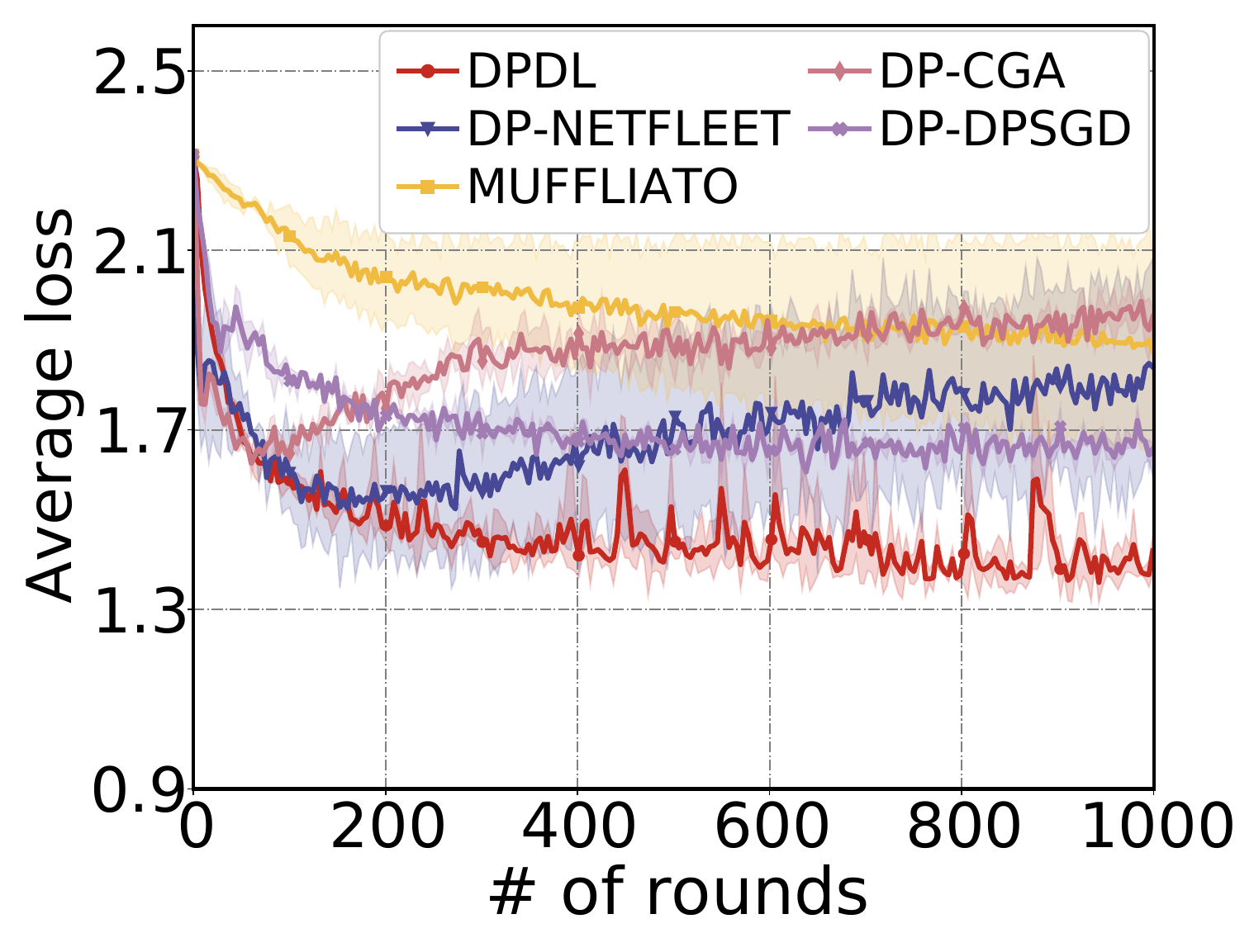}}
      \parbox{.3\textwidth}{\center\includegraphics[width=.3\textwidth]{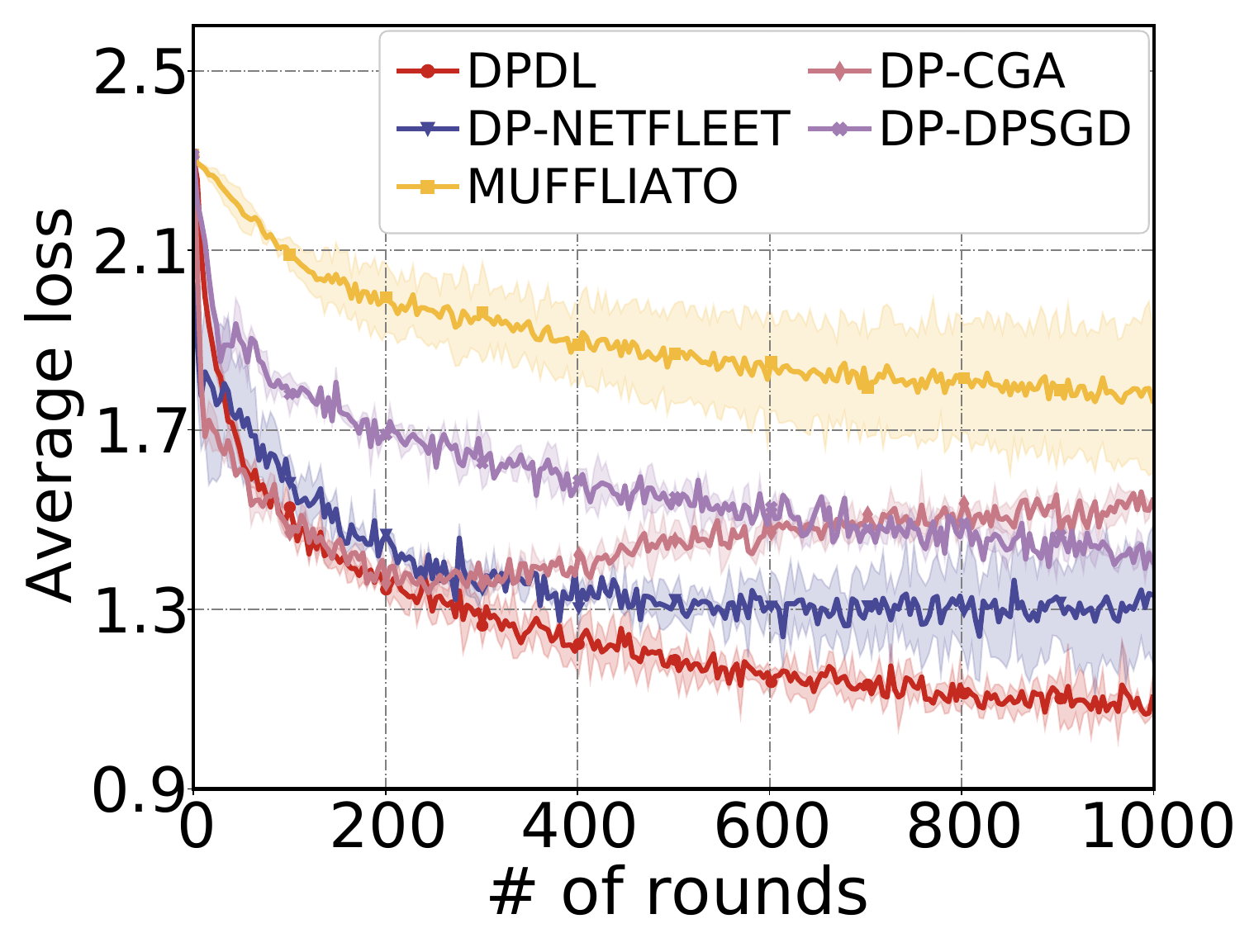}}
      \parbox{.3\textwidth}{\center\includegraphics[width=.3\textwidth]{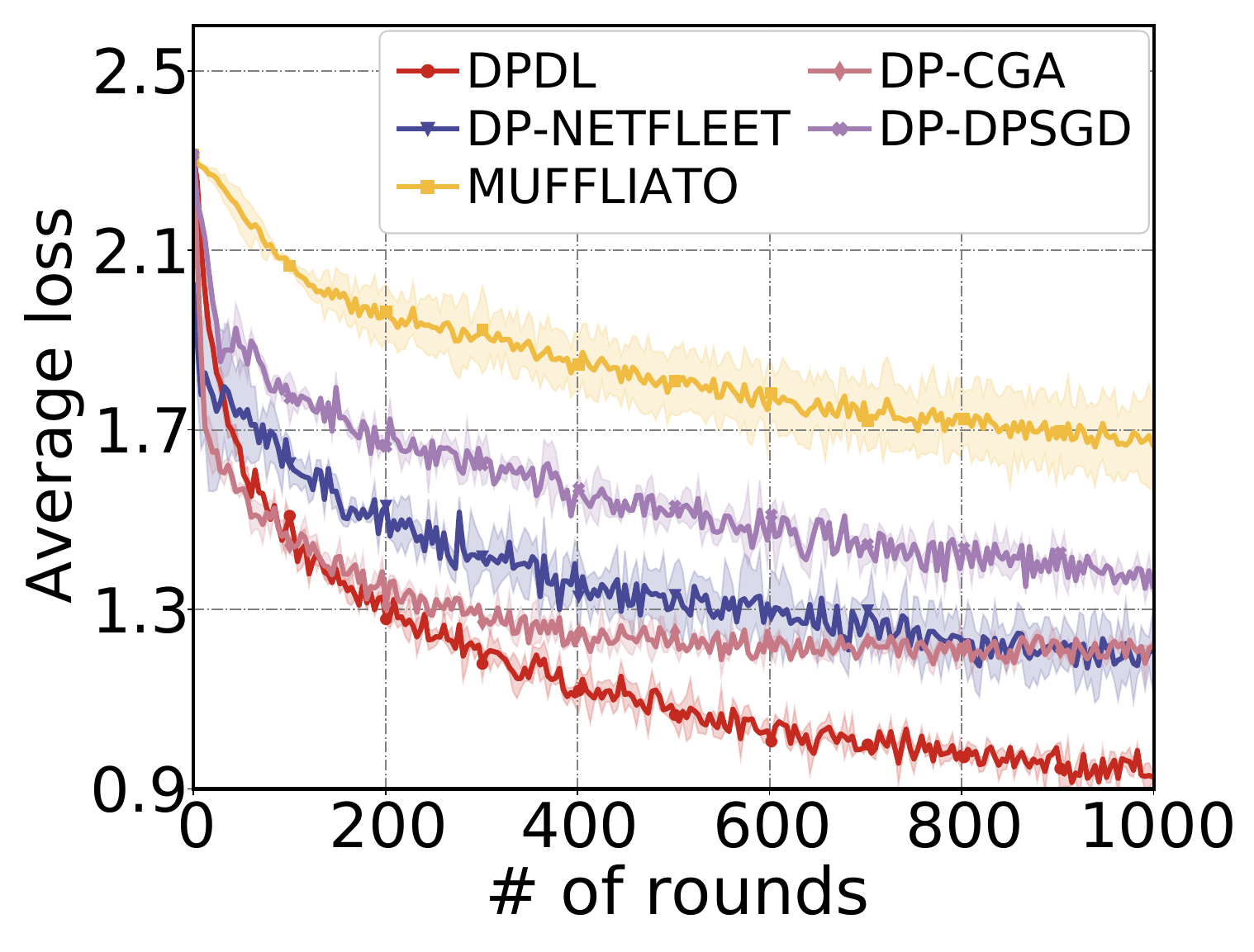}}
      \parbox{.3\textwidth}{\center\scriptsize(a3) $\epsilon=2.0, N=30$}
      \parbox{.3\textwidth}{\center\scriptsize(b3) $\epsilon=4.0, N=30$}
      \parbox{.3\textwidth}{\center\scriptsize(c3) $\epsilon=8.0, N=30$}
     \caption{Experiment results about convergence on CIFAR-10 dataset over fully connected graphs.}
  \label{fig:fc-CIFAR10}
  \end{center}
  \end{figure*}

  \subsection{Experiment Results about Convergence} \label{ssec:expconvergence}
  In this section, we plot loss curves for each algorithm under different settings with $\alpha_d=0.25$ (loss curves for $\alpha_d=1.0$ exhibit similar trends to those of $\alpha_d=0.25$, therefore, we present only the results for $\alpha_d=0.25$ due to space limitations). Each algorithm runs twenty times, and the final results are obtained by averaging the outcomes of these runs. To better illustrate the experimental findings, we present both the average loss (the mean of the loss values across all agents) and the individual variations, using shaded regions to reflect variability.

    \subsubsection{Experiment Results about Convergence on MNIST Dataset} \label{sssec:expconvergence-mnist}
      We first present the convergence trajectories of the different algorithms in terms of training loss versus training rounds over bipartite graphs on the MNIST dataset, as shown in Fig.~\ref{fig:bipar-mnist}. It is demonstrated that, our algorithm achieves convergence much more rapidly and more smoothly than the other reference ones. For example, when $\epsilon=0.5$ and $N=20$, the training loss of our DPDL algorithm achieves its minimum value around $0.27$ in about $500$ rounds, while the ones of the other four reference algorithms, i.e., DP-CGA, DP-NETFLEET, DP-DPSGD, and MUFFLIATO, are $0.69$, $0.53$, $1.62$, and $1.64$, respectively, within the same time horizon. Although DP-NETFLEET and DP-CGA have their final training loss close to our algorithm, their convergence exhibit considerable fluctuations especially with $N \geq 20$ and $\epsilon \leq 0.5$.

      As mentioned in \textbf{Algorithm}~\ref{alg:dpdl}, each agent has to add "more" noise to the gradient information and (thus allocates less privacy budget) for "stronger" privacy preservation, resulting in a reduction in the utility of the gradient information. For example, as shown in Fig.~\ref{fig:bipar-mnist}, when $\epsilon$ is decreased from $1.0$ to $0.25$, our DPDL algorithm still achieves convergence rapidly and smoothly, whereas the convergences of the other four algorithms cannot be ensured. On the other hand, increasing the privacy budget (and thereby reducing the noise) may lead to improved convergence performance. However, such gains are not always worthwhile, particularly given that a higher privacy budget implies a more significant risk of privacy leakage. For instance, when $\epsilon$ is increased from $0.25$ to $0.5$, our DPDL algorithm achieves a noticeably lower training loss after convergence. Yet, further increasing $\epsilon$ to $1.0$ yields little to no improvement in the minimum training loss.
      
      We also investigate the impact of the number of agents on the convergence of the different algorithms, as shown in Fig.~\ref{fig:bipar-mnist}. The results indicate that increasing the number of agents has only a minimal effect on the convergence rate of our algorithm in practice. For example, when $\epsilon = 0.5$, our algorithm consistently reaches its minimum training loss (approximately $0.25$ to $0.29$) within $500$ rounds for $N = 10, 20, 30$. In contrast, the performance of the other reference algorithms, particularly DP-CGA and MUFFLIATO, degrades significantly as the number of agents increases.

      We report the experimental results obtained over the fully connected graph in Fig.~\ref{fig:fc-mnist}. It can be observed that the convergence performance of our algorithm improves with denser communication topologies, and its advantage over the other reference algorithms becomes more pronounced. For example, when $\epsilon = 0.5$ and $N = 20$, our algorithm reaches its minimum training loss of approximately $0.29$ within $400$ rounds over the bipartite graph, while it achieves a similar minimum in only $200$ rounds over the fully connected graph. In contrast, the training losses of the other reference algorithms (DP-CGA, DP-NETFLEET, DP-DPSGD, and MUFFLIATO) are $0.93$, $0.69$, $1.76$, and $1.81$, respectively, which are approximately $2.38$ to $6.24$ times higher than that of our algorithm. 
      %
      %
      %

    \subsubsection{Experiment Results about Convergence on CIFAR-10 Dataset} \label{sssec:expconvergence-cifar}
      We present the experimental results on the CIFAR-10 dataset in Fig.~\ref{fig:bipar-CIFAR10} and Fig.~\ref{fig:fc-CIFAR10}. It can be observed that across different communication topologies, DPDL, DP-CGA, and DP-NETFLEET exhibit comparable convergence performance. Their training losses converge to approximately $1.0$–$1.3$ within $700$–$900$ rounds when a higher privacy budget is used (e.g., $\epsilon = 4.0, 8.0$). In contrast, the other two reference algorithms, DP-DPSGD and MUFFLIATO, fail to achieve convergence within the same number of rounds, primarily because they do not account for data heterogeneity. It is worth noting that although DP-CGA and DP-NETFLEET may converge slightly faster and reach marginally lower training losses than our DPDL algorithm in some individual cases, their test accuracies are significantly lower, as will be shown later. Moreover, when the privacy budget is small (e.g., $\epsilon = 2.0, 4.0$), both DP-CGA and DP-NETFLEET exhibit poor stability. DP-NETFLEET shows a wide shaded area in its loss curve, indicating high variance, while DP-CGA shows an upward trend. In contrast, our algorithm achieves more stable and reliable convergence.

  \subsection{Experiment Results about Test Accuracy} \label{ssec:expaccuracy}
  In this section, we report the accuracy of each algorithm under different settings. Specifically, we present tables that summarize the accuracy values across varying numbers of agents, privacy budgets $\epsilon$, and data heterogeneity levels $\alpha_d$. The symbol $\pm$ is used to indicate variability across multiple runs, and the best results are highlighted in bold.

    \subsubsection{Experiment Results about Test Accuracy on MNIST dataset} \label{sssec:expaccuracy-mnist}
      \begin{table*}[h]
      \captionsetup{aboveskip=3pt}
      \caption{Experiment results about test accuracy on MNIST dataset.}
      \centering
        \setlength{\tabcolsep}{8pt}
        \setlength{\extrarowheight}{1.5pt} 
        \begin{tabular}{|c|c|c|c|c|c|c|c|c|c|c|c|}
        \hline
        \multirow{2}{*}{$\alpha_d$} &
        \multirow{2}{*}{$\epsilon$} &
        \multirow{2}{*}{\textbf{Algorithms}} &
        \multicolumn{3}{c|}{\textbf{Bipartite}} &
        \multicolumn{3}{c|}{\textbf{Fully connected}} \\ 
        \cline{4-9} & & & $N$=10 & $N$=20 & $N$=30 & $N$=10 & $N$=20 & $N$=30 \\ \hline
        \multirow{15}{*}{$0.25$} 
        & \multirow{5}{*}{$0.25$} 
        & DP-DPSGD &83.5±1.3 &79.0±0.3 &69.7±0.3 &85.4±2.7 &71.9±1.0 &59.9±2.2 \\
        & & DP-CGA &88.0±0.7 &67.9±1.8 &73.2±4.4 &88.8±1.1 &66.7±0.1 &63.9±1.1 \\ 
        & & MUFFLITAO &85.5±0.8 &84.3±1.1 &83.7±1.3 &86.8±1.1 &79.3±2.0 &74.3±2.7  \\
        & & DP-NETFLEET &90.9±2.3 &81.8±5.5 &76.3±7.6 &90.8±2.4 &82.1±6.2 &76.4±7.7 \\
        & & \textbf{DPDL} &\textbf{94.4±0.6} &\textbf{92.3±0.5} &\textbf{90.9±0.7} &\textbf{96.1±0.6} &\textbf{95.3±1.0} &\textbf{91.9±0.3} \\ \cline{2-9}
        & \multirow{5}{*}{$0.5$}
        & DP-DPSGD &84.1±1.9 &81.8±1.2 &84.2±0.9 &85.2±1.8 &82.9±3.0 &81.0±3.3 \\
        & & DP-CGA &90.9±1.8 &86.1±0.1 &81.1±3.8 &91.5±1.1 &84.7±2.3 &77.1±0.4 \\ 
        & & MUFFLITAO &86.6±1.2 &78.6±2.0 &73.5±2.8 &86.4±0.6 &83.9±1.1 &81.2±1.3 \\
        & & DP-NETFLEET &91.2±2.0 &88.2±2.5 &85.9±4.2 &91.5±1.9 &88.4±2.7 &86.4±4.0 \\
        & & \textbf{DPDL} &\textbf{95.3±0.6} &\textbf{94.4±0.5} &\textbf{93.8±0.3} &\textbf{96.5±0.3} &\textbf{97.0±0.5} &\textbf{96.4±0.1} \\ \cline{2-9}
        & \multirow{5}{*}{$1.0$}
        & DP-DPSGD &83.4±1.6 &81.1±1.3 &82.4±1.3 &84.7±1.7 &83.1±1.6 &83.7±1.9 \\
        & & DP-CGA &91.3±2.1 &88.7±1.5 &89.1±1.9 &91.8±1.5 &89.3±3.1 &88.8±2.0 \\ 
        & & MUFFLITAO &85.6±0.6 &84.5±0.7 &84.9±1.0 &85.4±0.6 &84.3±0.8 &84.9±1.1 \\
        & & DP-NETFLEET &91.4±1.8 &89.5±2.3 &89.4±2.0 &91.5±1.8 &89.7±2.3 &89.5±2.0 \\
        & & \textbf{DPDL} &\textbf{96.1±0.6} &\textbf{95.5±0.4} &\textbf{95.0±0.5} &\textbf{96.7±0.2} &\textbf{97.3±0.5} &\textbf{97.4±0.3} \\ \hline
        \multirow{15}{*}{$1.0$} 
        & \multirow{5}{*}{$0.25$} 
        & DP-DPSGD &88.8±0.8 &87.8±0.6 &77.2±0.1 &88.5±0.8 &87.9±0.6 &78.0±0.1 \\
        & & DP-CGA &92.0±0.2 &83.6±2.5 &53.9±5.1 &91.2±0.1 &82.7±1.8 &55.8±7.3 \\ 
        & & MUFFLITAO &86.8±0.6 &86.6±0.6 &86.2±0.9 &86.7±0.6 &86.5±0.6 &86.5±0.9 \\
        & & DP-NETFLEET &91.0±2.3 &85.6±2.6 &75.7±3.2 &91.3±2.2 &85.9±2.7 &76.4±3.0 \\
        & & \textbf{DPDL} &\textbf{95.5±0.5} &\textbf{93.6±0.5} &\textbf{92.5±0.1} &\textbf{96.8±0.4} &\textbf{96.4±0.8} &\textbf{94.7±0.4}  \\ \cline{2-9}
        & \multirow{5}{*}{$0.5$}
        & DP-DPSGD &87.6±1.4 &88.8±1.4 &87.8±1.9 &86.7±0.9 &87.8±0.7 &88.4±0.7 \\
        & & DP-CGA &93.6±0.8 &93.3±0.5 &88.7±1.6 &93.4±0.7 &92.3±1.5 &87.4±1.0 \\ 
        & & MUFFLITAO &87.4±0.3 &87.8±0.8 &87.9±0.6 &87.1±0.3 &87.6±0.8 &87.8±0.6  \\
        & & DP-NETFLEET &91.4±1.5 &91.0±2.0 &88.0±2.4 &91.6±1.5 &91.4±2.1 &88.3±2.6 \\
        & & \textbf{DPDL} &\textbf{95.9±0.7} &\textbf{95.9±0.6} &\textbf{95.5±0.6} &\textbf{97.2±0.5} &\textbf{97.4±0.4} &\textbf{96.9±0.1} \\ \cline{2-9}
        & \multirow{5}{*}{$1.0$}
        & DP-DPSGD &87.2±1.3 &86.3±1.3 &86.0±1.5 &85.2±1.3 &85.4±0.4 &85.6±0.9 \\
        & & DP-CGA &93.5±1.1 &93.7±0.8 &92.9±1.5 &93.6±0.6 &93.8±1.1 &92.7±1.1 \\ 
        & & MUFFLITAO &88.9±0.1 &88.1±0.6 &88.0±0.6 &88.6±0.1 &88.8±0.6 &87.9±0.7\\
        & & DP-NETFLEET &91.6±1.3 &91.4±1.6 &91.0±1.8 &91.9±1.3 &91.8±1.6 &91.3±1.8 \\
        & & \textbf{DPDL} &\textbf{96.2±0.8} &\textbf{96.2±0.5} &\textbf{96.1±0.7} &\textbf{97.3±0.4} &\textbf{97.5±0.4} &\textbf{97.4±0.3} \\ \hline
      \end{tabular}
      \label{tab:acc-mnist}
      \end{table*}

      In this section, we evaluate the performance of different algorithms in terms of test accuracy. The results on the MNIST dataset are reported in Table~\ref{tab:acc-mnist}. It is evident that our DPDL algorithm consistently achieves the highest prediction accuracy across all settings. For example, when $N = 20$, $\epsilon = 0.25$, and $\alpha_d = 0.25$ under a bipartite graph, the test accuracy of DPDL exceeds those of DP-NETFLEET, DP-CGA, MUFFLIATO, and DP-DPSGD by $14.0\%$, $37.4\%$, $18.4\%$, and $18.1\%$, respectively. Moreover, increasing the privacy budget $\epsilon$ leads to higher test accuracy, though at the cost of reduced privacy protection. For instance, with $N = 20$ and a bipartite graph, the test accuracy of our algorithm improves from $92.3\%$ to $95.5\%$ as $\epsilon$ increases from $0.25$ to $1.0$.
      Another observation is that adopting denser communication graphs can lead to higher test accuracy for our algorithm. For example, when $N = 20$, $\epsilon = 0.5$, and $\alpha_d = 0.25$, the test accuracy on the bipartite graph is $94.4\%$, whereas it exceeds $97\%$ on the fully connected graph. 
      Furthermore, when data heterogeneity is relatively low ($\alpha_d = 1.0$), our algorithm still achieves the best performance across all scenarios and outperforms baseline methods. Compared to $\alpha_d = 1.0$, our algorithm exhibits averagely $1.72\%$ and $0.98\%$  accuracy decline under $\alpha_d = 0.25$ over bipartite and fully connected graphs, highlighting its robustness to data heterogeneity.

    \subsubsection{Experiment Results about Test Accuracy on CIFAR-10 dataset} \label{sssec:expaccuracy-cifar}
      \begin{table*}[h]
      \captionsetup{aboveskip=3pt}
      \caption{Experiment results about test accuracy on CIFAR-10 dataset.}
      \centering
        \setlength{\tabcolsep}{8pt}
        \setlength{\extrarowheight}{1.5pt} 
        \begin{tabular}{|c|c|c|c|c|c|c|c|c|c|c|c|}
        \hline
        \multirow{2}{*}{$\alpha_d$} &
        \multirow{2}{*}{$\epsilon$} &
        \multirow{2}{*}{\textbf{Algorithms}} &
        \multicolumn{3}{c|}{\textbf{Bipartite}} &
        \multicolumn{3}{c|}{\textbf{Fully connected}} \\ 
        \cline{4-9} & & & $N$=10 & $N$=20 & $N$=30 & $N$=10 & $N$=20 & $N$=30 \\ \hline
        \multirow{15}{*}{$0.25$} 
        & \multirow{5}{*}{$2.0$} 
        & DP-DPSGD &48.2±0.8 &45.0±1.3 &37.2±0.7 &48.7±1.0 &45.4±0.1 &42.2±0.8  \\
        & & DP-CGA &55.8±1.4 &39.6±0.7 &24.9±0.7 &49.5±0.8 &36.2±2.0 &29.4±0.1 \\ 
        & & MUFFLITAO &46.0±2.6 &40.2±5.8 &37.5±8.1 &45.8±2.8 &39.9±6.1 &37.4±8.2 \\
        & & DP-NETFLEET &53.1±3.2 &49.0±2.0 &34.1±8.0 &51.8±3.6 &48.4±2.1 &34.6±7.9  \\
        & & \textbf{DPDL} &\textbf{61.7±0.4} &\textbf{56.7±1.3} &\textbf{52.6±1.0} &\textbf{67.2±0.6} &\textbf{63.4±0.8} &\textbf{55.6±1.5} \\ \cline{2-9}
        & \multirow{5}{*}{$4.0$}
        & DP-DPSGD &48.9±0.9 &46.6±0.5 &44.1±0.2 &49.5±0.9 &48.1±0.9 &47.4±0.9 \\
        & & DP-CGA &56.4±2.3 &53.9±0.7 &47.1±0.7 &50.3±1.5 &50.3±0.2 &45.7±0.0 \\ 
        & & MUFFLITAO &47.5±1.5 &43.7±3.0 &41.7±4.8 &47.4±1.7 &43.6±3.2 &41.5±5.2  \\
        & & DP-NETFLEET &55.7±1.3 &53.9±0.3 &48.3±4.3 &55.3±1.4 &54.0±0.3 &48.6±4.2  \\
        & & \textbf{DPDL} &\textbf{64.0±0.7} &\textbf{62.1±0.6} &\textbf{60.2±0.7} &\textbf{69.5±0.6} &\textbf{68.1±1.0} &\textbf{67.3±0.8} \\ \cline{2-9}
        & \multirow{5}{*}{$8.0$}
        & DP-DPSGD &48.7±0.6 &47.0±0.5 &45.1±0.7 &49.8±0.8 &48.5±1.0 &47.9±1.0 \\
        & & DP-CGA &58.4±2.7 &57.4±1.3 &55.6±1.0 &52.5±3.3 &51.0±1.0 &50.1±0.2 \\ 
        & & MUFFLITAO &47.6±1.3 &45.8±1.0 &44.8±2.5  &47.9±1.3 &45.8±1.5 &44.6±2.7 \\
        & & DP-NETFLEET &57.3±1.1 &55.6±0.2 &50.9±3.4 &56.7±0.9 &55.5±0.4 &50.7±3.4 \\
        & & \textbf{DPDL} &\textbf{64.7±0.8} &\textbf{64.2±0.1} &\textbf{63.9±1.2} &\textbf{70.2±0.7} &\textbf{70.2±0.6} &\textbf{70.5±0.9} \\ \hline
        \multirow{15}{*}{$1.0$} 
        & \multirow{5}{*}{$2.0$} 
        & DP-DPSGD &49.3±0.5 &46.8±0.5 &40.7±1.7 &49.3±0.5 &46.8±0.5 &40.7±1.7 \\
        & & DP-CGA &57.6±1.0 &45.6±0.6 &37.4±0.2 &58.4±1.3 &45.8±0.6 &33.5±0.4 \\ 
        & & MUFFLITAO &51.6±0.7 &50.5±0.7 &49.1±0.6 &51.6±0.7 &50.5±0.7 &49.1±0.6 \\
        & & DP-NETFLEET&55.5±0.5 &50.3±0.2 &39.5±0.2 &55.6±0.4 &50.2±0.3 &39.7±0.4 \\
        & & \textbf{DPDL} &\textbf{64.0±0.8} &\textbf{61.1±0.1} &\textbf{54.9±0.8} &\textbf{67.9±0.6} &\textbf{65.3±0.2} &\textbf{60.9±0.7} \\ \cline{2-9}
        & \multirow{5}{*}{$4.0$}
        & DP-DPSGD &49.6±0.4 &47.8±0.6 &45.1±1.3 &49.6±0.4 &47.8±0.6 &45.1±1.3 \\
        & & DP-CGA &61.2±0.9 &55.9±1.1 &49.4±0.7 &61.8±1.7 &59.1±0.1 &50.2±0.4 \\ 
        & & MUFFLITAO &51.7±0.5 &50.7±0.6 &49.4±0.7 &51.7±0.5 &50.7±0.6 &49.4±0.7 \\
        & & DP-NETFLEET &56.9±0.3 &54.8±0.1 &51.3±0.1 &56.8±0.1 &54.8±0.3 &51.3±0.3 \\
        & & \textbf{DPDL} &\textbf{65.8±0.5} &\textbf{65.0±0.1} &\textbf{62.8±0.6} &\textbf{70.3±0.3} &\textbf{69.5±0.4} &\textbf{68.1±1.3} \\ \cline{2-9}
        & \multirow{5}{*}{$8.0$}
        & DP-DPSGD &49.8±0.5 &48.2±0.8 &46.1±0.5 &49.8±0.5 &48.2±0.8 &46.1±0.5 \\
        & & DP-CGA &63.0±0.9 &60.5±1.0 &57.6±0.9 &64.2±1.4 &63.5±1.0 &60.0±0.8 \\ 
        & & MUFFLITAO &51.9±0.5 &50.8±0.5 &49.6±0.6 &51.9±0.5 &50.8±0.5 &49.6±0.6 \\
        & & DP-NETFLEET &57.7±0.1 &55.9±0.4 &52.8±0.0 &57.8±0.2 &55.5±0.4 &53.0±0.2 \\
        & & \textbf{DPDL} &\textbf{66.7±0.9} &\textbf{66.4±0.4} &\textbf{65.1±0.6}  &\textbf{71.0±0.4} &\textbf{71.8±0.8} &\textbf{71.2±0.9} \\ \hline
      \end{tabular}
      \label{tab:acc-cifar}
      \end{table*}

    According to the results on the CIFAR-10 dataset (see Table~\ref{tab:acc-cifar}), the advantage of our DPDL algorithm over the other reference methods remains clear. In particular, when $N = 20$, $\epsilon = 4.0$, and $\alpha_d = 0.25$, the test accuracy of our algorithm on the bipartite graph reaches $62.1\%$, while the other reference algorithms achieve only around $43.7\%$ to $53.9\%$, which is at least $13.2\%$ lower than that of DPDL. When the privacy budget $\epsilon$ is increased to $8.0$, the test accuracy of our algorithm further improves to $64.2\%$, as less noise is added to the gradients, thereby preserving more utility.
    Besides, the impact of communication topology on performance observed on the CIFAR-10 dataset is consistent with the findings from the MNIST dataset. For example, fixing $N = 10$, $\epsilon = 4.0$, and $\alpha_d = 0.25$, the test accuracy of our algorithm increases from $64.0\%$ on the bipartite graph to $69.5\%$ on the fully connected graph. 
    Furthermore, when $\epsilon=8.0$ and $\alpha_d=1.0$, our algorithm achieves over $65\%$ accuracy across all agent numbers on the bipartite graph, and even surpass $70\%$ on the fully connected graph, demonstrating that our algorithm remains effective under low data heterogeneity.

  \subsection{Experiment Results about Defending Gradient Attacks} \label{ssec:expattacks}


    In our DPDL framework, each agent collaborates with its peers by exchanging its private gradient information. Specifically, as shown in Lines~\ref{ln:sharecg-start}$\sim$\ref{ln:sharecg-end} in \textbf{Algorithm}~\ref{alg:dpdl}, in each round $t$, each agent $i$ calculate cross-gradient $g^{ji}_t$ for each of its neighbors $j\in\mathcal{N}_i \setminus \{i\}$ based on its local data, and agent $i$ then clips the cross-gradients by sample and perturbs the averaged cross-gradient over the batch before communicating it with the neighbor for the purpose of privacy preservation. Furthermore, as illustrated in Lines~\ref{ln:compcalibration}$\sim$\ref{ln:txmompara} in \textbf{Algorithm}~\ref{alg:dpdl}, each agent $i$ needs to send $\tilde{v}^i_t$ and $\tilde{x}^i_t$ to its neighbors, which also contains private information from aggregated gradient information $\tilde{g}^i_t$.

    In our experiments, we demonstrate the robustness of the proposed DPDL algorithm against privacy attacks, such as gradient inversion attacks~\cite{GeipingBDM-NIPS20, FredriksonJR-CCS15}. Based on the previous discussion, we assume the existence of an adversary aiming to reconstruct the private local data of any agent $i$ from either its perturbed cross-gradient $\ddot{g}^{ji}_t$ or its updated momentum $\tilde{v}^i_t$ (and consequently, its model parameter $\tilde{x}^i_t$). To illustrate the effectiveness of our algorithm, we consider privacy attacks under the following scenarios. Due to space limitations, we present the comparison between our algorithm and a variant without DP preservation applied to $\ddot{g}^{ji}_t$ and $\tilde{v}^i_t$ (attack results on $\tilde{x}^i_t$ is similar to that on $\tilde{v}^i_t$). 

    %
    \begin{figure*}[htb!]
    \begin{center}
      \parbox{.3\textwidth}{\center\includegraphics[width=.3\textwidth]{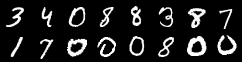}}
      \parbox{.3\textwidth}{\center\includegraphics[width=.3\textwidth]{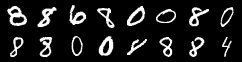}}
      \parbox{.3\textwidth}{\center\includegraphics[width=.3\textwidth]{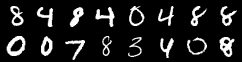}}      
      \parbox{.3\textwidth}{\center\scriptsize(a1) Ground Truth in the 200th round}
      \parbox{.3\textwidth}{\center\scriptsize(a2) Ground Truth in the 400th round}
      \parbox{.3\textwidth}{\center\scriptsize(a3) Ground Truth in the 800th round}
      \parbox{.3\textwidth}{\center\includegraphics[width=.3\textwidth]{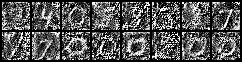}}
      \parbox{.3\textwidth}{\center\includegraphics[width=.3\textwidth]{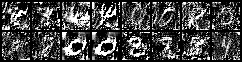}}
      \parbox{.3\textwidth}{\center\includegraphics[width=.3\textwidth]{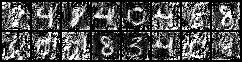}}      
      \parbox{.3\textwidth}{\center\scriptsize(b1) ND-CG in the 200th round}
      \parbox{.3\textwidth}{\center\scriptsize(b2) ND-CG in the 400th round}
      \parbox{.3\textwidth}{\center\scriptsize(b3) ND-CG in the 800th round}
      \parbox{.3\textwidth}{\center\includegraphics[width=.3\textwidth]{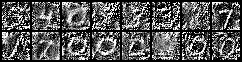}}
      \parbox{.3\textwidth}{\center\includegraphics[width=.3\textwidth]{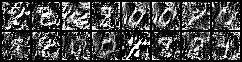}}
      \parbox{.3\textwidth}{\center\includegraphics[width=.3\textwidth]{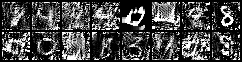}}      
      \parbox{.3\textwidth}{\center\scriptsize(c1) ND-MM in the 200th round}
      \parbox{.3\textwidth}{\center\scriptsize(c2) ND-MM in the 400th round}
      \parbox{.3\textwidth}{\center\scriptsize(c3) ND-MM in the 800th round}
      \parbox{.3\textwidth}{\center\includegraphics[width=.3\textwidth]{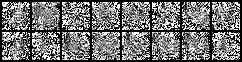}}
      \parbox{.3\textwidth}{\center\includegraphics[width=.3\textwidth]{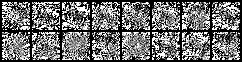}}
      \parbox{.3\textwidth}{\center\includegraphics[width=.3\textwidth]{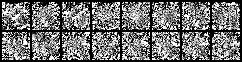}}      
      \parbox{.3\textwidth}{\center\scriptsize(d1) DPDL-CG in the 200th round}
      \parbox{.3\textwidth}{\center\scriptsize(d2) DPDL-CG in the 400th round}
      \parbox{.3\textwidth}{\center\scriptsize(d3) DPDL-CG in the 800th round}
      \parbox{.3\textwidth}{\center\includegraphics[width=.3\textwidth]{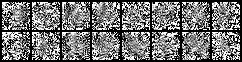}}
      \parbox{.3\textwidth}{\center\includegraphics[width=.3\textwidth]{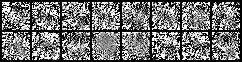}}
      \parbox{.3\textwidth}{\center\includegraphics[width=.3\textwidth]{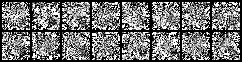}}      
      \parbox{.3\textwidth}{\center\scriptsize(e1) DPDL-MM in the 200th round}
      \parbox{.3\textwidth}{\center\scriptsize(e2) DPDL-MM in the 400th round}
      \parbox{.3\textwidth}{\center\scriptsize(e3) DPDL-MM in the 800th round}
    \caption{Images reconstructed by gradient inversion attacks on MNIST dataset.}
    \label{fig:attack-mnist}
    \end{center}
    \end{figure*}

    \begin{itemize}
      \item \textbf{No-Defense-CG (ND-CG)}: We assume that our algorithm does not apply the Gaussian mechanism to the cross-gradient information, meaning that each agent $i$ computes and transmits its original cross-gradient $g^{ji}_t$ to each of its neighbors $j$. In this case, an adversary can intercept the unperturbed cross-gradient $g^{ji}_t$ of any agent $i$ and apply gradient inversion techniques to reconstruct that agent’s local data in any round $t$.
      \item \textbf{No-Defense-MM (ND-MM)}: Without the Gaussian noise mechanism, each agent $i$ calibrates the received cross-gradient $g^{ij}_t$ from each neighbor $j$, and then aggregates the calibrated cross-gradients along with its self-gradient to perform the upcoming momentum update. In this case, the local data of agent $i$ can be reconstructed by analyzing the difference between $\tilde{v}^i_{t-1}$ and $\tilde{v}^i_t$ in consecutive rounds $t-1$ and $t$.
      \item \textbf{DPDL-CG}: The adversary conducts gradient inversion to reconstruct the local data of any agent $i$ according to its perturbed cross-gradient $\ddot{g}^{ji}_t$ in any round $t$.
      \item \textbf{DPDL-MM}: The inversion attack is performed based on the difference between $\tilde{v}^i_{t-1}$ and $\tilde{v}^i_t$ in consecutive rounds $t-1$ and $t$.
    \end{itemize}

    \begin{figure*}[htb!]
    \begin{center}
      \parbox{.3\textwidth}{\center\includegraphics[width=.3\textwidth]{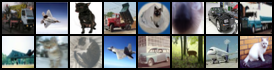}}
      \parbox{.3\textwidth}{\center\includegraphics[width=.3\textwidth]{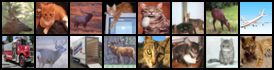}}
      \parbox{.3\textwidth}{\center\includegraphics[width=.3\textwidth]{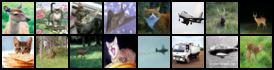}}      
      \parbox{.3\textwidth}{\center\scriptsize(a1) Ground Truth in the 200th round}
      \parbox{.3\textwidth}{\center\scriptsize(a2) Ground Truth in the 400th round}
      \parbox{.3\textwidth}{\center\scriptsize(a3) Ground Truth in the 800th round}
      \parbox{.3\textwidth}{\center\includegraphics[width=.3\textwidth]{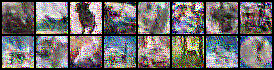}}
      \parbox{.3\textwidth}{\center\includegraphics[width=.3\textwidth]{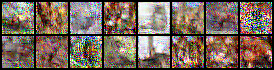}}
      \parbox{.3\textwidth}{\center\includegraphics[width=.3\textwidth]{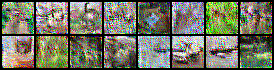}}      
      \parbox{.3\textwidth}{\center\scriptsize(b1) ND-CG in the 200th round}
      \parbox{.3\textwidth}{\center\scriptsize(b2) ND-CG in the 400th round}
      \parbox{.3\textwidth}{\center\scriptsize(b3) ND-CG in the 800th round}
      \parbox{.3\textwidth}{\center\includegraphics[width=.3\textwidth]{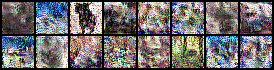}}
      \parbox{.3\textwidth}{\center\includegraphics[width=.3\textwidth]{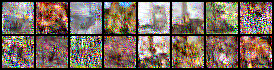}}
      \parbox{.3\textwidth}{\center\includegraphics[width=.3\textwidth]{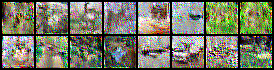}}      
      \parbox{.3\textwidth}{\center\scriptsize(c1) ND-MM in the 200th round}
      \parbox{.3\textwidth}{\center\scriptsize(c2) ND-MM in the 400th round}
      \parbox{.3\textwidth}{\center\scriptsize(c3) ND-MM in the 800th round}
      \parbox{.3\textwidth}{\center\includegraphics[width=.3\textwidth]{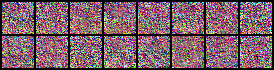}}
      \parbox{.3\textwidth}{\center\includegraphics[width=.3\textwidth]{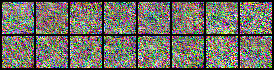}}
      \parbox{.3\textwidth}{\center\includegraphics[width=.3\textwidth]{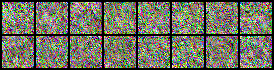}}      
      \parbox{.3\textwidth}{\center\scriptsize(d1) DPDL-CG in the 200th round}
      \parbox{.3\textwidth}{\center\scriptsize(d2) DPDL-CG in the 400th round}
      \parbox{.3\textwidth}{\center\scriptsize(d3) DPDL-CG  in the 800th round}
      \parbox{.3\textwidth}{\center\includegraphics[width=.3\textwidth]{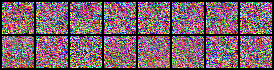}}
      \parbox{.3\textwidth}{\center\includegraphics[width=.3\textwidth]{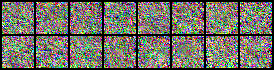}}
      \parbox{.3\textwidth}{\center\includegraphics[width=.3\textwidth]{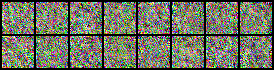}}      
      \parbox{.3\textwidth}{\center\scriptsize(e1) DPDL-MM in the 200th round}
      \parbox{.3\textwidth}{\center\scriptsize(e2) DPDL-MM in the 400th round}
      \parbox{.3\textwidth}{\center\scriptsize(e3) DPDL-MM  in the 800th round}
    \caption{Images reconstructed by gradient inversion attacks on CIFAR-10 dataset.}
    \label{fig:attack-cifar10}
    \end{center}
    \end{figure*}

    We first present the reconstructed images under the aforementioned scenarios in Fig.~\ref{fig:attack-mnist} and Fig.~\ref{fig:attack-cifar10}. Due to space limitations, we show the results obtained with $N = 10$, $\epsilon = 0.5$ (for the MNIST dataset) and $\epsilon = 4.0$ (for the CIFAR-10 dataset) over a bipartite graph in different rounds. The experiment results reveal that, without our Gaussian-based DP mechanism, the reconstructed images obtained through gradient inversion attacks exhibit a noticeable resemblance to the original images, with perceptually recognizable objects. Moreover, as training progresses, the reconstructed images from later rounds tend to reveal more detailed and clearer information. In contrast, when the Gaussian mechanism is applied in our algorithm, the adversary fails to effectively reconstruct the agents' private data. Even as the number of rounds increases, the privacy protection remains robust.

    \begin{table*}[htb!]
    \captionsetup{aboveskip=3pt}
    \caption{The results of gradient inversion attacks on MNIST dataset.}
    \setlength{\tabcolsep}{4.5pt}
    \centering
    \setlength{\tabcolsep}{8pt}
    \setlength{\extrarowheight}{1.5pt} 
    \begin{tabular}{|c|c|c|c|c|c|c|c|c|c|c|}
    \hline
    \multirow{2}{*}{\textbf{Metrics}}  &  \multirow{2}{*}{\textbf{Algorithms}} & 
    \multicolumn{3}{c|}{\textbf{Ring}} & \multicolumn{3}{c|}{\textbf{Bipartite}} & \multicolumn{3}{c|}{\textbf{Fully connected}} \\ 
    \cline{3-11} & & $N$=10 & $N$=20 & $N$=30 & $N$=10 & $N$=20 & $N$=30 & $N$=10 & $N$=20 & $N$=30 \\ \hline
    \multirow{4}{*}{MSE} 
    & {ND-CG} 
    &0.85 &0.75 &0.71 &0.84 &0.86 &0.83 &0.85 &0.83 &0.89 \\
    & {ND-MM} 
    &0.81 &0.86 &0.84 &0.88 &0.82 &0.87 &0.89 &0.90 &0.91 \\ 
    & {DPDL-CG} 
    &1.57 &1.58 &1.57 &1.37 &1.44 &1.43 &1.46 &1.45 &1.36 \\ 
    & {DPDL-MM} 
    &1.60 &1.59 &1.56 &1.39 &1.43 &1.40 &1.33 &1.46 &1.36 \\ \cline{1-11}
    \multirow{4}{*}{PSNR}
    & {ND-CG} 
    &6.84 &7.48 &7.73 &6.88 &6.76 &6.86 &6.83 &6.93 &6.56 \\
    & {ND-MM} 
    &7.12 &6.91 &7.12 &6.69 &7.03 &6.71 &6.57 &6.56 &6.54  \\ 
    & {DPDL-CG} 
    &4.07 &4.03 &4.06 &4.67 &4.46 &4.47 &4.39 &4.41 &4.80 \\
    & {DPDL-MM} 
    &3.99 &4.01 &4.09 &4.58 &4.46 &4.56 &4.55 &4.43 &4.68  \\ \cline{1-11}
    \multirow{4}{*}{\makecell[c]{SSIM \\ $(\times {10}^{\textnormal{-}2})$}}
    & {ND-CG} 
    &19.57 &24.46 &21.50 &17.14 &14.80 &13.57 &15.43 &15.22 &12.14  \\
    & {ND-MM} 
    &19.88 &16.91 &18.30 &15.73 &14.60 &14.11 &15.86 &13.54 &13.26 \\ 
    & {DPDL-CG} 
    &4.92 &3.33 &3.82 &5.75 &4.19 &3.73 &4.33 &4.11 &4.80  \\
    & {DPDL-MM} 
    &4.50 &3.86 &3.93 &5.18 &4.35 &3.50 &4.27 &4.09 &3.52 \\ \hline
    \end{tabular}
    \label{tab:attack-mnist}
    \end{table*}
    \begin{table*}[htb!]
    \captionsetup{aboveskip=3pt}
    \caption{The results of gradient inversion attacks on CIFAR-10 dataset.}
    \setlength{\tabcolsep}{4.5pt}
    \centering
    \setlength{\tabcolsep}{8pt}
    \setlength{\extrarowheight}{1.5pt} 
    \begin{tabular}{|c|c|c|c|c|c|c|c|c|c|c|}
    \hline
    \multirow{2}{*}{\textbf{Metrics}}  &  \multirow{2}{*}{\textbf{Algorithms}} & 
    \multicolumn{3}{c|}{\textbf{Ring}} & \multicolumn{3}{c|}{\textbf{Bipartite}} & \multicolumn{3}{c|}{\textbf{Fully connected}} \\ 
    \cline{3-11} & & $N$=10 & $N$=20 & $N$=30 & $N$=10 & $N$=20 & $N$=30 & $N$=10 & $N$=20 & $N$=30 \\ \hline
    \multirow{4}{*}{MSE} 
    & {ND-CG} 
    &0.90 &1.16 &0.96 &0.91 &1.23 &0.97 &1.20 &1.15 &1.16 \\ 
    & {ND-MM} 
    &1.70 &1.35 &1.85 &1.90 &1.61 &1.84 &1.85 &1.95 &1.95 \\ 
    & {DPDL-CG} 
    &2.47 &2.24 &2.35 &2.40 &2.34 &2.24 &2.40 &2.18 &2.22 \\ 
    & {DPDL-MM} 
    &2.47 &2.28 &2.02 &2.60 &2.26 &2.21 &2.49 &2.20 &2.64 \\ \cline{1-11}
    \multirow{4}{*}{PSNR}
    & {ND-CG} 
    &12.13 &10.20 &11.06 &10.00 &10.19 &10.05 &11.10 &9.98 &9.72   \\
    & {ND-MM} 
    &9.60 &10.21 &11.25 &9.37 &9.50 &10.03 &10.27 &9.42 &10.10\\ 
    & {DPDL-CG} 
    &8.12 &8.58 &9.06 &8.12 &8.58 &8.77 &8.72 &8.78 &8.28 \\
    & {DPDL-MM} 
    &8.12 &8.58 &9.03 &8.02 &8.52 &8.68 &8.17 &8.69 &8.03 \\ \cline{1-11}
    \multirow{4}{*}{\makecell[c]{SSIM \\ $(\times {10}^{\textnormal{-}2})$} }
    & {ND-CG} 
    &8.92 &11.89 &10.41 &12.10 &10.06 &11.56 &9.15 &6.64 &11.65   \\ 
    & {ND-MM} 
    &8.47 &14.15 &7.87 &9.07 &7.18 &7.81 &10.00 &9.33 &10.27 \\ 
    & {DPDL-CG} 
    &2.49 &2.05 &2.14 &2.08 &1.98 &2.47 &2.21 &2.04 &3.44 \\
    & {DPDL-MM} 
    &2.68 &2.33 &2.12 &1.85 &1.88 &2.14 &1.87 &2.17 &2.41  \\ \hline
    \end{tabular}
    \label{tab:attack-cifar10}
    \end{table*}

    We also evaluate image reconstruction performance in the $800$th round using three commonly adopted metrics: \textit{Mean Squared Error} (MSE), \textit{Peak Signal-to-Noise Ratio} (PSNR), and \textit{Structural Similarity Index Measure} (SSIM)~\cite{WangSB-SSC03}, which are widely used in prior work to assess privacy preservation~\cite{WangHL-INFOCOM24, ChuL-SIGMETRICS24}. MSE quantifies the average squared difference between two images at the pixel level, PSNR assesses the level of noise or distortion in a reconstructed image, and SSIM evaluates image quality by considering luminance, contrast, and structural similarity. Given a reconstructed image and its original counterpart, a higher MSE or lower PSNR or SSIM indicates greater divergence and thus stronger privacy preservation. The results are reported in Table~\ref{tab:attack-mnist} and \ref{tab:attack-cifar10}. It is evident that our DPDL algorithm effectively protects data privacy. Specifically, for the MNIST dataset, the MSE values of the reconstructed images using DPDL are $0.44$-$0.86$ higher than those without any privacy protection. In addition, PSNR and SSIM scores are $1.74$-$3.67$ and $7.34$-$21.13$ smaller than the ones with no privacy preservation mechanism applied, respectively. Similarly, on the CIFAR-10 dataset, the MSE values of the images reconstructed when our DPDL algorithm is applied are $1.28$ higher than those from the unprotected setting, while the PSNR and SSIM scores are $2.07$ and $8.39$ lower than the ones with no privacy preservation on average.

\section{Conclusion}\label{sec:conclusion}
  In this paper, to address the risk of privacy leakage in decentralized learning with data heterogeneity, we have proposed DPDL, a novel privacy-preserved decentralized stochastic learning algorithm with non-IID data. In particular, the notion of DP is leveraged in an elaborated calibrated cross-gradient aggregation, such that the perturbed cross-gradient information can be fully exploited for model training with non-IID data. Our theoretical analysis have not only given the minimum level of noise to satisfy the demand of privacy preservation, but also demonstrated that our algorithm has a linear speed up even with non-IID data. Through our extensive experiments on real datasets, we have verified the robustness of our algorithm against privacy attacks as well as the efficacy in training accurate model.


\bibliographystyle{IEEEtran}
\bibliography{IEEEabrv,reference}

\begin{thebibliography}{10}
\providecommand{\url}[1]{#1}
\csname url@samestyle\endcsname
\providecommand{\newblock}{\relax}
\providecommand{\bibinfo}[2]{#2}
\providecommand{\BIBentrySTDinterwordspacing}{\spaceskip=0pt\relax}
\providecommand{\BIBentryALTinterwordstretchfactor}{4}
\providecommand{\BIBentryALTinterwordspacing}{\spaceskip=\fontdimen2\font plus
\BIBentryALTinterwordstretchfactor\fontdimen3\font minus
  \fontdimen4\font\relax}
\providecommand{\BIBforeignlanguage}[2]{{%
\expandafter\ifx\csname l@#1\endcsname\relax
\typeout{** WARNING: IEEEtran.bst: No hyphenation pattern has been}%
\typeout{** loaded for the language `#1'. Using the pattern for}%
\typeout{** the default language instead.}%
\else
\language=\csname l@#1\endcsname
\fi
#2}}
\providecommand{\BIBdecl}{\relax}
\BIBdecl

\bibitem{VerbraekenWKKVR-CSUR20}
J.~Verbraeken, M.~Wolting, J.~Katzy, J.~Kloppenburg, T.~Verbelen, and
  J.~Rellermeyer, ``{A Survey on Distributed Machine Learning},'' \emph{ACM
  Computing Surveys}, vol.~53, no.~2, pp. 1--33, 2020.

\bibitem{LiWWHWLLH-TKDE23}
Q.~Li, Z.~Wen, Z.~Wu, S.~Hu, N.~Wang, Y.~Li, X.~Liu, and B.~He, ``{A Survey on
  Federated Learning Systems: Vision, Hype and Reality for Data Privacy and
  Protection},'' \emph{IEEE Trans. on Knowledge and Data Engineering}, vol.~35,
  no.~4, pp. 3347--3366, 2023.

\bibitem{BeltranPSBBPPC-COMST23}
E.~Beltr{\'a}n, M.~P{\'e}rez, P.~S{\'a}nchez, S.~Bernal, G.~Bovet,
  M.~P{\'e}rez, G.~P{\'e}rez, and A.~Celdr{\'a}n, ``{Decentralized Federated
  Learning: Fundamentals, State of the Art, Frameworks, Trends, and
  Challenges},'' \emph{IEEE Communications Surveys \& Tutorials}, vol.~25,
  no.~4, pp. 2983--3013, 2023.

\bibitem{ChuL-SIGMETRICS24}
T.~Chu and N.~Laoutaris, ``{FedQV: Leveraging Quadratic Voting in Federated
  Learning},'' \emph{Proceedings of the ACM on Measurement and Analysis of
  Computing Systems}, vol.~8, no.~2, pp. 22:1--22:36, 2024.

\bibitem{McmahanMRHA-AISTATS17}
B.~McMahan, E.~Moore, D.Ramage, S.~Hampson, and B.~Arcas,
  ``{Communication-Efficient Learning of Deep Networks from Decentralized
  Data},'' in \emph{Proc. of the 20th AISTATS}, vol.~54, 2017, pp. 1273--1282.

\bibitem{LianZZHZL-NIPS17}
X.~Lian, C.~Zhang, H.~Zhang, C.~Hsieh, W.~Zhang, and J.~Liu, ``{Can
  Decentralized Algorithms Outperform Centralized Algorithms? A Case Study for
  Decentralized Parallel Stochastic Gradient Descent},'' in \emph{Proc. of the
  31st NIPS}, 2017, p. 5336–5346.

\bibitem{NedicOR-IEEE18}
A.~Nedi, A.~Olshevsky, and M.~Rabbat, ``{Network Topology and
  Communication-computation Tradeoffs in Decentralized Optimization},''
  \emph{Proceedings of the IEEE}, vol. 106, no.~5, pp. 953--976, 2018.

\bibitem{LianZZL-ICML18}
X.~Lian, W.~Zhang, C.~Zhang, and J.~Liu, ``{Asynchronous Decentralized Parallel
  Stochastic Gradient Descent},'' in \emph{Proc. of the 35th ICML}, 2018, pp.
  3043--3052.

\bibitem{LinKSJ-ICML21}
T.~Lin, S.~Karimireddy, S.~Stich, and M.~Jaggi, ``{Quasi-global Momentum:
  Accelerating Decentralized Deep Learning on Heterogeneous Data},'' in
  \emph{Proc. of the 38th ICML}, 2021, pp. 6654--6665.

\bibitem{LiDCH-ICDE22}
Q.~Li, Y.~Diao, Q.~Chen, and B.~He, ``{Federated Learning on Non-iid Data
  Silos: An Experimental Study},'' in \emph{Proc. of the 38th IEEE ICDE}, 2022,
  pp. 965--978.

\bibitem{LiLV-IOTJ21}
C.~Li, G.~Li, and P.~Varshney, ``{Decentralized Federated Learning via Mutual
  Knowledge Transfer},'' \emph{IEEE Internet of Things Journal}, vol.~9, no.~2,
  pp. 1136--1147, 2021.

\bibitem{ShiSWSYWT-ICML23}
Y.~Shi, L.~Shen, K.~Wei, Y.~Sun, B.~Yuan, X.~Wang, and D.~Tao, ``{Improving the
  Model Consistency of Decentralized Federated Learning},'' in \emph{Proc. of
  the 40th ICML}, vol. 202, 2023, pp. 31\,269--31\,291.

\bibitem{AketiHR-NeurIPS24}
S.~Aketi, A.~Hashemi, and K.~Roy, ``{Global Update Tracking: A Decentralized
  Learning Algorithm for Heterogeneous Data},'' in \emph{Proc. of the 37th
  NeurIPS}, 2024.

\bibitem{ZhangFLYLZ-MobiHoc22}
X.~Zhang, M.~Fang, Z.~Liu, H.~Yang, J.~Liu, and Z.~Zhu, ``{NET-FLEET: Achieving
  Linear Convergence Speedup for Fully Decentralized Federated Learning with
  Heterogeneous Data},'' in \emph{Proc. of the 23rd ACM MobiHoc}, 2022, p.
  71–80.

\bibitem{EsfandiariTJBHHS-ICML21}
Y.~Esfandiari, S.~Tan, Z.~Jiang, A.~Balu, E.~Herron, C.~Hegde, and S.~Sarkar,
  ``{Cross-Gradient Aggregation for Decentralized Learning from Non-IID
  Data},'' in \emph{Proc. of the 38th ICML}, 2021, pp. 3036--3046.

\bibitem{YuJY-ICML19}
H.~Yu, R.~Jin, and S.~Yang, ``{On the Linear Speedup Analysis of Communication
  Efficient Momentum {SGD} for Distributed Non-Convex Optimization},'' in
  \emph{Proc. of the 36th ICML}, 2019, pp. 7184--7193.

\bibitem{ZhuLH-NIPS19}
L.~Zhu, Z.~Liu, and S.~Han, ``{Deep leakage from gradients},'' in \emph{Proc.
  of the 33rd NIPS}, 2019, pp. 14\,747--14\,756.

\bibitem{ZhaoMB-ARXIV20}
B.~Zhao, K.~Mopuri, and H.~Bilen, ``{iDLG: Improved Deep Leakage from
  Gradients},'' \emph{arXiv preprint arXiv:2001.02610}, 2020.

\bibitem{WangHL-INFOCOM24}
F.~Wang, E.~Hugh, and B.~Li, ``{More than Enough is Too Much: Adaptive Defenses
  against Gradient Leakage in Production Federated Learning},'' in \emph{Proc.
  of the 42nd INFOCOM}, 2023, pp. 1--10.

\bibitem{Dwork-ICALP06}
C.~Dwork, ``{Differential Privacy},'' in \emph{Proc. of the 33rd ICALP}, 2006,
  pp. 1--12.

\bibitem{DworkR-FTTCS14}
C.~Dwork and A.~Roth, ``{The Algorithmic Foundations of Differential
  Privacy},'' \emph{Foundations and Trends in Theoretical Computer Science},
  vol.~9, pp. 211--407, 2014.

\bibitem{PonomarevaVXMKZ-KDD23}
N.~Ponomareva, S.~Vassilvitskii, Z.~Xu, B.~McMahan, A.~Kurakin, and C.~Zhang,
  ``How to dp-fy ml: A practical tutorial to machine learning with differential
  privacy,'' in \emph{Proc. of the 29th ACM SIGKDD}, 2023, p. 5823–5824.

\bibitem{ShokriS-CCS15}
R.~Shokri and V.~Shmatikov, ``{Privacy-Preserving Deep Learning},'' in
  \emph{Proc. of the 22nd ACM CCS}, 2015, pp. 1310--1321.

\bibitem{LinWLSHJ-TDSC23}
X.~Lin, J.~Wu, J.~Li, C.~Sang, S.~Hu, and M.~J. Deen, ``{Heterogeneous
  Differential-Private Federated Learning: Trading Privacy for Utility
  Truthfully},'' \emph{IEEE Trans. on Dependable and Secure Computing},
  vol.~20, no.~6, pp. 5113--5129, 2023.

\bibitem{ChenDBZJ-TKDE24}
L.~Chen, X.~Ding, Z.~Bao, P.~Zhou, and H.~Jin, ``{Differentially Private
  Federated Learning on Non-iid Data: Convergence Analysis and Adaptive
  Optimization},'' \emph{IEEE Trans. on Knowledge and Data Engineering},
  vol.~36, no.~9, pp. 4567--4581, 2024.

\bibitem{XuZW-TPAMI21}
J.~Xu, W.~Zhang, and F.~Wang, ``{A(DP)2SGD: Asynchronous Decentralized Parallel
  Stochastic Gradient Descent With Differential Privacy},'' \emph{IEEE Trans.
  on Pattern Analysis and Machine Intelligence}, vol.~44, no.~11, pp.
  8036--8047, 2021.

\bibitem{CyffersEBM-NIPS22}
E.~Cyffers, M.~Even, A.~Bellet, and L.~Massouli\'{e}, ``{Muffliato:
  Peer-to-Peer Privacy Amplification for Decentralized Optimization and
  Averaging},'' in \emph{Proc. of 36th NIPS}, vol.~35, 2022, pp.
  15\,889--15\,902.

\bibitem{SeemanS-JRC24}
J.~Seeman and D.~Susser, ``{Between Privacy and Utility: On Differential
  Privacy in Theory and Practice},'' \emph{ACM Journal on Responsible
  Computing}, vol.~1, no.~1, pp. 1--18, 2024.

\bibitem{BoydGPS-TIT06}
S.~Boyd, A.~Ghosh, B.~Prabhakar, and D.~Shah, ``{Randomized Gossip
  Algorithms},'' \emph{IEEE Trans. on Information Theory}, vol.~52, no.~6, pp.
  2508--2530, 2006.

\bibitem{ScamanBBLM-ICML17}
K.~Scaman, F.~Bach, S.~Bubeck, Y.~Lee, and L.~Massouli{\'{e}}, ``{Optimal
  Algorithms for Smooth and Strongly Convex Distributed Optimization in
  Networks},'' in \emph{Proc. of the 34th ICML}, 2017, pp. 3027--3036.

\bibitem{ScamanBBLM-NIPS18}
K.~Scaman, F.~Bach, S.~Bubeck, Y.~Lee, and L.~Massouli\'{e}, ``{Optimal
  Algorithms for Non-smooth Distributed Optimization in Networks},'' in
  \emph{Proc. of the 32nd NIPS}, 2018, p. 2745–2754.

\bibitem{PuN-MP21}
S.~Pu and A.~Nedi, ``{Distributed Stochastic Gradient Tracking Methods},''
  \emph{Mathematical Programming}, vol. 187, no.~1, pp. 409--457, 2021.

\bibitem{TakezawaBNSY-TMLR23}
Y.~Takezawa, H.~Bao, K.~Niwa, R.~Sato, and M.~Yamada, ``{Momentum Tracking:
  Momentum Acceleration for Decentralized Deep Learning on Heterogeneous
  Data},'' \emph{Trans. on Machine Learning Research}, vol. 2023, 2023.

\bibitem{AketiKR-TMLR23}
S.~Aketi, S.~Kodge, and K.~Roy, ``{Neighborhood Gradient Mean: An Efficient
  Decentralized Learning Method for Non-IID Data Distributions},'' \emph{Trans.
  on Machine Learning Research}, 2023.

\bibitem{HuangG-arXiv20}
Z.~Huang and Y.~Gong, ``{Differentially Private ADMM for Convex Distributed
  Learning: Improved Accuracy via Multi-step Approximation},'' \emph{arXiv
  preprint arXiv:2005.07890}, 2020.

\bibitem{WangYWL-ICDCS25}
L.~Wang, Y.~Yuan, C.~Wang, and F.~Li, ``Pdsl: Privacy-preserved decentralized
  stochastic learning with heterogeneous data distribution,'' in \emph{2025
  IEEE 45th ICDCS}.\hskip 1em plus 0.5em minus 0.4em\relax IEEE Computer
  Society, 2025, pp. 736--746.

\bibitem{ZhangCHWY-ICML22}
X.~Zhang, X.~Chen, M.~Hong, S.~Wu, and J.~Yi, ``{Understanding Clipping for
  Federated Learning: Convergence and Client-Level Differential Privacy},'' in
  \emph{Proc. of the 39th ICML}, vol. 162, 2022, pp. 26\,048--26\,067.

\bibitem{KasiviswanathanLNRS-FOCS08}
S.~Kasiviswanathan, H.~Lee, K.~Nissim, S.~Raskhodnikova, and A.~Smith, ``{What
  Can We Learn Privately?}'' in \emph{Proc. of the 49th IEEE FOCS}, 2008, p.
  531–540.

\bibitem{AbadiCGMMTZ-CCS16}
M.~Abadi, A.~Chu, I.~Goodfellow, H.~McMahan, I.~Mironov, K.~Talwar, and
  L.~Zhang, ``{Deep Learning with Differential Privacy},'' in \emph{Proc. of
  the 2016 ACM CCS}, 2016, p. 308–318.

\bibitem{GeipingBDM-NIPS20}
J.~Geiping, H.~Bauermeister, H.~Dr\"{o}ge, and M.~Moeller, ``{Inverting
  gradients - how easy is it to break privacy in federated learning?}'' in
  \emph{Proceedings of the 34th NIPS}, 2020.

\bibitem{FredriksonJR-CCS15}
M.~Fredrikson, S.~Jha, and T.~Ristenpart, ``{Model Inversion Attacks that
  Exploit Confidence Information and Basic Countermeasures},'' in \emph{Proc.
  of the 22nd ACM CCS}, 2015, p. 1322–1333.

\bibitem{Qian-NN19}
N.~Qian, ``On the momentum term in gradient descent learning algorithms,''
  \emph{Neural networks}, vol.~12, no.~1, pp. 145--151, 1999.

\bibitem{LecunBBH-IEEE98}
Y.~Lecun, L.~Bottou, Y.~Bengio, and P.~Haffner, ``Gradient-based learning
  applied to document recognition,'' \emph{Proc. of the IEEE}, vol.~86, no.~11,
  pp. 2278--2324, 1998.

\bibitem{Krizhevsky-09}
A.~Krizhevsky, ``Learning multiple layers of features from tiny images,'' 2009.

\bibitem{SimonyanZ-ARXIV14}
K.~Simonyan and A.~Zisserman, ``{Very Deep Convolutional Networks for
  Large-Scale Image Recognition},'' \emph{arXiv preprint arXiv:1409.1556},
  2014.

\bibitem{ZhuHZ-ICML21}
Z.~Zhu, J.~Hong, and J.~Zhou, ``{Data-Free Knowledge Distillation for
  Heterogeneous Federated Learning},'' in \emph{Proc. of the 38th ICML}, vol.
  139, 2021, pp. 12\,878--12\,889.

\bibitem{WeiLDMYFJQP-TIFS20}
K.~Wei, J.~Li, M.~Ding, C.~Ma, H.~Yang, F.~Farokhi, S.~Jin, T.~Quek, and H.~V.
  Poor, ``{Federated Learning With Differential Privacy: Algorithms and
  Performance Analysis},'' \emph{IEEE Trans. on Information Forensics and
  Security}, vol.~15, pp. 3454--3469, 2020.

\bibitem{ChenYZYC-TII22}
S.~Chen, D.~Yu, Y.~Zou, J.~Yu, and X.~Cheng, ``{Decentralized Wireless
  Federated Learning With Differential Privacy},'' \emph{IEEE Trans. on
  Industrial Informatics}, vol.~18, no.~9, pp. 6273--6282, 2022.

\bibitem{YuZCTTLC-TVV21}
D.~Yu, Z.~Zou, S.~Chen, Y.~Tao, B.~Tian, W.~Lv, and X.~Cheng, ``Decentralized
  parallel sgd with privacy preservation in vehicular networks,'' \emph{IEEE
  Transactions on Vehicular Technology}, vol.~70, no.~6, pp. 5211--5220, 2021.

\bibitem{WangSB-SSC03}
Z.~Wang, E.~Simoncelli, and A.~Bovik, ``{Multiscale Structural Similarity for
  Image Quality Assessment},'' in \emph{Proc. of the 37th ACSSC}, vol.~2, 2003,
  pp. 1398--1402.

\end{thebibliography}

\onecolumn

\appendices

\section{Proof of Theorem~\ref{thm:dp}} \label{sec:proofdp}
  In this section, we prove the efficacy of our DPDL algorithm in preserving differential privacy. In the decentralized learning system, each agent collaborates with its peers by exchanging private information (e.g.,  self-gradient and cross-gradient information related to its local data in our case); nevertheless, since the trustworthiness of the agents cannot be ensure, each agent has to preserve its privacy in the collaboration. For example, as we have shown in Sec.~\ref{ssec:expattacks}, an adversary can use gradient inversion technique to reconstruct the local data of any agent through its gradient information (e.g., $\ddot{g}^{ji}_{t}$ or $\tilde{g}^i_t$) by eavesdropping on the communications to its peers. Therefore, in this proof, we will show that the differential privacy of each agent can be ensured in fixed communication rounds by the random perturbation based on Gaussian distribution.

  The proof of the main theorem relies on the mechanism we use for perturbing and the variance of Gaussian noises. As illustrated in \textbf{Algorithm}~\ref{alg:dpdl}, each agent first clips the gradient of every individual data sample in the batch and then perturbs the averaged gradient over the batch. Thus, we first present \textbf{Lemma}~\ref{lem:sampled_gaussian_mgf}, an extension of \textbf{Lemma}~3 in [39], to establish a moment bound for aforementioned batch-level perturbation mechanism. We then compute moment bound of each $\tilde{g}^i_t$ based on its $l_2$-sensitivity. Next, we present \textbf{Lemma}~\ref{lem:momentmapping} to show inherit relationship of moments between $\tilde{g}^i_t$, $\tilde{v}^i_t$ and $\tilde{x}^i_t$. Finally, we use \textbf{Lemma}~\ref{lem:moment_dp} to bridge the gap between moment and DP, carrying out \textbf{Theorem}~\ref{thm:dp}.

  \begin{lem}\label{lem:sampled_gaussian_mgf}
    Consider a randomized mechanism $M(\mathcal{B}) = \sum_{\zeta_b \in \mathcal{B}} f(\zeta_b) + \mathcal{N}(0, \sigma^2 \mathbf{I})$, where $\mathcal{B}$ is the data set input with $B$ data samples where each sample $\zeta_b$ is selected with probability $q < \frac{1}{32\sigma}$, and $f \colon \mathbb{D} \rightarrow \mathbb{R}^d$ is a randomized function applied to data sample $\zeta_b$. Let $\sigma \geq c > 1$ and $l_2$-sensitivity $\Delta_2 \left( \sum_{\zeta_b \in \mathcal{B}} f(\zeta_b)\right) \leq c$. For any positive integer $\lambda<\frac{\sigma^2}{c^2} \ln \frac{1}{4q\sigma} - 1$, $M(\mathcal{B})$ satisfies
    \begin{align*}
      \alpha_{M}(\lambda) &\leq \frac{c^2 q^2\lambda(\lambda+1)} {(1-q)\sigma^2} + O \left( c^3q^3\lambda^3/\sigma^3 \right).
    \end{align*}
  \end{lem}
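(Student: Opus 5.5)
The plan is to reduce the statement to the sampled Gaussian mechanism analysis of Abadi et al.\ (Lemma~3 in [39]) by rescaling. The only differences from that lemma are the general sensitivity bound $c$ in place of $1$, and the fact that $f$ is applied per sample and then summed.

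\textbf{Step 1: reduction to the one-dimensional Gaussian mixture.} Fix adjacent inputs $\mathcal{B},\mathcal{B}'$ differing in one sample $\zeta^\ast$, which is included with probability $q$. Conditioned on the remaining samples, the sum $\sum_b f(\zeta_b)$ is a common shift, and it cancels in the privacy loss. So $M(\mathcal{B})$ is distributed as $\mu_0=\mathcal{N}(0,\sigma^2\mathbf{I})$ versus the mixture $\mu=(1-q)\mathcal{N}(0,\sigma^2\mathbf{I})+q\,\mathcal{N}(v,\sigma^2\mathbf{I})$, with $\|v\|\le c$ by the sensitivity assumption.

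Rotating so that $v$ lies along the first axis reduces the problem to one dimension: the orthogonal coordinates have identical marginals and drop out of the likelihood ratio. Since the moment is monotone in $\|v\|$, it suffices to take $\|v\|=c$. Dividing by $c$ then gives the unit-sensitivity problem with noise scale $\sigma'=\sigma/c\ge 1$. The hypotheses transform as follows: $q<1/(32\sigma)\le 1/(16\sigma')$ holds trivially since $c>1$, and $\lambda<\frac{\sigma^2}{c^2}\ln\frac{1}{4q\sigma}-1\le \sigma'^2\ln\frac{1}{q\sigma'}$ because $4q\sigma\ge q\sigma'$.

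\textbf{Step 2: bounding the two directions.} Following Abadi et al., we bound both $\mathbb{E}_{\mu_0}[(\mu_0/\mu)^\lambda]$ and $\mathbb{E}_{\mu}[(\mu/\mu_0)^\lambda]$. Both have the form $\mathbb{E}_{z\sim\nu_0}[(1+q(\nu_1(z)/\nu_0(z)-1))^{\lambda+1}]$ or a close analogue. Expanding binomially in $q$:
\begin{itemize}
\item the $t=0$ term is $1$;
\item the $t=1$ term vanishes, since the expectation of a likelihood ratio minus one is zero;
\item the $t=2$ term equals $\binom{\lambda+1}{2}q^2(e^{1/\sigma'^2}-1)/(1-q)$ after using the Gaussian identity $\mathbb{E}[(\nu_1/\nu_0)^2]=e^{c^2/\sigma^2}$;
\item the terms with $t\ge3$ are controlled by splitting the Gaussian integral at $|z|\le c$ versus $|z|>c$, bounding $(\nu_1/\nu_0)^t$ by $e^{t(2|z|c-c^2)/(2\sigma^2)}$, and summing.
\end{itemize}

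Using $e^{x}-1\le x+O(x^2)$ on the $t=2$ term gives $\frac{q^2\lambda(\lambda+1)c^2}{(1-q)\sigma^2}$, up to the constant that is absorbed by the $\binom{\lambda+1}{2}$ factor as in [39]. Taking $\log(1+y)\le y$ then yields the leading term of $\alpha_M(\lambda)$.

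\textbf{Main obstacle.} The hardest part is showing that the $t\ge3$ tail sums to $O(c^3q^3\lambda^3/\sigma^3)$. This requires the ratio of consecutive terms, which is roughly $q\lambda e^{\lambda c^2/\sigma^2}\cdot(c/\sigma)$, to be bounded by a constant less than $1$, so that the series is geometric. This is exactly where the conditions $q<1/(32\sigma)$ and $\lambda<\frac{\sigma^2}{c^2}\ln\frac{1}{4q\sigma}-1$ enter: the second gives $e^{\lambda c^2/\sigma^2}<1/(4q\sigma)$, so $q\lambda e^{\lambda c^2/\sigma^2}\cdot c/\sigma<\lambda c/(4\sigma^2)$. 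This is small whenever $\lambda c^2/\sigma^2$ is $O(\ln(1/q\sigma))$.

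To handle it, I would copy the term-by-term estimates from the proof of Lemma~3 in [39] in the rescaled variable $\sigma'$. I would verify that each estimate only uses $q\sigma'<1/16$ and the bound on $\lambda$, both guaranteed by Step~1. The result is then stated back in the original variables, where $1/\sigma'=c/\sigma$ produces the $c^3/\sigma^3$ factor.
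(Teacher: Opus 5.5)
Your proof is correct, and its key move differs from the paper's.

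\textbf{The paper's route.} The paper never rescales. It reruns the computation of Lemma~3 of Abadi et al.\ with the shift $c$ carried explicitly through every estimate:
\begin{itemize}
\item a binomial expansion of $\mathbb{E}_{\mu}[(\mu_0/\mu)^{\lambda+1}]$;
\item a vanishing $t=1$ term;
\item the $t=2$ term via the Gaussian identity;
\item the three-region split $z\le 0$, $0\le z\le c$, $z\ge c$ for $t\ge 3$, giving an envelope $\Gamma(t)$;
\item a consecutive-ratio bound $2q\sigma e^{c^2(\lambda+1)/\sigma^2}<1/2$, which is where the $\ln\frac{1}{4q\sigma}$ in its hypothesis comes from.
\end{itemize}

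\textbf{Your route.} You use that $\alpha_M(\lambda)$ is unchanged under the output bijection $y\mapsto y/c$, because the Jacobian cancels in the likelihood ratio. So $M/c$ is exactly the unit-sensitivity sampled Gaussian mechanism with $\sigma'=\sigma/c\ge 1$; under add/remove adjacency the sensitivity bound reads $\|f(\cdot)/c\|\le 1$, which is Abadi's hypothesis. Your check that the stated conditions imply $q<1/(16\sigma')$ and $\lambda\le\sigma'^2\ln\frac{1}{q\sigma'}$ is correct. Substituting $\sigma'=\sigma/c$ back into Abadi's bound gives the claim verbatim.

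\textbf{Step 2 is unnecessary.} Once the hypotheses are verified, cite the lemma as a black box and stop.

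\textbf{Comparison.} The reduction buys three things:
\begin{itemize}
\item brevity;
\item the exact $c$-scaling of the remainder, with $c$-free constants (the paper's envelope $\Gamma(t)$ carries a factor $c^{2t}$, so its domination by $\Gamma(3)$, as written, yields $c^6$ rather than $c^3$ unless $c$ is treated as a constant);
\item the observation that the lemma's hypotheses are stronger than needed.
\end{itemize}
The paper's route buys a self-contained derivation.

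\textbf{A caution on your ``main obstacle'' heuristic.} If you did re-derive the tail, the heuristic is off as written. The quantity $\lambda c/(4\sigma^2)$ can be as large as $\frac{1}{4c}\ln\frac{1}{4q\sigma}$, which is not small. The geometric decay actually comes from the trade-off between $(\lambda-t+1)$ and $e^{c^2t/\sigma^2}$, whose product is maximized at $t=\lambda+1-\sigma^2/c^2$.
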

  \begin{proof}
    Supposing that two adjacent data sets $\mathcal{B}$ and ${\mathcal{B}}^{\prime}$ only differs from the $m$-th sample, i.e., $\zeta_b = \zeta_b^{\prime}, \forall b \neq m$. Without loss of generality, assume $f(\zeta_m) = \textbf{c} $, $f(\zeta_m^{\prime})=\textbf{0}$ and that the contributions from all other samples are zero, i.e., $\sum_{b \neq m}f(\zeta_b) = \sum_{b \neq m}f(\zeta_b^{\prime}) = \textbf{0}$. Under this assumption, the outputs of the mechanism $M$ on the adjacent data sets $\mathcal{B}$ and ${\mathcal{B}}^{\prime}$ can be explicitly expressed using their respective probability density functions: 
  \begin{align*}
    M(\mathcal{B}) &\sim \mu_0,\\
    M({\mathcal{B}}^{\prime}) &\sim \mu \stackrel{\Delta}{=} (1-q)\mu_0 + q\mu_1
  \end{align*}
  where $\mu_0 = \mathcal{N}(0, \sigma^2)$ and $\mu_1 = \mathcal{N}(c, \sigma^2)$. According to \textbf{Definition}~\ref{def:lambdamoments}, upper bounding $\alpha_{\mathcal{M}}(\lambda)$ reduces to bounding both $\mathbb{E}_{z \sim \mu} \left[ \left( \frac{\mu(z)}{\mu_0(z)} \right)^\lambda \right]$ and $\mathbb{E}_{z \sim \mu_0} \left[ \left( \frac{\mu_0(z)}{ \mu(z)}\right)^\lambda \right]$. Without loss of generality, we focus on bounding the second expectation, i.e., $\mathbb{E}_{z \sim \mu_0} \left[ \left( \frac{\mu_0(z)}{ \mu(z)}\right)^\lambda \right]$, as the procedure for the first one is analogous. By applying a binomial expansion to the integrand, we obtain:
  \begin{align}\label{eq:moment}
    \mathbb{E}_{z \sim \mu_0} \left[ \left( \frac{\mu_0(z)}{ \mu(z)}\right)^\lambda \right]
    = & \mathbb{E}_{z \sim \mu} \left[ \left( \frac{\mu_0(z)}{ \mu(z)}\right)^{\lambda+1} \right] = \mathbb{E}_{z \sim \mu} \left[ \left( 1 + \frac{\mu_0(z)-\mu(z)}{\mu(z)} \right)^{\lambda+1} \right] \nonumber\\
    = &\sum_{t=0}^{\lambda+1} {\lambda+1\choose t} \mathbb{E}_{z \sim \mu} \left[ \left(\frac{\mu_0(z)-\mu(z)}{\mu(z)} \right)^t \right]  \nonumber\\
    = & 1 + {{1+\lambda} \choose 1} \mathbb{E}_{z \sim \mu}\left[\frac{\mu_0(z) - \mu(z)}{\mu(z)}\right] + {{1+\lambda} \choose 2} \mathbb{E}_{z \in \mu} \left[\left(\frac{\mu_0(z) - \mu(z)}{\mu(z)}\right)^2\right] \nonumber\\
    & + \sum_{t=3}^{\lambda+1} {\lambda+1\choose t} \mathbb{E}_{z \sim \mu} \left[ \left(\frac{\mu_0(z)-\mu(z)}{\mu(z)} \right)^t \right] \,.
  \end{align}
  Moreover, under our assumption on the sampling probability, we have $\frac{\mu_0(z)-\mu(z)}{\mu(z)} \leq \frac{q}{1-q} < 1$ which ensures the convergence of the binomial expansion and enables high-order term control in the series. Next, we bound the second term in (\ref{eq:moment}):
  \begin{align}\label{eq:moment_second}
    {{1+\lambda} \choose 1} \mathbb{E}_{z \sim \mu}\left[\frac{\mu_0(z) - \mu(z)}{\mu(z)}\right] 
    &= {{1+\lambda} \choose 1} \int_{-\infty}^\infty \mu(z)\frac{\mu_0(z) - \mu(z)}{\mu(z)}  dz\\ \nonumber
    &= {{1+\lambda} \choose 1} \left[ \int_{-\infty}^\infty \mu_0(z) dz - \int_{-\infty}^\infty \mu(z) dz \right] = 0
  \end{align}
  And the third term in (\ref{eq:moment}) can be wrote as:
  \begin{align} \label{eq:moment_third}
    & {{1+\lambda} \choose 2} \mathbb{E}_{z \sim \mu}\left[\left(\frac{\mu_0(z) - \mu(z)}{\mu(z)} \right)^2\right] \nonumber\\ 
    =& \frac{\lambda(\lambda+1)q^{2}}{2} \mathbb{E}_{z \sim \mu} \left[\left(\frac{\mu_0(z)-\mu_1(z)}{\mu(z)} \right)^2\right] = \frac{\lambda(\lambda+1)q^{2}}{2} \int_{-\infty}^\infty \frac{(\mu_0(z) - \mu_1(z))^{2}}{\mu(z)} dz \nonumber\\ 
    \overset{\tcircle{1}}{\leq} & \frac{\lambda(\lambda+1)q^{2}}{2(1-q)} \int_{-\infty}^\infty \frac{(\mu_0(z) - \mu_1(z))^{2}}{\mu_{0}(z)} dz  = \frac{\lambda(\lambda+1)q^{2}}{2(1-q)} \mathbb{E}_{z \sim \mu_0}\left[\left(\frac{\mu_0(z) - \mu_1(z)}{\mu_0(z)}\right)^2\right] \nonumber\\ 
    \overset{\tcircle{2}}{=} & \frac{\lambda(\lambda+1)q^{2}}{2(1-q)} \mathbb{E}_{z \sim \mu_0} \left[\left(1- \exp\left( \frac{2cz-c^2}{2\sigma^2} \right) \right)^2 \right] \nonumber\\ 
    = & \frac{\lambda(\lambda+1)q^{2}}{2(1-q)} \left( 1 - 2\mathbb{E}_{z \sim \mu_0}\left[\exp\left( \frac{2cz-c^2}{2\sigma^2} \right) \right] + \mathbb{E}_{z \sim \mu_0} \left[\exp \left(\frac{4cz-2c^2}{2\sigma^2} \right)\right] \right) \nonumber\\ 
    \overset{\tcircle{3}}{=} & \frac{\lambda(\lambda+1)q^{2}}{2(1-q)} \left( 1 - 2\exp\left(\frac{c^2}{2\sigma^{2}}\right) \cdot \exp\left( \frac{-c^2}{2\sigma^{2}}\right) + \exp\left( \frac{4c^2}{2\sigma^{2}}\right) \cdot \exp\left(\frac{-c^2}{\sigma^{2}} \right) \right) \nonumber\\ 
    = & \frac{\lambda(\lambda+1) q^{2}}{2(1-q)} \cdot \left(\exp\left(\frac{c^2}{\sigma^2}\right) - 1\right) \leq \frac{\lambda(\lambda+1) c^2q^{2}}{(1-q)\sigma^{2}}.
  \end{align}
  where $\tcircle{1}$ comes from the fact of $\mu(z) \geq (1-q)\mu_0(z)$. $\tcircle{2}$ is based on $\mathbb{E}_{z \sim \mu_0} \exp\left(\frac{2az}{2\sigma^2}\right) = \exp\left(\frac{a^2}{2\sigma^2}\right)$ for any $a \in \mathbb{R}^d$. $\tcircle{3}$ is derived from the properties of normal distribution.
  %
  %
  To bound the remaining terms, we change $\left| \mu_0(z) - \mu_1(z) \right|$ into the form of $\mu_0(z)$ and $\mu_1(z)$:
  \begin{align*}
    \left| \mu_0(z) - \mu_1(z) \right| &= \left| \frac{1}{\sqrt{2\pi}\sigma} e^{-\frac{z^2}{2\sigma^2}}- \frac{1}{\sqrt{2\pi}\sigma} e^{-\frac{(z-c)^2}{2\sigma^2}} \right| = \frac{1}{\sqrt{2\pi}\sigma} \left|e^{\frac{-z^2}{2\sigma^2}}- e^{\frac{-z^2 + 2cz - c^2}{2\sigma^2}} \right| \\
    &=\frac{1}{\sqrt{2\pi}\sigma} e^{\frac{-z^2}{2\sigma^2}} \left|1- e^{\frac{2cz - c^2}{2\sigma^2}} \right| = \frac{1}{\sqrt{2\pi}\sigma} e^{\frac{-z^2 + 2cz - c^2}{2\sigma^2}} \left|e^{\frac{c^2 - 2cz}{2\sigma^2}} - 1 \right|\\
    &=\mu_0(z)\left|1- e^{\frac{2cz - c^2}{2\sigma^2}} \right| = \mu_1(z) \left|e^{\frac{c^2 - 2cz}{2\sigma^2}} - 1 \right|
  \end{align*}
  We then divide $\left| \mu_0(z) - \mu_1(z) \right|$ into multiple segments. $\forall z \leq 0$, we have:
  \begin{align*}
    \left|\mu_0(z) - \mu_1(z) \right| 
    = \mu_0(z) \left(1- e^{\frac{2cz - c^2}{2\sigma^2}} \right) 
    \overset{\tcircle{1}}{\leq} \mu_0(z) \left(\frac{c^2 - 2cz}{2\sigma^2} \right) 
    \leq \frac{c^2 - cz}{\sigma^2}\mu_0(z)
  \end{align*}
  where we have $\tcircle{1}$ since $e^z \geq 1+z \Rightarrow 1-e^z\leq -z$. Using the same method, $\forall z \geq c$ we can get:
  \begin{align*}
    \left|\mu_0(z) - \mu_1(z) \right| 
    = \mu_0(z) \left(e^{\frac{2cz-c^2}{2\sigma^2}}-1 \right)
    \overset{\tcircle{1}}{\leq} \mu_0(z) \left( \frac{2cz-c^2}{2\sigma^2} \right) 
    \leq \frac{cz}{\sigma^2}\mu_0(z)
  \end{align*}
  where $\tcircle{1}$ comes from $e^z -1 \geq z$. And $\forall 0 \leq z \leq c$:
  \begin{align*}
    |\mu_0(z) - \mu_1(z)| \overset{\tcircle{1}}{\leq} \mu_0(z) \left( e^{\frac{c^2}{2\sigma^2}} - 1 \right) \leq \frac{c^2}{\sigma^2}\mu_0(z)
  \end{align*}
  where $\tcircle{1}$ is derived that the maximum value of $\left| 1- e^{\frac{2cz - c^2}{2\sigma^2}} \right|$ in the interval $[0,c]$ is obtained at $z=c$. Based on the above inequality, we can then rewrite
  \begin{align}\label{eq:moment_remain}
    \mathbb{E}_{z \sim \mu}\left[\left(\frac{\mu_0(z) - \mu(z)}{\mu(z)} \right)^t\right] \leq &\int_{-\infty}^0 \mu(z) \left|\left(\frac{\mu_0(z) - \mu(z)}{\mu(z)} \right)^t\right| dz \nonumber\\ 
    &+ \int_{0}^c \mu(z) \left|\left(\frac{\mu_0(z) - \mu(z)}{\mu(z)} \right)^t\right| dz + \int_c^{\infty} \mu(z) \left|\left(\frac{\mu_0(z) - \mu(z)}{\mu(z)}\right)^t\right|dz.
  \end{align}
  The first term in (\ref{eq:moment_remain}) can be bounded by
  \begin{align*}
    &\int_{-\infty}^0 \mu(z) \left|\left(\frac{\mu_0(z) - \mu(z)}{\mu(z)}\right)^t\right| dz \\
    =&q^t\int_{-\infty}^0 \mu(z) \left|\left(\frac{\mu_0(z) - \mu_1(z)}{\mu(z)}\right)^t\right| dz
    \leq \frac{q^t}{(1-q)^{t-1}} \int_{-\infty}^{0} \mu_0(z) \frac{|\mu_0(z) - \mu_1(z)|^t}{{\mu_0(z)}^t} dz\\
    \leq& \frac{c^tq^t}{(1-q)^{t-1}\sigma^{2t}} \int_{-\infty}^{0} \mu_0(z) |z-c|^t dz 
    =\frac{c^tq^t}{(1-q)^{t-1}\sigma^{2t}} \sum_{k=0}^t {t \choose k} c^{t-k} \int_{-\infty}^{0} \mu_0(z) (-z)^k dz\\
    \overset{\tcircle{1}}{=}& \frac{c^tq^t}{2(1-q)^{t-1}\sigma^{2t}} \sum_{k=0}^t {t \choose k} c^{t-k} \int_{-\infty}^{+\infty} \mu_0(z) |z|^k dz \\
    \overset{\tcircle{2}}{=}&\frac{c^tq^t}{2(1-q)^{t-1}\sigma^{2t}} \sum_{k=0}^t {t \choose k} c^{t-k} \sigma^{k} (k-1)!! 
    \leq \frac{c^tq^t}{2(1-q)^{t-1}\sigma^{2t}} \sigma^{t} (t-1)!! \sum_{k=0}^t {t \choose k} c^{t-k} \\
    \leq & \frac{q^t(c+1)^{2t} (t-1)!! }{2(1-q)^{t-1}\sigma^t}.
  \end{align*}
  where $\tcircle{1}$ comes from the symmetry of integral that $\int_{-\infty}^{0} \mu_0(z) (-z)^k dz = \frac{1}{2} \int_{-\infty}^{+\infty} \mu_0(z) |z|^k dz$. $\tcircle{2}$ is derived from $\int_{-\infty}^{+\infty} \mu_0(z) |z|^k dz \leq \sigma^{t} (t-1)!!$. The second term in (\ref{eq:moment_remain}) is at most
  \begin{align*}
    \int_{0}^c \mu(z) \left|\left(\frac{\mu_0(z)- \mu(z)}{\mu(z)} \right)^t\right| dz 
    = &\frac{q^t}{(1-q)^t} \int_{0}^c \mu(z) \left|\left(\frac{\mu_0(z)- \mu_1(z)}{\mu_0(z)}\right)^t\right| dz \\
    \leq &\frac{q^t}{(1-q)^t}\int_0^{c} \mu(z) \frac{c^{2t}}{\sigma^{2t}} dz
    \leq \frac{c^{2t}q^t}{(1-q)^t \sigma^{2t}}.
  \end{align*}
  Similarly, the third term in (\ref{eq:moment_remain}) has the upper bound:
  \begin{align*}
    &\int_c^{\infty} \mu(z) \left|\left(\frac{\mu_0(z) - \mu(z)}{\mu(z)}\right)^t\right|dz \\
    \leq &\frac{q^t}{(1-q)^{t-1}} \int_c^{\infty} \mu_{0}(z) \left(\frac{|\mu_0(z) - \mu_1(z)|}{\mu_{0}(z)}\right)^t dz \\
    \leq &\frac{c^tq^t}{(1-q)^{t-1}\sigma^{2t}} \int_c^{\infty} \mu_{0}(z) \left(\frac{z\mu_1(z)}{\mu_{0}(z)}\right)^t dz    
    \leq \frac{c^tq^t}{(1-q)^{t-1}\sigma^{2t}}\int_c^{\infty} \mu_{0}(z) e^{\frac{2tcz - tc^2}{2\sigma^2}} z^t dz \\
    \leq &\frac{c^tq^t e^{(c^2t^2-c^2t)/2\sigma^2}}{(1-q)^{t-1}\sigma^{2t}}\int_{c}^\infty \mu_0(z-ct) z^t dz
    = \frac{c^tq^t e^{(c^2t^2-c^2t)/2\sigma^2}}{(1-q)^{t-1}\sigma^{2t}}\int_{c-ct}^\infty \mu_0(z) (z+ct)^t dz\\
    \leq &\frac{c^tq^t e^{(c^2t^2-c^2t)/2\sigma^2}}{(1-q)^{t-1}\sigma^{2t}} \int_{0}^\infty \mu_0(z) (z+ct)^t dz\\
    \overset{\tcircle{1}}{\leq} &\frac{c^tq^t e^{(c^2t^2-c^2t)/2\sigma^2}}{(1-q)^{t-1}\sigma^{2t}} \int_{0}^\infty 2^{t-1} \mu_0(z) (z^t+(ct)^t) dz\\
    \leq &\frac{(2cq)^t e^{(c^2t^2-c^2t)/2\sigma^2}}{2(1-q)^{t-1}\sigma^{2t}} \left[ \int_{-\infty}^{+\infty} \mu_0(z) |z|^k dz + (ct)^t \int_{-\infty}^{+\infty} \mu_0(z) dz \right]\\
    \leq &\frac{(2cq)^t e^{(c^2t^2-c^2t)/2\sigma^2}) (\sigma^t(t-1)!! + (ct)^t)}{2(1-q)^{t-1}\sigma^{2t}}.
  \end{align*}
  where $\tcircle{1}$ comes from $(a+b)^t \leq 2^{t-1}(a^t + b^t),\; \forall t \geq 1$. Combining the sum of three terms, we can get the $t$ remaining term of (\ref{eq:moment}):
  \begin{align*}
    &{\lambda+1\choose t} \mathbb{E}_{z \sim \mu} \left[ \left( \frac{\mu_0(z)-\mu(z)}{\mu(z)} \right)^t \right]\\
    \leq &{\lambda+1\choose t} \left[ \frac{q^t(c+1)^{2t} (t-1)!! }{2(1-q)^{t-1}\sigma^t} + \frac{c^{2t}q^t}{(1-q)^t \sigma^{2t}} + \frac{(2cq)^t e^{\frac{c^2t^2-c^2t}{2\sigma^2}} (\sigma^t(t-1)!! + (ct)^t)}{2(1-q)^{t-1}\sigma^{2t}} \right] \\
    \leq& {\lambda+1\choose t} \cdot \frac{((c+1)^2q\sigma)^t(t-1)!! + (c^2q)^t + (2cq\sigma)^t e^{\frac{c^2t^2-c^2t}{2\sigma^2}}(t-1)!! + (2c^2qt)^t e^{\frac{c^2t^2-c^2t}{2\sigma^2}}}{2(1-q)^{t}\sigma^{2t}}\\
    \leq & {\lambda+1\choose t} \cdot \frac{(2c^2q\sigma t)^t e^{(c^2t^2-c^2t)/2\sigma^2}}{2(1-q)^{t}\sigma^{2t}} = \Gamma(t)
  \end{align*}
  Hence, we have
  \begin{align*}
    \frac{\Gamma(t+1)}{\Gamma(t)} 
    =& \frac{{\lambda+1\choose t+1}}{{\lambda+1\choose t}} \frac{(2c^2q\sigma (t+1))^{t+1} e^{(c^2(t+1)^2-c^2(t+1))/2\sigma^2}}{2(1-q)^{t+1}\sigma^{2(t+1)}} \frac{2(1-q)^{t}\sigma^{2t}}{(2c^2q\sigma t)^t e^{(c^2t^2-c^2t)/2\sigma^2}} \\
    =& \frac{\lambda-t+1}{t+1} \cdot \frac{2c^2q}{\sigma} \cdot \frac{(t+1)^{t+1}}{t^t} \cdot e^{\frac{c^2t}{\sigma^2}} 
    = (\lambda-t+1) \cdot \frac{2c^2q}{\sigma} \cdot (1+\frac{1}{t})^{t} \cdot e^{\frac{c^2t}{\sigma^2}} \\
    \overset{\tcircle{1}}{\leq}& (\lambda-t+1) \cdot \frac{2c^2q}{\sigma} \cdot e^{\frac{c^2t}{\sigma^2}+1} 
    \overset{\tcircle{2}}{\leq} 2q\sigma e^{\frac{c^2(\lambda+1)}{\sigma^2}}
    \overset{\tcircle{3}}{\leq} \frac{1}{2}
  \end{align*}
  where $\tcircle{1}$ holds for $(1+\frac{1}{t})^{t} \leq e$. $\tcircle{2}$ is derived that the derivation of $f(t) = (\lambda-t+1) \cdot \frac{2c^2q}{\sigma} \cdot e^{\frac{c^2t}{\sigma^2}+1}$ equals to $f'(t)= \frac{2c^2q}{\sigma}(\frac{c^2(\lambda-t+1)}{\sigma^2}-1)e^{\frac{c^2t}{\sigma^2}+1}$, thus $f(t)$ has maximum value when $t=\lambda+1-\frac{\sigma^2}{c^2}$. $\tcircle{3}$ is satisfied since assumption $q<\frac{1}{32\sigma}$ and $\lambda<\frac{\sigma^2}{c^2}ln\frac{1}{4q\sigma} - 1$. In short, the $t$ remaining term of (\ref{eq:moment}) decreases exponentially for $t>3$, thus the remaining terms is dominated by the $t=3$ term:
  \begin{align} \label{eq:moment_o}
    & {\lambda+1\choose 3} \mathbb{E}_{z \sim \mu} \left[ \left(\frac{\mu_0(z)-\mu(z)}{\mu(z)} \right)^3 \right] \nonumber\\ 
    \leq &\frac{\lambda^3 - \lambda}{6} \cdot \frac{8q^3(c+1)^3\sigma^3 + 2c^6q^3 + 8c^3q^3e^{3c^2/\sigma^2}(2\sigma^3 + 27c^3 )}{2(1-q)^3\sigma^6} \nonumber\\
    =& O(q^3c^3\lambda^3/\sigma^3)
  \end{align}
  By combining (\ref{eq:moment}), (\ref{eq:moment_second}), (\ref{eq:moment_third}) and (\ref{eq:moment_o}), we complete the proof.
  \end{proof}

  Next we calculate the upper bound of $\alpha_{\tilde{g}^i_t}(\lambda)$, i.e., $\lambda$-th moment of $\tilde{g}^i_t$ in our DPDL. As shown in Lines~\ref{ln:rxpertgrad}$\sim$\ref{ln:agggrad} in \textbf{Algorithm}~\ref{alg:dpdl}, we rewrite $\tilde{g}^i_t$ as
  \begin{align} \label{eq:divide_agggrad}
    \tilde{g}^i_t &= \sum_{j \in \mathcal{N}_i} \left( \frac{1}{\sqrt{w_{ij}}N} \ddot{g}^{ij}_{t} + \alpha w_{ij} {\mathcal{C}}^{ij}_t \hat{g}^{ii}_{t} \right) \nonumber \\
    &= \frac{1}{B} \sum_b \sum_{j \in \mathcal{N}_i} \left( \frac{1}{\sqrt{w_{ij}}N} \hat{g}^{ij}_{t,b} + \alpha w_{ij} {\mathcal{C}}^{ij}_t \hat{g}^{ii}_{t,b} \right) + \frac{1}{B} \sum_{j \in \mathcal{N}_i} \frac{1}{\sqrt{w_{ij}}N}{z^{ij}_t}
  \end{align}
  Compared with the content in \textbf{Lemma}~\ref{lem:sampled_gaussian_mgf}, we turn the problem of calculating $\alpha_{\tilde{g}^i_t}(\lambda)$ into bounding the noise level of $\frac{1}{B} \sum_{j \in \mathcal{N}_i} \frac{1}{\sqrt{w_{ij}}N}{z^{ij}_t}$ and $l_2$ sensitivity of $\frac{1}{B} \sum_b \sum_{j \in \mathcal{N}_i} \left( \frac{1}{\sqrt{w_{ij}}N} \hat{g}^{ij}_{t,b} + \alpha w_{ij} {\mathcal{C}}^{ij}_t \hat{g}^{ii}_{t,b} \right)$, i.e. $\Delta_2 \phi_i$. Since the Gaussian noise $z^{ij}_t$ added at each agent $j \in \mathcal{N}_i$ is independently and identically distributed, the linear combination of $\{z^{ij}_t\}_{j \in \mathcal{N}_i}$, i.e., $z$, is also a Gaussian random variable such that
  \begin{align}\label{eq:noise_level}
    \mathbb{E}[z] &= \mathbb{E} \left[\frac{1}{B} \sum_{j \in \mathcal{N}_i} \frac{1}{\sqrt{w_{ij}}N} z^{ij}_t \right] = 0  \nonumber \\
    \mathrm{Var}[z] &= \mathrm{Var} \left[\frac{1}{B} \sum_{j \in \mathcal{N}_i} \frac{1}{\sqrt{w_{ij}}N} z^{ij}_t \right] = \frac{1}{B^2N^2} \sum_{j \in \mathcal{N}_i} \frac{1}{w_{ij}} \mathrm{Var}(z^{ij}_t) = \frac{C^2}{B^2N^2} \sum_{j \in \mathcal{N}_i} \frac{1}{w_{ij}} \sigma^2
  \end{align}
  thus $z \sim \mathcal{N}\left(0,\frac{C^2}{B^2N^2} \sum_{j \in \mathcal{N}_i} \frac{1}{w_{ij}} \sigma^2 \mathbf{I}\right)$. Then we turn to $\Delta_2 \phi$, specifically:
  \begin{align} \label{eq:seni_pre}
    \Delta_2 \phi =& \frac{1}{B} \max_{\mathcal{B}^i_t,{\mathcal{B}^i_t}^{\prime}} \left\| \sum_b  \sum_{j \in \mathcal{N}_i} \left( \frac{1}{\sqrt{w_{ij}}N} \hat{g}^{ij}_{t,b} + \alpha w_{ij} {\mathcal{C}}^{ij}_t \hat{g}^{ii}_{t,b} - \frac{1}{\sqrt{w_{ij}}N} \left( \hat{g}^{ij}_{t,b} \right)^{\prime} - \alpha w_{ij} \left({\mathcal{C}}^{ij}_t\right)^{\prime} \left( \hat{g}^{ii}_{t,b} \right)^{\prime} \right) \right\| \nonumber \\
    =& \frac{1}{B} \max_{\mathcal{B}^i_t,{\mathcal{B}^i_t}^{\prime}} \left\| \sum_b  \sum_{j \in \mathcal{N}_i} \frac{1}{\sqrt{w_{ij}}N} \left( \hat{g}^{ij}_{t,b} - \left( \hat{g}^{ij}_{t,b} \right)^{\prime} \right) \right\| \nonumber \\
    & + \max_{\mathcal{B}^i_t,{\mathcal{B}^i_t}^{\prime}} \left\| \sum_{j \in \mathcal{N}_i} \alpha w_{ij} \left( \frac{1}{B} \sum_b {\mathcal{C}}^{ij}_t \hat{g}^{ii}_{t,b} - \frac{1}{B} \sum_b \left({\mathcal{C}}^{ij}_t \right)^{\prime} \left( \hat{g}^{ii}_{t,b} \right)^{\prime} \right) \right\|
  \end{align}
  The first term of (\ref{eq:seni_pre}) can be proved as below:
  \begin{align}\label{eq:sensiw}
      \frac{1}{B} \max_{\mathcal{B}^i_t,{\mathcal{B}^i_t}^{\prime}} \left\| \sum_b  \sum_{j \in \mathcal{N}_i} \frac{1}{\sqrt{w_{ij}}N} \left( \hat{g}^{ij}_{t,b} - \left( \hat{g}^{ij}_{t,b} \right)^{\prime} \right) \right\| &\overset{\tcircle{1}}{\leq} \frac{1}{B} \max_{\mathcal{B}^i_t,{\mathcal{B}^i_t}^{\prime}} \left\| \sum_b  \frac{1}{\sqrt{w_{ii}}N} \left( \hat{g}^{ij}_{t,b} - \left( \hat{g}^{ij}_{t,b} \right)^{\prime} \right) \right\| \nonumber \\
      &\leq \frac{2C}{B\sqrt{w_{ii}}N}
  \end{align}
  where $\tcircle{1}$ follows the fact that $\hat{g}^{ij}_{t,b} = \left( \hat{g}^{ij}_{t,b} \right)^{\prime}, \forall i \neq j $. The second term of (\ref{eq:seni_pre}) can be proved as below:
  \begin{align} \label{eq:sensi_alpha}
    & \max_{\mathcal{B}^i_t,{\mathcal{B}^i_t}^{\prime}} \left\| \sum_{j \in \mathcal{N}_i} \alpha w_{ij} \left( \frac{1}{B} \sum_b {\mathcal{C}}^{ij}_t \hat{g}^{ii}_{t,b} - \frac{1}{B} \sum_b \left({\mathcal{C}}^{ij}_t \right)^{\prime} \left( \hat{g}^{ii}_{t,b} \right)^{\prime} \right) \right\| \nonumber \\
    \leq &  \max_{\mathcal{B}^i_t, {\mathcal{B}^i_t}^{\prime}} \left\| \alpha \sum_{j \in \mathcal{N}_i, j\neq i} w_{ij} \left( \frac{1}{B} \sum_b {\mathcal{C}}^{ij}_t \hat{g}^{ii}_{t,b} - \frac{1}{B} \sum_b \left({\mathcal{C}}^{ij}_t \right)^{\prime} \left( \hat{g}^{ii}_{t,b} \right)^{\prime} \right) \right\| \nonumber \\
    & + \max_{\mathcal{B}^i_t, {\mathcal{B}^i_t}^{\prime}} \left\|  \alpha w_{ii} \left( \frac{1}{B} \sum_b {\mathcal{C}}^{ii}_t \hat{g}^{ii}_{t,b} - \frac{1}{B} \sum_b \left({\mathcal{C}}^{ii}_t \right)^{\prime} \left( \hat{g}^{ii}_{t,b} \right)^{\prime} \right) \right\|
  \end{align}
  Next we focus on processing the terms of (\ref{eq:sensi_alpha}). Before we present the formal proof of (\ref{eq:sensi_alpha}), we list out some lemmas which will be used in the following proof.
  \begin{lem}\label{lem:sigmoidderivation}
    Let $g(x) = \frac{1}{1+e^{S(x,r)}}$ be a real-valued function of vector $x$, where $r \in \mathbb{R}^d$ is a fixed non-zero vector and $S(x,r)$ is the cosine similarity between vector $x$ and $r$. Define the vector-valued function $G(x) = g(x)x$, then for any non-zero random vector $x_1, x_2 \in \mathbb{R}^d$, the following inequality holds:
    \begin{equation} \label{eq:sigmoidderivation}
      \left\| G(x_1) - G(x_2) \right\| \leq \frac{15}{16} \left\| x_1 - x_2 \right\|
    \end{equation}
  \end{lem}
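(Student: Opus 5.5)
The plan is to bound the Jacobian of $G$ in operator norm uniformly on $\mathbb{R}^d\setminus\{0\}$. A mean-value (integral) argument along the segment from $x_2$ to $x_1$ then gives (\ref{eq:sigmoidderivation}).

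\emph{Step 1: differentiate.} Write $\hat x = x/\|x\|$ and $\hat r = r/\|r\|$, so that $S(x,r)=\langle \hat x,\hat r\rangle$. A direct computation gives
\begin{equation*}
\nabla_x S = \frac{\hat r - S\hat x}{\|x\|} \triangleq \frac{u}{\|x\|},
\end{equation*}
where $u \perp \hat x$ and $\|u\|=\sqrt{1-S^2}\le 1$. Since $g=\sigma_0(-S)$ with $\sigma_0$ the logistic function, we have $\nabla g = -g(1-g)\,u/\|x\|$. Hence the Jacobian of $G(x)=g(x)x$ is
\begin{equation*}
J(x) = g(x)\mathbf{I} + x\,\nabla g^\top = g\,\mathbf{I} - g(1-g)\,\hat x\, u^\top .
\end{equation*}
The factor $\|x\|$ cancels, so $J$ is scale-invariant and bounded.

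\emph{Step 2: operator norm.} Because $u\perp\hat x$, $J$ acts as $g\mathbf{I}$ on the orthogonal complement of $\mathrm{span}\{\hat x,u\}$. On that plane, in the orthonormal basis $\{\hat x, u/\|u\|\}$, it is the $2\times2$ upper-triangular matrix with diagonal entries $g,g$ and off-diagonal entry $-b$, where $b=g(1-g)\|u\|$. Its largest singular value is
\begin{equation*}
\frac{b+\sqrt{b^2+4g^2}}{2} \le g + \frac{b}{2} \le g + \frac{g(1-g)}{2}.
\end{equation*}
Since $S\in[-1,1]$, we have $g\in\bigl[\tfrac{1}{1+e},\tfrac{1}{1+e^{-1}}\bigr]\subset(0.26,0.74)$. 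The map $g\mapsto g+g(1-g)/2$ is increasing on this range, so the bound is at most about $0.74+0.097<0.84\le \tfrac{15}{16}$. (A cruder estimate, $g\le 3/4$ and $g(1-g)\le 1/4$, already gives $3/4+1/8=7/8\le 15/16$.)

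\emph{Step 3: integrate.} Set $\gamma(s)=x_2+s(x_1-x_2)$ for $s\in[0,1]$. This segment meets the origin in at most one point. $G$ is continuous everywhere, including at $0$ with $G(0)=0$, because $|g|\le 1$ forces $\|G(x)\|\le\|x\|$. Also $G$ is $C^1$ off the origin. Therefore $s\mapsto G(\gamma(s))$ is absolutely continuous, and
\begin{equation*}
G(x_1)-G(x_2)=\int_0^1 J(\gamma(s))(x_1-x_2)\,ds .
\end{equation*}
Taking norms and applying Step 2 yields $\|G(x_1)-G(x_2)\|\le \tfrac{15}{16}\|x_1-x_2\|$.

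The main obstacle is Step 2: one must exploit the orthogonality $u\perp \hat x$ to obtain a sharp enough norm bound. The naive triangle inequality gives $g+g(1-g)\le 0.74+0.2=0.94$, which only marginally exceeds $15/16$. So the Jordan-block form and the estimate $g+b/2$ are what make the constant work with room to spare. A secondary technical point is the singularity of $S$ at the origin, which Step 3 handles by continuity.
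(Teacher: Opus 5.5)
Your Step 2 has a reversed inequality. The largest singular value of $\begin{pmatrix} g & -b\\ 0 & g\end{pmatrix}$ is indeed $s=\frac{b+\sqrt{b^2+4g^2}}{2}=\frac b2+\sqrt{g^2+\frac{b^2}{4}}$. But $\sqrt{g^2+b^2/4}\ge g$, so $s\ge g+\frac b2$, with equality only when $b=0$. The claimed $s\le g+\frac b2$ is therefore false, and the figures ``$<0.84$'' and ``$\tfrac34+\tfrac18=\tfrac78$'' are not justified as written.

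The repair is immediate. Since $s$ is increasing in $g$ and $b$, the bounds $g\le\frac34$ and $b\le g(1-g)\le\frac14$ give $s\le\frac{1+\sqrt{37}}{8}\approx 0.885<\frac{15}{16}$. Alternatively, $\sqrt{g^2+b^2/4}\le g+\frac b2$ gives $s\le g+b\le g(2-g)$.

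Your remark that the naive triangle-inequality bound ``only marginally exceeds $15/16$'' is also mistaken, and that bound is exactly the paper's proof. The paper writes $\|\nabla G(x)\|\le g+g(1-g)\,|\sin\theta|\le g(2-g)$, using $\|\nabla S\|=|\sin\theta|/\|x\|$, which is the same computation as your $\|u\|=\sqrt{1-S^2}$. Since $g\le\frac{1}{1+e^{-1}}<\frac34$ and $g\mapsto g(2-g)$ is increasing on $[0,1]$, it gets $g(2-g)\le\frac34\cdot\frac54=\frac{15}{16}$; that is where the constant comes from. Your estimate $0.74+0.2$ rounds $g(1-g)$ up independently of $g$; evaluated jointly, $g(2-g)\le 0.928$. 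So exploiting $u\perp\hat x$ only buys a smaller constant and is not needed for $15/16$.

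Once Step 2 is fixed, your argument follows the paper's route: a uniform Jacobian bound plus the mean-value inequality. Your Step 3 is more careful than the paper. The paper applies the mean value theorem on the segment between $x_1$ and $x_2$ without noting that it can pass through the origin (e.g.\ $x_2=-x_1$), where $S$ and hence $G$ are undefined. Your continuous extension $G(0)=0$ and the piecewise argument handle that case correctly.
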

  \begin{proof}
  In the domain where $x \neq 0$, it is straightforward to verify that the function $G(x)$ is continuously differentiable. According to \textit{Differential Mean Value Theorem}, we have the following inequality:
  \begin{equation} \label{eq:dmvt}
      \left\| G(x_1) - G(x_2) \right\| \leq \sup_{\xi \in \text{CH}(x_1, x_2)} \left\| \nabla G(\xi) \right\| \left\| x_1 - x_2 \right\|
  \end{equation}
  where $\nabla G(x)$ represents the Jacobian matrix of $G(x)$ and $\text{CH}(x_1, x_2)$ denotes the convex hull of $x_1$ and $x_2$. To proceed, we focus on upper bounding the norm of $\nabla G(x)$. Applying the product rule of differentiation, we derive $\nabla G(\xi)$ as:
  \begin{align*}
    \nabla G(x) &= \nabla g(x)\cdot x^T + g(x)\mathbf{I}\\
    &= g(x)\mathbf{I} - g(x)\left(1- g(x)\right)\nabla S(x,r) \cdot x^T\\
    &= g(x)\mathbf{I} - g(x)\left(1-g(x)\right)\left[\frac{r}{\|x\| \|r\|} - \frac{(x\cdot r)x}{\|x\|^3 \|r\|} \right] \cdot x^T 
  \end{align*}
  The norm of $\nabla G(x)$ can thus be bounded by:
  \begin{align}\label{eq:nablaG_norm}
    \left\| \nabla G(x) \right\| &\leq \left\| g(x)\mathbf{I} \right\| + \left\| g(x)\left(1-g(x)\right) \right\| \left\| \frac{r}{\|x\| \|r\|} - \frac{(x\cdot r)x}{\|x\|^3 \|r\|} \right\| \cdot \| x^T \|  \nonumber \\ 
    &\overset{\tcircle{1}}{\leq} g(x) + \left( g(x)\left(1-g(x)\right) \right) \left\| \frac{r}{\|x\|} - \frac{(x\cdot r)x}{\|x\|^3 \|r\|}\right\| \cdot \| x \| 
  \end{align}
  where $\tcircle{1}$ is derived from that $g(x)$ is real-valued. Besides, we can prove that
  \begin{align}\label{eq:partialcosine}
    \left\| \frac{r}{\|x\| \|r\|} - \frac{(x\cdot r)x}{\|x\|^3 \|r\|} \right\|^2 &= \frac{\|r\|^2}{\|x\|^2 \|r\|^2} - 2 \frac{(x\cdot r)(x\cdot r)}{\|x\|^4 \|r\|^2} + \frac{(x\cdot r)^2\|x\|^2}{\|x\|^6 \|r\|^2} \nonumber \\
    &= \frac{1}{\|x\|^2} - \frac{(x\cdot r)^2}{\|x\|^4 \|r\|^2} 
    \overset{\tcircle{1}}{=} \frac{1}{\|x\|^2} - \frac{\cos^2\theta}{\|x\|^2}
    =\frac{\sin^2\theta}{\|x\|^2}
  \end{align}
  where $\tcircle{1}$ uses $x\cdot r = \|x\| \|r\| \cos \theta$ with $\theta$ being the angle between $x$ and $r$. Thus, substituting (\ref{eq:partialcosine}) into (\ref{eq:nablaG_norm}), we obtain:
  \begin{align*}
    \left\| \nabla G(x) \right\| &\leq g(x) + \left( g(x)\left(1-g(x)\right) \right) \left\| \frac{r}{\|x\|} - \frac{(x\cdot r)x}{\|x\|^3 \|r\|}\right\| \cdot \| x \| \\
    &= g(x) + \left( g(x)\left(1-g(x)\right) \right) |\sin \theta|\\
    &\overset{\tcircle{1}}{\leq} g(x)\left(2-g(x)\right) \leq \frac{15}{16}
  \end{align*}
  where $\tcircle{1}$ by considering the facts that $\frac{1}{1+e^{x}} \leq \frac{1}{1+e^{-1}} \leq \frac{3}{4}$.
  \end{proof}

  \begin{lem}\label{lem:sigmoidderivation2}
    Let $g(x) = \frac{1}{1+e^{S(x,x+r)}}$ be a real-valued function of vector $x$, where $r \in \mathbb{R}^d$ is a fixed non-zero vector and $S(x,x+r)$ is the cosine similarity between $x$ and $x+r$. Define the vector-valued function $G(x) = g(x)x$. Suppose that $h = \max \frac{\left\| x \right\|}{\left\| x+r \right\|} + 1 $, then for any non-zero random vector $x_1, x_2\in \mathbb{R}^d$, we have
    \begin{equation} \label{eq:sigmoidderivation2}
      \left\| G(x_1) - G(x_2) \right\| \leq \frac{(h+1)^2}{4h} \left\| x_1 - x_2 \right\|
    \end{equation}
  \end{lem}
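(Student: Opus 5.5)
We follow the same template as the proof of \textbf{Lemma}~\ref{lem:sigmoidderivation}. On the domain where $x \neq 0$ and $x+r \neq 0$, the map $G(x)=g(x)x$ is continuously differentiable. By the Differential Mean Value Theorem as in (\ref{eq:dmvt}), it therefore suffices to show
\[
\sup_{\xi}\|\nabla G(\xi)\| \leq \frac{(h+1)^2}{4h},
\]
where the supremum is over the convex hull of $x_1,x_2$.

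The first step is to compute the Jacobian. The product rule gives
\[
\nabla G(x) = g(x)\mathbf{I} - g(x)\bigl(1-g(x)\bigr)\,\nabla_x S(x,x+r)\, x^T .
\]
The difference from \textbf{Lemma}~\ref{lem:sigmoidderivation} is that the reference vector $x+r$ now also depends on $x$. We write $u=x$ and $v=x+r$. The chain rule then gives $\nabla_x S = \nabla_u S(u,v) + \nabla_v S(u,v)$. By (\ref{eq:partialcosine}) the two partial gradients have norms $|\sin\theta|/\|u\|$ and $|\sin\theta|/\|v\|$, where $\theta$ is the angle between $x$ and $x+r$. Applying the triangle inequality and multiplying by $\|x^T\|=\|x\|$ yields
\[
\bigl\|\nabla_x S\,x^T\bigr\| \leq |\sin\theta|\left(1+\frac{\|x\|}{\|x+r\|}\right) \leq |\sin\theta|\, h .
\]
This is where the quantity $h$ from the statement enters.

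Next we combine the pieces. Let $s=g(x)\in(0,1)$. Then
\[
\|\nabla G(x)\| \leq s + s(1-s)\,h .
\]
We must show that this is at most $\frac{(h+1)^2}{4h}$, and this is the step where the constant must be matched. The expression $s+s(1-s)h$ is a concave quadratic in $s$. Its unconstrained maximum is attained at $s^\ast=\frac{h+1}{2h}$ and equals $\frac{(h+1)^2}{4h}$. Since $h\geq 1$, we have $s^\ast\in(0,1]$, so this unconstrained maximum is a valid upper bound on the admissible range of $s$. This closes the bound without needing the sharper restriction $s \leq \frac{1}{1+e^{-1}}$ that was used in \textbf{Lemma}~\ref{lem:sigmoidderivation}.

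The main obstacle is keeping the bound uniform along the segment joining $x_1$ and $x_2$. The ratio $\|\xi\|/\|\xi+r\|$ must stay below $h-1$ for every intermediate point $\xi$, and the segment must avoid the singular points $\xi=0$ and $\xi=-r$. I would handle this the way the statement itself does: $h$ is defined as a maximum over the relevant points, and we interpret that maximum as taken over the convex hull of $x_1$ and $x_2$. Read this way, the supremum in (\ref{eq:dmvt}) is controlled uniformly by $h$, and the lemma follows.
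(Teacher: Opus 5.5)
Your proposal is correct and follows the paper's route: the mean value bound (\ref{eq:dmvt}), the product-rule Jacobian, the estimate $\|\nabla_x S(x,x+r)\|\,\|x\| \leq 1 + \|x\|/\|x+r\| \leq h$, and maximizing the concave quadratic $-hs^2+(h+1)s$ to get $\frac{(h+1)^2}{4h}$. Your chain-rule and triangle-inequality bound on $\|\nabla_x S\|$ is cleaner than the paper's computation in (\ref{eq:nablacos_square}), which drops the cross term, mis-scales the denominators, and is then used as a bound on $\|\nabla S\|$ instead of its square; also, your reading of $h$ as a maximum over the segment makes explicit the uniformity the paper leaves implicit.
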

  
  \begin{proof}
  In the domain of $x$, we can easily prove that $G(x)$ is continuously differentiable. Using the same trick as (\ref{eq:dmvt}) we have the following relationship:
  \begin{align*}
    \nabla G(x) &= \nabla g(x)\cdot x^T + g(x)\mathbf{I}\\
    &= g(x)\mathbf{I} - g(x)\left(1- g(x)\right)\nabla S(x,r) \cdot x^T
  \end{align*}
  Thus, the norm of $\nabla G(x)$:
  \begin{align}\label{eq:nablaG_norm_2}
    \left\| \nabla G(x) \right\| &\leq \left\| g(x)\mathbf{I} \right\| + \left\| g(x)\left(1-g(x)\right) \right\| \left\| \nabla S(x,x+r) \right\| \cdot \| x^T \|  \nonumber \\
    &\overset{\tcircle{1}}{\leq} g(x) + \left( g(x)\left(1-g(x)\right) \right) \left\| \nabla S(x,x+r) \right\| \cdot \| x \| 
  \end{align}
  where $\tcircle{1}$ is derived from that $g(x)$ is real-valued. By recording $p = \frac{x}{\|x\|}$, $q = \frac{x+r}{\|x+r\|}$, we can prove that
  \begin{align}\label{eq:nablacos_square}
    \left\| \nabla S(x,x+r) \right\|^2 & = \left\| \frac{2x + r}{\|x\| \|x + r\|} - \frac{\left(\|x\|^2 +(x \cdot r) \right) \left( \frac{x}{\|x\|^2} + \frac{x+r}{\|x + r\|^2}\right)}{\|x\| \|x + r\|} \right\|^2 \nonumber \\
    &= \left\| \frac{p}{\|x+r\|} + \frac{q}{\|x\|} - (p \cdot q)\left( \frac{p}{\|x\|} + \frac{q}{\|x+r\|} \right)\right\|^2 \nonumber \\
    & = \frac{1}{\|x+r\|} \left\| p - (p \cdot q) q \right\|^2 + \frac{1}{\|x\|} \left\| q - (p \cdot q) p \right\|^2 \nonumber \\
    &\overset{\tcircle{1}}{\leq} \frac{1}{\|x+r\|} + \frac{1}{\|x\|}
    %
  \end{align}
  where $\tcircle{1}$ comes from $\| p - (p \cdot q) q \| = \| q - (p \cdot q) p \| = \sqrt{1 - (p \cdot q)^2} \leq 1$. Thus, substituting (\ref{eq:nablacos_square}) into (\ref{eq:nablaG_norm_2}), we have
  \begin{align*}
    \left\| \nabla G(x) \right\| &\leq g(x) + \left( g(x)\left(1-g(x)\right) \right) \left( \frac{\|x\|}{\|x+r\|} + 1 \right) \\
    &\leq g(x) + h\left( g(x)\left(1-g(x)\right) \right) = -hg^2(x) + (h+1)g(x) \\
    &\leq \frac{(h+1)^2}{4h}
  \end{align*}
  \end{proof}
  
  Based on \textbf{Lemma}~\ref{lem:sigmoidderivation}, we can bound the first term of (\ref{eq:sensi_alpha}) as below:
  \begin{align}\label{eq:sensisigmoid1}
    & \max_{\mathcal{B}^i_t, {\mathcal{B}^i_t}^{\prime}} \left\| \alpha \sum_{j \in \mathcal{N}_i, j\neq i} w_{ij} \left( \frac{1}{B} \sum_b {\mathcal{C}}^{ij}_t \hat{g}^{ii}_{t,b} - \frac{1}{B} \sum_b \left({\mathcal{C}}^{ij}_t \right)^{\prime} \left( \hat{g}^{ii}_{t,b} \right)^{\prime} \right) \right\| \nonumber \\
    \leq & \alpha \sum_{j \in \mathcal{N}_i, j\neq i} \frac{15 w_{ij}}{16} \left\|  \frac{1}{B} \sum_b  \hat{g}^{ii}_{t,b} - \frac{1}{B} \sum_b \left( \hat{g}^{ii}_{t,b} \right)^{\prime} \right\| \leq \frac{15\alpha (1- w_{ii})C}{8B}
  \end{align}
  Based on \textbf{Lemma}~\ref{lem:sigmoidderivation2}, assuming that $h = \max_{i \in \mathcal{N}, t<T, \zeta^{i}_{t,b} \in \mathcal{B}^{i}_t} \frac{\left\| \frac{1}{B} \sum_b \hat{g}^{ii}_{t,b} \right\|}{\left\| \frac{1}{B} \sum_b \hat{g}^{ii}_{t,b}  + z^{ii}_t \right\|} +1 $, we can bound the second term of (\ref{eq:sensi_alpha}) as below:
  \begin{align}\label{eq:sensisigmoid2}
    & \max_{\mathcal{B}^i_t, {\mathcal{B}^i_t}^{\prime}} \left\| \alpha w_{ii} \left( \frac{1}{B} \sum_b {\mathcal{C}}^{ii}_t \hat{g}^{ii}_{t,b} - \frac{1}{B} \sum_b \left({\mathcal{C}}^{ii}_t \right)^{\prime} \left( \hat{g}^{ii}_{t,b} \right)^{\prime} \right) \right\| \nonumber \\
    \leq& \frac{\alpha w_{ii} (h+1)^2}{4h} \max_{\mathcal{B}^i_t,{\mathcal{B}^i_t}^{\prime}} \left\|  \frac{1}{B} \sum_b  \hat{g}^{ii}_{t,b} - \frac{1}{B} \sum_b \left( \hat{g}^{ii}_{t,b} \right)^{\prime} \right\| \nonumber \\
    \leq& \frac{\alpha w_{ii} (h+1)^2C}{2Bh}
  \end{align}
  Substituting (\ref{eq:sensisigmoid1}) and (\ref{eq:sensisigmoid2}) into (\ref{eq:sensi_alpha}), (\ref{eq:sensi_alpha}) and (\ref{eq:sensiw}) into (\ref{eq:seni_pre}), the upper bound of $L_2$ sensitivity $\Delta_2 \phi_i$ is:
  \begin{equation}\label{eq:seni}
    \Delta_2 \phi_i \leq \frac{2C}{\sqrt{w_{ii}}BN} + \frac{15\alpha (1- w_{ii})C}{8B}+ \frac{\alpha w_{ii} (h+1)^2C}{2Bh}
  \end{equation} 
  Combining (\ref{eq:seni}), (\ref{eq:noise_level}) and \textbf{Lemma}~\ref{lem:sampled_gaussian_mgf} can we get the upper bound of $\alpha_{\tilde{g}^i_t}(\lambda)$ for some explicit constant $c_2$ as follows: 
  \begin{align*}
    \alpha_{\tilde{g}^i_t}(\lambda) &\leq \frac{\left( \Delta_2 \phi_i \right)^2 q^2\lambda(\lambda+1)}{(1-q)\left( \frac{C\sigma}{NB}  \sqrt{\sum_{j \in \mathcal{N}_i} \frac{1}{w_{ij}}} \right)^2} + O\left( \frac{\left( \Delta_2 \phi_i \right)^3q^3\lambda^3}{\left( \frac{C\sigma}{NB}  \sqrt{\sum_{j \in \mathcal{N}_i} \frac{1}{w_{ij}}}\right)^3} \right). \\
    &\leq \frac{\left( \frac{2C}{\sqrt{w_{ii}}BN} + \frac{15\alpha (1- w_{ii})C}{8B}+ \frac{\alpha w_{ii} (h+1)^2C}{2Bh} \right)^2 c_2^2 q^2\lambda^2}{\left( \frac{C\sigma}{NB}  \sqrt{\sum_{j \in \mathcal{N}_i} \frac{1}{w_{ij}}} \right)^2}
  \end{align*}

  Since what is actually transmitted is the updated parameters and momentum in our DPDL algorithm, it is necessary to connect the privacy moment bounds of the data-dependent vector $\tilde{g}^i_t$ to those of $\tilde{x}^i_t$ and $\tilde{v}^i_t$. We formalize this connection by leveraging the post-processing property of moments, summarized in the following lemma:
  \begin{lem}~\label{lem:momentmapping}
  Let $\mathcal{M} \colon \mathbb{D} \rightarrow \mathbb{R}^d$ be a mechanism with outcome $\psi$, $f\colon \mathbb{R}^d \rightarrow \mathbb{R}^{d^{\prime}}$ be an data-independent randomized mapping with outcome ${\psi}^{\prime} = f(\psi)$. Then for any pair of neighboring datasets $\mathcal{D}, \mathcal{D}' \in \mathbb{D}$ and any $\lambda > 0$, the $\lambda$-th moment of $f \circ \mathcal{M} \colon \mathbb{D} \rightarrow \mathbb{R}^{d^{\prime}}$ satisfies:
  \begin{align*}
    \alpha_{f \circ M}(\lambda; \mathcal{D}, \mathcal{D}^{\prime}) = \alpha_{\mathcal{M}}(\lambda; \mathcal{D}, \mathcal{D}^{\prime}) 
    \end{align*}
  \end{lem}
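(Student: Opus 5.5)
We prove the identity in two inequalities, reading ``data-independent randomized mapping'' as in its use for DPDL. There $f$ sends $\tilde{g}^i_t$ to $(\tilde{v}^i_t,\tilde{x}^i_t)=(\beta v^i_{t-1}+\tilde{g}^i_t,\;x^i_{t-1}-\eta\beta v^i_{t-1}-\eta\tilde{g}^i_t)$. Its only randomness comes from the previous state $(v^i_{t-1},x^i_{t-1})$, which is independent of the current batch, and conditioned on that state $f$ is an injective affine map. The plan is: first show $\alpha_{f\circ\mathcal{M}}(\lambda;\mathcal{D},\mathcal{D}')\le\alpha_{\mathcal{M}}(\lambda;\mathcal{D},\mathcal{D}')$ for every data-independent $f$; then show the reverse inequality by exhibiting a data-independent left inverse of $f$ and applying the first step to it.

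\textbf{Upper bound.} Write $\mu=\mathcal{M}(\mathcal{D})$ and $\mu'=\mathcal{M}(\mathcal{D}')$ with densities $p,p'$. Model $f$ as a Markov kernel $K(\psi'\mid\psi)$ that does not depend on the dataset; this is exactly what ``data-independent'' means. The pushforward densities are $\tilde{p}(\psi')=\int K(\psi'\mid\psi)p(\psi)\,d\psi$, and similarly $\tilde{p}'$. By Definition~\ref{def:lambdamoments},
\begin{equation*}
\mathbb{E}_{\psi'\sim\tilde{p}}\left[\left(\frac{\tilde{p}(\psi')}{\tilde{p}'(\psi')}\right)^{\lambda}\right]=\int \tilde{p}'(\psi')\,\phi\!\left(\frac{\tilde{p}(\psi')}{\tilde{p}'(\psi')}\right)d\psi',
\qquad \phi(u)=u^{\lambda+1}.
\end{equation*}
Since $\phi$ is convex for $\lambda>0$, we can apply Jensen's inequality to the posterior kernel $K(\psi'\mid\psi)p'(\psi)/\tilde{p}'(\psi')$. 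This gives $\phi(\tilde{p}/\tilde{p}')\le\int \frac{K(\psi'\mid\psi)p'(\psi)}{\tilde{p}'(\psi')}\,\phi(p(\psi)/p'(\psi))\,d\psi$. Integrating over $\psi'$ and using $\int K(\psi'\mid\psi)\,d\psi'=1$ yields $\mathbb{E}_{\mu}[(p/p')^{\lambda}]$. This is the $f$-divergence data-processing inequality. Taking logarithms gives $\alpha_{f\circ\mathcal{M}}\le\alpha_{\mathcal{M}}$ for each pair $\mathcal{D},\mathcal{D}'$.

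\textbf{Lower bound.} Conditioned on the public previous state $s=(v^i_{t-1},x^i_{t-1})$, the map $\tilde{g}\mapsto f_s(\tilde{g})$ has the explicit left inverse $g_s(\tilde{v},\tilde{x})=\tilde{v}-\beta v^i_{t-1}$. Let $g$ be the randomized map that draws $s$ from its (data-independent) law and applies $g_s$. Then $g$ is again a data-independent mapping, and $(g\circ f)\circ\mathcal{M}=\mathcal{M}$ in distribution. Applying the upper bound to $g$ and the mechanism $f\circ\mathcal{M}$ gives $\alpha_{\mathcal{M}}=\alpha_{g\circ(f\circ\mathcal{M})}\le\alpha_{f\circ\mathcal{M}}$. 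Combined with the first step, this yields the equality. It also yields the maximized version over adjacent pairs.

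\textbf{Main obstacle.} The delicate step is the lower bound. Equality genuinely requires $f$ to lose no information: the Jensen step is tight only when $p/p'$ is a.s.\ a function of $\psi'$. So the argument must verify that the randomness in $f$ is shared and independent of the current data. Concretely, the previous state must be treated as an independent input that can be conditioned on, or the joint release $(s,f_s(\psi))$ must be considered. Conditioning on $s$ makes $f_s$ a deterministic affine bijection onto its image, and there both inequalities become change-of-variables identities; the Jacobian factors cancel in the ratio $\tilde{p}/\tilde{p}'$. I would carry out this conditioning argument first, and only then integrate over $s$.
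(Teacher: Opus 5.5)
Your upper bound is correct and is the right general fact. For any data-independent Markov kernel, the Jensen (data-processing) argument gives $\alpha_{f\circ\mathcal{M}}(\lambda;\mathcal{D},\mathcal{D}')\le\alpha_{\mathcal{M}}(\lambda;\mathcal{D},\mathcal{D}')$, and this is also the only direction the paper needs in the proof of \textbf{Theorem}~\ref{thm:dp}.

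The equality, however, cannot be proved in the stated generality, because it is false. The constant map $f\equiv 0$ is data-independent and gives $\alpha_{f\circ\mathcal{M}}=0$. Similarly, take $\mathcal{M}(\mathcal{D})=\mathcal{N}(0,\sigma^2)$ and $\mathcal{M}(\mathcal{D}')=\mathcal{N}(c,\sigma^2)$, and let $f(\psi)=\psi+Z$ with independent $Z\sim\mathcal{N}(0,\tau^2)$. Then $f$ lowers the moment from $\lambda(\lambda+1)c^2/(2\sigma^2)$ to $\lambda(\lambda+1)c^2/(2(\sigma^2+\tau^2))$. The paper's one-line proof hides this: it replaces the expectation over $\psi'\sim f\circ\mathcal{M}(\mathcal{D})$ with one over $\psi\sim\mathcal{M}(\mathcal{D})$, which is valid only for deterministic injective $f$. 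You rightly sense that equality needs $f$ to lose no information. But you then silently switch to a restricted lemma for the DPDL-specific $f$; say so explicitly and state the extra hypothesis.

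Your lower-bound step fails as written. Your $g$ draws a fresh copy $s''=(v'',x'')$ of the previous state, so $g(f_s(\psi))=\psi+\beta(v^i_{t-1}-v'')$. That is $\psi$ convolved with the law of $\beta(v^i_{t-1}-v'')$, exactly the smoothing counterexample above. Hence $(g\circ f)\circ\mathcal{M}\neq\mathcal{M}$ in distribution, and $\alpha_{\mathcal{M}}\le\alpha_{f\circ\mathcal{M}}$ does not follow.

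Nor can a post-processing of the output recover the shared $s$. From $(\tilde{v}^i_t,\tilde{x}^i_t)$ you get $x^i_{t-1}=\tilde{x}^i_t+\eta\tilde{v}^i_t$, but only the sum $\beta v^i_{t-1}+\tilde{g}^i_t$, not $v^i_{t-1}$ itself.

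Moreover, the law of $s=(v^i_{t-1},x^i_{t-1})$ is not data-independent. It depends on earlier batches drawn from $\mathcal{D}_i$, so it differs under $\mathcal{D}$ and $\mathcal{D}'$. The kernel of the unconditioned $f$ therefore depends on the dataset, and even your upper bound does not apply to it.

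The repair you sketch at the end is the right one, but it must replace your construction rather than follow it:
\begin{itemize}
\item Treat $s$ as a fixed auxiliary input. It is computable from previously released messages, which is how adaptive composition over the $T$ rounds handles it.
\item Then $f_s$ is a deterministic injection with a deterministic left inverse $g_s$.
\item Apply your data-processing bound to both $f_s$ and $g_s$. This gives equality for each fixed $s$ without any Jacobian argument. A Jacobian would not make literal sense here anyway, since $f_s$ maps $\mathbb{R}^d$ onto a $d$-dimensional affine subspace of $\mathbb{R}^{2d}$.
\end{itemize}
What you must not do is then ``integrate over $s$'' with $s$ unreleased. The resulting marginal law of $f_s(\psi)$ is a mixture, and for it only the inequality survives.
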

  
  \begin{proof}
  \begin{align}
    \alpha_{f \circ \mathcal{M}}(\lambda; \mathcal{D}, \mathcal{D}^{\prime}) &\triangleq \max_{\mathcal{D}, \mathcal{D}^{\prime}} \log \mathbb{E}_{{\psi}^{\prime} \sim f \circ \mathcal{M}(\mathcal{D})} \left[ e^{\lambda c(\psi; f \circ \mathcal{M}, \mathcal{D}, \mathcal{D}^{\prime})} \right] \nonumber\\ 
    & = \max_{\mathcal{D}, \mathcal{D}^{\prime}} \log \mathbb{E}_{{\psi}^{\prime} \sim f \circ \mathcal{M}(\mathcal{D})} \left[ \frac{Pr[f \circ \mathcal{M}(\mathcal{D})={\psi}^{\prime}]}{Pr[f \circ \mathcal{M}(\mathcal{D}^{\prime})={\psi}^{\prime}]} \right]^{\lambda} \nonumber\\ 
    & = \max_{\mathcal{D}, \mathcal{D}^{\prime}} \log \mathbb{E}_{\psi \sim \mathcal{M}(\mathcal{D})} \left[ \frac{Pr[\mathcal{M}(\mathcal{D})=\psi]}{Pr[\mathcal{M}( \mathcal{D}^{\prime})=\psi]} \right]^{\lambda} \nonumber\\ 
    & = \alpha_{\mathcal{M}}(\lambda; \mathcal{D}, \mathcal{D}^{\prime})
  \end{align}
  \end{proof}
  That is, data-independent mappings do not affect the value of moment, since they do not introduce additional randomness conditioned on the data. Rethinking our DPDL algorithm, for any single iteration, the momentum update and model update steps are data-independent transformations derived solely from the data-dependent vector $\tilde{g}^i_t$. Thus, for $\tilde{v}^i_{t}, \forall t < T$ and $\tilde{x}^i_{t}, \forall t < T$ we have:
  \begin{align*}
    \alpha_{\tilde{v}^i_{t}}(\lambda; \mathcal{D}, \mathcal{D}^{\prime}) = \alpha_{\tilde{x}^i_{t}}(\lambda; \mathcal{D}, \mathcal{D}^{\prime}) = 
    \alpha_{\tilde{g}^i_t}(\lambda; \mathcal{D}, \mathcal{D}^{\prime})
  \end{align*}
  Let $w_{min} = \min_{i \in \mathcal{N}, j\in\mathcal{N}_i} w_{ij}$, $w_{max} = \max_{i \in \mathcal{N}, j\in\mathcal{N}_i} w_{ij}$, we can get the $\lambda$-th moment of our DPDL algorithm in fixed round $t$:
  \begin{align}\label{eq:dpdl_moment}
    \alpha_{{DPDL}_t}(\lambda; \mathcal{D}, \mathcal{D}^{\prime}) \overset{\Delta}{=} \max_{i} \alpha_{\tilde{x}^i_{t}}(\lambda; \mathcal{D}, \mathcal{D}^{\prime})  \leq \frac{\left( \frac{2C}{\sqrt{w_{min}}BN} + \frac{15\alpha (1- w_{min})C}{8B}+ \frac{\alpha w_{max} (h+1)^2C}{2Bh} \right)^2 c_2^2 q^2\lambda^2}{\left( \frac{C\sigma}{NB}  \sqrt{\sum_{j \in \mathcal{N}_i} \frac{1}{w_{min}}} \right)^2}
  \end{align}
  Next, we use the following lemma to bridge the gap between moment and DP: 
  \begin{lem}~\label{lem:moment_dp}
    Let $\mathcal{M} \colon \mathbb{D} \rightarrow \mathbb{R}^d$ be a randomized mechanism with $\lambda$-th moment as $\alpha_{\mathcal{M}}(\lambda)$. For any subset of outputs $\mathcal{S} \subseteq \mathbb{R}^d$ and $\epsilon>0$, $\mathcal{M}$ satisfies:
    \begin{equation} \label{eq:moment_dp}
      \Pr\left(\mathcal{M}(\mathcal{D}) \in \mathcal{S}\right) \leq e^{\epsilon} \Pr\left(\mathcal{M}(\mathcal{D}^{\prime}) \in \mathcal{S}\right) + e^{\alpha_{\mathcal{M}}(\lambda) - \lambda\epsilon},
    \end{equation}
    where $\mathcal{D}$ and $\mathcal{D}'$ are adjacent datasets differing in at most one sample.
  \end{lem}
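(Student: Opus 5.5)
This is the standard moments-accountant tail bound (Abadi et al.), and I would prove it as a Chernoff/Markov argument on the privacy loss random variable. Fix adjacent $\mathcal{D},\mathcal{D}'$ and write $\mathcal{L}(\psi)=\ln\frac{\Pr[\mathcal{M}(\mathcal{D})=\psi]}{\Pr[\mathcal{M}(\mathcal{D}')=\psi]}$ for $\psi\sim\mathcal{M}(\mathcal{D})$, as in \textbf{Definition}~\ref{def:lambdamoments}. Let $B_\epsilon=\{\psi:\mathcal{L}(\psi)\ge\epsilon\}$ be the ``bad'' set of outputs on which the likelihood ratio exceeds $e^{\epsilon}$.

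\emph{Step 1: bound the bad set.} Apply Markov's inequality to the nonnegative variable $e^{\lambda\mathcal{L}}$ with $\lambda>0$:
\[
\Pr_{\psi\sim\mathcal{M}(\mathcal{D})}\bigl[\mathcal{L}(\psi)\ge\epsilon\bigr]=\Pr\bigl[e^{\lambda\mathcal{L}}\ge e^{\lambda\epsilon}\bigr]\le e^{-\lambda\epsilon}\,\mathbb{E}\bigl[e^{\lambda\mathcal{L}}\bigr].
\]
By \textbf{Definition}~\ref{def:lambdamoments}, $\log\mathbb{E}[e^{\lambda\mathcal{L}}]\le\alpha_{\mathcal{M}}(\lambda)$, since $\alpha_{\mathcal{M}}$ is the maximum over adjacent pairs. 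Hence $\Pr[\mathcal{M}(\mathcal{D})\in B_\epsilon]\le e^{\alpha_{\mathcal{M}}(\lambda)-\lambda\epsilon}$.

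\emph{Step 2: split an arbitrary output set.} For any $\mathcal{S}$, decompose
\[
\Pr[\mathcal{M}(\mathcal{D})\in\mathcal{S}]=\Pr[\mathcal{M}(\mathcal{D})\in\mathcal{S}\cap B_\epsilon^c]+\Pr[\mathcal{M}(\mathcal{D})\in\mathcal{S}\cap B_\epsilon].
\]
On $B_\epsilon^c$ the density ratio is below $e^{\epsilon}$ pointwise. Integrating therefore bounds the first term by $e^{\epsilon}\Pr[\mathcal{M}(\mathcal{D}')\in\mathcal{S}\cap B_\epsilon^c]\le e^{\epsilon}\Pr[\mathcal{M}(\mathcal{D}')\in\mathcal{S}]$. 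The second term is at most $\Pr[\mathcal{M}(\mathcal{D})\in B_\epsilon]$, which Step 1 controls. Adding the two bounds gives (\ref{eq:moment_dp}).

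\emph{Main obstacle.} The only delicate point is measure-theoretic: the pointwise density bound must be integrated legitimately. This requires working with densities (or Radon--Nikodym derivatives) and handling points where $\Pr[\mathcal{M}(\mathcal{D}')=\psi]=0$. At such points $\mathcal{L}=+\infty$, so they fall in $B_\epsilon$ and are already absorbed by the Markov bound. With that convention the argument goes through.

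Downstream, the paper will combine the lemma with composition of moments over $T$ rounds, which gives $\alpha\le T\cdot\alpha_{DPDL_t}$. Choosing $\lambda$ to make $e^{T\alpha_t(\lambda)-\lambda\epsilon}\le\delta$ then yields the bound (\ref{eq:global_lwsigma}).
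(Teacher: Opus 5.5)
Your proposal is correct and follows the paper's proof essentially step for step: the paper splits $\mathcal{S}$ along the set $\Psi=\{\psi:\mathcal{L}\ge\epsilon\}$, bounds the complement part by $e^{\epsilon}\Pr(\mathcal{M}(\mathcal{D}')\in\mathcal{S})$ using the pointwise density-ratio bound, and controls $\Pr(\mathcal{M}(\mathcal{D})\in\Psi)$ by Markov's inequality on $e^{\lambda\mathcal{L}}$. Your remark on outputs where $\Pr[\mathcal{M}(\mathcal{D}')=\psi]=0$ (these have $\mathcal{L}=+\infty$ and fall in the set handled by Markov) is extra care the paper omits, and it does not change the argument.
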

  \begin{proof}
    Let $\Psi = \{ \psi, \mathcal{L}(\mathcal{M}, \mathcal{D}, \mathcal{D}^{\prime}) \geq \epsilon \}$ and its complement $\overline{\Psi} = \{ \psi, \mathcal{L}(\mathcal{M}, \mathcal{D}, \mathcal{D}^{\prime}) < \epsilon \}$
    \begin{align} \label{eq:moment_dp_step}
      \Pr\left(\mathcal{M}(\mathcal{D}) \in \mathcal{S}\right) & = \Pr\left(\mathcal{M}(\mathcal{D}) \in \mathcal{S} \cap \Psi \right) + \Pr\left(\mathcal{M}(\mathcal{D}) \in \mathcal{S} \cap \overline{\Psi}\right) \nonumber \\
      & \overset{\tcircle{1}}{\leq} \Pr\left(\mathcal{M}(\mathcal{D}) \in \Psi \right) + e^{\epsilon} \Pr\left(\mathcal{M}(\mathcal{D}^{\prime}) \in \mathcal{S} \cap \overline{\Psi}\right) \nonumber \\
      & \leq \Pr\left(\mathcal{M}(\mathcal{D}) \in \Psi \right) + e^{\epsilon} \Pr\left(\mathcal{M}(\mathcal{D}^{\prime}) \in \mathcal{S} \right) 
    \end{align}
    where $\tcircle{1}$ holds for applying $\mathcal{L}(\mathcal{M}, \mathcal{D}, \mathcal{D}^{\prime}) < \epsilon \Rightarrow \Pr(M(\mathcal{D})= \psi) < e^{\epsilon} \Pr(M(\mathcal{D}^{\prime})= \psi)$ in \textbf{Definition}~\ref{def:lambdamoments}. Then the first term can be bounded as:
    \begin{align}\label{eq:moment_markov}
      \Pr\left(\mathcal{M}(\mathcal{D}) \in \Psi  \right) 
      &= \Pr_{\psi \sim \mathcal{M}(\mathcal{D}) } \left(\mathcal{L}(\mathcal{M}, \mathcal{D}, \mathcal{D}^{\prime}) \geq \epsilon \right) 
      = \Pr_{\psi \sim \mathcal{M}(\mathcal{D})} \left(e^{\lambda \mathcal{L}(\mathcal{M}, \mathcal{D}, \mathcal{D}^{\prime})} \geq e^{\lambda \epsilon} \right) \nonumber \\
      & \leq \frac{E_{\psi \sim \mathcal{M}(\mathcal{D})} \left[ e^{\lambda \mathcal{L}(\mathcal{M}, \mathcal{D}, \mathcal{D}^{\prime})}\right]}{e^{\lambda \epsilon}} 
      \leq e^{\alpha_{\mathcal{M}}(\lambda) - \lambda\epsilon}
    \end{align}
    By substituting (\ref{eq:moment_markov}) into (\ref{eq:moment_dp_step}), we complete the proof.
  \end{proof}

  Based on \textbf{Lemma}~\ref{lem:moment_dp} and the assumption that $q$, $\sigma$ and $\lambda$ satisfy the conditions of \textbf{Lemma}~\ref{lem:sampled_gaussian_mgf}, to guarantee that our DPDL algorithm is $(\epsilon, \delta)$-differentially private, it suffices for a $\delta$ which satisfies $\delta \geq \min e^{\alpha_{\mathcal{M}}(\lambda) - \lambda\epsilon}$. So we turn to solve:
  \begin{align*}
    \alpha_{\mathcal{M}}(\lambda) &\leq \frac{\lambda \epsilon}{2} \\
    e^{ \frac{\lambda \epsilon}{2} - \lambda\epsilon} &\leq  \delta
  \end{align*}
  Combining $T$ rounds composition of moments in (\ref{eq:dpdl_moment}), we have:
  \begin{align*}
    \ln \left(\frac{1}{\delta} \right) \leq \frac{\lambda \epsilon}{2} \leq \frac{\left( \frac{2C}{\sqrt{w_{min}}BN} + \frac{15\alpha (1- w_{min})C}{8B}+ \frac{\alpha w_{max} (h+1)^2C}{2Bh} \right)^2 c_2^2 q^2\lambda^2}{\left( \frac{C\sigma}{NB}  \sqrt{\sum_{j \in \mathcal{N}_i} \frac{1}{w_{min}}} \right)^2} T 
  \end{align*}
  When $\epsilon \leq c_1q^2T$, we can satisfy all these conditions by setting
  \begin{align*}
    \frac{C\sigma}{NB}  \sqrt{\sum_{j \in \mathcal{N}_i} \frac{1}{w_{min}}} \geq \left( \frac{2C}{\sqrt{w_{min}}BN} + \frac{15\alpha (1- w_{min})C}{8B}+ \frac{\alpha w_{max} (h+1)^2C}{2Bh} \right) \frac{c_2 q\sqrt{T \ln(1/\delta)}}{\epsilon}
  \end{align*}
  for some explicit constants $c_1$ and $c_2$. Thus, we have the following noise bound to make each agent satisfies $(\epsilon, \delta)$-DP throughout $T$ rounds
  \begin{align*}
    \sigma \geq \frac{\left( \frac{2}{\sqrt{w_{min}}} + \frac{15\alpha (1- w_{min})}{8}+ \frac{\alpha w_{max} (h+1)^2N}{2h} \right) c_2q\sqrt{T \ln(1/\delta)}}{\sqrt{\sum_{j \in \mathcal{N}_i} \frac{1}{w_{min}}} \cdot \epsilon}
  \end{align*}

\section{Proof of Theorem~\ref{thm:convergence}}
\label{sec:convergenceproof}
  Our roadmap to prove \textbf{Theorem}~\ref{thm:convergence} is given in Fig~\ref{fig:thm2-roadmap}. In particular, in \textbf{Lemma}~\ref{lem:avg_clbr_grad}, we show the upper bound on the expected gradient aggregation averaged over the agents in each round, based on which, we propose \textbf{Lemma}~\ref{lem:avg_grad_diff} to illustrate the upper bound of the expected average difference between the aggregated gradients and the self-gradients in each round. We show in \textbf{Lemma}~\ref{lem:G_diff} that, in each round, the agents have their self-gradients ``similar'' to its aggregated gradients on expectation. Based on \textbf{Lemma}~\ref{lem:G_diff}, it is revealed in \textbf{Lemma}~\ref{lem:consensus_error} that, the cumulative expected difference between the agents' local model and the average model over the agents is bounded. We reveal the evolution of the average models across the time span in \textbf{Lemmas}~\ref{lem:u_seq}$\sim$\ref{lem:u_x_diff}. Finally, we complete the proof by combining all these lemmas. 
  \begin{figure}[H]
  \begin{center}
    \center\includegraphics[width=.7\textwidth]{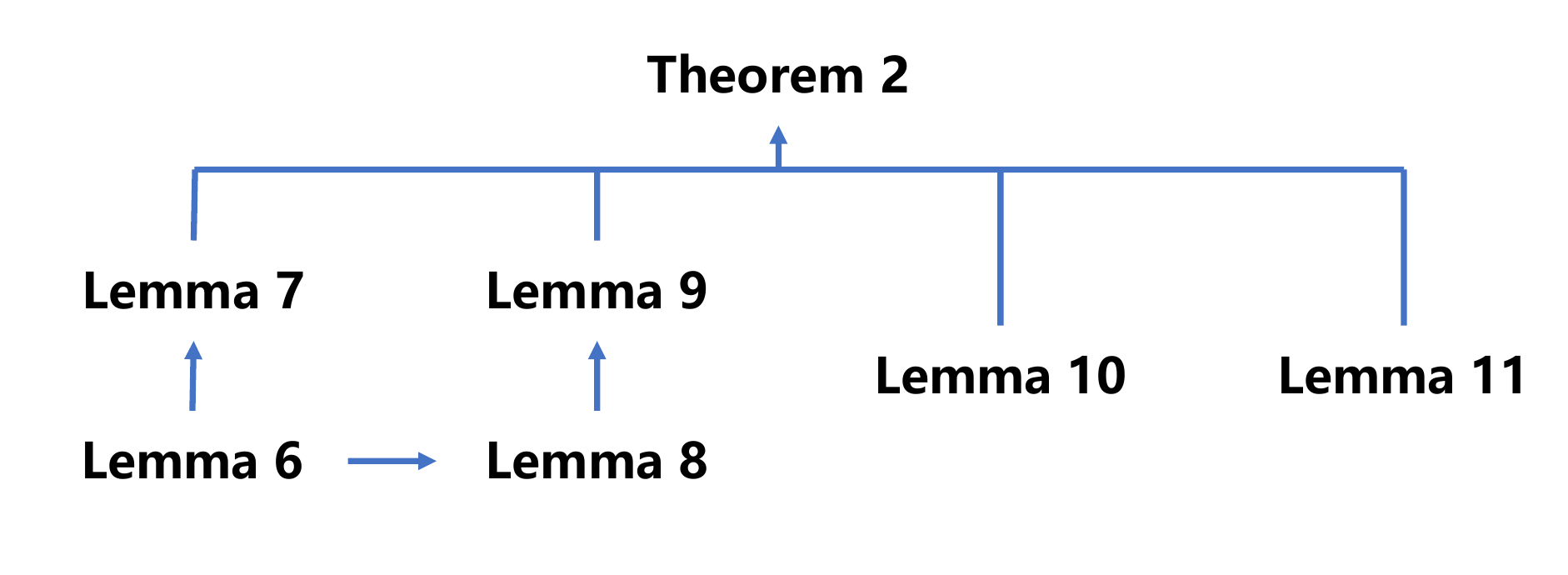}
  \caption{Our roadmap to prove \textbf{Theorem}~\ref{thm:convergence}.}
  \label{fig:thm2-roadmap}
  \end{center}
  \end{figure}

  Recall that $\tilde{g}^{i}_{t}$ denotes the aggregated gradient of agent $i$ in round $t$, and $\frac{1}{B} \sum_b g^{ii}_{t,b}$ represents its batch-averaged self-gradient. As mentioned above, we first give \textbf{Lemma}~\ref{lem:avg_clbr_grad} and \textbf{Lemma}~\ref{lem:avg_grad_diff} as follows. They reveal the upper bounds of $\mathbb{E}\left[\left\Vert\frac{1}{N}\sum_{i=1}^N{\tilde{g}}^{i}_{t} \right\Vert^2\right]$ and $\mathbb{E}\left\Vert \frac{1}{N} \sum_{i=1}^{N} \left({\tilde{g}}^i_t - \frac{1}{B} \sum_b g^{ii}_{t,b} \right) \right\Vert^2$, respectively.
  \begin{lem}~\label{lem:avg_clbr_grad} 
    Assume \textbf{Assumptions}~\ref{assum:smooth}$\sim$\ref{assum:dsm} hold and let $S_0 = \sum_{i=1}^{N} \sum_{j \in \mathcal{N}_i} \frac{1}{w_{ij}^2}$. In each round $t$, we have
    \begin{equation} \label{eq:avg_clbr_grad} 
      \mathbb{E}\left[\left\Vert\frac{1}{N}\sum_{i=1}^N{\tilde{g}}^{i}_{t} \right\Vert^2\right] 
      \leq \frac{3 S_0 C^2}{N^3} + \frac{3S_0\sigma^2 C^2d}{B^2N^4} + \frac{27\alpha^2 C^2}{16}
    \end{equation}
  \end{lem}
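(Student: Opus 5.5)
The plan is to expand $\tilde{g}^i_t$ with the decomposition (\ref{eq:divide_agggrad}) and split the averaged aggregated gradient into three pieces. The first is the averaged clipped cross-gradient part $P_1 = \frac{1}{N}\sum_{i}\sum_{j\in\mathcal{N}_i}\frac{1}{\sqrt{w_{ij}}N}\frac{1}{B}\sum_b \hat{g}^{ij}_{t,b}$. The second is the noise part $P_2 = \frac{1}{N}\sum_i\sum_{j\in\mathcal{N}_i}\frac{1}{\sqrt{w_{ij}}NB} z^{ij}_t$. The third is the calibration part $P_3 = \frac{1}{N}\sum_i\sum_{j\in\mathcal{N}_i}\alpha w_{ij}\mathcal{C}^{ij}_t\hat{g}^{ii}_t$. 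By $\|a+b+c\|^2\le 3(\|a\|^2+\|b\|^2+\|c\|^2)$ it then suffices to bound $\mathbb{E}\|P_1\|^2 \le S_0C^2/N^3$, $\mathbb{E}\|P_2\|^2 \le S_0\sigma^2C^2d/(B^2N^4)$ and $\|P_3\|^2\le 9\alpha^2C^2/16$. The factor $3$ in front of each gives exactly the three terms of (\ref{eq:avg_clbr_grad}).

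\textbf{The clipped part.} Every per-sample clipped gradient has norm at most $C$, so $\|\frac{1}{B}\sum_b\hat{g}^{ij}_{t,b}\|\le C$ deterministically. Applying Cauchy--Schwarz over the index set $\{(i,j): j\in\mathcal{N}_i\}$, I would bound
\[
\|P_1\|^2 \le \frac{1}{N^4}\Big(\sum_{i}\sum_{j\in\mathcal{N}_i}\frac{1}{\sqrt{w_{ij}}}C\Big)^2 .
\]
To land on $S_0=\sum_i\sum_j w_{ij}^{-2}$ rather than $\sum w_{ij}^{-1}$ times a count, I would pair the weights as $\frac{1}{\sqrt{w_{ij}}}=\frac{1}{w_{ij}}\cdot\sqrt{w_{ij}}$. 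Cauchy--Schwarz then gives $S_0\cdot\sum_{i,j}w_{ij}=S_0N$, using double stochasticity. Hence $\|P_1\|^2\le S_0NC^2/N^4=S_0C^2/N^3$.

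\textbf{The noise part.} The $z^{ij}_t$ are independent, zero-mean, with $\mathbb{E}\|z^{ij}_t\|^2=\sigma^2C^2d$. The cross terms vanish in expectation, so
\[
\mathbb{E}\|P_2\|^2=\frac{1}{N^4B^2}\sum_{i}\sum_{j}\frac{\sigma^2C^2d}{w_{ij}}.
\]
Since $w_{ij}\le 1$, we have $1/w_{ij}\le 1/w_{ij}^2$, and this is at most $S_0\sigma^2C^2d/(B^2N^4)$. I would note that this bound is deliberately loose, so that it matches the stated $S_0$ form.

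\textbf{The calibration part.} Cosine similarity lies in $[-1,1]$, so the sigmoid coefficient satisfies $\mathcal{C}^{ij}_t=1/(1+e^{S})\le 1/(1+e^{-1})\le 3/4$, as in the proof of Lemma~\ref{lem:sigmoidderivation}. Together with $\|\hat{g}^{ii}_t\|\le C$ and $\sum_j w_{ij}=1$, each inner sum has norm at most $\frac{3}{4}\alpha C$. The average over $i$ keeps that bound, so $\|P_3\|^2\le \frac{9}{16}\alpha^2C^2$, and tripling it gives $\frac{27}{16}\alpha^2C^2$.

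\textbf{Main obstacle.} The only nontrivial step is the $P_1$ bound. The naive Cauchy--Schwarz gives $\sum_{i,j}w_{ij}^{-1}\cdot\sum_i N_i$ instead of $S_0N$, so the weight pairing above is the step to get right. If that pairing fails to produce the exact constant, I would fall back on $\|P_1\|^2\le \frac{|\mathcal{E}|}{N^4}\sum_{i,j}\frac{C^2}{w_{ij}}$, where $|\mathcal{E}|$ is the number of $(i,j)$ pairs. I would then absorb $|\mathcal{E}|\le \sum_{i,j}w_{ij}^{-1}$, using $w_{ij}\le1$ and $N\le|\mathcal{E}|$, accepting a comparable $S_0$-type bound. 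The remaining steps need only boundedness from clipping and the independence of the Gaussian noise.
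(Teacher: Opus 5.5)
Your proposal is correct and follows the paper's proof: the same three-way split (clipped cross-gradients, Gaussian noise, calibration term) with the factor $3$, using the clipping bound $C$, $\mathbb{E}\|z^{ij}_t\|^2=\sigma^2C^2d$, and $\mathcal{C}^{ij}_t\le 3/4$ with $\sum_{j\in\mathcal{N}_i}w_{ij}=1$. The paper handles the weights with $\|\sum_{i=1}^N a_i\|^2\le N\sum_i\|a_i\|^2$ followed by a row-wise weighted Jensen step $\bigl\|\sum_j w_{ij}\,(w_{ij}^{-3/2}x_j)\bigr\|^2\le\sum_j w_{ij}^{-2}\|x_j\|^2$, which is equivalent to your Cauchy--Schwarz pairing; that pairing gives exactly $S_0N$, so the fallback in your last paragraph is unnecessary (and it would not recover the stated constant in general anyway).
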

  \begin{proof}
    Supposing $\Lambda^{ij}_{t,b} \overset{\Delta}{=} \min\left\{1,\frac{C}{\left\Vert g^{ij}_{t,b} \right\Vert}\right\}$, we have
    \begin{align*}
      \quad \mathbb{E}\left[\left\Vert\frac{1}{N}\sum_{i=1}^N{\tilde{g}}^{i}_{t} \right\Vert^2\right] 
      & = \mathbb{E}\left\Vert \frac{1}{N} \sum_{i=1}^{N} \sum_{j \in \mathcal{N}_i} \left( \frac{1}{\sqrt{w_{ij}}BN} \left( \sum_b \hat{g}^{ij}_{t,b} + z^{ij}_t \right) + \frac{\alpha w_{ij} {\mathcal{C}}^{ij}_t }{B} \sum_b \hat{g}^{ii}_{t,b} \right)\right\Vert^2 \\
      & = \mathbb{E}\left\Vert \frac{1}{N} \sum_{i=1}^{N} \sum_{j \in \mathcal{N}_i} \left( \frac{1}{\sqrt{w_{ij}}BN} \left( \sum_b \Lambda^{ij}_{t,b} g^{ij}_{t,b} + z^{ij}_t \right) + \frac{\alpha w_{ij} {\mathcal{C}}^{ij}_t }{B} \sum_b \Lambda^{ii}_{t,b} g^{ii}_{t,b} \right)\right\Vert^2 \\
      & \overset{\tcircle{1}}{\leq} 3\mathbb{E}\left\Vert \frac{1}{BN^2} \sum_{i=1}^{N} \sum_{j \in \mathcal{N}_i} \frac{1}{\sqrt{w_{ij}}} \sum_b  \Lambda^{ij}_{t,b} g^{ij}_{t,b} \right\Vert^2 
      + 3\mathbb{E}\left\Vert \frac{1}{BN^2} \sum_{i=1}^{N} \sum_{j \in \mathcal{N}_i} \frac{z^{ij}_t}{\sqrt{w_{ij}}} \right\Vert^2 \\
      &{\quad} + 3\mathbb{E}\left\Vert \frac{\alpha}{BN} \sum_{i=1}^{N} \left(\sum_{j \in \mathcal{N}_i} w_{ij} {\mathcal{C}}^{ij}_t \right) \left(\sum_b \Lambda^{ii}_{t,b} g^{ii}_{t,b} \right) \right\Vert^2 \\
      & \overset{\tcircle{2}}{\leq} \frac{3}{B^2N^3} \sum_{i=1}^{N} \mathbb{E} \left\Vert \sum_{j \in \mathcal{N}_i} \frac{1}{\sqrt{w_{ij}}} \sum_b  \Lambda^{ij}_{t,b} g^{ij}_{t,b} \right\Vert^2 
      + \frac{3}{B^2N^4} \sum_{i=1}^{N} \mathbb{E} \left\Vert \sum_{j \in N(i)} \frac{z^{ij}_t}{\sqrt{w_{ij}}} \right\Vert^2  \\
      &{\quad} + \frac{3\alpha^2}{B^2N} \sum_{i=1}^{N} \mathbb{E} \left\Vert \sum_{j \in \mathcal{N}_i} w_{ij} {\mathcal{C}}^{ij}_t \sum_b \Lambda^{ii}_{t,b} g^{ii}_{t,b} \right\Vert^2 \\
      & \overset{\tcircle{3}}{\leq} \frac{3}{B^2N^3} \sum_{i=1}^{N} \sum_{j \in N(i)} \frac{1}{w_{ij}^2} \mathbb{E} \left\Vert \sum_b  \Lambda^{ij}_{t,b} g^{ij}_{t,b} \right\Vert^2 
      + \frac{3}{B^2N^4} \sum_{i=1}^{N} \sum_{j \in N(i)} \frac{1}{w_{ij}^2} \mathbb{E} \left\Vert z^{ij}_t \right\Vert^2 \\
      &{\quad} + \frac{27\alpha^2}{16 B^2N} \sum_{i=1}^{N} \sum_{j \in N(i)} w_{ij} \mathbb{E} \left\Vert \sum_b \Lambda^{ii}_{t,b} g^{ii}_{t,b} \right\Vert^2 \\
      & \overset{\tcircle{4}}{\leq} \frac{3 S_0 C^2}{N^3} + \frac{3S_0\sigma^2 C^2d}{B^2N^4} + \frac{27\alpha^2 C^2}{16}
    \end{align*}
    where we have $\tcircle{1}$ and $\tcircle{2}$ since $ \mathbb{E}\left[ \left\| \sum_{i=1}^{N} a_i \right\|^2 \right]= N \sum_{i=1}^{N} \mathbb{E}\left[ \| a_i \|^2 \right]$ holds for any vector $a_i \in \mathbb{R}^d$, $\tcircle{3}$ by considering the Jensen's Inequality of $\left\Vert \cdot \right\Vert^2$ that $ \left\Vert \sum_{i} w_i x_i \right\Vert^2  \leq \sum_{i} w_i \left\Vert x_i \right\Vert^2 $ when $\sum_i w_i =1$, and $\tcircle{4}$ by considering $\left\Vert \Lambda^{ij}_{t,b} g^{ij}_{t,b} \right\Vert^2 \leq C^2$ and $\mathbb{E} \left\Vert z^{ij}_t \right\Vert^2 \leq \sigma^2C^2d$.
  \end{proof}

  \begin{lem} \label{lem:avg_grad_diff}
    Assume \textbf{Assumptions}~\ref{assum:smooth}$\sim$\ref{assum:dsm} hold and let $S_0 =  \sum_{i=1}^{N} \sum_{j \in \mathcal{N}_i} \frac{1}{w_{ij}^2} $. In each round $t$, we have
    \begin{align} \label{eq:avg_grad_diff}
      &\quad \mathbb{E}\left\Vert \frac{1}{N} \sum_{i=1}^{N} \left({\tilde{g}}^i_t - \frac{1}{B} \sum_b g^{ii}_{t,b} \right) \right\Vert^2 
      \leq \frac{6S_0 C^2}{N^3} + \frac{6S_0\sigma^2 C^2d}{B^2N^4} + \frac{27\alpha^2 C^2}{8} + \frac{4\xi^2}{N} + 4\mathbb{E}\left\Vert \frac{1}{N} \sum_{i=1}^{N} \nabla f_i(x_{t-1}^i)\right\Vert^2
    \end{align}
  \end{lem}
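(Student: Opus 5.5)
The plan is to reduce the quantity in Lemma~\ref{lem:avg_grad_diff} to the bound already established in Lemma~\ref{lem:avg_clbr_grad}, plus a term controlling the batch-averaged self-gradients. The constants in the target ($6 = 2\cdot 3$, $27/8 = 2\cdot 27/16$, and the factors $4$) suggest a single application of $\|a-b\|^2\le 2\|a\|^2+2\|b\|^2$, followed by a second split of the self-gradient term.

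\textbf{Step 1: first split.} Writing $a=\frac{1}{N}\sum_i \tilde g^i_t$ and $b=\frac{1}{N}\sum_i\frac{1}{B}\sum_b g^{ii}_{t,b}$, we get
\begin{align*}
\mathbb{E}\Big\Vert \frac{1}{N}\sum_{i=1}^N\Big(\tilde g^i_t-\frac{1}{B}\sum_b g^{ii}_{t,b}\Big)\Big\Vert^2 \le 2\,\mathbb{E}\Big\Vert\frac{1}{N}\sum_{i=1}^N\tilde g^i_t\Big\Vert^2 + 2\,\mathbb{E}\Big\Vert \frac{1}{N}\sum_{i=1}^N\frac{1}{B}\sum_b g^{ii}_{t,b}\Big\Vert^2 .
\end{align*}
Plugging Lemma~\ref{lem:avg_clbr_grad} into the first term yields exactly $\frac{6S_0C^2}{N^3}+\frac{6S_0\sigma^2C^2d}{B^2N^4}+\frac{27\alpha^2C^2}{8}$.

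\textbf{Step 2: second split.} Inside the second term, add and subtract $\nabla f_i(x^i_{t-1})$:
\begin{align*}
\mathbb{E}\Big\Vert \frac{1}{N}\sum_{i}\frac{1}{B}\sum_b g^{ii}_{t,b}\Big\Vert^2 \le 2\,\mathbb{E}\Big\Vert \frac{1}{N}\sum_{i}\frac{1}{B}\sum_b\big(g^{ii}_{t,b}-\nabla f_i(x^i_{t-1})\big)\Big\Vert^2 + 2\,\mathbb{E}\Big\Vert\frac{1}{N}\sum_i\nabla f_i(x^i_{t-1})\Big\Vert^2 .
\end{align*}
After the factor $2$ from Step~1, the last piece gives the $4\,\mathbb{E}\Vert\cdot\Vert^2$ term of the statement.

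\textbf{Step 3: the variance term.} Condition on the models $\{x^i_{t-1}\}$, which are determined before round $t$'s sampling. The summands $g^{ii}_{t,b}-\nabla f_i(x^i_{t-1})$ are then zero-mean, independent across agents $i$ and across samples $b$, and the cross terms vanish. By Assumption~\ref{assum:variance} the conditional expectation is at most $\frac{1}{N^2}\sum_i\frac{\xi^2}{B}\le\frac{\xi^2}{N}$, and the tower rule removes the conditioning. With the factor $4$ this gives $\frac{4\xi^2}{N}$. (This is even looser than needed, since $B\ge1$.)

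\textbf{Main obstacle.} The delicate point is Step~3's independence claim. Sampling must be independent across agents and within batches given the past, so that the cross terms really vanish. The randomness here is only the batch sampling. The Gaussian noise does not enter $g^{ii}_{t,b}$, which is the unclipped self-gradient, so the DP noise does not break the zero-mean structure. Once this is checked, the rest is bookkeeping of the constants.
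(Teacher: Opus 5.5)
Your proposal is correct and follows the paper's proof: the paper also adds and subtracts $\nabla f_i(x^i_{t-1})$ to get $\mathbb{E}\|\frac{1}{N}\sum_i\frac{1}{B}\sum_b g^{ii}_{t,b}\|^2 \le \frac{2\xi^2}{N} + 2\mathbb{E}\|\frac{1}{N}\sum_i\nabla f_i(x^i_{t-1})\|^2$, using independence of the zero-mean sampling errors and Assumption~\ref{assum:variance}, and then combines this with Lemma~\ref{lem:avg_clbr_grad} via $\|a-b\|^2\le 2\|a\|^2+2\|b\|^2$. Your variance step is slightly tighter ($\xi^2/(NB)$ before relaxing to $\xi^2/N$), and conditioning on $\{x^i_{t-1}\}$ spells out an independence argument the paper uses without stating it.
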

  \begin{proof}
    We first derive the bound of $\mathbb{E}\left\Vert \frac{1}{N} \sum_{i=1}^{N} \frac{1}{B} \sum_b  g^{ii}_{t,b} \right\Vert^2$ as follows
    \begin{align}\label{eq:avg_selfgrad}
      & \mathbb{E}\left\Vert \frac{1}{N} \sum_{i=1}^{N} \frac{1}{B} \sum_b  g^{ii}_{t,b} \right\Vert^2 \\
      \leq & \frac{1}{B^2} \mathbb{E}\left\Vert \frac{1}{N} \sum_{i=1}^{N} \sum_b \nabla F_i \left( x^i_{t-1}; \zeta^i_{t,b} \right) \right\Vert^2 \nonumber\\
      \leq & \frac{2}{B^2}  \mathbb{E}\left\Vert \frac{1}{N} \sum_{i=1}^{N} \sum_b \left( \nabla F_i \left( x^i_{t-1}; \zeta^i_{t,b} \right) - \nabla f_i \left( x_{t-1}^i \right) \right) \right\Vert^2  
      + \frac{2}{B^2} \mathbb{E}\left\Vert \frac{1}{N} \sum_{i=1}^{N} \sum_b \nabla f_i \left( x_{t-1}^i \right) \right\Vert^2 \nonumber\\
      \overset{\tcircle{1}}{\leq} & \frac{2}{BN^2} \sum_{i=1}^{N} \sum_b \mathbb{E} \left\Vert  \nabla F_i \left( x^i_{t-1}; \zeta^i_{t,b} \right) - \nabla f_i \left( x_{t-1}^i \right) \right\Vert^2 
      + 2\mathbb{E}\left\Vert \frac{1}{N} \sum_{i=1}^{N} \nabla f_i \left( x_{t-1}^i \right) \right\Vert^2 \nonumber\\
      \overset{\tcircle{2}}{\leq} & \frac{2\xi^2}{N} + 2\mathbb{E}\left\Vert \frac{1}{N} \sum_{i=1}^{N} \nabla f_i(x_{t-1}^i)\right\Vert^2 
    \end{align}
    where we have $\tcircle{1}$ according to $\nabla f_i({x}) = \mathbb{E}_{\zeta_i \sim \mathcal{D}_i} \left[\nabla F_i({x}; \zeta_i)\right]$ and $\mathbb{E}\left[ \left\| \sum_{i=1}^{N} a_i \right\|^2 \right]=\sum_{i=1}^{N} \mathbb{E}\left[ \|a_i\|^2 \right]$ for any in dependent zero-mean vectors $a_1, a_i, \cdots, a_N \in \mathbb{R}^d$, and $\tcircle{2}$ based on \textbf{Assumption}~\ref{assum:variance}. Combining \textbf{Lemma}~\ref{lem:avg_clbr_grad} and (\ref{eq:avg_selfgrad}), we obtain
    \begin{align*}
      \mathbb{E}\left\Vert \frac{1}{N} \sum_{i=1}^{N} \left({\tilde{g}}^i_t - \frac{1}{B} \sum_b g^{ii}_{t,b} \right) \right\Vert^2 
      & \leq 2\mathbb{E}\left\Vert \frac{1}{N} \sum_{i=1}^{N} {\tilde{g}}^i_t \right\Vert^2 + 2\mathbb{E}\left\Vert \frac{1}{N} \sum_{i=1}^{N} \frac{1}{B} \sum_b g^{ii}_{t,b} \right\Vert^2 \\
      &\leq \frac{6S_0 C^2}{N^3} + \frac{6S_0\sigma^2 C^2d}{B^2N^4} + \frac{27\alpha^2 C^2}{8} + \frac{4\xi^2}{N} + 4\mathbb{E}\left\Vert \frac{1}{N} \sum_{i=1}^{N} \nabla f_i(x_{t-1}^i)\right\Vert^2
    \end{align*}
  \end{proof}

  Next, to capture the iterative progress from a global perspective, we present a matrix representation of each parameter in \textbf{Remark}~\ref{rmk:matrixs}. We then outline several important properties of matrices based on two commonly used norms: the \textit{spectral norm} and the \textit{Frobenius norm}. Specifically, \textbf{Fact}~\ref{fact:dsm} presents key properties of doubly stochastic matrices; \textbf{Fact}~\ref{fact:fnorm} highlights a characteristic of the Frobenius norm; and \textbf{Fact}~\ref{fact:fnorm_to_l2norm} demonstrates the relationship between the Frobenius norm and the $L_2$ norm. Finally, \textbf{Lemma}~\ref{lem:G_diff} provides a matrix-form expression of the distance between the calibrated gradient and the self-gradient.
  %
  %
  \begin{remark} \label{rmk:matrixs}
    Here are some matrix representation of different parameters which will be used in the following proof:
    \begin{align}
    \begin{cases}
      &\mathbf{\tilde{G}}_t \triangleq \left[ {\tilde{g}}^1_t,{\tilde{g}}^2_t,\ldots,{\tilde{g}}^N_t \right] \vspace{1ex}\\
      & \mathbf{G}_t \triangleq \left[\frac{1}{B} \sum_b {g}^{11}_{t,b},  \frac{1}{B} \sum_b {g}^{22}_{t,b}, \ldots, \frac{1}{B} \sum_b {g}^{NN}_{t,b} \right] \vspace{1ex}\\
      &\mathbf{\tilde{V}}_t \triangleq  \left[{\tilde{v}}^1_t,{\tilde{v}}^2_t,\ldots,{\tilde{v}}^{N}_t \right]\vspace{1ex}\\
      &\mathbf{V}_t \triangleq  \left[{v}^1_t, {v}^2_t,\ldots, {v}^N_t \right]\vspace{1ex}\\
      &\mathbf{\tilde{X}}_t  \triangleq \left[\tilde{{x}}^1_t,\tilde{{x}}^2_t,\ldots,\tilde{{x}}^N_t \right]\vspace{1ex}\\
      &\mathbf{X}_t \triangleq \left[{x}^1_t,{x}^2_t,\ldots,{x}^N_t \right]\vspace{1ex}\\
      &\mathbf{H}_t  \triangleq \left[\nabla f_1({x}^1_{t-1}),\nabla f_2({x}^2_{t-1}),...,\nabla f_N({x}^N_{t-1}) \right]\vspace{1ex}\\
      & \mathbf{J}_t \triangleq \left[\nabla f_1(\bar{{x}}^1_{t-1}),\nabla f_2(\bar{{x}}^2_{t-1}),\ldots,\nabla f_N(\bar{{x}}^N_{t-1})\right]
    \end{cases}
    \end{align}
  \end{remark}

  \begin{fact} \label{fact:dsm}
    Let $\mathbf{I}$ be the identical matrix and $\mathbf{Q}=\frac{1}{N}\mathbf{1}\mathbf{1}^\top$ where $\mathbf{1}$ be a vector with each element being $1$. Let $\left\Vert \cdot \right\Vert_\mathfrak{S}$ represent spectral norm. For any doubly stochastic matrix $\mathbf{W}$, we have
    \begin{itemize}
      \item $\mathbf{Q}\mathbf{W}=\mathbf{W}\mathbf{Q}$;\vspace{0.3em}
      \item $(\mathbf{I}-\mathbf{Q})\mathbf{W}=\mathbf{W}(\mathbf{I}-\mathbf{Q})$;\vspace{0.3em}
      \item For any integer $k\geq 1$, $\left\Vert(\mathbf{I}-\mathbf{Q})\mathbf{W}^k\right\Vert_\mathfrak{S}\leq{\sqrt{\rho}}^k$.\vspace{0.3em}
    \end{itemize}
  \end{fact}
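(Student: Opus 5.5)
The first two items follow from the fact that $\mathbf{1}$ is both a left and a right eigenvector of $\mathbf{W}$ with eigenvalue $1$. The third item follows from a spectral decomposition of $\mathbf{W}$, using the symmetry of $\mathbf{W}$ assumed in Sec.~\ref{ssec:decentlearning} and the eigenvalue bound of \textbf{Assumption}~\ref{assum:dsm}.

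\emph{First two items.} Double stochasticity means $\mathbf{W}\mathbf{1}=\mathbf{1}$ and $\mathbf{1}^\top\mathbf{W}=\mathbf{1}^\top$. Therefore
\[
\mathbf{W}\mathbf{Q}=\tfrac{1}{N}(\mathbf{W}\mathbf{1})\mathbf{1}^\top=\mathbf{Q}
\quad\text{and}\quad
\mathbf{Q}\mathbf{W}=\tfrac{1}{N}\mathbf{1}(\mathbf{1}^\top\mathbf{W})=\mathbf{Q},
\]
so $\mathbf{Q}\mathbf{W}=\mathbf{W}\mathbf{Q}=\mathbf{Q}$. The second item follows by subtracting this identity from $\mathbf{I}\mathbf{W}=\mathbf{W}\mathbf{I}$, which gives $(\mathbf{I}-\mathbf{Q})\mathbf{W}=\mathbf{W}-\mathbf{Q}=\mathbf{W}(\mathbf{I}-\mathbf{Q})$.

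\emph{Reducing the third item.} Iterating $\mathbf{Q}\mathbf{W}=\mathbf{Q}$ gives $\mathbf{Q}\mathbf{W}^k=\mathbf{Q}$ for every $k\ge 1$. Hence $(\mathbf{I}-\mathbf{Q})\mathbf{W}^k=\mathbf{W}^k-\mathbf{Q}$, and it suffices to bound the spectral norm of $\mathbf{W}^k-\mathbf{Q}$.

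\emph{Spectral argument.} Since $\mathbf{W}$ is real symmetric, it has an orthonormal eigendecomposition $\mathbf{W}=\sum_{i=1}^N\lambda_i u_iu_i^\top$. Because $\mathbf{W}\mathbf{1}=\mathbf{1}$ and $\lambda_1(\mathbf{W})=1$, we may take $u_1=\mathbf{1}/\sqrt{N}$, so that $u_1u_1^\top=\mathbf{Q}$. (If the eigenvalue $1$ were repeated, \textbf{Assumption}~\ref{assum:dsm} with $\rho<1$ excludes this; for $\rho=1$ the bound is trivial anyway.) Then
\[
\mathbf{W}^k-\mathbf{Q}=\sum_{i=2}^N\lambda_i^k u_iu_i^\top .
\]
This matrix is symmetric, so its spectral norm equals its largest eigenvalue magnitude:
\[
\max_{i\ge2}|\lambda_i|^k=\bigl(\max\{|\lambda_2(\mathbf{W})|,|\lambda_N(\mathbf{W})|\}\bigr)^k\le\sqrt{\rho}^{\,k}
\]
by \textbf{Assumption}~\ref{assum:dsm}.

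\emph{Main obstacle.} The only delicate step is identifying the spectral norm with the maximal eigenvalue magnitude. This requires symmetry of $\mathbf{W}$, which the paper assumes; double stochasticity alone would not suffice. One must also check that the top eigenspace is spanned by $\mathbf{1}$, so that removing $\mathbf{Q}$ removes exactly the eigenvalue $1$.
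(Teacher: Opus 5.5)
Your proof is correct. The paper states this as a Fact without proof, and your argument is the standard one it implicitly relies on: the eigenvector identities $\mathbf{W}\mathbf{1}=\mathbf{1}$ and $\mathbf{1}^\top\mathbf{W}=\mathbf{1}^\top$ give the first two items, and the spectral decomposition of the symmetric $\mathbf{W}$ gives the third (you are also right that symmetry, not mere double stochasticity, is what makes the third item hold). Your case split on $\rho$ is harmless but unnecessary, since $\mathbf{1}/\sqrt{N}$ can always be chosen as $u_1$ inside the eigenvalue-$1$ eigenspace, after which the remaining eigenvalues are exactly $\lambda_2,\dots,\lambda_N$.
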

  \begin{fact} \label{fact:fnorm}
    Let $\mathbf{A}_1, \mathbf{A}_2, \cdots, \mathbf{A}_N$ be $N$ arbitrary real-valued matrices. Let $\left\Vert \cdot \right\Vert_\mathfrak{F}$ represent Frobenius norm. It follows that
    \begin{equation}
      \left\Vert \sum_{i=1}^N\mathbf{A}_i \right\Vert^2_\mathfrak{F} \leq \sum_{i=1}^N \sum_{j=1}^N \big\Vert \mathbf{A}_i \big\Vert_\mathfrak{F} \big\Vert \mathbf{A}_j \big\Vert_\mathfrak{F}
    \end{equation}
  \end{fact}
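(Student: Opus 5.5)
The plan is to reduce this to the Cauchy--Schwarz inequality for the Frobenius inner product. The matrices $\mathbf{A}_1,\ldots,\mathbf{A}_N$ are implicitly of the same size, since their sum must be defined. Equip that space of real matrices with $\langle \mathbf{A},\mathbf{B}\rangle_\mathfrak{F} \triangleq \mathrm{tr}(\mathbf{A}^\top \mathbf{B}) = \sum_{k,l} a_{kl} b_{kl}$. This is the Euclidean inner product on the entries viewed as a long vector, so $\Vert \mathbf{A}\Vert_\mathfrak{F}^2 = \langle \mathbf{A},\mathbf{A}\rangle_\mathfrak{F}$. Consequently Cauchy--Schwarz holds in the form $\langle \mathbf{A},\mathbf{B}\rangle_\mathfrak{F} \leq \Vert\mathbf{A}\Vert_\mathfrak{F}\Vert\mathbf{B}\Vert_\mathfrak{F}$.

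First I would expand the square of the sum by bilinearity of the inner product:
\begin{equation*}
\left\Vert \sum_{i=1}^N \mathbf{A}_i\right\Vert_\mathfrak{F}^2 = \left\langle \sum_{i=1}^N \mathbf{A}_i, \sum_{j=1}^N \mathbf{A}_j\right\rangle_\mathfrak{F} = \sum_{i=1}^N\sum_{j=1}^N \langle \mathbf{A}_i,\mathbf{A}_j\rangle_\mathfrak{F}.
\end{equation*}
Next I would bound each cross term $\langle \mathbf{A}_i,\mathbf{A}_j\rangle_\mathfrak{F}$ by $\Vert\mathbf{A}_i\Vert_\mathfrak{F}\Vert\mathbf{A}_j\Vert_\mathfrak{F}$ using Cauchy--Schwarz. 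The diagonal terms $i=j$ hold with equality. Summing these termwise bounds over all pairs $(i,j)$ gives exactly the claimed inequality.

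An equivalent route is to note that the right-hand side equals $\bigl(\sum_i \Vert\mathbf{A}_i\Vert_\mathfrak{F}\bigr)^2$. The claim then follows from the triangle inequality for the norm $\Vert\cdot\Vert_\mathfrak{F}$ by squaring both sides.

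There is no genuine obstacle here. The only point needing care is justifying that $\Vert\cdot\Vert_\mathfrak{F}$ is induced by an inner product, namely the trace inner product above. Once that is stated, Cauchy--Schwarz, or equivalently the triangle inequality, applies directly.
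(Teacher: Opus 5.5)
Your proof is correct: expanding $\left\Vert \sum_i \mathbf{A}_i\right\Vert_\mathfrak{F}^2$ via the trace inner product and applying Cauchy--Schwarz to each cross term (or, equivalently, squaring the triangle inequality $\left\Vert\sum_i \mathbf{A}_i\right\Vert_\mathfrak{F} \leq \sum_i \Vert \mathbf{A}_i\Vert_\mathfrak{F}$) gives the claim. The paper states this Fact without proof, and your argument is the standard justification it relies on.
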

  \begin{fact}\label{fact:fnorm_to_l2norm}
    For any matrix $\mathbf{A} \in \mathbb{R}^{d \times N}$ with dimension $d\times N$, we have $\Vert\mathbf{A}\Vert_\mathfrak{F}^2=\sum_{i=1}^N\Vert\mathbf{a}_i\Vert^2$, where $\mathbf{a}_i \in \mathbb{R}^d$ denotes the $i$-th column vector of $\mathbf{A}$. Letting $\bar{x}_t = \frac{1}{N} \sum_{i=1}^N x^i_t$, we have
    \begin{equation}
        \big\Vert  \mathbf{X}_t(\mathbf{I}-\mathbf{Q})\big\Vert_\mathfrak{F}^2 = \sum_{i=1}^N\left\Vert  {x}^i_t-\bar{{x}}_t \right\Vert^2.
    \end{equation}
    where $\mathbf{I}$ and $\mathbf{Q}$ are defined in \textbf{Fact}~\ref{fact:dsm}
  \end{fact}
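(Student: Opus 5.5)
The plan is to prove the two claims of \textbf{Fact}~\ref{fact:fnorm_to_l2norm} in order, using only the definition of the Frobenius norm and a direct column-by-column computation of $\mathbf{X}_t(\mathbf{I}-\mathbf{Q})$.

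\emph{First claim.} By definition, $\Vert\mathbf{A}\Vert_\mathfrak{F}^2=\sum_{k=1}^d\sum_{i=1}^N A_{ki}^2$. I will regroup this double sum by columns. For fixed $i$, the inner sum $\sum_{k=1}^d A_{ki}^2$ is exactly $\Vert\mathbf{a}_i\Vert^2$, the squared Euclidean norm of the $i$-th column. Summing over $i$ gives $\Vert\mathbf{A}\Vert_\mathfrak{F}^2=\sum_{i=1}^N\Vert\mathbf{a}_i\Vert^2$.

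\emph{Second claim.} Here I apply the first claim to $\mathbf{A}=\mathbf{X}_t(\mathbf{I}-\mathbf{Q})$, which lies in $\mathbb{R}^{d\times N}$ because the $i$-th column of $\mathbf{X}_t$ is $x^i_t$ (\textbf{Remark}~\ref{rmk:matrixs}). It suffices to identify its columns.
\begin{itemize}
\item Since $\mathbf{Q}=\frac{1}{N}\mathbf{1}\mathbf{1}^\top$, the product $\mathbf{X}_t\mathbf{1}=\sum_{j=1}^N x^j_t=N\bar{x}_t$.
\item Hence $\mathbf{X}_t\mathbf{Q}=\bar{x}_t\mathbf{1}^\top$, and every column of $\mathbf{X}_t\mathbf{Q}$ equals $\bar{x}_t$.
\item Therefore the $i$-th column of $\mathbf{X}_t(\mathbf{I}-\mathbf{Q})=\mathbf{X}_t-\bar{x}_t\mathbf{1}^\top$ is $x^i_t-\bar{x}_t$.
\end{itemize}
Substituting these columns into the first claim yields $\Vert\mathbf{X}_t(\mathbf{I}-\mathbf{Q})\Vert_\mathfrak{F}^2=\sum_{i=1}^N\Vert x^i_t-\bar{x}_t\Vert^2$.

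I expect no substantive obstacle. The one point needing care is the side on which $\mathbf{I}-\mathbf{Q}$ multiplies. Agents index the columns, so right multiplication by $\mathbf{Q}$ averages across agents, which produces $\bar{x}_t$. Left multiplication would instead average across the coordinates of each $x^i_t$, which is not what is wanted. This orientation also matches how $\mathbf{W}$ and $\mathbf{I}-\mathbf{Q}$ act on the right of $\mathbf{X}_t$ in \textbf{Fact}~\ref{fact:dsm}.
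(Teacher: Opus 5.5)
Your proof is correct. The paper states this as a standard fact without proof, and your column-by-column verification is exactly the routine argument it implicitly relies on: the Frobenius norm squared is the sum of squared column norms, and $\mathbf{X}_t\mathbf{Q}=\bar{x}_t\mathbf{1}^\top$, so the $i$-th column of $\mathbf{X}_t(\mathbf{I}-\mathbf{Q})$ is $x^i_t-\bar{x}_t$.
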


  \begin{lem}\label{lem:G_diff}
    Assume \textbf{Assumptions}~\ref{assum:smooth}$\sim$\ref{assum:dsm} hold. Let $S_0 =  \sum_{i=1}^{N} \sum_{j \in N(i)} \frac{1}{w_{ij}^2}$. In each round $t$, we have
    \begin{align} \label{eq:G_diff}
      \mathbb{E}\left[ \left\Vert \mathbf{\tilde{G}}_{t}-\mathbf{G}_{t} \right\Vert_\mathfrak{F}^2 \right] 
      \leq & \frac{6S_0C^2}{N^2} + \frac{6S_0\sigma^2C^2d}{B^2N^2} + \frac{27\alpha C^2N}{8} + 20L^2\sum_{i=1}^N \mathbb{E} \left\Vert  x_{t-1}^i - \bar{x}_{t-1} \right\Vert^2 \nonumber\\ 
      & + 10N\xi^2 + 10N\kappa^2  + 10N\mathbb{E} \left\Vert \frac{1}{N} \sum_{k=1}^{N}\nabla f_k(x_{t-1}^k)\right\Vert^2
    \end{align}
  \end{lem}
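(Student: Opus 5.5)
The plan is to write the Frobenius norm column-wise via Fact~\ref{fact:fnorm_to_l2norm}, so that
\[
\mathbb{E}\bigl\Vert \mathbf{\tilde{G}}_t-\mathbf{G}_t\bigr\Vert_\mathfrak{F}^2=\sum_{i=1}^N \mathbb{E}\Bigl\Vert \tilde{g}^i_t-\tfrac{1}{B}\textstyle\sum_b g^{ii}_{t,b}\Bigr\Vert^2 ,
\]
and then bound each column separately. For each agent $i$ I apply $\Vert a-b\Vert^2\le 2\Vert a\Vert^2+2\Vert b\Vert^2$. This produces a calibrated-gradient part $2\mathbb{E}\Vert\tilde{g}^i_t\Vert^2$ and a self-gradient part $2\mathbb{E}\Vert\frac{1}{B}\sum_b g^{ii}_{t,b}\Vert^2$.

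For the calibrated part I reuse the per-agent estimates from the proof of Lemma~\ref{lem:avg_clbr_grad}, but without the outer $1/N$ averaging. Write $\tilde{g}^i_t$ as three pieces: the clipped cross-gradient term, the noise term $\frac{1}{BN}\sum_j z^{ij}_t/\sqrt{w_{ij}}$, and the calibration term $\alpha\sum_j w_{ij}\mathcal{C}^{ij}_t\hat{g}^{ii}_t$. The factor-3 splitting and the bounds $\Vert\hat g\Vert\le C$, $\mathbb{E}\Vert z\Vert^2=\sigma^2C^2d$, together with Jensen and $\mathcal{C}^{ij}_t\le 3/4$, then give
\[
\sum_i \mathbb{E}\Vert\tilde g^i_t\Vert^2\le \frac{3S_0C^2}{N^2}+\frac{3S_0\sigma^2C^2d}{B^2N^2}+\frac{27\alpha^2C^2N}{16}.
\]
Doubling this yields the first three terms of (\ref{eq:G_diff}). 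The statement writes $\alpha$ rather than $\alpha^2$; I will follow the computation and simply note the discrepancy.

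For the self-gradient part I decompose
\[
\tfrac{1}{B}\textstyle\sum_b g^{ii}_{t,b}=\Bigl(\tfrac{1}{B}\sum_b g^{ii}_{t,b}-\nabla f_i(x^i_{t-1})\Bigr)+\bigl(\nabla f_i(x^i_{t-1})-\nabla f_i(\bar x_{t-1})\bigr)+\bigl(\nabla f_i(\bar x_{t-1})-\nabla\mathcal{F}(\bar x_{t-1})\bigr)+\bigl(\nabla\mathcal{F}(\bar x_{t-1})-\tfrac1N\textstyle\sum_k\nabla f_k(x^k_{t-1})\bigr)+\tfrac1N\textstyle\sum_k\nabla f_k(x^k_{t-1}),
\]
and use $\Vert\sum_{m=1}^5 a_m\Vert^2\le 5\sum_m\Vert a_m\Vert^2$. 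The five pieces are handled as follows:
\begin{itemize}
\item The first piece is bounded by $\xi^2$ through Assumption~\ref{assum:variance}; this is crude, ignoring the $1/B$ factor.
\item The second piece is at most $L^2\Vert x^i_{t-1}-\bar x_{t-1}\Vert^2$ by Assumption~\ref{assum:smooth}.
\item The third piece sums over $i$ to at most $N\kappa^2$.
\item The fourth piece is at most $\frac{L^2}{N}\sum_k\Vert x^k_{t-1}-\bar x_{t-1}\Vert^2$ by Jensen and smoothness. Summed over $i$, it merges with the second to give $2L^2\sum_i\Vert x^i_{t-1}-\bar x_{t-1}\Vert^2$.
\item The fifth piece is kept as is.
\end{itemize}
Multiplying by $5$ and then by the outer factor $2$ gives $20L^2\sum_i\mathbb{E}\Vert x^i_{t-1}-\bar x_{t-1}\Vert^2+10N\xi^2+10N\kappa^2+10N\mathbb{E}\Vert\frac1N\sum_k\nabla f_k(x^k_{t-1})\Vert^2$, which matches the remaining terms.

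I expect the main obstacle to be bookkeeping the constants in the calibrated part. The cross-gradient and noise terms carry weights $1/(\sqrt{w_{ij}}N)$, and turning the squared sum over $j\in\mathcal{N}_i$ into $\sum_j w_{ij}^{-2}$ requires Cauchy--Schwarz with a factor $N_i\le N$. That factor must be absorbed consistently with the lemma's use of $S_0$, as was done (somewhat loosely) in Lemma~\ref{lem:avg_clbr_grad}. I would mirror that lemma's steps exactly so that the resulting powers of $N$ agree with (\ref{eq:G_diff}).
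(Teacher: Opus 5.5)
Your proposal is correct and follows the paper's proof step for step. It uses the same implicit $\Vert\mathbf{\tilde{G}}_t-\mathbf{G}_t\Vert_\mathfrak{F}^2\le 2\Vert\mathbf{\tilde{G}}_t\Vert_\mathfrak{F}^2+2\Vert\mathbf{G}_t\Vert_\mathfrak{F}^2$ split, the same three-piece bound on $\sum_i\mathbb{E}\Vert\tilde g^i_t\Vert^2$ taken from Lemma~\ref{lem:avg_clbr_grad}, and the same five-term decomposition of the self-gradient. You are also right that the calibration term should carry $\alpha^2$ rather than $\alpha$; the paper's proof makes the same slip. The factor $N_i$ you worry about never appears: because $\sum_{j\in\mathcal{N}_i}w_{ij}=1$, Jensen's inequality with weights $w_{ij}$ gives $\Vert\sum_{j}w_{ij}^{-1/2}y_j\Vert^2\le\sum_j w_{ij}^{-2}\Vert y_j\Vert^2$ directly, and this is exactly the step used in Lemma~\ref{lem:avg_clbr_grad}.
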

  \begin{proof}
    Supposing $\Lambda^{ij}_{t,b} \overset{\Delta}{=} \min\left\{1,\frac{C}{\left\Vert g^{ij}_{t,b} \right\Vert}\right\}$, we have
    \begin{align}\label{eq:sum_selfgrad}
      & \sum_{i=1}^N \mathbb{E}\left\Vert \frac{1}{B} \sum_b {g}^{ii}_{t,b} \right\Vert^2 \nonumber \\
      \leq & \sum_{i=1}^N \mathbb{E} \left\Vert
        \begin{aligned} 
          & \frac{1}{B} \sum_b \nabla F \left(x^i_{t-1,b};\zeta^i_{t-1,b} \right) - \nabla f_i(x_{t-1}^i) + \nabla f_i(x_{t-1}^i) - \nabla f_i(\bar{x}_{t-1}) + \nabla f_i(\bar{x}_{t-1}) \\
          - & \frac{1}{N} \sum_{k=1}^{N}\nabla f_k(\bar{x}_{t-1}) + \frac{1}{N} \sum_{k=1}^{N}\nabla f_k(\bar{x}_{t-1}) - \frac{1}{N}\sum_{k=1}^{N}\nabla f_k(x_{t-1}^k) + \frac{1}{N}\sum_{k=1}^{N}\nabla f_k(x_{t-1}^k) \\
        \end{aligned} 
      \right\Vert^2 \nonumber\\
      \overset{\tcircle{1}}{\leq} & 5 \sum_{i=1}^N \mathbb{E} \left\Vert  \frac{1}{B} \sum_b \nabla F \left(x^i_{t-1,b};\zeta^i_{t-1,b} \right) - \frac{1}{B} \sum_b \nabla f_i(x_{t-1}^i)\right\Vert^2  + 5\sum_{i=1}^N \mathbb{E} \bigg\|  \nabla f_i(x_{t-1}^i) - \nabla f_i(\bar{x}_{t-1}) \bigg\|^2  \nonumber\\
      & + 5\sum_{i=1}^N \mathbb{E} \left\Vert \frac{1}{N} \sum_{k=1}^{N}\nabla f_k(x_{t-1}^k)\right\Vert^2 + 5\sum_{i=1}^N \mathbb{E} \left\Vert \nabla f_i(\bar{x}_{t-1}) - \frac{1}{N} \sum_{k=1}^{N}\nabla f_k(\bar{x}_{t-1}) \right\Vert^2 \nonumber\\
      & + 5\sum_{i=1}^N \mathbb{E} \left\Vert \frac{1}{N} \sum_{k=1}^{N}\nabla f_k(\bar{x}_{t-1}) -  \frac{1}{N}\sum_{k=1}^{N}\nabla f_k(x_{t-1}^k) \right\Vert^2 \nonumber\\
      \overset{\tcircle{2}}{\leq} & 5N\xi^2 + 5N\kappa^2 + 10L^2 \sum_{i=1}^N \mathbb{E} \left\Vert  x_{t-1}^i - \bar{x}_{t-1} \right\Vert^2 + 5N\mathbb{E} \left\Vert \frac{1}{N} \sum_{k=1}^{N}\nabla f_k(x_{t-1}^k)\right\Vert^2
    \end{align}
    where we have $\tcircle{1}$ since $\left\Vert \sum_{i=1}^{N} \mathbf{A}_i \right\Vert^2 \leq N \sum_{i=1}^{N} \left\Vert\mathbf{A}_i \right\Vert^2$, and $\tcircle{2}$ by considering \textbf{Assumptions}~\ref{assum:smooth}-2 as well as the fact that $\left\Vert \frac{1}{N} \sum_{k=1}^{N}\nabla f_k(\bar{x}_{t-1}) -  \frac{1}{N}\sum_{k=1}^{N}\nabla f_k(x_{t-1}^k) \right\Vert^2 \leq \frac{1}{N} \left\Vert  \nabla f_i(x_{t-1}^i) - \nabla f_i(\bar{x}_{t-1}) \right\Vert^2$.
    %
    %
    Using the technique in the proof of \textbf{Lemma}~\ref{lem:avg_clbr_grad}, we get
    \begin{align} \label{eq:sum_clbrgrad}
      \sum_{i=1}^N \mathbb{E} \left[ \left\Vert  {\tilde{g}}^{i}_t \right\Vert^2 \right]
      = & \sum_{i=1}^N \mathbb{E}\left\Vert \sum_{j \in \mathcal{N}_i} \left(\frac{1}{\sqrt{w_{ij}}BN} \left(\sum_b \Lambda^{ij}_{t,b} g^{ij}_{t,b} + z^{ij}_t \right) + \frac{\alpha w_{ij}{\mathcal{C}}^{ij}_t}{B} \sum_b \Lambda^{ii}_{t,b} {g}^{ii}_{t,b} \right) \right\Vert^2 \nonumber\\
      \leq & \frac{3}{B^2N^2} \sum_{i=1}^{N} \mathbb{E} \left\Vert \sum_{j \in \mathcal{N}_i} \sum_b \frac{\Lambda^{ij}_{t,b} g^{ij}_{t,b} }{\sqrt{w_{ij}}} \right\Vert^2 
      + \frac{3}{B^2N^2} \sum_{i=1}^{N} \mathbb{E} \left\Vert \sum_{j \in \mathcal{N}_i} \frac{z^{ij}_t }{\sqrt{w_{ij}}} \right\Vert^2 \nonumber \\
      & + \frac{3}{B^2} \sum_{i=1}^{N} \mathbb{E} \left\Vert \sum_{j \in \mathcal{N}_i} \alpha w_{ij} {\mathcal{C}}^{ij}_t \sum_b \Lambda^{ii}_{t,b} {g}^{ii}_{t,b} \right\Vert^2 \nonumber\\
      \leq & \frac{3S_0C^2}{N^2} + \frac{3S_0\sigma^2C^2d}{B^2N^2} + \frac{27\alpha C^2N}{16}.
    \end{align}
    We finally complete the proof by combining (\ref{eq:sum_clbrgrad}) and (\ref{eq:sum_selfgrad}).
  \end{proof}

  Next, we give \textbf{Lemma}~\ref{lem:consensus_error} of DPDL, which describes that the average expected difference of all agent parameters, which is also known as \textit{overall consensus error} describing the inconsistency of model parameters between multiple agents, has a composite upper bound. From \textbf{Lemma~\ref{lem:consensus_error}} we can see that the consensus error of parameters is mainly affected by four parts: intra-gradient variance, inter-gradient variance, degree of Gaussian noises, and squared norms of average gradients. 
  \begin{lem}~\label{lem:consensus_error}
    Assume \textbf{Assumptions}~\ref{assum:smooth}-3 hold. Let $S_0 =  \sum_{i=1}^{N} \sum_{j \in \mathcal{N}_i} \frac{1}{w_{ij}^2}$ and $\bar{x}_t = \frac{1}{N} \sum^N_{i=1} x^i_t$. When momentum coefficient $\beta$ and learning rate $\eta$ satisfy
    \begin{equation}
      \eta \leq \frac{(1-\beta)(1-\rho)}{2\sqrt{30}L}
    \end{equation}
    we have
    \begin{align} \label{eq:consensus_error}
      \frac{1}{N}\sum_{t=1}^{T}\sum_{i=1}^N\mathbb{E}\left[\left\Vert\bar{{x}}_{t-1}-{x}^i_{t-1}\right\Vert^2\right]
      \leq& \frac{\left(24S_0C^2 + \frac{24}{B^2}S_0 \sigma^2 C^2d \right)\eta^2} {N^3(1-\beta)^2(1-\sqrt{\rho})^2} T + \frac{\left(\frac{27\alpha C^2}{2} + 60B^2(\xi^2 +\kappa^2) \right) \eta^2} {(1-\beta)^2(1-\sqrt{\rho})^2B^2}T \nonumber\\
      & + \frac{60\eta^2}{(1-\beta)^2(1-\sqrt{\rho})^2}\sum_{t=1}^{T} \mathbb{E} \left\Vert \frac{1}{N} \sum_{k=1}^{N}\nabla f_k(x_{t-1}^k)\right\Vert^2        
    \end{align}
  \end{lem}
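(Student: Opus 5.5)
We prove the bound by the standard consensus-error argument for decentralized momentum SGD (as in \cite{YuJY-ICML19}), carried out in the matrix notation of \textbf{Remark}~\ref{rmk:matrixs}. First we unroll the recursion. Lines~\ref{ln:updatemompara}$\sim$\ref{ln:aggremodel} give $\mathbf{V}_t=(\beta\mathbf{V}_{t-1}+\mathbf{\tilde G}_t)\mathbf{W}$ and $\mathbf{X}_t=(\mathbf{X}_{t-1}-\eta\mathbf{V}_{t-1}\beta-\eta\mathbf{\tilde G}_t)\mathbf{W}$. Taking $\mathbf{X}_0$ identical across agents and $\mathbf{V}_0=0$, induction yields
\begin{equation*}
\mathbf{X}_t=\mathbf{X}_0\mathbf{W}^t-\eta\sum_{\tau=1}^{t}\Big(\sum_{s=\tau}^{t}\beta^{s-\tau}\Big)\mathbf{\tilde G}_\tau\mathbf{W}^{t-\tau+1}.
\end{equation*}
Multiplying by $(\mathbf{I}-\mathbf{Q})$ kills the $\mathbf{X}_0$ term. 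Using \textbf{Fact}~\ref{fact:dsm} and $\sum_s\beta^{s-\tau}\le \frac{1}{1-\beta}$, we then get $\|\mathbf{X}_{t-1}(\mathbf{I}-\mathbf{Q})\|_\mathfrak{F}\le \frac{\eta}{1-\beta}\sum_{\tau<t}\sqrt{\rho}^{\,t-\tau}\|\mathbf{\tilde G}_\tau\|_\mathfrak{F}$, up to the exact index shift.

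Next we split $\mathbf{\tilde G}_\tau=(\mathbf{\tilde G}_\tau-\mathbf{G}_\tau)+\mathbf{G}_\tau$ and square. \textbf{Fact}~\ref{fact:fnorm}, together with $ab\le\frac12(a^2+b^2)$ and the geometric-sum bound $\sum_\tau\sqrt\rho^{\,k}\le\frac{1}{1-\sqrt\rho}$, gives
\begin{equation*}
\mathbb{E}\|\mathbf{X}_{t-1}(\mathbf{I}-\mathbf{Q})\|_\mathfrak{F}^2\le\frac{2\eta^2}{(1-\beta)^2(1-\sqrt\rho)}\sum_{\tau<t}\sqrt\rho^{\,t-\tau}\,\mathbb{E}\big(\|\mathbf{\tilde G}_\tau-\mathbf{G}_\tau\|_\mathfrak{F}^2+\|\mathbf{G}_\tau\|_\mathfrak{F}^2\big).
\end{equation*}
Summing over $t=1,\dots,T$ and swapping the order of summation produces a second factor $\frac{1}{1-\sqrt\rho}$, which accounts for the $(1-\sqrt\rho)^{-2}$ in the claim. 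We then substitute \textbf{Lemma}~\ref{lem:G_diff} for the first term and \eqref{eq:sum_selfgrad} for $\sum_i\mathbb{E}\|\cdot\|^2=\mathbb{E}\|\mathbf{G}_\tau\|_\mathfrak{F}^2$. This expresses everything through constants (in $S_0,C,\sigma,\alpha,\xi,\kappa$), through $\mathbb{E}\|\frac1N\sum_k\nabla f_k(x^k_{\tau-1})\|^2$, and through the consensus error $L^2\sum_i\mathbb{E}\|x^i_{\tau-1}-\bar x_{\tau-1}\|^2$ with total coefficient $30L^2$.

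Finally we absorb the self-referential term. By \textbf{Fact}~\ref{fact:fnorm_to_l2norm} the left side is exactly $\sum_t\sum_i\mathbb{E}\|x^i_{t-1}-\bar x_{t-1}\|^2$, and the consensus error reappears on the right multiplied by roughly $\frac{60L^2\eta^2}{(1-\beta)^2(1-\sqrt\rho)^2}$. The condition $\eta\le\frac{(1-\beta)(1-\rho)}{2\sqrt{30}L}$, combined with $1-\rho\le 2(1-\sqrt\rho)$, makes this coefficient at most $\frac12$. Moving it to the left and doubling then gives the stated bound after dividing by $N$, with constants $24,27/2,60$ arising from $2\cdot 2\cdot(6,27/8,10,\dots)$ and the $1/N$ normalization.

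The main obstacle is this absorption step together with the bookkeeping of constants: the coefficient of the consensus term must be shown to be at most $\frac12$ under the stated step-size condition, and the $N$-powers and $B^2$ factors must be tracked so that they match the claim. One point to check is that the noise term in \textbf{Lemma}~\ref{lem:G_diff} carries $d\sigma^2/B^2N^2$, which after dividing by $N$ gives the stated $N^{-3}$ scaling. I would first verify the absorption inequality and only then settle the exact numerical constants, accepting possibly looser constants if the paper's exact ones do not fall out.
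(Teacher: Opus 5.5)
Your route is the paper's route. You unroll the recursion, split $\mathbf{\tilde{G}}_\tau=(\mathbf{\tilde{G}}_\tau-\mathbf{G}_\tau)+\mathbf{G}_\tau$, and use \textbf{Lemma}~\ref{lem:G_diff} and (\ref{eq:sum_selfgrad}), giving a consensus coefficient of $20L^2+10L^2$. You then sum over $t$, swap the sums, and absorb. Your unrolled formula with $\sum_{s=\tau}^{t}\beta^{s-\tau}$ and your Cauchy--Schwarz step are cleaner than the paper's: the paper's recursion for $\mathbf{X}_t$ is off by an index, harmlessly, and it appeals to an independence it does not need. The gap is the absorption step, and the paper's proof has the identical gap. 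The hypothesis $\eta\le\frac{(1-\beta)(1-\rho)}{2\sqrt{30}L}$ only gives
\[ \frac{60L^2\eta^2}{(1-\beta)^2(1-\sqrt{\rho})^2}\le\frac{(1-\rho)^2}{2(1-\sqrt{\rho})^2}=\frac{(1+\sqrt{\rho})^2}{2}. \]
The inequality $1-\rho\le 2(1-\sqrt{\rho})$ you invoke bounds this by $2$, not by $\frac12$. Reaching $\frac12$ would require $1-\rho\le 1-\sqrt{\rho}$, which is false for $0<\rho<1$. Even if you keep the $\sqrt{\rho}$ factors from both geometric sums, the coefficient at the largest allowed step size is $\frac{\rho(1+\sqrt{\rho})^2}{2}$, which tends to $2$ as $\rho\to 1$. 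Once it is $\ge 1$, moving the consensus term to the left-hand side gives nothing. So your last step does not follow from the stated condition.

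There are two ways to repair it.
\begin{itemize}
\item If you keep the absorption, the condition must be $\eta\le\frac{(1-\beta)(1-\sqrt{\rho})}{2\sqrt{30}L}$. Then the coefficient is at most $\frac12$ and your bookkeeping goes through.
\item Under the lemma's own hypothesis, it is better to drop the split. Per-sample clipping and the bounded noise variance already bound $\mathbb{E}\|\mathbf{\tilde{G}}_\tau\|_\mathfrak{F}^2=\sum_i\mathbb{E}\|\tilde{g}^i_\tau\|^2$ by the constant in (\ref{eq:sum_clbrgrad}), which contains no consensus or gradient-norm term. Insert this into your bound $\mathbb{E}\|\mathbf{X}_{t-1}(\mathbf{I}-\mathbf{Q})\|_\mathfrak{F}^2\le\frac{\eta^2}{(1-\beta)^2(1-\sqrt{\rho})}\sum_{\tau<t}\sqrt{\rho}^{\,t-\tau}\,\mathbb{E}\|\mathbf{\tilde{G}}_\tau\|_\mathfrak{F}^2$, sum over $t$, and divide by $N$. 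You get $\frac{\eta^2T}{(1-\beta)^2(1-\sqrt{\rho})^2}\big(\frac{3S_0C^2}{N^3}+\frac{3S_0\sigma^2C^2d}{B^2N^3}+\frac{27\alpha C^2}{16}\big)$ with no step-size condition at all. This is dominated term by term by the right-hand side the paper's derivation actually produces ($3\le 24$, $\frac{27}{16}\le\frac{27}{2}$, remaining terms nonnegative).
\end{itemize}
One caveat for your constant matching: the displayed statement divides the $\frac{27\alpha C^2}{2}$ term by $B^2$. That is a factoring typo, since the paper's last display before the conclusion has $\frac{27\alpha C^2}{4}$ with no $B$. Neither route produces the extra $1/B^2$, so do not try to reproduce it.
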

  \begin{proof}
    Given the update rule $\mathbf{V}_t = \mathbf{\tilde{V}}_t \mathbf{W}$ and $\mathbf{\tilde{V}}_t = \beta \mathbf{V}_{t-1} + \mathbf{\tilde{G}}_{t}$ in matrix form (see \textbf{Algorithm}~\ref{alg:dpdl}), we have
    \begin{align}\label{eq:v_matrix}
      \mathbf{V}_t &= \mathbf{\tilde{V}}_t \mathbf{W} = \left(\beta \mathbf{V}_{t-1} + \mathbf{\tilde{G}}_{t} \right) \mathbf{W} = \beta^{t} \mathbf{V}_0 \mathbf{W}^t + \sum_{\tau = 1}^{t} \beta^{t-\tau} \mathbf{\tilde{G}}_\tau \mathbf{W}^{t-\tau+1} = \sum_{\tau = 1}^{t} \beta^{t-\tau} \mathbf{\tilde{G}}_\tau \mathbf{W}^{t-\tau+1}
    \end{align}
    Moreover, recalling the definition of $\mathbf{X}_t$ in (\ref{rmk:matrixs}), we have
    \begin{equation} \label{eq:x_matrix}
      \mathbf{X}_t = \mathbf{\tilde{X}}_t \mathbf{W} =  \left(\mathbf{X}_{t-1} - \eta \mathbf{\tilde{V}}_t \right) \mathbf{W} = \mathbf{X}_0 \mathbf{W}^t - \eta \sum_{\tau = 1}^{t} \mathbf{V}_{t-\tau} \mathbf{W}^{t}        
    \end{equation}
    Multiplying both sides of (\ref{eq:x_matrix}) by $\mathbf{I}-\mathbf{Q}$ (where $\mathbf{Q}$ is defined in \textbf{Fact}~\ref{fact:dsm}), we have
    \begin{align*}
      \mathbf{X}_t(\mathbf{I}-\mathbf{Q}) &=\mathbf{X}_0 \mathbf{W}^t(\mathbf{I}-\mathbf{Q}) - \eta \sum_{\tau = 1}^{t} \mathbf{V}_{t-\tau} \mathbf{W}^{t}(\mathbf{I}-\mathbf{Q}) \\
      & \overset{\tcircle{1}}{=} - \eta \sum_{\tau = 1}^{t} \mathbf{V}_{t-\tau} \mathbf{W}^{t}(\mathbf{I}-\mathbf{Q}) \\
      & \overset{\tcircle{2}}{=} -\eta\sum_{\tau=1}^t \sum_{j=1}^{\tau}\mathbf{\tilde{G}}_j\beta^{\tau-j}\mathbf{W}^{\tau - j +1} (\mathbf{I}-\mathbf{Q}) \\
      %
      %
      & = -\eta\sum_{\tau=1}^{t}\frac{1-\beta^{t-\tau}}{1-\beta}\mathbf{\tilde{G}}_{\tau} \mathbf{W}^{t-\tau +1}(\mathbf{I}-\mathbf{Q}).        
    \end{align*}
    where we have $\tcircle{1}$ since $\mathbf{X}_0(\mathbf{I}-\mathbf{Q}) = \mathbf{0}$, and $\tcircle{2}$ according to (\ref{eq:v_matrix}). Therefore, for any $t \geq 1$, we have
    \begin{align} \label{eq:x_minus}
      \mathbb{E}\left[\left\Vert \mathbf{X}_t(\mathbf{I}-\mathbf{Q})\right\Vert^2_\mathfrak{F}\right] =& \eta^2 \mathbb{E} \left\Vert\sum_{\tau=1}^{t}\frac{1-\beta^{t-\tau}}{1-\beta}\mathbf{\tilde{G}}_{\tau}(\mathbf{I}-\mathbf{Q})\mathbf{W}^{t-\tau+1}\right\Vert^2_\mathfrak{F} \nonumber\\
      \overset{\tcircle{1}}{\leq}& \underbrace{2\eta^2\mathbb{E} \left\Vert\sum_{\tau=1}^{t}\frac{1-\beta^{t-\tau}}{1-\beta}(\mathbf{\tilde{G}}_{\tau}-\mathbf{G}_{\tau})(\mathbf{I}-\mathbf{Q})\mathbf{W}^{t-\tau+1}\right\Vert^2_\mathfrak{F} }_{\mathbf{\Rmnum{1}}} \nonumber\\
      & + \underbrace{2\eta^2\mathbb{E} \left\Vert\sum_{\tau=1}^{t}\frac{1-\beta^{t-\tau}}{1-\beta}\mathbf{G}_{\tau}(\mathbf{I}-\mathbf{Q})\mathbf{W}^{t-\tau+1}\right\Vert^2_\mathfrak{F} }_{\mathbf{\Rmnum{2}}}
    \end{align}
    where $\tcircle{1}$ follows from the inequality $\Vert\mathbf{A}+\mathbf{B}\Vert_\mathfrak{F}^2\leq 2\Vert\mathbf{A} \Vert_\mathfrak{F}^2+2\Vert\mathbf{B}\Vert_\mathfrak{F}^2$. 
    We then derive the upper bound of term \textbf{\Rmnum{1}} as follows
    %
    %
    \begin{align}\label{eq:x_minus_G_minus}
      & \mathbb{E} \left\Vert\sum_{\tau=1}^{t}\frac{1-\beta^{t-\tau}}{1-\beta}\left(\mathbf{\tilde{G}}_{\tau}-\mathbf{G}_{\tau}\right)(\mathbf{I}-\mathbf{Q})\mathbf{W}^{t-\tau+1}\right\Vert^2_\mathfrak{F}  \nonumber\\ 
      \overset{\tcircle{1}}{\leq} & \sum_{\tau=1}^{t}\sum_{\tau^\prime=1}^{t}\mathbb{E}  \left\Vert\frac{1-\beta^{t-\tau}}{1-\beta}\left(\mathbf{\tilde{G}}_{\tau}-\mathbf{G}_{\tau}\right)(\mathbf{I}-\mathbf{Q})\mathbf{W}^{t-\tau+1}\right\Vert_\mathfrak{F} \cdot \left\Vert\frac{1-\beta^{t-\tau^\prime}}{1-\beta}\left(\mathbf{\tilde{G}}_{\tau^\prime}-\mathbf{G}_{\tau^\prime}\right)(\mathbf{I}-\mathbf{Q})\mathbf{W}^{t-\tau^\prime+1}\right\Vert_\mathfrak{F} \nonumber\\
      %
      %
      \overset{\tcircle{2}}{\leq} & \sum_{\tau=1}^{t}\sum_{\tau^\prime=1}^{t} \frac{\rho^{t+1-\frac{\tau+\tau^\prime}{2}}}{(1-\beta)^2} \mathbb{E}\left[\left\Vert\mathbf{\tilde{G}}_{\tau}-\mathbf{G}_{\tau}\right\Vert_\mathfrak{F}  \left\Vert\mathbf{\tilde{G}}_{\tau}-\mathbf{G}_{\tau^\prime}\right\Vert_\mathfrak{F}\right] \nonumber\\
      %
      %
      \overset{\tcircle{3}}{\leq} & \frac{1}{(1-\beta)^2}\sum_{\tau=1}^{t}\sum_{\tau^\prime=1}^{t}\rho^{t+1-\frac{\tau+\tau^\prime}{2}}\mathbb{E}\left[\left\Vert\mathbf{\tilde{G}}_{\tau}-\mathbf{G}_{\tau}\right\Vert_\mathfrak{F}^2 \right] \overset{\tcircle{4}}{\leq} \frac{1}{(1-\beta)^2(1-\sqrt{\rho})}\sum_{\tau=1}^{t}\rho^{\frac{t-\tau+1}{2}}\mathbb{E}\left[\left\Vert\mathbf{\tilde{G}}_{\tau}-\mathbf{G}_{\tau}\right\Vert_\mathfrak{F}^2 \right] \nonumber\\
      \overset{\tcircle{5}}{\leq} & \frac{6S_0C^2}{N^2(1-\beta)^2(1-\sqrt{\rho})^2} 
      + \frac{6S_0\sigma^2C^2d}{N^2B^2(1-\beta)^2(1-\sqrt{\rho})^2} 
      + \frac{27N\alpha C^2}{8(1-\beta)^2(1-\sqrt{\rho})^2} 
      + \frac{10N\xi^2 + 10N\kappa^2} {(1-\beta)^2(1-\sqrt{\rho})^2} \nonumber\\
      & + \frac{20L^2}{(1-\beta)^2(1-\sqrt{\rho})}\sum_{\tau=1}^{t}\rho^{\frac{t-\tau+1}{2}}\sum_{i=1}^N \mathbb{E} \left\Vert  x_{t-1}^i - \bar{x}_{t-1} \right\Vert^2 
      + \frac{10N}{(1-\beta)^2(1-\sqrt{\rho})}\sum_{\tau=1}^{t}\rho^{\frac{t-\tau+1}{2}}\mathbb{E} \left\Vert \frac{1}{N} \sum_{k=1}^{N}\nabla f_k(x_{t-1}^k)\right\Vert^2
    \end{align}
    where $\tcircle{1}$ follows from \textbf{Fact}~\ref{fact:fnorm}, $\tcircle{2}$ follows from the inequality $ab \leq \frac{1}{2}(a^2+b^2)$ for any two real numbers $a$ and $b$ and Fact~\ref{fact:dsm}, $\tcircle{3}$ follows from the fact that $\mathbf{G}_{\tau}$ and $\mathbf{G}_{\tau^\prime}$ are independent of each other, $\tcircle{4}$ holds as $\sum_{\tau=1}^{t} \rho^{\frac{t-\tau+1}{2}} \leq \frac{\sqrt{\rho}}{1-\sqrt{\rho}}$, and $\tcircle{5}$ follows from \textbf{Lemma}~\ref{lem:G_diff} and  $\sum_{\tau_1=1}^{t}\rho^{t+1-\frac{\tau_1+\tau}{2}} \leq (1-\sqrt{\rho})^{-1}\rho^{\frac{t-\tau+1}{2}}$. We next show the upper bound of term \textbf{\Rmnum{2}} by the similar technique as follows:
    \begin{align} \label{eq:x_minus_G}
      & \mathbb{E} \left\Vert \sum_{\tau=1}^{t} \frac{1-\beta^{t-\tau}}{1-\beta}\mathbf{G}_{\tau}(\mathbf{I}-\mathbf{Q})\mathbf{W}^{t-\tau+1}\right\Vert^2_\mathfrak{F}  \nonumber\\
      \leq & \sum_{\tau=1}^{t}\sum_{\tau^\prime=1}^{t} \mathbb{E} \left[ \left\Vert\frac{1-\beta^{t-\tau}}{1-\beta} \mathbf{G}_{\tau}(\mathbf{I}-\mathbf{Q})\mathbf{W}^{t-\tau+1}\right\Vert_\mathfrak{F} \cdot \left\Vert \frac{1-\beta^{t-\tau^\prime}}{1-\beta}\mathbf{G}_{\tau^\prime}(\mathbf{I}-\mathbf{Q})\mathbf{W}^{t-\tau^\prime+1}\right\Vert_\mathfrak{F} \right] \nonumber\\
      \leq & \frac{1}{(1-\beta)^2}\sum_{\tau=1}^{t}\sum_{\tau^\prime=1}^{t}\rho^{t+1-\frac{\tau+\tau^\prime}{2}}\mathbb{E}\left[ \left\Vert\mathbf{G}_{\tau}\right\Vert_\mathfrak{F} \left\Vert \mathbf{G}_{\tau^\prime}\right\Vert_\mathfrak{F}\right] \nonumber\\
      = & \frac{1}{(1-\beta)^2}\sum_{\tau=1}^{t}\sum_{\tau^\prime=1}^{t}\rho^{t+1-\frac{\tau+\tau^\prime}{2}}\mathbb{E}\left[ \left\Vert\mathbf{G}_{\tau}\right\Vert_\mathfrak{F}^2\right] 
      \leq \frac{1}{(1-\beta)^2(1-\sqrt{\rho})}\sum_{\tau=1}^{t}\rho^{\frac{t-\tau + 1}{2}}\mathbb{E}\left[ \left\Vert\mathbf{G}_{\tau}\right\Vert_\mathfrak{F}^2\right].
    \end{align}
    According to the definition of $\mathbf{J}_{\tau}$ in \textbf{Remark}~\ref{rmk:matrixs}, for any $\tau \geq 1$, we have:
    \begin{align}\label{eq:x_minus_EG}
      \mathbb{E}\left[\left\Vert\mathbf{G}_{\tau}\right\Vert_\mathfrak{F}^2\right] \overset{\tcircle{1}}{\leq} &  5\mathbb{E}\left[\left\Vert\mathbf{G}_{\tau} - \mathbf{H}_{\tau}\right\Vert_\mathfrak{F}^2\right] + 5\mathbb{E}\left[\left\Vert\mathbf{H}_{\tau} - \mathbf{J}_{\tau}\right\Vert_\mathfrak{F}^2\right] + 5\mathbb{E}\left[\left\Vert\mathbf{J}_{\tau} - \mathbf{J}_{\tau}\mathbf{Q} \right\Vert_\mathfrak{F}^2\right] \nonumber\\
      & + 5\mathbb{E}\left[\left\Vert(\mathbf{J}_{\tau} - \mathbf{H}_{\tau})\mathbf{Q}\right\Vert_\mathfrak{F}^2\right] +5\mathbb{E}\left[\left\Vert\mathbf{H}_{\tau} \mathbf{Q}\right\Vert_\mathfrak{F}^2\right] \nonumber\\
      \overset{\tcircle{2}}{\leq} & 5N\xi^2 + 5L^2\mathbb{E}\left[\left\Vert{\mathbf{X}}_{\tau-1}(\mathbf{I} - \mathbf{Q})\right\Vert_\mathfrak{F}^2\right] + 5N\kappa^2  \nonumber\\
      & + 5L^2\mathbb{E}\left[\left\Vert {\mathbf{X}}_{\tau-1}(\mathbf{I} - \mathbf{Q})\right\Vert_\mathfrak{F}^2\right] + 5N\mathbb{E}\left\Vert\frac{1}{N}\sum_{k=1}^{N} \nabla f_k({x}^k_{\tau-1})\right\Vert^2 \nonumber\\ 
      & = 10L^2\mathbb{E}\left[\left\Vert{\mathbf{X}}_{\tau-1}(\mathbf{I} - \mathbf{Q})\right\Vert_\mathfrak{F}^2\right] + 5N\xi^2 + 5N\kappa^2  + 5N\mathbb{E}\left[\left\Vert\frac{1}{N}\sum_{k=1}^{N} \nabla f_k({x}^k_{\tau-1})\right\Vert^2\right] 
    \end{align}
    Therein, we have $\tcircle{1}$ by applying the inequality $\left\Vert \sum_{i=1}^{n} \mathbf{A}_i \right\Vert_\mathfrak{F}^2 \leq n \sum_{i=1}^{n} \left\Vert\mathbf{A}_i \right\Vert_\mathfrak{F}^2$, $\tcircle{2}$ by considering $\left\Vert\mathbf{G}_\tau-\mathbf{H}_\tau\right\Vert_\mathfrak{F}^2 = \\ \sum_{i=1}^N \left\Vert \nabla F({x}_{\tau-1}^i;\zeta_i) -\nabla f_i({x}^i_{\tau-1})\right\Vert^2\leq N\xi^2$ (where the inequality follows from \textbf{Assumption}~\ref{assum:variance}), $\left\Vert\mathbf{H}_{\tau} - \mathbf{J}_{\tau}\right\Vert_\mathfrak{F}^2 = \\ \sum_{i=1}^{N} \left\Vert \nabla f_i({x}^i_{\tau-1}) - \nabla f_i(\bar{{x}}_{\tau-1})\right\Vert^2 \leq L^2 \left\Vert{\mathbf{X}}_{\tau-1} (\mathbf{I} - \mathbf{Q})\right\Vert_\mathfrak{F}^2$ (where the inequality holds due to the smoothness of each \(f_i(\cdot)\) as shown in \textbf{Assumption}~\ref{assum:smooth}), $\left\Vert \mathbf{J}_{\tau} - \mathbf{J}_{\tau}\mathbf{Q}\right\Vert_\mathfrak{F}^2$ $= \sum_{i=1}^{N} \left\Vert\nabla f_i(\bar{{x}}_{\tau-1}) - \mathcal{F}(\bar{{x}}_{\tau-1})\right\Vert^2 \leq N\kappa^2$ (where the inequality follows from \textbf{Assumption}~\ref{assum:variance}), and the facts that $\left\Vert(\mathbf{J}_{\tau} - \mathbf{H}_{\tau})\mathbf{Q}\right\Vert_\mathfrak{F}^2 \leq \left\Vert\mathbf{H}_{\tau} - \mathbf{J}_{\tau}\right\Vert_\mathfrak{F}^2$ and $\mathbb{E}\left[\left\Vert\mathbf{H}_\tau \mathbf{Q}\right\Vert^2_\mathfrak{F}\right]\leq N\mathbb{E}\left[\left\Vert\frac{1}{N}\sum_{k=1}^N\nabla f_k({x}^k_{\tau-1})\right\Vert^2\right]$. By substituting (\ref{eq:x_minus_EG}) into (\ref{eq:x_minus_G}), we get
    \begin{align} \label{eq:x_minus_G_fin}
      &\mathbb{E}\left[\left\Vert\sum_{\tau=1}^{t}\frac{1-\beta^{t-\tau}}{1-\beta}\mathbf{G}_{\tau}(\mathbf{I}-\mathbf{Q})\mathbf{W}^{t-\tau+1}\right\Vert^2_\mathfrak{F}\right] \nonumber\\
      \leq& \frac{5N}{(1-\beta)^2(1-\sqrt{\rho})}\sum_{\tau=1}^{t}\rho^{\frac{t-\tau+1}{2}}  \mathbb{E}\left[\left\Vert\frac{1}{N}\sum_{k=1}^N \nabla f_k({x}^k_{\tau-1})\right\Vert^2 \right] \nonumber\\
      & + \frac{10L^2}{(1-\beta)^2(1-\sqrt{\rho})}\sum_{\tau=1}^{t}\rho^{\frac{t-\tau+1}{2}} \mathbb{E}\left[\big\Vert{{\mathbf{X}}}_{\tau-1}(\mathbf{I} - \mathbf{Q})\big\Vert_\mathfrak{F}^2\right] 
      + \frac{5\sqrt{\rho}N(\xi^2+\kappa^2)}{(1-\beta)^2(1-\sqrt{\rho})^2}
    \end{align}
    We then substitute (\ref{eq:x_minus_G_fin}) and (\ref{eq:x_minus_G_minus}) into (\ref{eq:x_minus}) and obtain
    \begin{align} \label{eq:x_minus_fin}
      \mathbb{E}\left[\left\Vert{\mathbf{X}}_t(\mathbf{I}-\mathbf{Q})\right\Vert^2_\mathfrak{F}\right] 
      \leq & \underbrace{ \frac{12S_0C^2 + \frac{12}{B^2}S_0\sigma^2C^2d}{N^2(1-\beta)^2(1-\sqrt{\rho})^2}\eta^2 
      + \frac{27\eta^2N\alpha C^2}{4(1-\beta)^2(1-\sqrt{\rho})^2} 
      + \frac{30N\xi^2 + 30N\kappa^2} {(1-\beta)^2(1-\sqrt{\rho})^2} \eta^2}_{p} \nonumber\\
      & + \frac{60\eta^2L^2}{(1-\beta)^2(1-\sqrt{\rho})}\sum_{\tau=1}^{t}\rho^{\frac{t-\tau+1}{2}}\sum_{i=1}^N \mathbb{E} \left\Vert  x_{\tau-1}^i - \bar{x}_{\tau-1} \right\Vert^2 \nonumber\\
      & +  \frac{30\eta^2N}{(1-\beta)^2(1-\sqrt{\rho})}\sum_{\tau=1}^{t}\rho^{\frac{t-\tau+1}{2}}\mathbb{E} \left\Vert \frac{1}{N} \sum_{k=1}^{N}\nabla f_k(x_{\tau-1}^k)\right\Vert^2
    \end{align}
    Summing over $t\in\{1,2,\dots, T-1\}$ and considering that $\mathbb{E}\left[\left\Vert{\mathbf{X}}_0(\mathbf{I}-\mathbf{Q})\right\Vert^2_\mathfrak{F}\right] = 0$, we have
    \begin{align}\label{eq:x_minus_t}
      & (1-\beta)^2(1-\sqrt{\rho}) \left[ \sum_{t=0}^{T-1}\mathbb{E}\left[\left\Vert{\mathbf{X}}_t(\mathbf{I}-\mathbf{Q})\right\Vert^2_\mathfrak{F}\right] - pT  \right] \nonumber \\
      =& (1-\beta)^2(1-\sqrt{\rho}) \left[ \sum_{t=1}^{T}\sum_{i=1}^N \mathbb{E} \left[ \left\Vert x_{t-1}^i - \bar{x}_{t-1} \right\Vert^2 \right] - pT \right] \nonumber\\
      \overset{\tcircle{1}}{\leq} & 60\eta^2L^2 \sum_{t=1}^{T-1} \sum_{\tau=1}^{t} \rho^{\frac{t-\tau+1}{2}}\sum_{i=1}^N \mathbb{E} \left\Vert  x_{\tau-1}^i - \bar{x}_{\tau-1} \right\Vert^2 
      + 30\eta^2N \sum_{t=1}^{T-1}\sum_{\tau=1}^{t}\rho^{\frac{t-\tau+1}{2}}\mathbb{E} \left\Vert \frac{1}{N} \sum_{k=1}^{N}\nabla f_k(x_{\tau-1}^k)\right\Vert^2 \nonumber\\
      \leq & 60\eta^2L^2 \sum_{t=1}^{T} \sum_{\tau=1}^{t} \rho^{\frac{t-\tau+1}{2}}\sum_{i=1}^N \mathbb{E} \left\Vert  x_{\tau-1}^i - \bar{x}_{\tau-1} \right\Vert^2 
      + 30\eta^2N \sum_{t=1}^{T}\sum_{\tau=1}^{t}\rho^{\frac{t-\tau+1}{2}}\mathbb{E} \left\Vert \frac{1}{N} \sum_{k=1}^{N}\nabla f_k(x_{\tau-1}^k)\right\Vert^2 \nonumber \\
      =& 60\eta^2L^2 \sum_{t=1}^{T}\sum_{\tau=t}^{T}\rho^{\frac{T-\tau+1}{2}} \sum_{i=1}^N \mathbb{E} \left\Vert  x_{t-1}^i-\bar{x}_{t-1} \right\Vert^2 
      + 30\eta^2N \sum_{t=1}^{T}\sum_{\tau=t}^{T}\rho^{\frac{T-\tau+1}{2}} \mathbb{E} \left\Vert \frac{1}{N} \sum_{k=1}^{N}\nabla f_k(x_{t-1}^k)\right\Vert^2 \nonumber\\
      \overset{\tcircle{2}}{\leq} & 60\eta^2L^2 \sum_{t=1}^{T} \frac{1-\rho^{\frac{T-t}{2}}}{1-\sqrt{\rho}} \sum_{i=1}^N \mathbb{E} \left\Vert  x_{t-1}^i - \bar{x}_{t-1} \right\Vert^2
      + 30\eta^2N \sum_{t=1}^{T} \frac{1-\rho^{\frac{T-t}{2}}} {1-\sqrt{\rho}} \mathbb{E} \left\Vert \frac{1}{N} \sum_{k=1}^{N}\nabla f_k(x_{t-1}^k)\right\Vert^2 \nonumber \\
      \leq & \frac{60\eta^2L^2}{1-\sqrt{\rho}}\sum_{t=1}^{T} \sum_{i=1}^N \mathbb{E} \left\Vert  x_{t-1}^i - \bar{x}_{t-1} \right\Vert^2 
      + \frac{30\eta^2N}{1-\sqrt{\rho}}\sum_{t=1}^{T} \mathbb{E} \left\Vert \frac{1}{N} \sum_{k=1}^{N}\nabla f_k(x_{t-1}^k)\right\Vert^2        
    \end{align}
    %
    Therein, $\tcircle{1}$ follows (\ref{eq:x_minus_fin}), $\tcircle{2}$ is derived by applying $\sum_{\tau=t}^{T}\rho^{\frac{T-\tau+1}{2}} \leq \sqrt{\rho} \cdot \frac{1-\rho^{\frac{T-t}{2}}}{1-\sqrt{\rho}} \leq \frac{1-\rho^{\frac{T-t}{2}}}{1-\sqrt{\rho}}$. According to (\ref{eq:x_minus_t}), we can transform (\ref{eq:x_minus_fin}) into:
    \begin{align}
       & \left( 1-\frac{60L^2\eta^2}{(1-\beta)^2(1-\sqrt{\rho})^2} \right) \sum_{t=1}^{T}\frac{1}{N}\sum_{i=1}^N\mathbb{E}\left[\left\Vert\bar{{x}}_{t-1} - {x}^i_{t-1} \right\Vert^2\right]  \nonumber \\
       \leq & \frac{12S_0C^2 + \frac{12}{B^2}S_0\sigma^2C^2d}{N^3(1-\beta)^2(1-\sqrt{\rho})^2}\eta^2T + \frac{27\alpha C^2}{4(1-\beta)^2(1-\sqrt{\rho})^2}\eta^2T+ \frac{30\xi^2 + 30\kappa^2} {(1-\beta)^2(1-\sqrt{\rho})^2}\eta^2T \nonumber\\
       & + \frac{30\eta^2}{(1-\beta)^2(1-\sqrt{\rho})^2}\sum_{t=1}^{T} \mathbb{E} \left\Vert \frac{1}{N} \sum_{k=1}^{N}\nabla f_k(x_{t-1}^k)\right\Vert^2
    \end{align}
    Based on $\eta \leq \frac{(1-\beta)(1-\rho)}{2\sqrt{30}L}$, we divide both sides by $N$ and a positive number $ 1-\frac{60L^2\eta^2}{(1-\beta)^2(1-\sqrt{\rho})^2} \geq \frac{1}{2}$, and we can get the final version of \textbf{Lemma}~\ref{lem:consensus_error}:
    \begin{align*}\label{lemma1_fin}
        \frac{1}{N}\sum_{t=1}^{T}\sum_{i=1}^N\mathbb{E}\left[\left\Vert\bar{{x}}_{t-1} - {x}^i_{t-1} \right\Vert^2\right] 
        & \leq \frac{\left(24S_0C^2 + \frac{24}{B^2}S_0 \sigma^2 C^2d \right)\eta^2} {N^3(1-\beta)^2(1-\sqrt{\rho})^2} T + \frac{\left(\frac{27\alpha C^2}{2} + 60B^2(\xi^2 +\kappa^2) \right) \eta^2} {(1-\beta)^2(1-\sqrt{\rho})^2B^2}T \\
        & \quad +\frac{60\eta^2}{(1-\beta)^2(1-\sqrt{\rho})^2}\sum_{t=1}^{T} \mathbb{E} \left\Vert \frac{1}{N} \sum_{k=1}^{N}\nabla f_k(x_{t-1}^k)\right\Vert^2
    \end{align*}
    
    \end{proof}


    Next, we summarize the updating process of momentum and parameters, and analyze the rules therein. We first define an auxiliary sequence in \textbf{Remark}~\ref{rmk:u_seq}. Based on \textbf{Remark}~\ref{rmk:u_seq}, we propose \textbf{Lemma}~\ref{lem:u_seq} to depict the iteration relation of the auxiliary sequence and \textbf{Lemma}~\ref{lem:u_x_diff} to describe the overall difference between auxiliary sequence and model parameters in relation to the calibrated gradient.
  \begin{remark}\label{rmk:u_seq}
    Recall $\bar{x}_t = \frac{1}{N} \sum_{i=1}^N x^i_t$. We define $\bar{u}_t$ as
    \begin{equation}
      \bar{u}_t =\begin{cases} 
        \frac{1}{1-\beta}\bar{x}_t-\frac{\beta}{1-\beta}\bar{x}_{t-1} & \text{if} \quad t\geq 1 \\
        \bar{x}_t & \text{if} \quad  t=0
      \end{cases}
    \end{equation}
  \end{remark}

  \begin{lem} \label{lem:u_seq}
    For any $t \geq 0$, we have
    \begin{equation} \label{eq:u_seq}
      \bar{u}_{t+1}-\bar{u}_t = -\frac{\eta}{(1-\beta)N} \sum_{i=1}^N{\tilde{g}}^i_{t+1}
    \end{equation}
  \end{lem}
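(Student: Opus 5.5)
The proof works directly with network averages. The first step is to show that the gossip steps leave the averages of the momentum and model variables unchanged. The rest is a short telescoping computation, with the case $t=0$ handled separately because of the piecewise definition of $\bar{u}_t$ in \textbf{Remark}~\ref{rmk:u_seq}.

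\emph{Step 1: averaged recursions.} Write $\bar{v}_t = \frac{1}{N}\sum_{i=1}^N v^i_t$ and $\bar{\tilde{g}}_t = \frac{1}{N}\sum_{i=1}^N \tilde{g}^i_t$.
By (\ref{eq:aggremompara}) and the symmetry and double stochasticity of $\mathbf{W}$,
\begin{equation*}
\frac{1}{N}\sum_{i=1}^N v^i_t=\frac{1}{N}\sum_{j=1}^N\Big(\sum_{i=1}^N w_{ij}\Big)\tilde{v}^j_t=\frac{1}{N}\sum_{j=1}^N\tilde{v}^j_t .
\end{equation*}
In matrix form this is $\mathbf{V}_t\mathbf{1}=\tilde{\mathbf{V}}_t\mathbf{W}\mathbf{1}=\tilde{\mathbf{V}}_t\mathbf{1}$. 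The same argument applied to (\ref{eq:aggremodel}) gives $\bar{x}_t = \frac{1}{N}\sum_j \tilde{x}^j_t$.

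Averaging (\ref{eq:updatemompara}) and (\ref{eq:updatemodelpara}) over $i$ then gives
\begin{equation*}
\bar{v}_t=\beta\bar{v}_{t-1}+\bar{\tilde{g}}_t,\qquad \bar{x}_t=\bar{x}_{t-1}-\eta\bar{v}_t ,\qquad t\ge 1 .
\end{equation*}

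\emph{Step 2: the case $t\ge 1$.} Substituting the definition of $\bar{u}_t$ gives
\begin{equation*}
\bar{u}_{t+1}-\bar{u}_t=\frac{1}{1-\beta}(\bar{x}_{t+1}-\bar{x}_t)-\frac{\beta}{1-\beta}(\bar{x}_t-\bar{x}_{t-1}) .
\end{equation*}
By Step 1 this equals $-\frac{\eta}{1-\beta}(\bar{v}_{t+1}-\beta\bar{v}_t)=-\frac{\eta}{1-\beta}\bar{\tilde{g}}_{t+1}$, which is (\ref{eq:u_seq}).

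\emph{Step 3: the case $t=0$.} Since $\bar{u}_0=\bar{x}_0$, we have
\begin{equation*}
\bar{u}_1-\bar{u}_0=\frac{1}{1-\beta}\bar{x}_1-\frac{\beta}{1-\beta}\bar{x}_0-\bar{x}_0=\frac{1}{1-\beta}(\bar{x}_1-\bar{x}_0)=-\frac{\eta}{1-\beta}\bar{v}_1 .
\end{equation*}
Step 1 gives $\bar{v}_1=\beta\bar{v}_0+\bar{\tilde{g}}_1$. The argument therefore needs $\bar{v}_0=0$, that is, zero momentum initialization $v^i_0=0$. The paper already uses this implicitly when it drops the $\beta^t\mathbf{V}_0\mathbf{W}^t$ term in (\ref{eq:v_matrix}), and I adopt it here as well.

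This initialization is the only delicate point; the rest is routine algebra. If $v^i_0\neq 0$ were allowed, the identity at $t=0$ would pick up an extra term $-\frac{\eta\beta}{1-\beta}\bar{v}_0$, so I would state the convention $v^i_0=0$ explicitly.
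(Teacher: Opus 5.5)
Your proof is correct and follows the paper's argument essentially step for step. Like the paper, you average the momentum and model recursions using the double stochasticity of $\mathbf{W}$ to get $\bar{v}_t=\beta\bar{v}_{t-1}+\frac{1}{N}\sum_{i}\tilde{g}^i_t$ and $\bar{x}_t=\bar{x}_{t-1}-\eta\bar{v}_t$, and then treat $t=0$ and $t\ge 1$ separately according to the piecewise definition of $\bar{u}_t$. Your explicit convention $v^i_0=0$ is exactly what the paper's $t=0$ case uses without saying so (consistent with dropping $\beta^t\mathbf{V}_0\mathbf{W}^t$ in (\ref{eq:v_matrix})), so stating it is an improvement rather than a deviation.
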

  \begin{proof}
    Let $\bar{v}_t = \frac{1}{N}\sum_{i=1}^N {v}^i_t$, then we have
    \begin{align} \label{eq:v_seq}
      \bar{v}_t = & \frac{1}{N}\sum_{i=1}^N \sum_{j \in N(i)} w_{ij} \left(\beta {v}^j_{t-1} + {\tilde{g}}^{j}_t \right) \nonumber\\
      = & \frac{1}{N}\sum_{i=1}^N \left( \left(\beta {v}^i_{t-1} + {\tilde{g}}_t^i \right)\cdot \sum_{j \in N(i)} w_{ij} \right) \nonumber\\
      = & \frac{1}{N}\sum_{i=1}^N \left(\beta {v}^i_{t-1} + {\tilde{g}}_t^i \right) \nonumber\\
      = & \beta \bar{v}_{t-1} + \frac{1}{N}\sum_{i=1}^N{\tilde{g}}^i_{t}
    \end{align}
    According to the definition of $\bar{x}_t$, we also have
    \begin{align}\label{eq:x_seq}
      \bar{x}_t & = \frac{1}{N}\sum_{i=1}^N \sum_{j \in N(i)} w_{ij} \left( {x}^j_{t-1} - \eta{\tilde{v}}^j_t \right) \nonumber\\
      & = \frac{1}{N}\sum_{i=1}^N {x}^i_{t-1} \left( \sum_{j \in N(i)} w_{ij} \right) - \frac{\eta}{N}\sum_{i=1}^N \sum_{j \in \mathcal{N}_i} w_{ij} {\tilde{v}}^i_t \nonumber\\
      & = \frac{1}{N}\sum_{i=1}^N {x}^i_{t-1} - \frac{\eta}{N}\sum_{i=1}^N {v}^i_t \nonumber\\
      & = \bar{{x}}_{t-1} - \eta\bar{v}_{t}        
    \end{align}
    We then prove this lemma by induction. For $t = 0$,
    \begin{align*}
      \bar{u}_{t+1}-\bar{u}_t = \bar{u}_{1}-\bar{u}_0 = \frac{1}{1-\beta}\bar{{x}}_{1}-\frac{\beta}{1-\beta}\bar{{x}}_{0} - \bar{{x}}_{0} = \frac{1}{1-\beta}(\bar{{x}}_{1} - \bar{{x}}_{0}) = \frac{-\eta}{N(1-\beta)}\sum_{i=1}^N{\tilde{g}}^i_{1}
    \end{align*}
    while for $t\geq1$, by considering (\ref{eq:x_seq}), we have
    \begin{align*}
      \bar{u}_{t+1}-\bar{u}_t &= \frac{1}{1-\beta}\bar{{x}}_{t+1}- \frac{\beta}{1-\beta}\bar{{x}}_{t} - \frac{1}{1-\beta}\bar{{x}}_t+ \frac{\beta}{1-\beta}\bar{{x}}_{t-1} \\ 
      &= \frac{1}{1-\beta}\big( \left(\bar{{x}}_{t+1}- \bar{{x}}_{t}\right) - \beta \left(\bar{{x}}_{t}-\bar{{x}}_{t-1}\right) \big) \\
      &= \frac{1}{1-\beta} \left( \bar{v}_{t+1} - \beta\bar{v}_{t} \right) \\
      &= \frac{-\eta}{N(1-\beta)}\sum_{i=1}^N{\tilde{g}}^i_{t+1}    
    \end{align*}
  \end{proof}

  \begin{lem} \label{lem:u_x_diff}
    For any $T\geq 1$, we have
    \begin{equation} \label{eq:u_x_diff}
     \sum_{t=1}^{T}\Vert\bar{u}_t-\bar{{x}}_t\Vert^2 \leq \frac{\eta^2\beta^2}{(1-\beta)^4}\sum_{t=1}^{T}\left\Vert\frac{1}{N}\sum_{i=1}^N{\tilde{g}}^{i}_{t}\right\Vert^2.
    \end{equation}
  \end{lem}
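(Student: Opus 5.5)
The plan is to first compute $\bar{u}_t-\bar{x}_t$ in closed form as a geometric sum of past averaged calibrated gradients, and then bound its squared norm by a weighted Jensen argument.

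\textbf{Step 1 (closed form).} By Remark~\ref{rmk:u_seq}, for $t\geq 1$ we have $\bar{u}_t-\bar{x}_t=\frac{\beta}{1-\beta}(\bar{x}_t-\bar{x}_{t-1})$. The proof of Lemma~\ref{lem:u_seq} gives $\bar{x}_t-\bar{x}_{t-1}=-\eta\bar{v}_t$ and $\bar{v}_t=\beta\bar{v}_{t-1}+\frac{1}{N}\sum_i\tilde{g}^i_t$. Unrolling with $v^i_0=0$, as already used in (\ref{eq:v_matrix}), gives $\bar{v}_t=\sum_{\tau=1}^t\beta^{t-\tau}\bar{g}_\tau$, where $\bar{g}_\tau\triangleq\frac{1}{N}\sum_i\tilde{g}^i_\tau$. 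Hence
\[
\bar{u}_t-\bar{x}_t=-\frac{\eta\beta}{1-\beta}\sum_{\tau=1}^t\beta^{t-\tau}\bar{g}_\tau .
\]

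\textbf{Step 2 (Jensen).} Set $s_t=\sum_{\tau=1}^t\beta^{t-\tau}\le\frac{1}{1-\beta}$. Applying convexity of $\|\cdot\|^2$ with weights $\beta^{t-\tau}/s_t$ gives
\[
\Bigl\|\sum_{\tau=1}^t\beta^{t-\tau}\bar{g}_\tau\Bigr\|^2\le s_t\sum_{\tau=1}^t\beta^{t-\tau}\|\bar{g}_\tau\|^2\le\frac{1}{1-\beta}\sum_{\tau=1}^t\beta^{t-\tau}\|\bar{g}_\tau\|^2 .
\]

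\textbf{Step 3 (sum over $t$).} Summing over $t=1,\dots,T$ and swapping the order of summation gives $\sum_{t=1}^T\sum_{\tau=1}^t\beta^{t-\tau}\|\bar{g}_\tau\|^2=\sum_{\tau=1}^T\|\bar{g}_\tau\|^2\sum_{t=\tau}^T\beta^{t-\tau}\le\frac{1}{1-\beta}\sum_{\tau=1}^T\|\bar{g}_\tau\|^2$. Combining the prefactor $\frac{\eta^2\beta^2}{(1-\beta)^2}$ from Step 1 with the two factors $\frac{1}{1-\beta}$ yields exactly the stated bound $\frac{\eta^2\beta^2}{(1-\beta)^4}\sum_{t=1}^T\|\bar{g}_t\|^2$.

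\textbf{Main obstacle.} The only delicate point is the initialization $v^i_0=0$. It is needed so that Step 1 produces no extra $\beta^t\bar{v}_0$ term; it is implicitly assumed in (\ref{eq:v_matrix}), and I will state it explicitly. The case $t=0$ of Remark~\ref{rmk:u_seq} contributes nothing, since the sum starts at $t=1$. Everything else is routine.
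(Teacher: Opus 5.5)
Your proposal is correct and follows the paper's proof essentially step for step. The paper also unrolls the averaged momentum recursion to write $\bar{u}_t-\bar{x}_t=-\frac{\eta\beta}{1-\beta}\sum_{\tau=1}^{t}\beta^{t-\tau}\frac{1}{N}\sum_{i}\tilde{g}^i_\tau$, applies Jensen with the normalized weights $\beta^{t-\tau}/\sum_{\tau'}\beta^{t-\tau'}$ to gain one factor $\frac{1}{1-\beta}$, and swaps the order of summation to gain the second. The initialization $v^i_0=0$ that you flag is the same one the paper uses silently when it drops the $\beta^t\mathbf{V}_0\mathbf{W}^t$ term.
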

  \begin{proof}
    According to the definition of $\bar{u}_t$ (see \textbf{Remark}~\ref{rmk:u_seq}), for any $t \geq 1$, we have
    \begin{equation}\label{eq:u_x_distance}
      \bar{u}_t -\bar{{x}}_t 
      = \frac{\beta}{1-\beta}\left(\bar{{x}}_t-\bar{{x}}_{t-1}\right) 
      = \frac{\eta\beta}{1-\beta}\bar{v}_t
      \overset{\tcircle{1}}{=}\frac{-\eta\beta}{1-\beta}\sum_{\tau=1}^{t}\beta^{t-\tau}\left(\frac{1}{N}\sum_{i=1}^N{\tilde{g}}^i_\tau\right)
    \end{equation}
    where we have $\tcircle{1}$ by recursively applying (\ref{eq:v_seq}) for $\bar{v}_t$. Based on (\ref{eq:u_x_distance}), we have
    \begin{align} \label{eq:u_x_diff_per}
      \left\Vert \bar{u}_t -\bar{{x}}_t \right\Vert^2 & = \frac{\eta^2\beta^2}{(1-\beta)^2} \left(\sum_{\tau^{\prime}=1}^{t}\beta^{t-\tau^{\prime}} \right)^2 \left\Vert\sum_{\tau=1}^{t}\frac{\beta^{t-\tau}}{\sum_{\tau^{\prime}=1}^{t}\beta^{t-\tau^{\prime}}} \left(\frac{1}{N}\sum_{i=1}^N{\tilde{g}}^i_\tau \right)\right\Vert^2  \nonumber\\
      & \overset{\tcircle{1}}{\leq} \frac{\eta^2\beta^2}{(1-\beta)^2} \left(\sum_{\tau^{\prime}=1}^{t}\beta^{t-\tau^{\prime}} \right)^2 \sum_{\tau=1}^{t}\frac{\beta^{t-\tau}}{\sum_{\tau^{\prime}=1}^{t}\beta^{t-\tau^{\prime}}}\left\Vert \frac{1}{N}\sum_{i=1}^N{\tilde{g}}^i_\tau \right\Vert^2 \nonumber\\ 
      & \overset{\tcircle{2}}{=} \frac{\eta^2\beta^2(1-\beta^t)}{(1-\beta)^3} \sum_{\tau=1}^{t}\beta^{t-\tau}\left\Vert \frac{1}{N}\sum_{i=1}^N{\tilde{g}}^i_\tau \right\Vert^2 \nonumber\\
      & \overset{\tcircle{3}}{\leq} \frac{\eta^2\beta^2}{(1-\beta)^3}\sum_{\tau=1}^{t}\beta^{t-\tau}\left\Vert \frac{1}{N}\sum_{i=1}^N{\tilde{g}}^i_\tau \right\Vert^2        
    \end{align}
    where we have $\tcircle{1}$ according to Jensen's Inequality, $\tcircle{2}$ by considering $\sum_{\tau^{\prime}=1}^{t}\beta^{t-\tau^{\prime}} =\frac{1-{\beta}^t}{1-{\beta}}$, $\tcircle{3}$ according to the fact that $1-\beta^t \leq 1$. Given any $T \geq 1$, we sum (\ref{eq:u_x_diff_per}) over $t=1,2,\cdots,T$ such that
    \begin{align}\label{eq:u_x_diff-fin}
      \sum_{t=1}^{T}\left\Vert \bar{u}_t -\bar{{x}}_t \right\Vert^2 & \leq  \frac{\eta^2\beta^2}{(1-\beta)^3}\sum_{t=1}^{T}\sum_{\tau=1}^{t}\beta^{t-\tau}\left\Vert \frac{1}{N}\sum_{i=1}^N{\tilde{g}}^i_\tau \right\Vert^2  \nonumber\\ 
      & = \frac{\eta^2\beta^2}{(1-\beta)^3}\sum_{t=1}^{T}\left\Vert \frac{1}{N}\sum_{i=1}^N {\tilde{g}}^{i}_{t} \right\Vert^2 \sum_{\tau=t}^{T} \beta^{\tau - t} \nonumber\\ 
      &\overset{\tcircle{1}}{\leq} \frac{\eta^2\beta^2}{(1-\beta)^4}\sum_{t=1}^{T}\left\Vert \frac{1}{N}\sum_{i=1}^N{\tilde{g}}^{i}_{t} \right\Vert^2
      %
    \end{align}
    where we have $\tcircle{1}$ since $\sum_{\tau=t}^{T-1} \beta^{\tau - t} = \frac{1-\beta^{\tau - t}}{1-\beta} \leq \frac{1}{1-\beta}$.
  \end{proof}

  Now, we are ready to prove \textbf{Theorem}~\ref{thm:convergence}. Since each $f_i(\cdot)$ is \textit{L-smoothness}, $\mathcal{F}$ also satisfies \textit{L-smoothness} which has an expectation form as:
  \begin{equation} \label{eq:main_pre}
    \mathbb{E}\left[  \mathcal{F}\left( \bar{u}_{t}\right)\right] 
    \leq \mathbb{E}\left[  \mathcal{F}\left(\bar{u}_{t-1}\right)\right] 
    + \mathbb{E}\left[\big\langle  \nabla\mathcal{F}\left(\bar{u}_{t-1}\right), \bar{u}_{t} - \bar{u}_{t-1}\big\rangle \right] 
    + \frac{L}{2} \mathbb{E}\left[ \big\| \bar{u}_{t} - \bar{u}_{t-1} \big\|^2 \right]
  \end{equation}
  According to \textbf{Lemma}~\ref{lem:u_seq}, $\mathbb{E}\left[\big\langle  \nabla\mathcal{F}\left(\bar{u}_{t-1}\right), \bar{u}_{t} - \bar{u}_{t-1}\big\rangle \right]$ can be re-written as
  \begin{align}\label{eq:linear_term}
    & \mathbb{E}\left[\big\langle  \nabla\mathcal{F}\left(\bar{u}_{t-1}\right), \bar{u}_{t}- \bar{u}_{t-1} \big\rangle \right] 
    = \frac{-\eta}{1-\beta}\mathbb{E}\left[\left\langle\nabla\mathcal{F}\left(\bar{u}_{t-1}\right),\frac{1}{N}\sum_{i=1}^{N}{\tilde{g}}^i_{t}\right\rangle\right] \nonumber\\
    = & \frac{-\eta}{1-\beta}\mathbb{E}\left[\left\langle\nabla\mathcal{F}\left(\bar{u}_{t-1}\right) - \nabla\mathcal{F}\left(\bar{{x}}_{t-1}\right),\frac{1}{N}\sum_{i=1}^{N}{\tilde{g}}^i_{t}\right\rangle\right] - \frac{\eta}{1-\beta}\mathbb{E}\left[\left\langle\nabla\mathcal{F}\left(\bar{{x}}_{t-1}\right),\frac{1}{N}\sum_{i=1}^{N}{\tilde{g}}^i_{t}\right\rangle\right] \nonumber\\
    \leq & \underbrace{\frac{-\eta}{1-\beta}\mathbb{E}\left[\left\langle\nabla\mathcal{F}\left(\bar{u}_{t-1}\right)- \nabla\mathcal{F}\left(\bar{{x}}_{t-1}\right), \frac{1}{N}\sum_{i=1}^{N}{\tilde{g}}^i_{t}\right\rangle\right] }_{\mathbf{\Rmnum{3}}}  \nonumber\\
    & - \underbrace{\frac{\eta}{1-\beta} \mathbb{E}\left[ \left\langle\nabla \mathcal{F}\left(\bar{{x}}_{t-1}\right), \frac{1}{N} \sum_{i=1}^{N}\left({\tilde{g}}_{t}^{i} - \frac{1}{B} \sum_b \mathbf{g}_{t,b}^{ii}\right)\right\rangle \right]}_{\mathbf{\Rmnum{4}}} 
    - \underbrace{ \frac{\eta}{1-\beta} \mathbb{E}\left[ \left\langle\nabla \mathcal{F}\left(\bar{{x}}_{t-1}\right) , \frac{1}{N} \sum_{i=1}^{N} \frac{1}{B} \sum_b \mathbf{g}_{t,b}^{ii}\right\rangle \right]}_{\mathbf{\Rmnum{5}}}        
  \end{align}
  It is shown by the above inequality that $\mathbb{E}\left[\big\langle  \nabla\mathcal{F}\left(\bar{u}_{t-1}\right), \bar{u}_{t} - \bar{u}_{t-1}\big\rangle \right]$ can be bounded by $\mathbf{\Rmnum{3}}$, $\mathbf{\Rmnum{4}}$ and $\mathbf{\Rmnum{5}}$. We derive the bound of $\mathbf{\Rmnum{3}}$ by
  \begin{align}\label{eq:linear_term_1}
    & \frac{-\eta}{1-\beta}\mathbb{E}\left[\left\langle\nabla\mathcal{F}\left(\bar{u}_{t-1}\right)- \nabla\mathcal{F}\left(\bar{{x}}_{t-1}\right),\frac{1}{N}\sum_{i=1}^{N}{\tilde{g}}^i_{t} \right\rangle\right] \nonumber\\
    \overset{\tcircle{1}}{\leq} & \frac{(1-\beta)}{2\beta L}\mathbb{E}\left[  \big\Vert \nabla\mathcal{F}(\bar{u}_{t-1})- \nabla\mathcal{F}(\bar{{x}}_{t-1})\big\Vert^2 \right] + \frac{\beta L \eta^{2}}{2(1-\beta)^{3}}\mathbb{E}\left[\left\Vert \frac{1}{N} \sum_{i=1}^{N} {\tilde{g}}_{t}^{i}\right\Vert^{2} \right] \nonumber\\
    \leq & \frac{(1-\beta)L}{2 \beta}\mathbb{E}\ \big\Vert  \bar{u}_{t-1}-\bar{{x}}_{t-1}\big\Vert^{2} + \frac{\beta L \eta^2}{2(1-\beta)^{3}}\mathbb{E} \left\Vert\frac{1}{N} \sum_{i=1}^{N} {\tilde{g}}_{t}^{i}\right\Vert^{2}
  \end{align}
  where we have $\tcircle{1}$ by applying the inequality $\left\langle \mathbf{a},\mathbf{b} \right\rangle \leq \frac{1}{2} \| \mathbf{a} \|^2 + \frac{1}{2}\| \mathbf{b} \|^2$ where $\mathbf{a}=\sqrt{\frac{1-\beta}{\beta L}} \big( \nabla\mathcal{F}(\bar{u}_{t-1})- \nabla\mathcal{F}(\bar{{x}}_{t-1})\big)$ and $\mathbf{b} = -\frac{\eta \sqrt{\beta L}}{(1-\beta)^{\frac{3}{2}}}\frac{1}{N} \sum_{i=1}^{N}{\tilde{g}}^i_{t}$. $\mathbf{\Rmnum{4}}$ is bounded by
  \begin{align}\label{eq:linear_term_2}
    & \frac{-\eta}{(1-\beta)}\mathbb{E}\left[\left\langle\nabla \mathcal{F}\left(\bar{{x}}_{t-1}\right), \frac{1}{N} \sum_{i=1}^{N} \left({\tilde{g}}_{t}^{i} - \frac{1}{B} \sum_b \mathbf{g}_{t,b}^{ii} \right)\right\rangle\right] \nonumber\\
    \overset{\tcircle{1}}{\leq} & \frac{(1-\beta)}{32}\mathbb{E}\ \big\Vert \nabla \mathcal{F}(\bar{{x}}_{t-1})\big\Vert^{2} + \frac{8\eta^2 }{(1-\beta)^{3}}\mathbb{E}\left\Vert\frac{1}{N} \sum_{i=1}^{N} ({\tilde{g}}_{t}^{i} - \frac{1}{B} \sum_b \mathbf{g}_{t,b}^{ii})\right\Vert^{2}
  \end{align}
  where we have $\tcircle{3}$ by applying the inequality $\left\langle \mathbf{a},\mathbf{b} \right\rangle \leq \frac{1}{2} \| \mathbf{a} \|^2 + \frac{1}{2} \| \mathbf{b} \|^2$ again, where $\mathbf{a}= \frac{\sqrt{1-\beta}}{4} \nabla \mathcal{F}(\bar{{x}}_{t-1})$ and $\mathbf{b} = -\frac{4\eta}{(1-\beta)^{\frac{3}{2}}}\frac{1}{N} \sum_{i=1}^{N}\big({\tilde{g}}_{t}^{i}-\frac{1}{B} \sum_b \mathbf{g}_{t,b}^{ii}\big)$. The bound of $\mathbf{\Rmnum{5}}$ is as follows
  \begin{align}\label{eq:linear_term_3}
    & \mathbb{E}\left[\left\langle\nabla \mathcal{F}\left(\bar{{x}}_{t-1}\right) , \frac{1}{N} \sum_{i=1}^{N}\frac{1}{B} \sum_b \mathbf{g}_{t,b}^{ii} \right\rangle\right] \nonumber\\
    = & \mathbb{E}\left[\left\langle \nabla\mathcal{F}(\bar{{x}}_{t-1}),\frac{1}{N}\sum_{i=1}^{N}\nabla f_i({x}^i_{t-1})\right\rangle \right] \nonumber\\
    \overset{\tcircle{1}}{=}& \frac{1}{2}\mathbb{E}\ \big\Vert \nabla\mathcal{F} \left( \bar{{x}}_{t-1}\right) \big\Vert^{2} + \frac{1}{2} \mathbb{E} \left\Vert \frac{1}{N}\sum_{i=1}^{N} \nabla f_{i}({x}_{t-1}^{i})\right\Vert^{2} - \frac{1}{2} \mathbb{E}\ \bigg\Vert \nabla \mathcal{F}(\bar{{x}}_{t-1})-\frac{1}{N} \sum_{i=1}^{N} \nabla f_{i}({x}_{t-1}^{i}) \bigg\Vert^{2} \nonumber\\
    \overset{\tcircle{2}}{\geq} & \frac{1}{2} \mathbb{E}\ \big\Vert  \nabla \mathcal{F}(\bar{{x}}_{t-1})\big\Vert^{2} + \frac{1}{2}\mathbb{E}\ \bigg\Vert\frac{1}{N}\sum_{i=1}^{N} \nabla f_{i}({x}_{t-1}^{i})\bigg\Vert^{2} - \frac{L^{2}}{2N} \mathbb{E} \sum_{i=1}^{N}\left\Vert \bar{{x}}_{t-1}-{x}_{t-1}^{i}\right\Vert^{2}
  \end{align}
  where we have $\tcircle{1}$ comes according to the inequality $\left\langle \mathbf{a}, \mathbf{b} \right\rangle = \frac{1}{2}\left(\left\Vert \mathbf{a} \right\Vert^2 + \left\Vert \mathbf{b} \right\Vert^2 - \left\Vert \mathbf{a} - \mathbf{b} \right\Vert^2\right)$ where $\mathbf{a} = \nabla\mathcal{F} \left( \bar{{x}}_{t-1}\right)$ and $\mathbf{b}= \frac{1}{N}\sum_{i=1}^{N}\nabla f_i({x}^i_{t-1})$, and $\tcircle{2}$ according to 
  \begin{align*}
    \left\Vert \nabla\mathcal{F}(\bar{{x}}_{t-1})- \frac{1}{N}\sum_{i=1}^{N}\nabla f_i({x}^i_{t-1})\right\Vert^2 
    =& \left\Vert \frac{1}{N}\sum_{i=1}^{N}\nabla f_i(\bar{{x}}_{t-1}) - \frac{1}{N}\sum_{i=1}^{N}\nabla f_i({x}^i_{t-1})\right\Vert^2   \nonumber\\
    \leq& \ \frac{1}{N}\sum_{i=1}^{N} \left\Vert\nabla f_i(\bar{{x}}_{t-1}) - \nabla f_i({x}^i_{t-1})\right\Vert^2   \nonumber\\
    \leq& \frac{1}{N} \sum_{i=1}^{N} L^2 \left\Vert \bar{{x}}_{t-1}- {x}^i_{t-1}\right\Vert^2
  \end{align*}
  By substituting (\ref{eq:linear_term_1}), (\ref{eq:linear_term_2}) and (\ref{eq:linear_term_3}) into (\ref{eq:linear_term}) and then substituting (\ref{eq:u_seq}), (\ref{eq:u_x_diff}) and (\ref{eq:linear_term}) into (\ref{eq:main_pre}), we have
  \begin{align}\label{eq:main_mix}
    \mathbb{E}\left[ \mathcal{F}(\bar{u}_{t})\right] 
    \leq & \mathbb{E}\left[ \mathcal{F}(\bar{u}_{t-1})\right] + \frac{(1-\beta)L}{2\beta}\mathbb{E}\left[\left\Vert\bar{u}_{t-1}-\bar{{x}}_{t-1} \right\Vert^2\right] \nonumber\\
    & +\left(\frac{\beta L \eta^2}{2(1-\beta)^3} + \frac{L\eta^2}{2(1-\beta)^2}\right) \mathbb{E}\left[\left\Vert\frac{1}{N}\sum_{i=1}^{N}{\tilde{g}}^i_{t} \right\Vert^2\right] \nonumber\\
    & + \left(\frac{(1-\beta)}{32} - \frac{\eta}{(1-\beta)}\right) \mathbb{E}\left[\left\Vert\nabla \mathcal{F}(\bar{{x}}_{t-1})\right\Vert^2 \right] - \frac{\eta}{2(1-\beta)}\mathbb{E}\left[\left\Vert\frac{1}{N}\sum_{i=1}^{N}\nabla f_i({x}^i_{t-1})\right\Vert^2\right] \nonumber\\
    & + \frac{8\eta^2}{2(1-\beta)^3} \mathbb{E}\left[\left\Vert\frac{1}{N}\sum_{i=1}^{N}\left({\tilde{g}}^i_{t}-\frac{1}{B} \sum_b \mathbf{g}_{t,b}^{ii}\right)\right\Vert^2 \right] 
    + \frac{\eta L^2}{2(1-\beta)} \frac{1}{N}\sum_{i=1}^{N} \mathbb{E}\left[\left\Vert\bar{{x}}_{t-1}-{x}^i_{t-1} \right\Vert^2\right]
  \end{align}
  According to (\ref{eq:conconstants}), the above inequality is re-written as
  \begin{align} \label{eq:main_rearange}
    \mathbb{E}\left[\left\Vert\nabla\mathcal{F}(\bar{{x}}_{t-1})\right\Vert^2\right] 
    & \leq \frac{1}{C_1}\bigg(\mathbb{E}\left[ \mathcal{F}(\bar{u}_{t-1}) \right] - \mathbb{E}\left[ \mathcal{F}(\bar{u}_{t})\right]\bigg) 
    + C_2\:\mathbb{E}\left[\left\Vert\bar{u}_{t-1}-\bar{{x}}_{t-1}\right\Vert^2\right] \nonumber\\
    &\quad + C_3 \:\mathbb{E}\left[\left\Vert\frac{1}{N} \sum_{i=1}^{N} {\tilde{g}}^i_{t} \right\Vert^2\right] 
    - C_4\: \mathbb{E}\left[\left\Vert\frac{1}{N}\sum_{i=1}^{N}\nabla f_i({x}^i_{t-1})\right\Vert^2\right] \nonumber\\
    &\quad + C_5\:\mathbb{E}\left[\left\Vert\frac{1}{N}\sum_{i=1}^{N} \left({\tilde{g}}^i_{t}-\frac{1}{B} \sum_b \mathbf{g}_{t,b}^{ii}\right) \right\Vert^2\right] 
    + C_6\: \frac{1}{N}\sum_{i=1}^{N}\mathbb{E}\left[\left\Vert\bar{{x}}_{t-1}-{x}^i_{t-1}\right\Vert^2\right]        
  \end{align}
  %
  %
  when
  \begin{equation} \label{eq:c1condition0}
    C_1 = \frac{\eta}{2(1-\beta)} - \frac{1-\beta}{32} > 0
  \end{equation}
  and we thus have
  \begin{align*}
    \sum_{t=1}^{T} \mathbb{E}\left[\left\Vert\nabla \mathcal{F}\left(\bar{{x}}_{t-1}\right)\right\Vert^{2}\right] 
    &\leq \frac{1}{C_{1}}\bigg(\mathbb{E}\left[ \mathcal{F}\left(\bar{u}_{0} \right)\right]-\mathbb{E}\left[\mathcal{F}\left(\bar{u}_{t}\right)\right] \bigg) 
    + \underbrace{C_{2} \sum_{t=1}^{T} \mathbb{E}\left[\left\Vert\bar{u}_{t-1} -\bar{{x}}_{t-1}\right\Vert^{2}\right]}_{\mathbf{Lemma}~\ref{lem:u_x_diff}} \\
    &\quad + \underbrace{C_{3} \sum_{t=1}^{T} \mathbb{E}\left[ \left\Vert\frac{1}{N} \sum_{i=1}^{N} {\tilde{g}}_{t}^{i} \right\Vert^{2} \right]}_{\mathbf{Lemma}~\ref{lem:avg_clbr_grad}} 
    - C_{4} \sum_{t=1}^{T} \mathbb{E}\left[\left\Vert\frac{1}{N} \sum_{i=1}^{N} \nabla f_{i}\left({x}_{t-1}^{i}\right) \right\Vert^{2}\right] \\
    &\quad + \underbrace{C_{5} \sum_{t=1}^{T} \mathbb{E}\left[\left\Vert\frac{1}{N} \sum_{i=1}^{N} \left({\tilde{g}}^i_{t}-\frac{1}{B} \sum_b \mathbf{g}_{t,b}^{ii}\right) \right\Vert^{2}\right]}_{\mathbf{Lemma}~\ref{lem:avg_grad_diff}} 
    + \underbrace{C_{6} \sum_{t=1}^{T} \frac{1}{N} \sum_{i=1}^{N} \mathbb{E}\left[\left\Vert\bar{{x}}_{t-1}-{x}_{t-1}^{i}\right\Vert^{2}\right]}_{\mathbf{Lemma}~\ref{lem:consensus_error}}  
  \end{align*}
  By substituting \textbf{Lemma}~\ref{lem:avg_clbr_grad}, \textbf{Lemma}~\ref{lem:avg_grad_diff}, \textbf{Lemma}~\ref{lem:consensus_error} and \textbf{Lemma}~\ref{lem:u_x_diff} into the above equation, we have:
  \begin{align*}
    & \sum_{t=1}^{T} \mathbb{E}\left[\left\Vert\nabla \mathcal{F}\left(\bar{{x}}_{t-1}\right)\right\Vert^{2}\right] \\
    \leq & \frac{1}{C_{1}}\bigg(\mathbb{E}\left[ \mathcal{F}\left(\bar{u}_{0}\right)\right]-\mathbb{E}\left[ \mathcal{F}\left(\bar{u}_{t}\right)\right]\bigg) 
    + C_{5} \left(\frac{6S_0 C^2}{N^3} + \frac{6S_0\sigma^2 C^2d}{B^2N^4} + \frac{27\alpha^2 C^2}{8} + \frac{4\xi^2}{N}\right)T  \\
    & + C_{6}\left(\frac{\left(24S_0C^2 + \frac{24}{B^2}S_0 \sigma^2 C^2d \right)\eta^2} {N^3(1-\beta)^2(1-\sqrt{\rho})^2} + \frac{\left(\frac{27\alpha C^2}{2} + 60B^2(\xi^2 +\kappa^2) \right) \eta^2} {(1-\beta)^2(1-\sqrt{\rho})^2B^2} \right) \cdot T \\
    & +\left(C_{2} \frac{\eta^2\beta^2}{(1-\beta)^4} + C_{3}\right) \left( \frac{3 S_0 C^2}{N^3} + \frac{3S_0\sigma^2 C^2d}{B^2N^4} + \frac{27\alpha^2 C^2}{16} \right) T \\
    & + \left(\frac{60\eta^2C_6} {(1-\beta)^2(1-\sqrt{\rho})^2}+ 4C_{5} - C_{4}\right) \cdot \sum_{t=1}^{T} \mathbb{E}\left[\left\Vert\frac{1}{N} \sum_{i=1}^{N} \nabla f_{i}\left({x}_{t-1}^{i}\right) \right\Vert^{2}\right]        
  \end{align*}
  Dividing both sides of the above inequality by $T$ while recording $\mathcal{F}^{\star}$ as the minimum value of $\mathcal{F}$ in (\ref{eq:decentobj}), we have
  \begin{align*}
    \frac{1}{T}\sum_{t=1}^{T}\mathbb{E}\left[\left\Vert\nabla \mathcal{F}\left(\bar{{x}}_{t-1}\right)\right\Vert^{2}\right]
    & \leq \frac{1}{C_{1}T}\left( \mathcal{F}\left(\bar{{x}}_{0}\right) -\mathcal{F}^{\star}\right) + C_{5} \left(\frac{6S_0 C^2}{N^3} + \frac{6S_0\sigma^2 C^2d}{B^2N^4} + \frac{27\alpha^2 C^2}{8} + \frac{4\xi^2}{N}\right) \\
    & \quad + \left(C_{2} \frac{\eta^2\beta^2}{(1-\beta)^4} + C_{3}\right) \left( \frac{3 S_0 C^2}{N^3} + \frac{3S_0\sigma^2 C^2d}{B^2N^4} + \frac{27\alpha^2 C^2}{16} \right) \\
    &\quad + C_{6}\left(\frac{\left(24S_0C^2 + \frac{24}{B^2}S_0 \sigma^2 C^2d \right)\eta^2} {N^3(1-\beta)^2(1-\sqrt{\rho})^2} + \frac{\left(\frac{27\alpha C^2}{2} + 60B^2(\xi^2 +\kappa^2) \right) \eta^2} {(1-\beta)^2(1-\sqrt{\rho})^2B^2} \right)        
  \end{align*}
  by considering the facts that $\bar{u}_0 = \bar{{x}}_0$ and assuming
  \begin{equation} \label{eq:keyassumption}
    \frac{4C_{5}}{N} + \frac{60\eta^2 C_6}{(1-\beta)^2(1-\sqrt{\rho})^2} - C_{4} \leq 0
  \end{equation}
  We finally have (\ref{eq:convergence}) by considering $S_0 = \sum_{i=1}^N \sum_{j \in \mathcal{N}_i} \frac{1}{w_{ij}^2} \leq \frac{N^2}{w_{min}^2}$.
  
  As mentioned above, we have (\ref{eq:convergence}) based on $\eta$ respecting inequality (\ref{eq:keyassumption}). By considering another conditions on $\eta$, i.e., (\ref{eq:c1condition0}) and (\ref{eq:keyassumption}), the learning rate $\eta$ in our algorithm is supposed to satisfy (\ref{eq:learningrate}).

  \begin{figure}[htb!]
  \begin{center}
    \parbox{.32\textwidth}{\center\includegraphics[width=.32\textwidth]{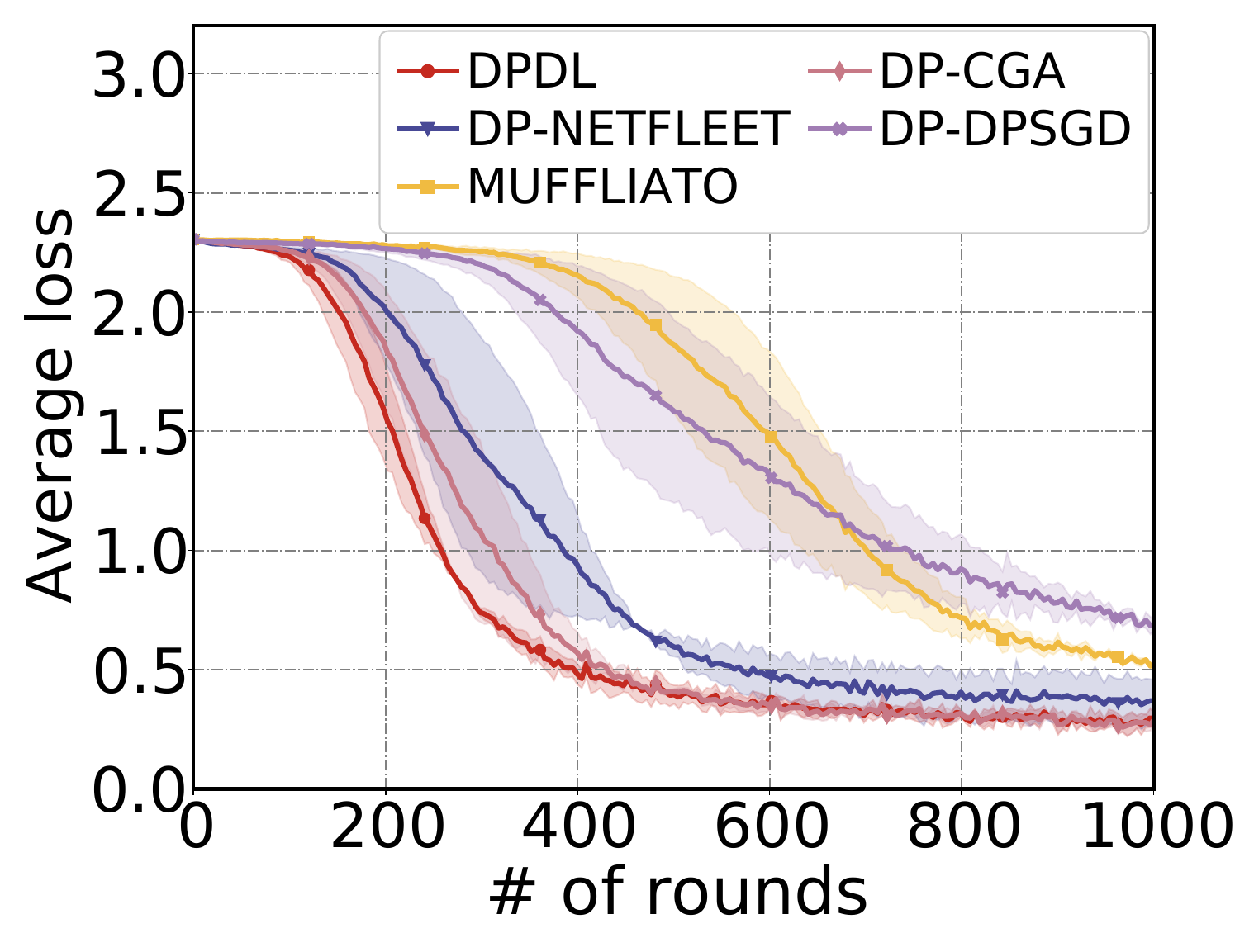}}
    \parbox{.32\textwidth}{\center\includegraphics[width=.32\textwidth]{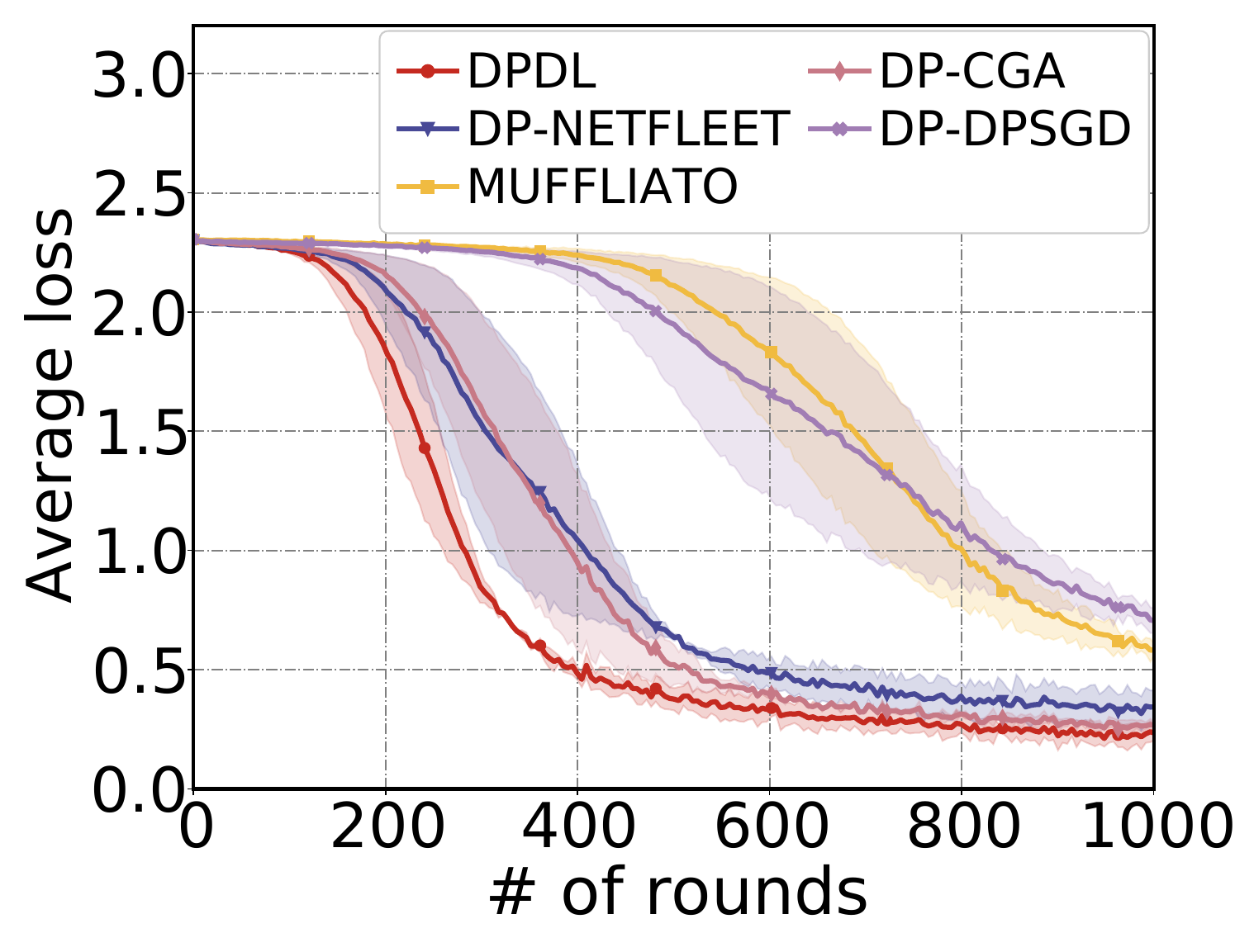}}
    \parbox{.32\textwidth}{\center\includegraphics[width=.32\textwidth]{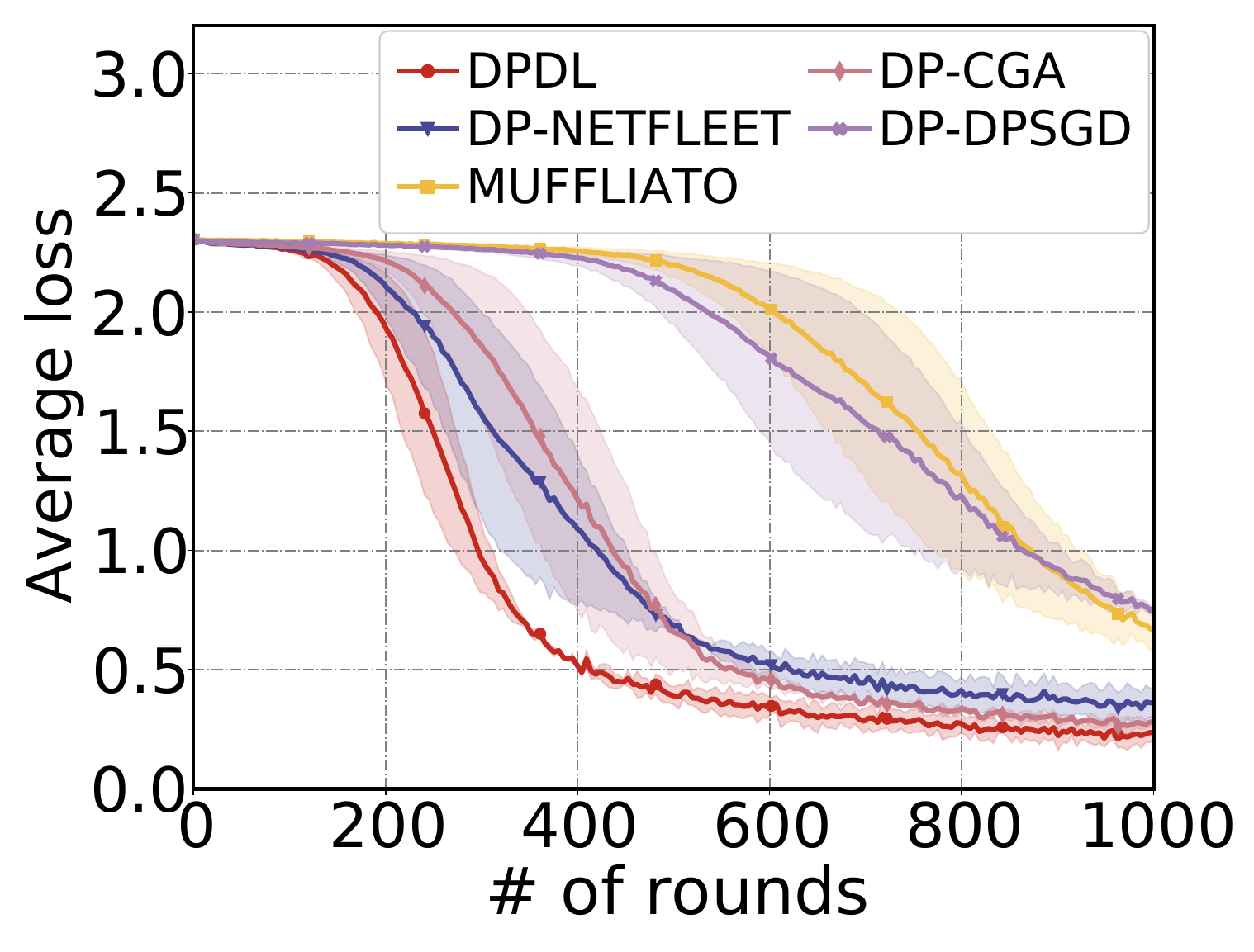}}
    \parbox{.32\textwidth}{\center\scriptsize(a1) $\epsilon=0.25, N=10$}
    \parbox{.32\textwidth}{\center\scriptsize(b1) $\epsilon=0.5, N=10$}
    \parbox{.32\textwidth}{\center\scriptsize(c1) $\epsilon=1.0, N=10$}
    \parbox{.32\textwidth}{\center\includegraphics[width=.32\textwidth]{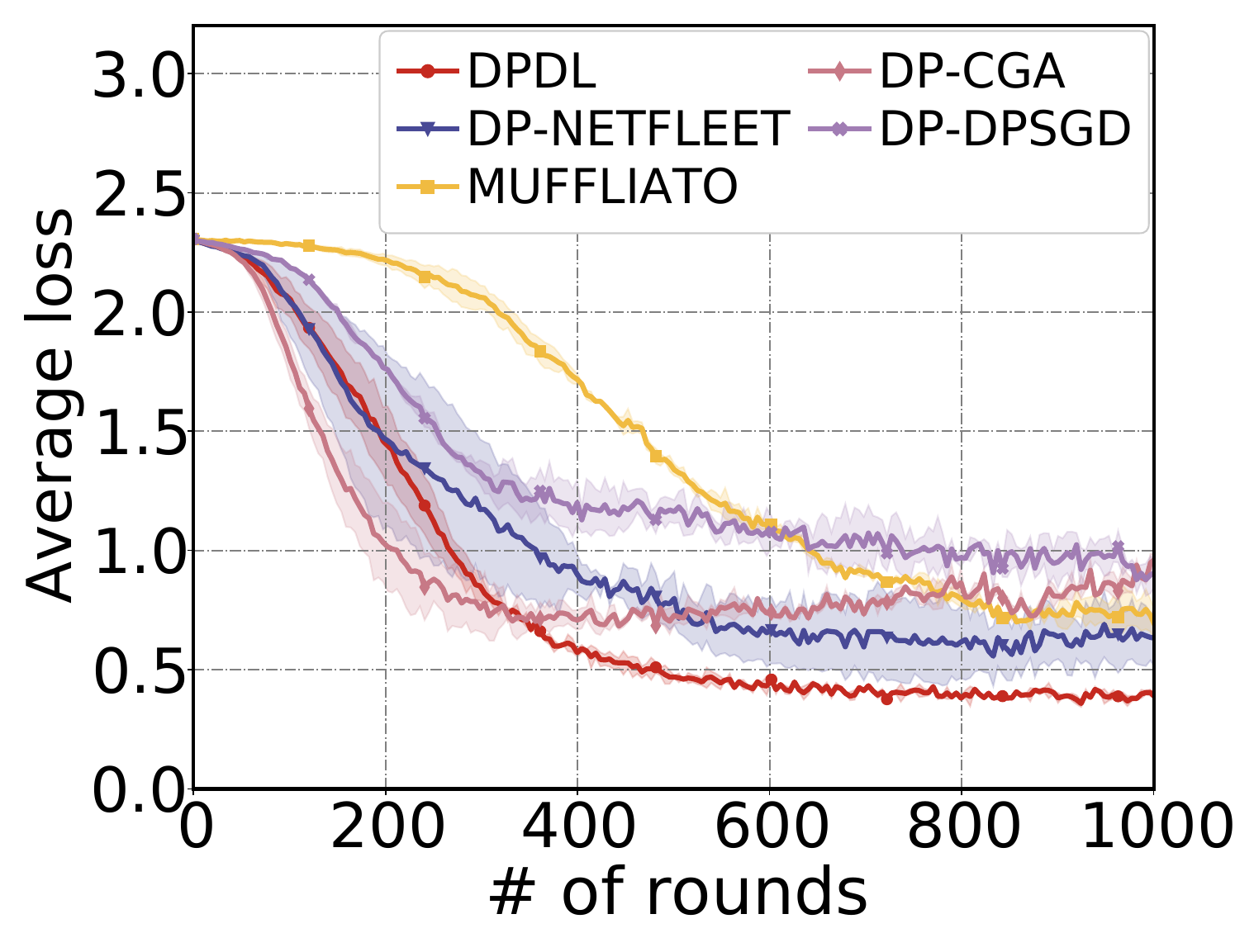}}
    \parbox{.32\textwidth}{\center\includegraphics[width=.32\textwidth]{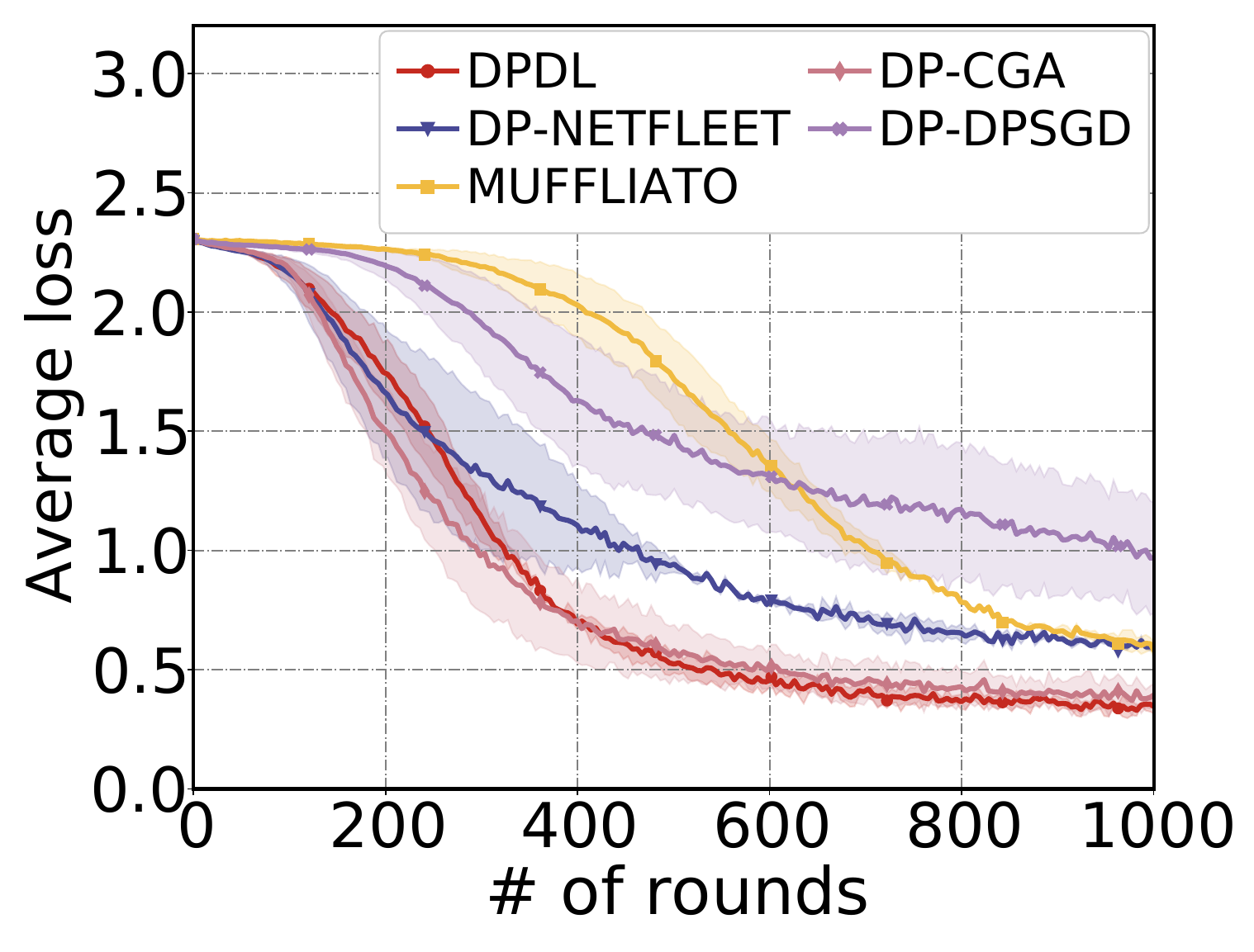}}
    \parbox{.32\textwidth}{\center\includegraphics[width=.32\textwidth]{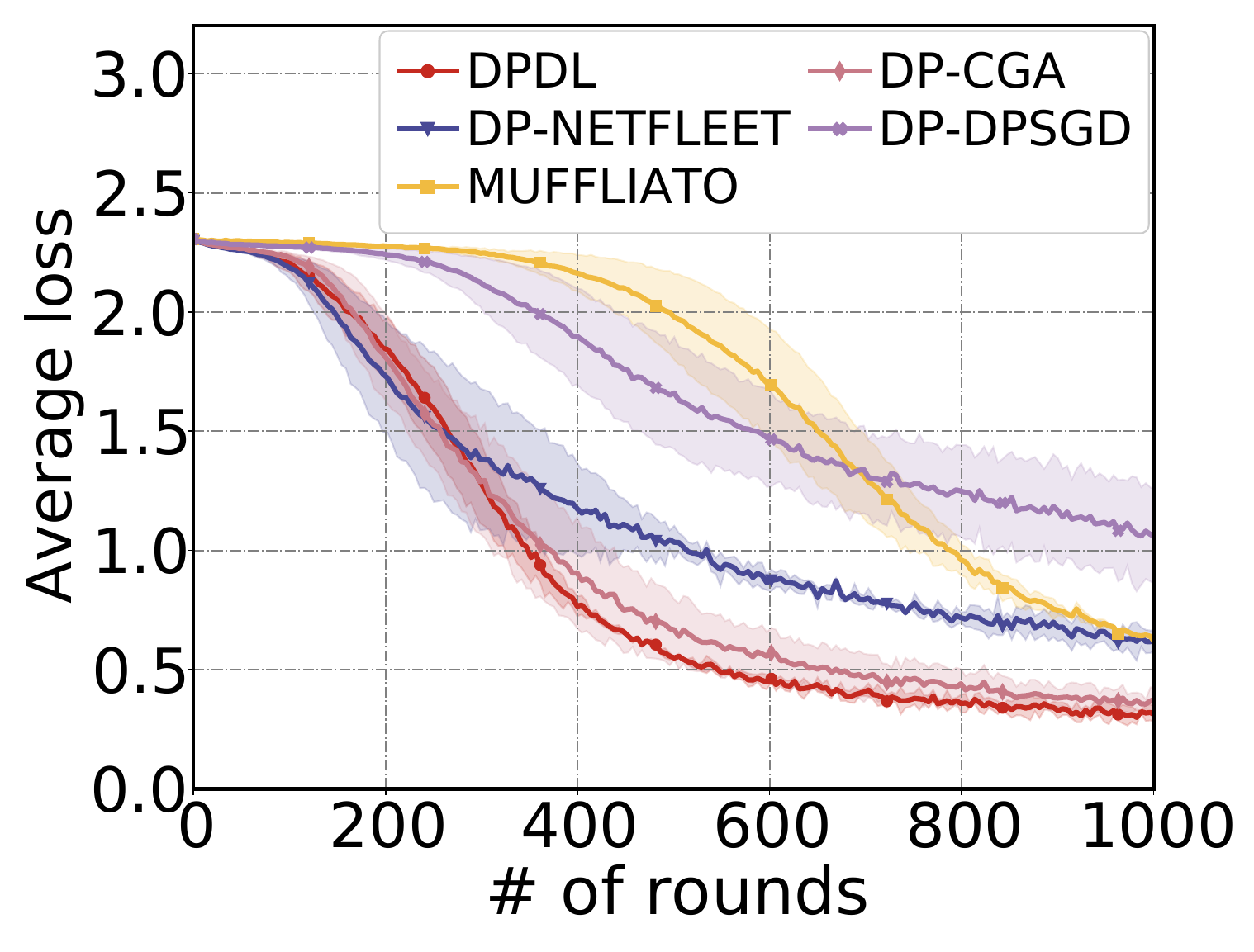}}
    \parbox{.32\textwidth}{\center\scriptsize(a2) $\epsilon=0.25, N=20$}
    \parbox{.32\textwidth}{\center\scriptsize(b2) $\epsilon=0.5, N=20$}
    \parbox{.32\textwidth}{\center\scriptsize(c2) $\epsilon=1.0, N=20$}
    \parbox{.32\textwidth}{\center\includegraphics[width=.32\textwidth]{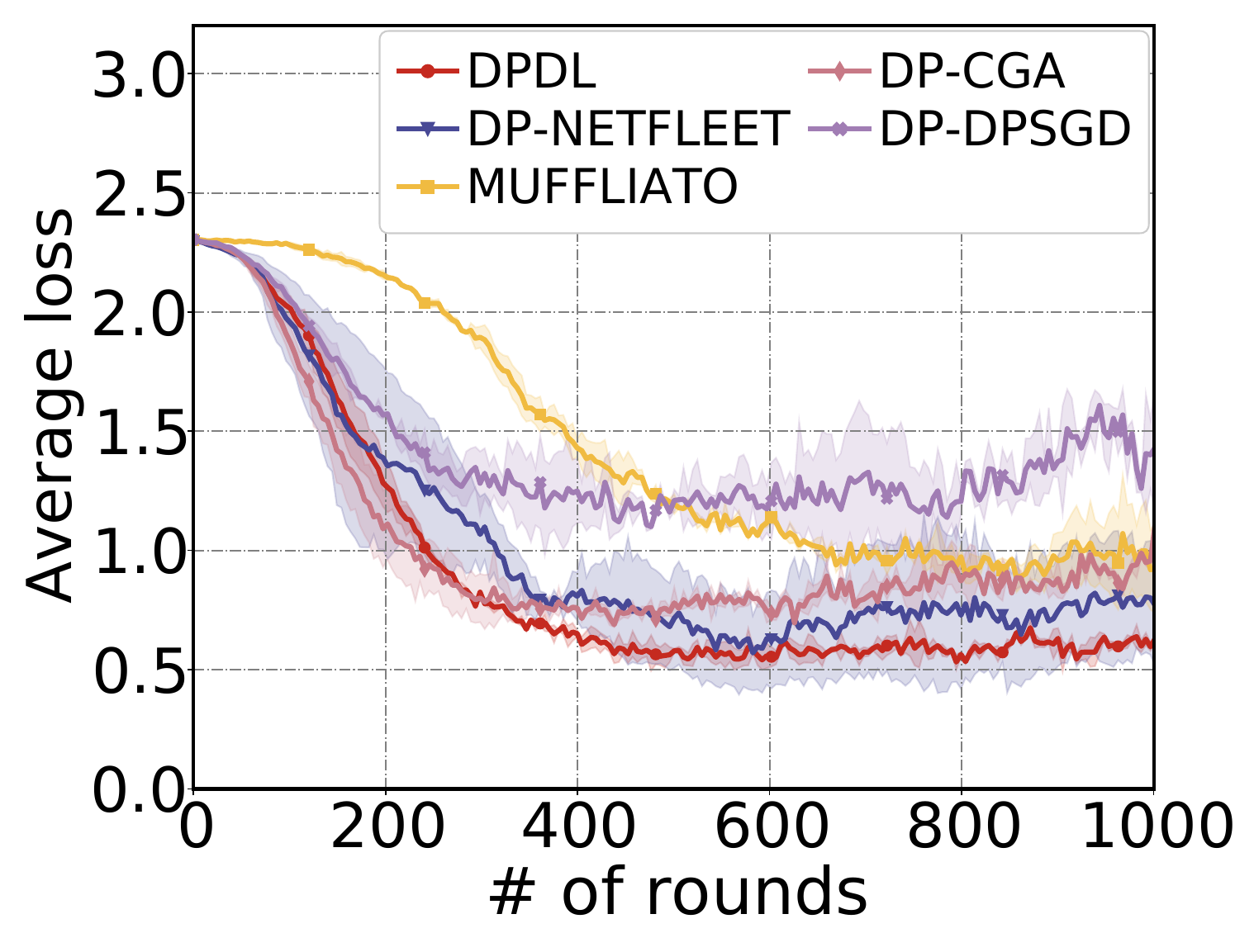}}
    \parbox{.32\textwidth}{\center\includegraphics[width=.32\textwidth]{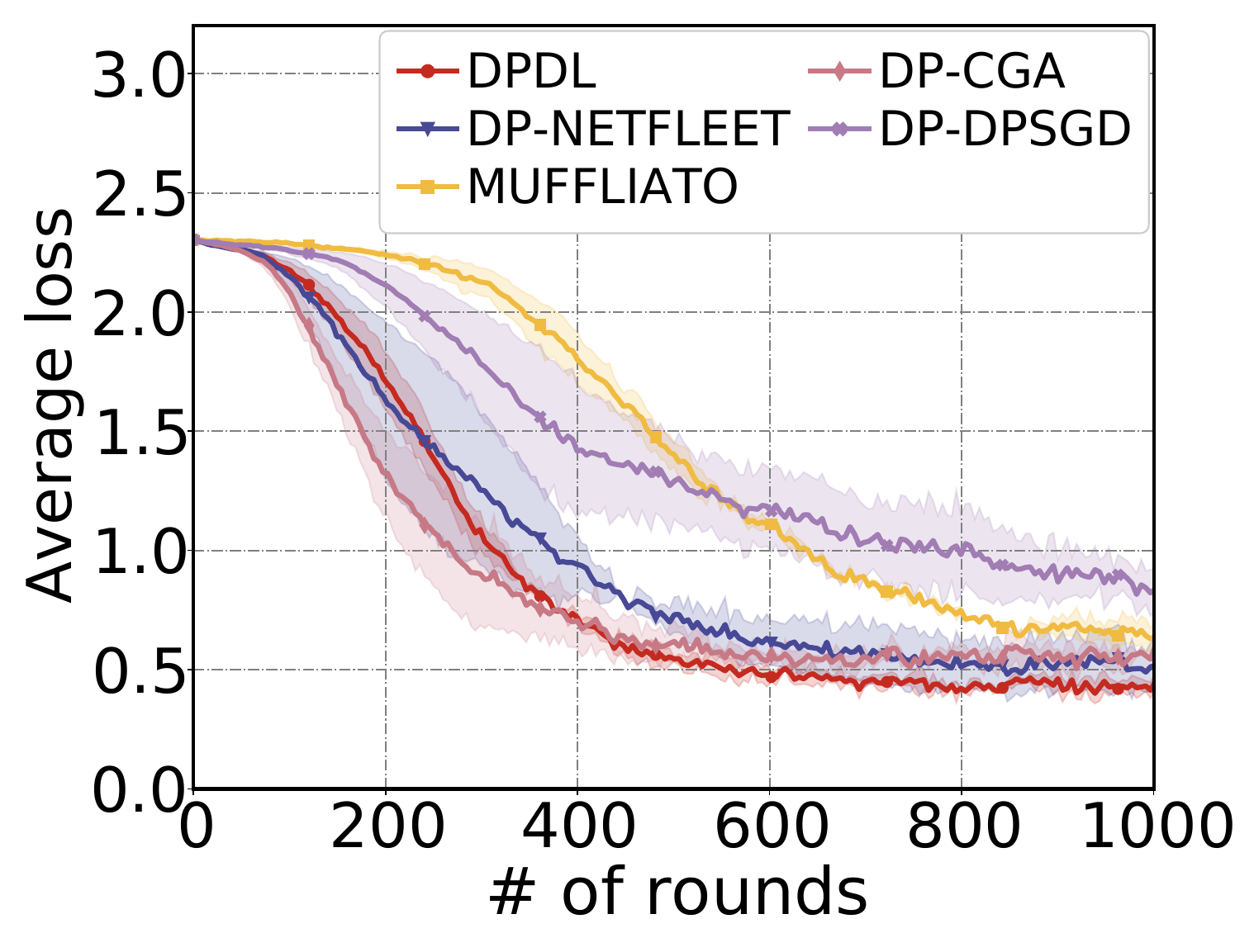}}
    \parbox{.32\textwidth}{\center\includegraphics[width=.32\textwidth]{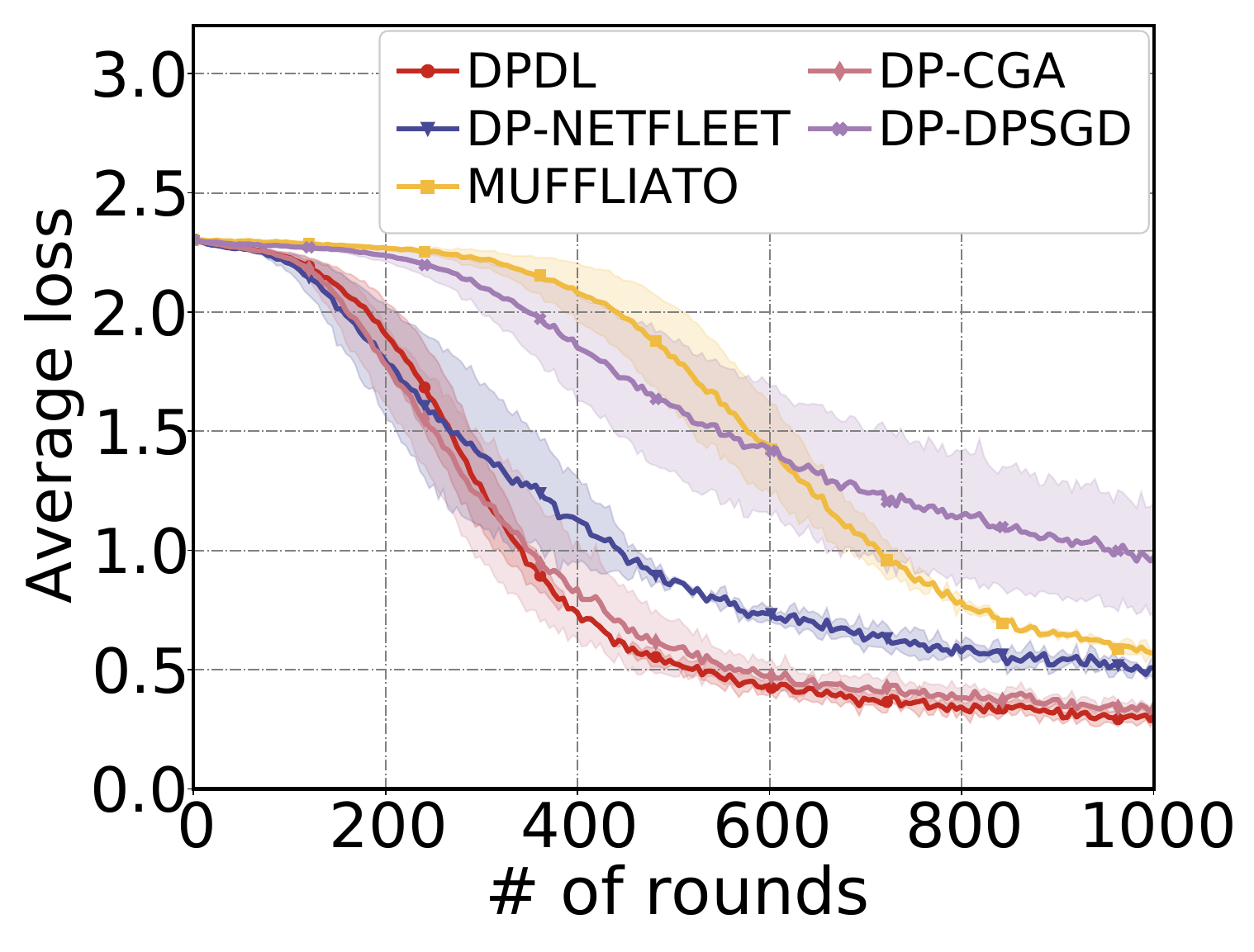}}
    \parbox{.32\textwidth}{\center\scriptsize(a3) $\epsilon=0.25, N=30$}
    \parbox{.32\textwidth}{\center\scriptsize(b3) $\epsilon=0.5, N=30$}
    \parbox{.32\textwidth}{\center\scriptsize(c3) $\epsilon=1.0, N=30$}
    \caption{Experiment results about convergence on MNIST dataset over ring graphs.}
  \label{fig:ring-mnist}
  \end{center}
  \end{figure}

  \begin{figure}[htb!]
  \begin{center}
    \parbox{.32\textwidth}{\center\includegraphics[width=.32\textwidth]{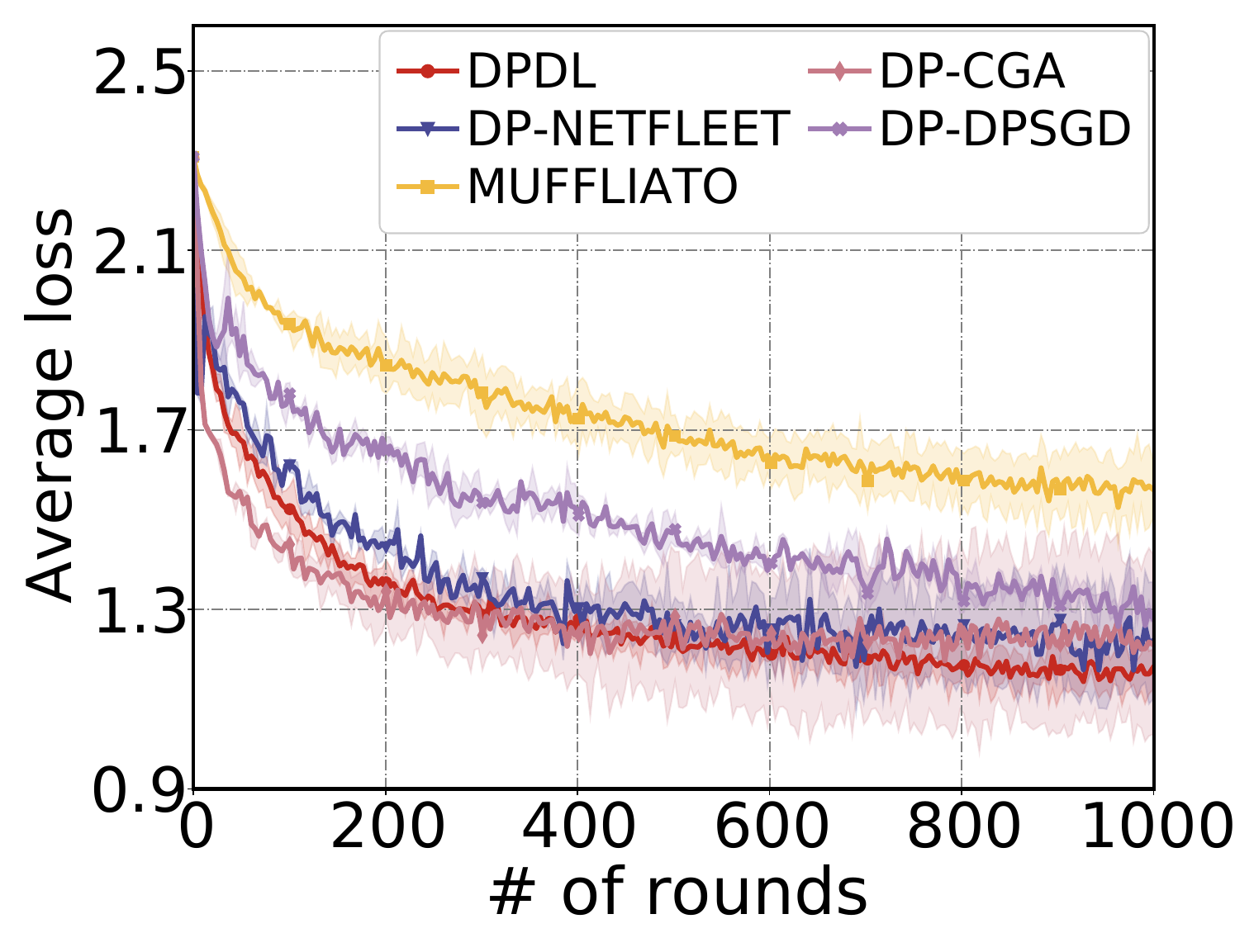}}
    \parbox{.32\textwidth}{\center\includegraphics[width=.32\textwidth]{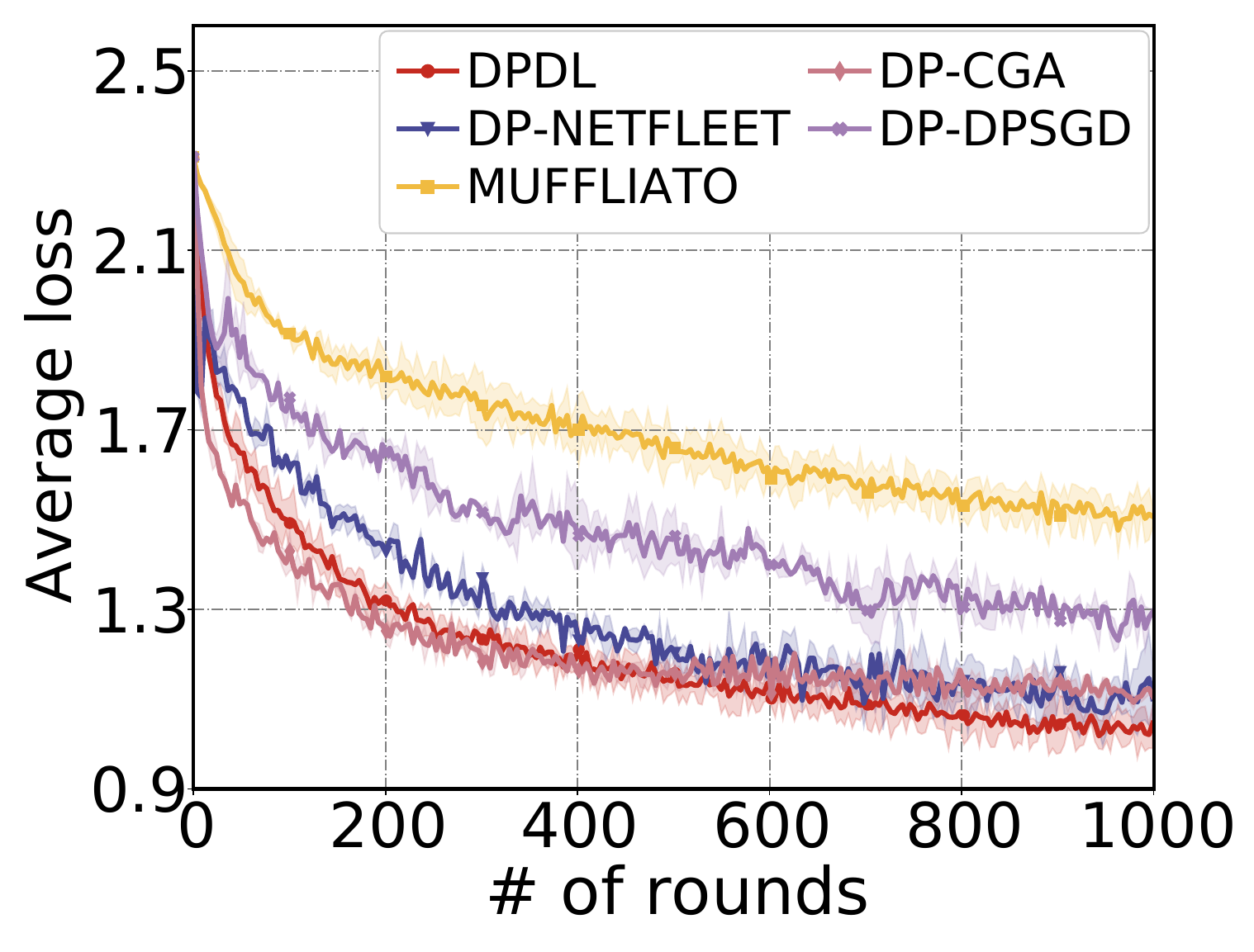}}
    \parbox{.32\textwidth}{\center\includegraphics[width=.32\textwidth]{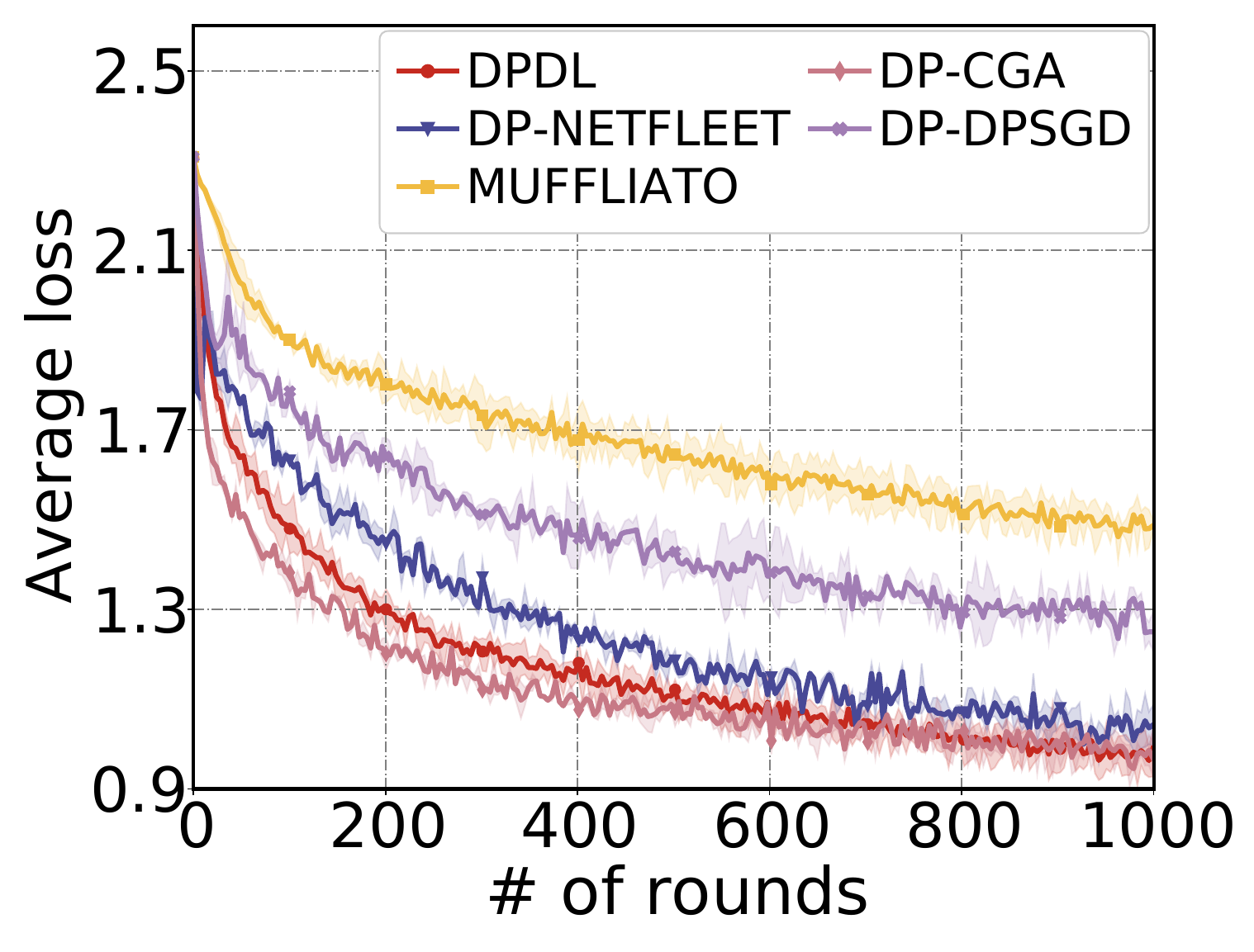}}
    \parbox{.32\textwidth}{\center\scriptsize(a1) $\epsilon=2.0, N=10$}
    \parbox{.32\textwidth}{\center\scriptsize(b1) $\epsilon=4.0, N=10$}
    \parbox{.32\textwidth}{\center\scriptsize(c1) $\epsilon=8.0, N=10$}
    \parbox{.32\textwidth}{\center\includegraphics[width=.32\textwidth]{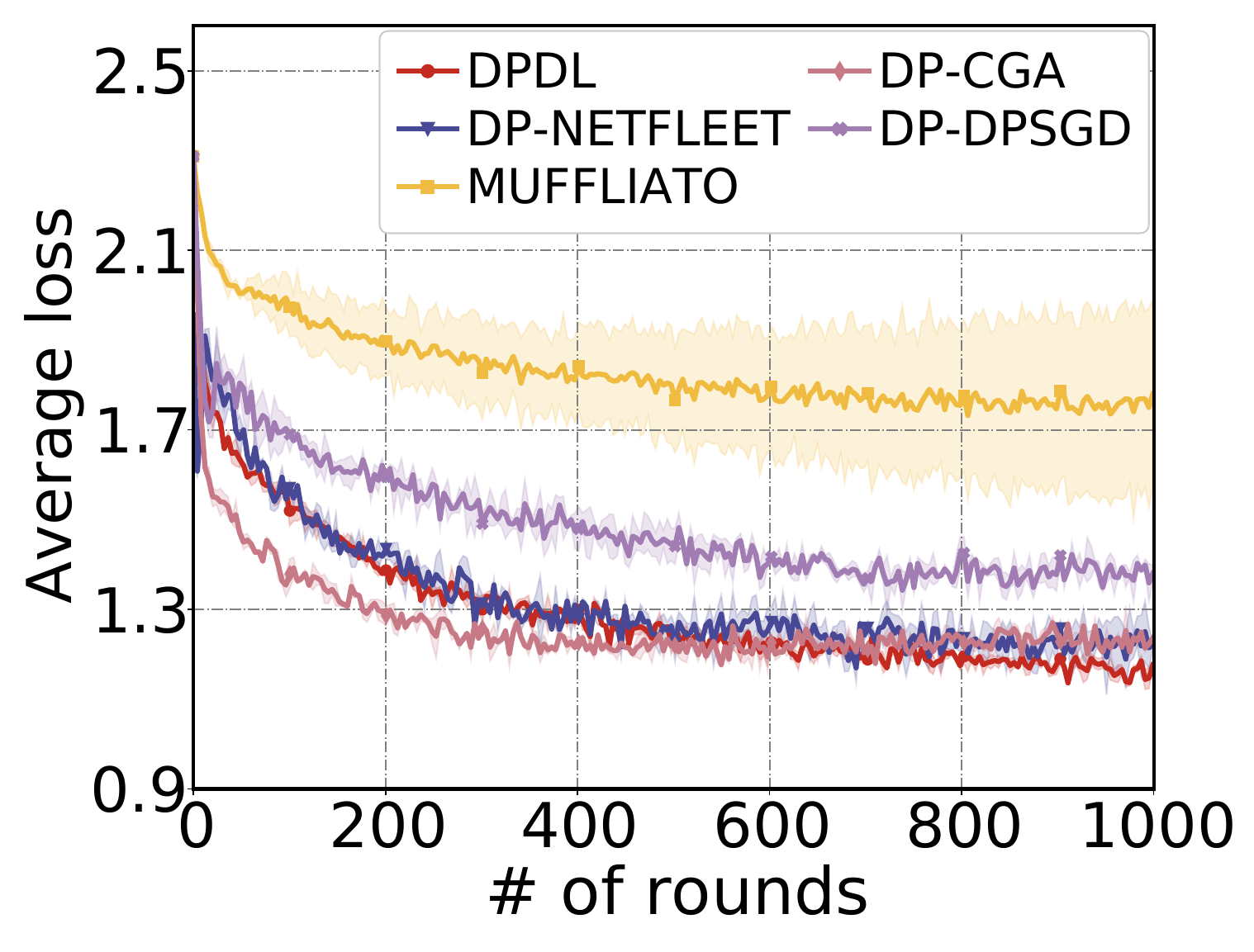}}
    \parbox{.32\textwidth}{\center\includegraphics[width=.32\textwidth]{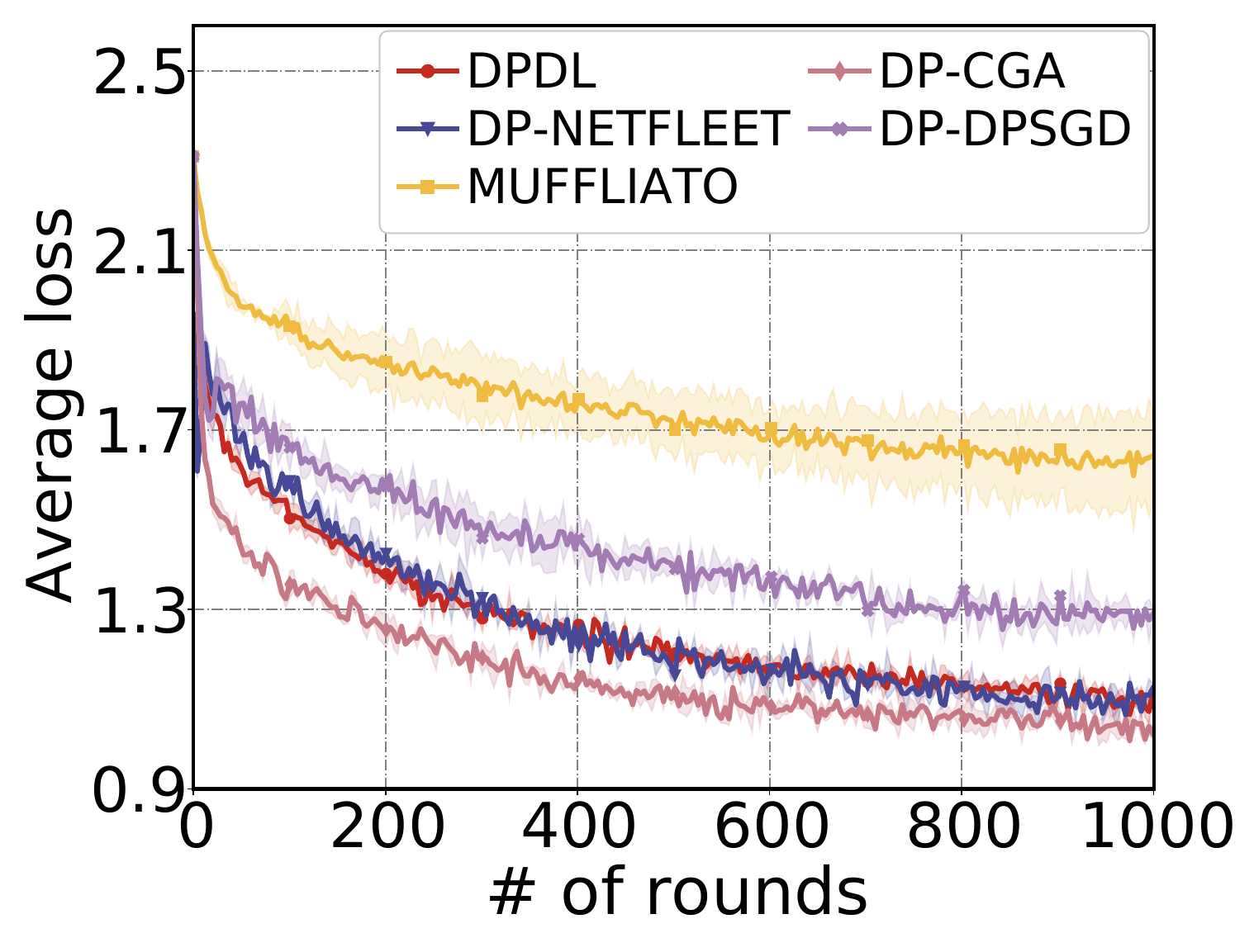}}
    \parbox{.32\textwidth}{\center\includegraphics[width=.32\textwidth]{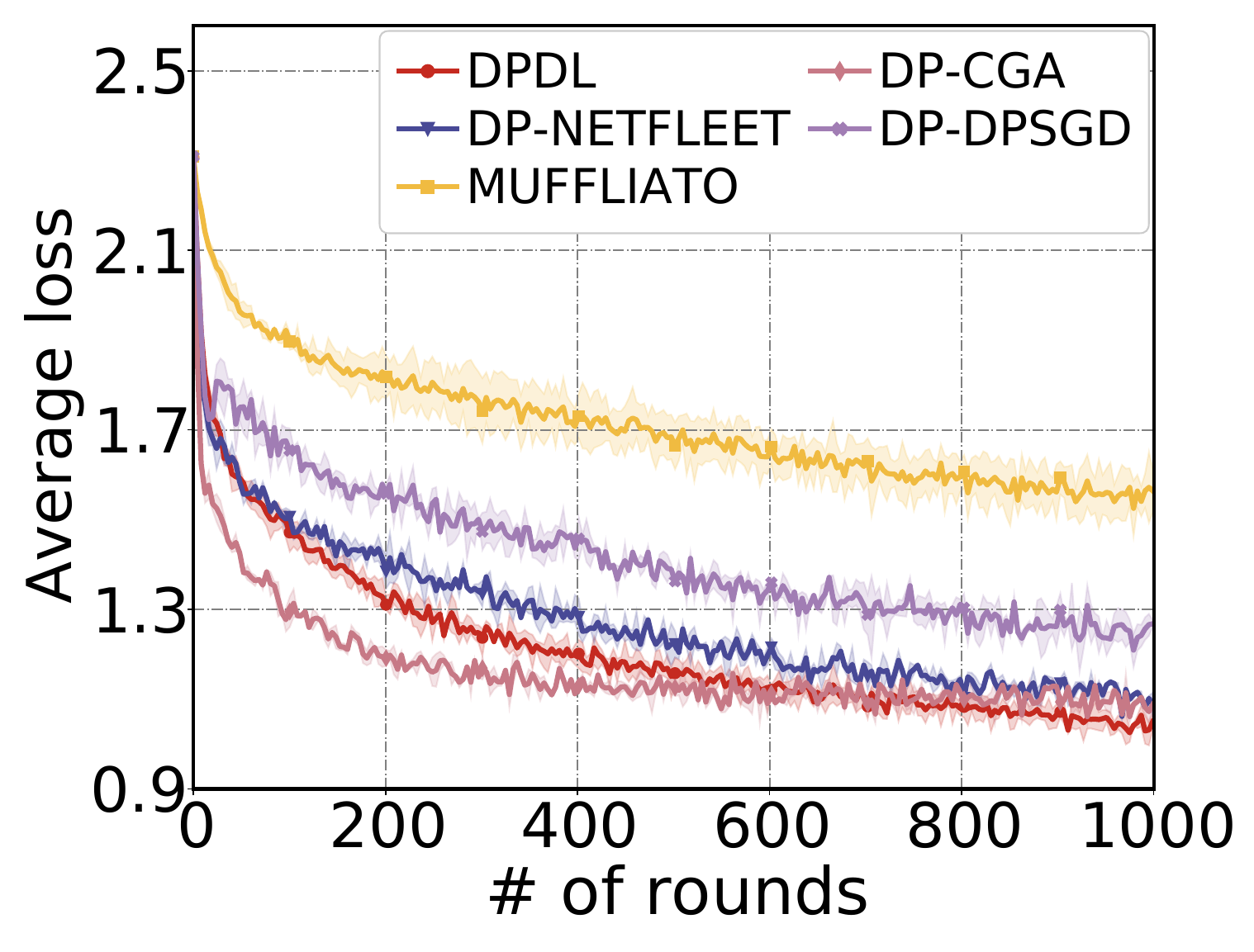}}
    \parbox{.32\textwidth}{\center\scriptsize(a2) $\epsilon=2.0, N=20$}
    \parbox{.32\textwidth}{\center\scriptsize(b2) $\epsilon=4.0, N=20$}
    \parbox{.32\textwidth}{\center\scriptsize(c2) $\epsilon=8.0, N=20$}
    \parbox{.32\textwidth}{\center\includegraphics[width=.32\textwidth]{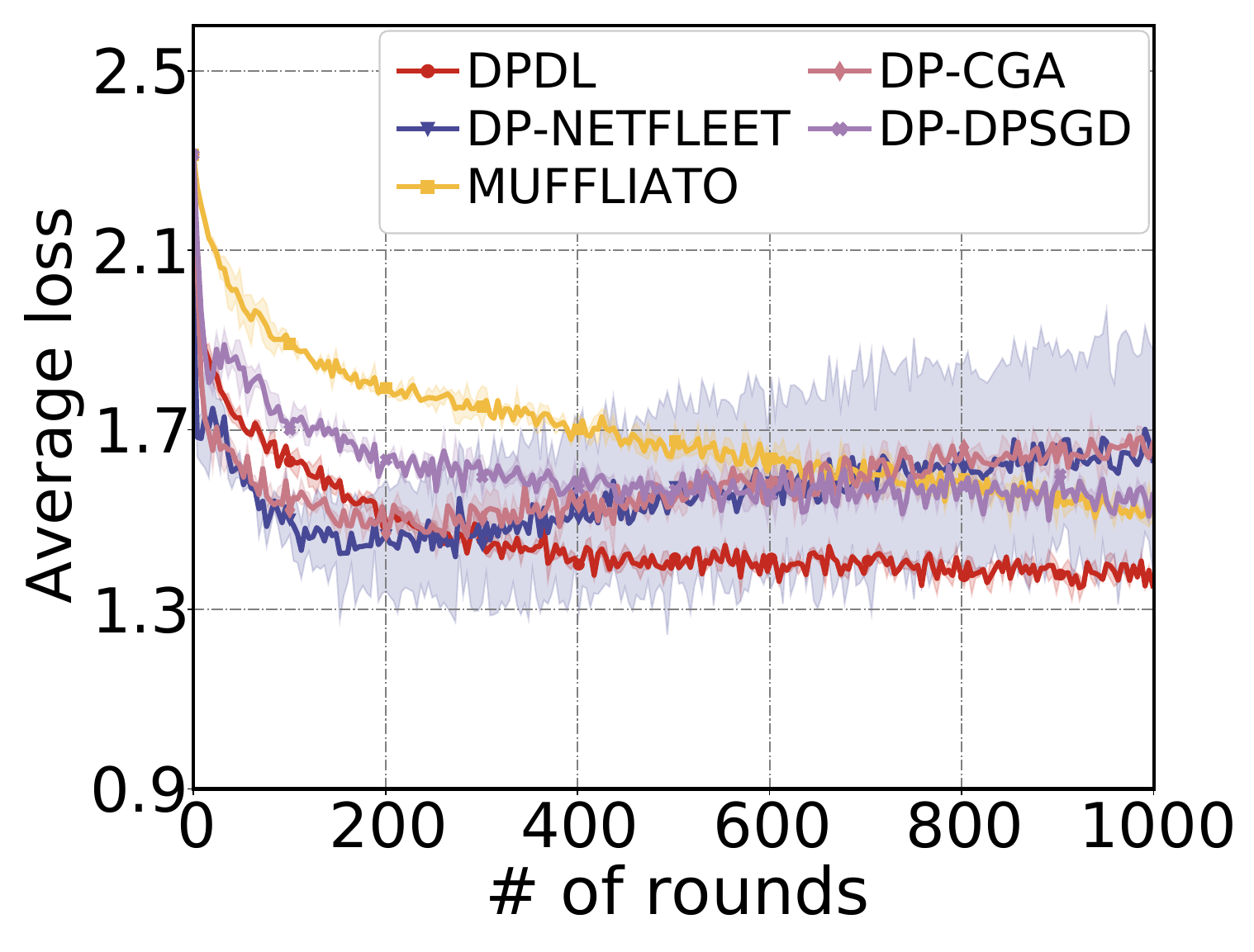}}
    \parbox{.32\textwidth}{\center\includegraphics[width=.32\textwidth]{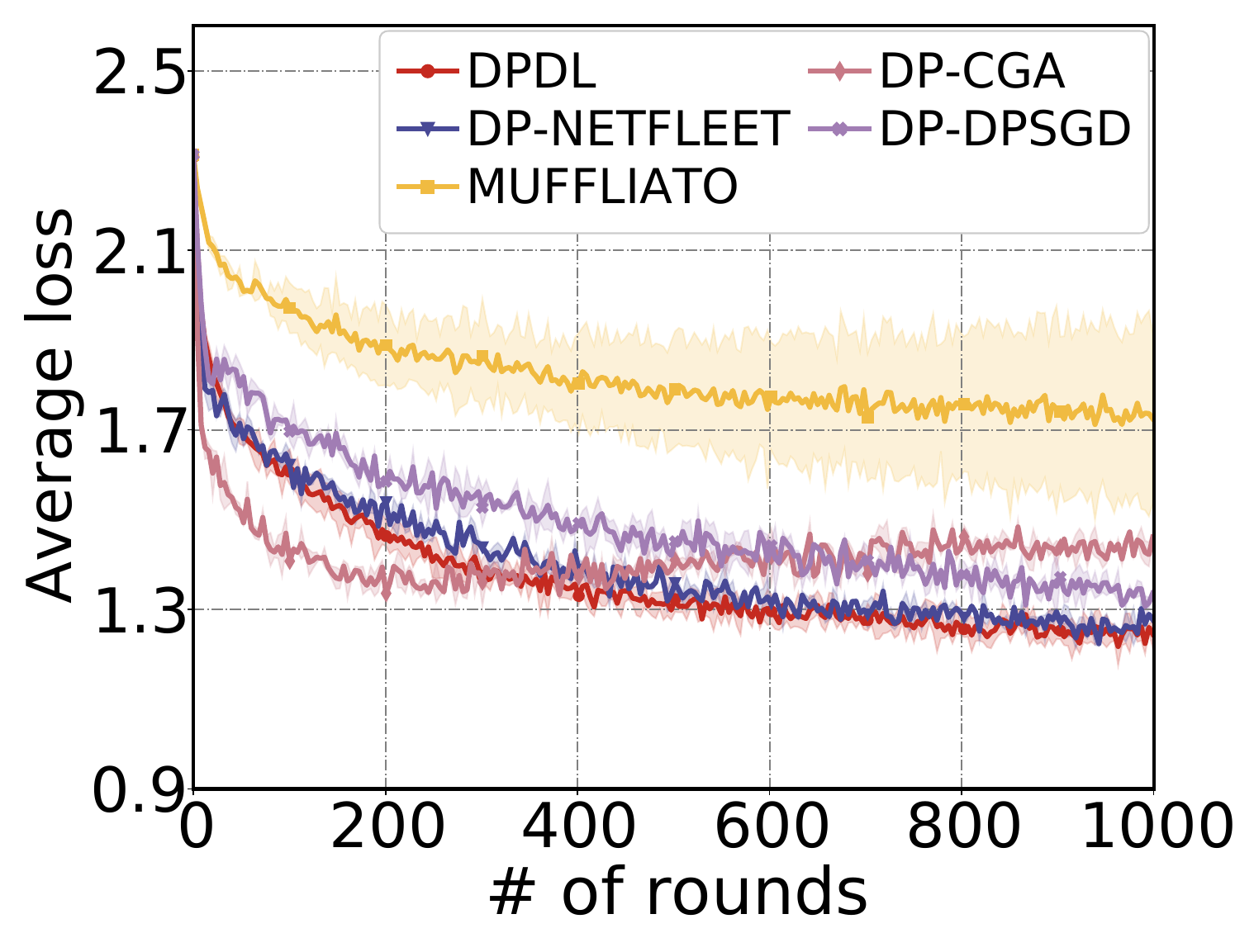}}
    \parbox{.32\textwidth}{\center\includegraphics[width=.32\textwidth]{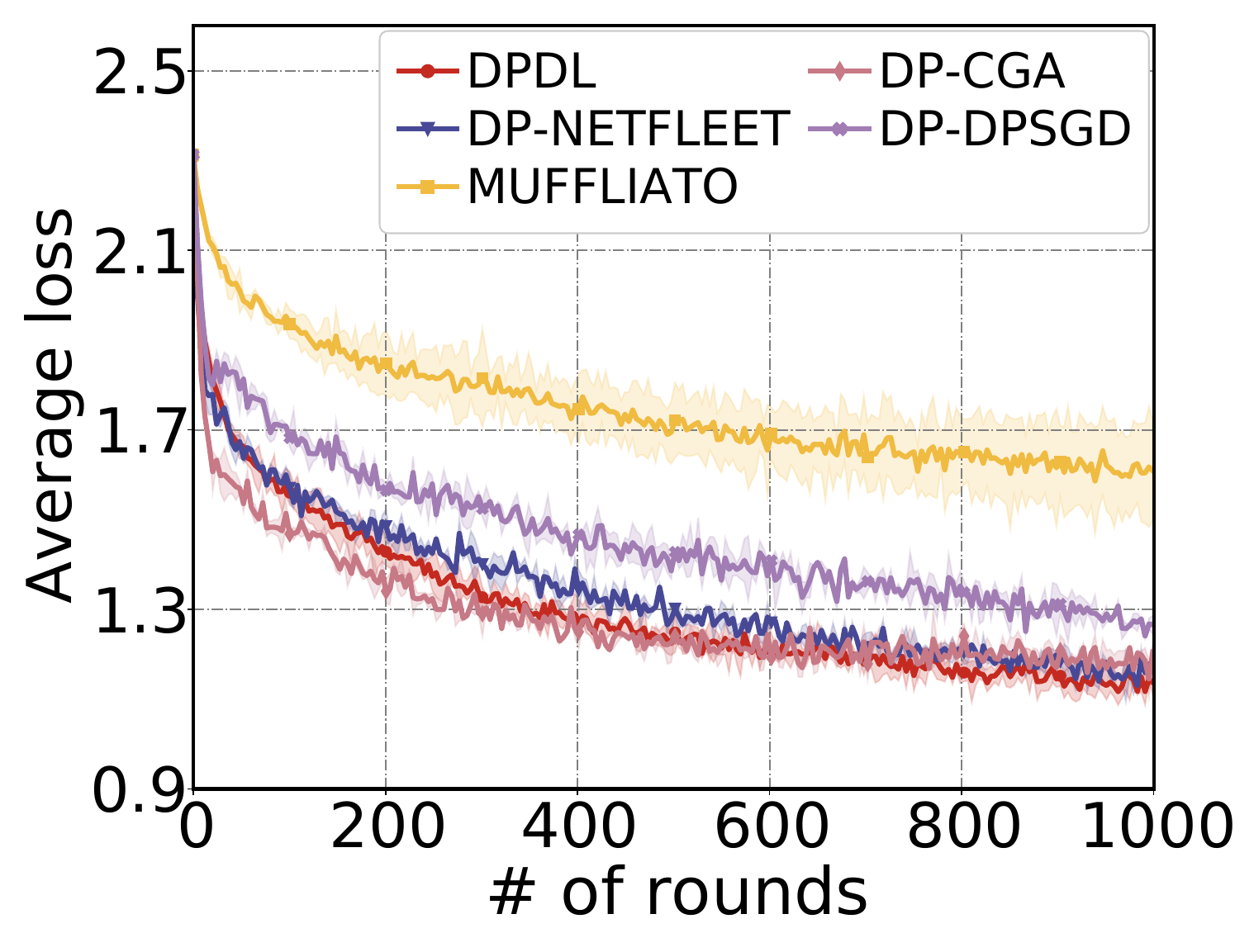}}
    \parbox{.32\textwidth}{\center\scriptsize(a3) $\epsilon=2.0, N=30$}
    \parbox{.32\textwidth}{\center\scriptsize(b3) $\epsilon=4.0, N=30$}
    \parbox{.32\textwidth}{\center\scriptsize(c3) $\epsilon=8.0, N=30$}
  \caption{Experiment results about convergence on CIFAR-10 dataset over ring graphs.}
  \label{fig:ring-CIFAR10}
  \end{center}
  \end{figure}

\section{Proof for Corollary~\ref{cor:convergence}} \label{sec:cor_proof}
  The right-hand side of (\ref{eq:convergence}) consists of four terms closely related to the variable $\eta$. If we let $\eta = \mathcal{O}\left(\frac{N\epsilon}{T}\right)$, and 
  \begin{equation}
    T \geq \max \left\{ 
    \frac{2\sqrt{30}NL\epsilon}{(1-\beta)(1-\rho)}, 
    \frac{\frac{30}{1-\sqrt{\rho}}N^2L^2\epsilon}{\sqrt{256+15(1-\beta)L^2N^2}-16(1-\sqrt{\rho})}\right\}
  \end{equation}
  we have 
  \begin{equation*}
    C_1 =  \frac{\eta}{2(1-\beta)}-\frac{(1-\beta)}{2\beta} =  \mathcal{O}\left( \sqrt{\frac{1}{T}} \right)
  \end{equation*} 
  and thus
  \begin{align*}
    \frac{\mathcal{F}(\bar{{x}}_0)-\mathcal{F}^*}{C_1T} = \mathcal{O} \left( \sqrt{\frac{1}{T}} \right)
  \end{align*}
  Similarly, the coefficients $C_2 \sim C_6$ (see (\ref{eq:conconstants})) can be represented as
  \begin{align*}
    C_2 = \mathcal{O}\left( \frac{1}{\eta} \right) = \mathcal{O}\left( \sqrt{T} \right),
    ~C_3= \mathcal{O}(\eta)=\mathcal{O}\left( \sqrt{\frac{1}{T}} \right),
    ~C_4 = \mathcal{O}(1),
    ~C_5= \mathcal{O}(\eta) = \mathcal{O}\left( \sqrt{\frac{1}{T}} \right),
    ~C_6=\mathcal{O}(1)
  \end{align*}
  by substituting which into the right-hand side of (\ref{eq:convergence}), we complete the proof.

\section{Supplement Experiment Results about Convergence and Accuracy}~\label{sec:supp_ex_con}

  We report the remaining convergence and accuracy results in Sec.~\ref{ssec:expconvergence}$\sim$~\ref{ssec:expaccuracy}, focusing on the performance of our proposed DPDL algorithm on ring graph compared with other reference algorithms. The experimental results for the MNIST dataset under ring graph are presented in Fig.~\ref{fig:ring-mnist} and Table~\ref{tab:acc-mnist-2}, while those for the CIFAR-10 dataset are provided in Fig.~\ref{fig:ring-CIFAR10} and Table~\ref{tab:acc-cifar-2}.

  As illustrated in Fig.~\ref{fig:ring-mnist}, when the number of agents $N=10$, our proposed algorithm demonstrates the fastest convergence among reference algorithms. For example, when $\epsilon=0.5$ and $N=10$, the average training loss of our DPDL algorithm achieves its minimum value around $0.39$ in about $500$ rounds, while the ones of the other four reference algorithms, i.e., DP-CGA, DP-NETFLEET, DP-DPSGD, and MUFFLIATO, are $0.56$, $0.68$, $1.99$, and $2.14$, respectively, within the same time horizon. As the number of agents increases, although our DPDL algorithm converges slightly slower than DP-CGA under certain privacy budget settings, it consistently achieves the lowest final loss, indicating superior optimization capability. Specifically, when $\epsilon=0.5$ and $N=30$, DPDL approaches $0.41$ average training loss after $1000$ rounds, while the other four reference algorithms, i.e., DP-CGA, DP-NETFLEET, DP-DPSGD, and MUFFLIATO, attain $0.58$, $0.52$, $0.83$, and $0.64$ average training loss, respectively.
  Besides, when $\epsilon$ is increasing from $0.25$ to $1.0$, our proposed algorithm maintains a relatively stable loss trajectory with minimal fluctuations, particularly when agent number $N$ is large. For instance, when $\epsilon = 0.25$, our algorithm consistently reaches its minimum training loss (approximately $0.36$ to $0.57$) within $600$ rounds for $N = 10, 20, 30$. In contrast, the performance of reference algorithms deteriorates when $N=20$ and $N=30$, exhibiting less stable behavior and higher final loss values.

  Next, we report the convergence results on the CIFAR-10 dataset in Fig.~\ref{fig:ring-CIFAR10}. It can be observed that our DPDL algorithm, exhibit comparable convergence performance to DP-CGA, and DP-NETFLEET,  with training average loss stabilizing within the range of $1.1$–$1.4$ in $600$–$800$ rounds, particularly when a higher privacy budget is used (e.g., $\epsilon = 4.0, 8.0$). In contrast, DP-DPSGD and MUFFLIATO demonstrate substantially slower convergence and lack a clear descending trend in training loss within the same number of rounds, for they do not account for data heterogeneity. It is worth noting that although DP-CGA and DP-NETFLEET may converge slightly faster and reach marginally lower training losses than our DPDL algorithm, their test accuracies are significantly lower. Moreover, when the privacy budget is small (e.g., $\epsilon=2.0$), DP-CGA shows instability and deviates from the correct optimization path which leads to increased training loss, while DP-NETFLEET suffers from considerable randomness in its convergence trajectory. In contrast, DPDL maintains robust and consistent convergence across all tested configurations.
  
  \begin{table*}[h]
  \caption{Experiment results about test accuracy on MNIST dataset over ring graphs.}
  \centering
    \setlength{\tabcolsep}{8pt}
    \setlength{\extrarowheight}{1.5pt} 
    \begin{tabular}{|c|c|c|c|c|c|c|c|c|c|c|}
    \hline
    \multirow{2}{*}{$\epsilon$} &
    \multirow{2}{*}{\textbf{Algorithms}} &
    \multicolumn{3}{c|}{\textbf{$\alpha_d=0.25$}} &
    \multicolumn{3}{c|}{\textbf{$\alpha_d=1.0$}} \\ 
    \cline{3-8} & & $N$=10 & $N$=20 & $N$=30 & $N$=10 & $N$=20 & $N$=30 \\ \hline
    \multirow{5}{*}{$0.25$} 
    & DP-DPSGD &84.2±2.0 &71.1±1.7 &60.2±1.4 &86.8±1.4 &85.1±0.6 &70.9±2.5 \\
    & DP-CGA &89.3±0.5 &69.0±4.2 &68.8±1.6 &91.4±1.1 &83.5±1.0 &62.8±1.2 \\ 
    & MUFFLITAO &86.1±1.5 &74.6±2.0 &68.0±3.0 &86.3±0.7 &84.4±1.4 &81.3±1.1 \\
    & DP-NETFLEET &88.7±2.4 &78.2±2.9 &72.1±4.3 &90.3±2.3 &85.0±3.1 &75.5±3.5  \\
    & \textbf{DPDL} &\textbf{91.1±0.9} &\textbf{86.3±0.4} &\textbf{81.2±1.7} &\textbf{93.4±1.0} &\textbf{90.8±1.4} &\textbf{85.4±1.4} \\ \hline
    \multirow{5}{*}{$0.5$}
    & DP-DPSGD &84.3±1.2 &76.0±1.3 &76.3±0.5 &87.6±1.4 &88.8±1.4 &87.8±1.9  \\
    & DP-CGA &91.3±0.2 &83.1±2.7 &75.2±0.9 &92.6±1.0 &90.8±1.1 &86.2±0.6  \\ 
    & MUFFLITAO &86.2±0.8 &82.1±2.4 &77.6±1.4 &87.4±0.3 &86.8±0.8 &85.9±0.6 \\
    & DP-NETFLEET &89.9±1.7 &82.1±0.8 &82.2±3.7 &91.1±1.5 &91.0±2.3 &87.9±2.5 \\
    & \textbf{DPDL} &\textbf{91.8±0.6} &\textbf{88.9±1.6} &\textbf{86.9±0.9} &\textbf{93.7±1.1} &\textbf{92.7±1.0} &\textbf{91.4±1.2} \\ \hline
    \multirow{5}{*}{$1.0$}
    & DP-DPSGD &84.3±1.5 &77.0±1.5 &79.0±0.8 &89.2±1.3 &87.3±1.3 &88.0±1.5 \\
    & DP-CGA &91.1±0.8 &86.6±1.3 &87.5±0.9 &92.7±0.6 &92.0±1.0 &91.5±1.2 \\ 
    & MUFFLITAO &85.4±0.9 &83.8±1.6 &83.8±2.2 &88.9±0.1 &88.1±0.6 &88.0±0.6 \\
    & DP-NETFLEET &89.7±1.5 &82.6±1.2 &83.4±1.3&91.5±1.3 &91.3±1.6 &90.7±1.8 \\
    & \textbf{DPDL} &\textbf{92.1±0.6} &\textbf{89.4±1.1} &\textbf{89.5±1.4} &\textbf{93.9±1.0} &\textbf{92.8±1.0} &\textbf{92.3±1.2} \\ \hline
  \end{tabular}
  \label{tab:acc-mnist-2}
  \end{table*}
  \begin{table*}[h]
  \caption{Experiment results about test accuracy on CIFAR-10 dataset over ring graphs.}
  \centering
    \setlength{\tabcolsep}{8pt}
    \setlength{\extrarowheight}{1.5pt} 
    \begin{tabular}{|c|c|c|c|c|c|c|c|c|c|c|}
    \hline
    \multirow{2}{*}{$\epsilon$} &
    \multirow{2}{*}{\textbf{Algorithms}} &
    \multicolumn{3}{c|}{\textbf{$\alpha_d=0.25$}} &
    \multicolumn{3}{c|}{\textbf{$\alpha_d=1.0$}} \\ 
    \cline{3-8} & & $N$=10 & $N$=20 & $N$=30 & $N$=10 & $N$=20 & $N$=30 \\ \hline
    \multirow{5}{*}{$2.0$} 
    & DP-DPSGD &48.9±0.9 &46.3±0.0 &42.5±0.8 &50.8±0.3 &50.2±0.4 &46.3±0.2 \\
    & DP-CGA &53.9±1.3 &41.6±1.1 &24.4±1.6 &56.9±0.3 &45.6±0.3 &28.2±0.7\\ 
    & MUFFLITAO &44.8±2.8 &37.6±7.0 &45.2±1.1 &51.4±0.8 &50.1±0.8 &47.7±0.5 \\
    & DP-NETFLEET &52.6±3.9 &49.6±1.3 &34.4±8.1 &55.3±0.2 &50.0±0.1 &39.7±0.2 \\
    & \textbf{DPDL} &\textbf{57.4±0.9} &\textbf{51.9±0.6} &\textbf{46.4±0.7} &\textbf{58.8±0.8} &\textbf{53.7±0.7} &\textbf{48.7±0.6} \\ \hline
    \multirow{5}{*}{$4.0$}
    & DP-DPSGD &49.6±0.8 &48.2±0.9 &48.0±0.4 &51.1±0.3 &49.9±0.6 &48.5±0.7  \\
    & DP-CGA &55.9±1.3 &55.1±0.9 &46.7±0.3 &60.7±0.5 &56.9±0.9 &50.5±0.7  \\ 
    & MUFFLITAO &46.3±1.2 &41.7±2.7 &39.8±5.8 &51.6±0.7 &50.6±0.6 &49.4±0.7  \\
    & DP-NETFLEET &56.2±1.4 &54.6±0.4 &49.3±1.1 &56.8±0.2 &54.7±0.3 &51.4±0.1 \\
    & \textbf{DPDL} &\textbf{60.2±1.0} &\textbf{55.4±0.7} &\textbf{53.4±0.4} &\textbf{62.4±0.8} &\textbf{58.1±1.1} &\textbf{54.7±0.2} \\ \hline
    \multirow{5}{*}{$8.0$}
    & DP-DPSGD &49.9±0.2 &48.5±0.5 &48.7±0.8 &51.1±0.3 &50.1±0.8 &48.8±0.8 \\
    & DP-CGA &56.7±2.4 &56.3±1.2 &54.5±1.2 &61.7±0.8 &58.6±0.8 &56.7±0.7 \\
    & MUFFLITAO &46.1±1.3 &43.8±0.6 &43.7±2.7 &51.7±0.5 &50.7±0.5 &49.6±0.6 \\
    & DP-NETFLEET &57.7±1.1 &56.1±0.6 &52.2±1.4 &57.6±0.1 &55.9±0.2 &53.6±0.2 \\
    & \textbf{DPDL} &\textbf{61.4±0.8} &\textbf{58.2±0.7} &\textbf{55.4±0.7} &\textbf{63.0±0.9} &\textbf{59.9±0.8} &\textbf{57.7±0.7} \\ \hline
  \end{tabular}
  \label{tab:acc-cifar-2}
  \end{table*}

  The accuracy performance on ring graph using the MNIST dataset is reported in Table~\ref{tab:acc-mnist-2}. The results indicate that our DPDL algorithm still has the highest prediction accuracy on extreme sparse topology with high data heterogeneity. For example, when $N=20$, $\epsilon=0.25$ and $\alpha_d=0.25$ over the ring graph, the test accuracy of our DPDL algorithm is $7.16\%$,  $21.4\%$, $12.3\%$ and $17.8\%$ higher than the ones of DP-NETFLEET, DP-CGA, MUFFLIATO and DP-DPSGD, respectively. Another important observation is that with the decrease of data heterogeneity, DPDL can obtain $1.95\%\sim5.21\%$ accuracy improvement and still maintain the highest accuracy among baselines, which also shows its robustness in various data scenarios.

  According to the results obtained on CIFAR-10 dataset (see Table~\ref{tab:acc-cifar-2}), the advantage of our proposed DPDL algorithm over the baseline methods remains evident. In particular, when $N=10$, $\epsilon=4.0$ and $\alpha_d=0.25$, the test accuracy of our algorithm over the ring graph is $60.2\%$, whereas the ones of the other reference algorithms are around only $46.3\% \sim 56.2\%$. When we increase the privacy budget $\epsilon$ to $8.0$, the test accuracy of our algorithm further improves to $61.4\%$, benefiting from reduced noise injection during gradient perturbation, which helps preserve the utility of gradient information. Moreover, even as data heterogeneity decreases, and some methods (e.g., DP-CGA) exhibit faster accuracy improvement under lower privacy budgets, our method consistently outperforms them and achieves the highest overall accuracy.

\end{document}